\documentclass{article}

\usepackage[main, nonatbib, preprint]{style} 
\usepackage{preamble}
\usepackage[disable]{todonotes}

\def\debug{1}
\newcommand{\ASnotes}[1]{\ifnum\debug=1{\color{red} [AS: #1]}\fi}
\newcommand{\PELnotes}[1]{\ifnum\debug=1{\color{green} [PEL: #1]}\fi}
\newcommand{\WYnotes}[1]{\ifnum\debug=1{\color{blue} [WY: #1]}\fi}

\title{How are linear representations learned? Exact solutions to the dynamics of abstraction}

\author{%
  William W. Yang$^1$
  \And Andrew M. Saxe$^{*, 1, 2}$ \\
  \And Peter E. Latham$^{*, 1}$
}

\begin{document}
\etocdepthtag.toc{main}

\maketitle

\begingroup
\renewcommand{\thefootnote}{*}
\footnotetext{Co-senior authors; equal contribution.}
\endgroup
\footnotetext[1]{Gatsby Computational Neuroscience Unit, University College London, London W1T 4JG, United Kingdom.}
\footnotetext[2]{Sainsbury Wellcome Centre, University College London, London W1T 4JG, United Kingdom.}

\begin{abstract}
  In artificial and biological neural networks, concepts are often encoded as consistent linear directions in representation space. In deep learning, this idea is known as the linear representation hypothesis and underpins many interpretability and control methods based on linear probes, from concept detection to activation steering. Yet while prior work has studied whether such directions should exist \textit{after} training, the dynamics of how they emerge \textit{during} training remain poorly understood. Here, we develop a framework to study the alignment of concept directions during training -- a process we call ``abstraction''. In a minimal linear network setting, we obtain exact solutions for the full trajectory of abstraction. These solutions reveal key analytic principles governing abstraction: (i) data and target geometry jointly determine abstraction at the end-of-learning, (ii) abstraction improves with network depth, and (iii) initialization scale controls the maximum abstraction reached during training. Extending our theory to nonlinear networks, we analyze how the choice of nonlinearity affects abstraction dynamics: erf networks approximate the linear theory, while abstraction in ReLU networks depends less on target geometry and more on input geometry. Across both, we prove a striking attenuation law: both nonlinearities weaken abstraction in activations relative to preactivations. We find evidence for this law in open models (DINOv3, Gemma 4) and apply our theory to improve linear probe generalization in LLMs. Together, our results provide a dynamical theory of abstraction with implications for interpretability and control.
\end{abstract}

\section{Introduction}

A recurring empirical observation in deep neural networks is that high-level concepts often behave like approximately linear directions in representation space \cite{mikolovLinguisticRegularities13, nandaEmergentLinear23}. This idea has recently been formalized as the linear representation hypothesis (LRH) \cite{parkLinearRepresentation24,jiangOriginsLinear24,parkGeometryCategorical25}. To use a classical example, a linear representation of the ``gender'' concept would imply that the concept vectors \(\mathbf{v}_\mathrm{king} - \mathbf{v}_\mathrm{queen} \) and \(\mathbf{v}_\mathrm{man} - \mathbf{v}_\mathrm{woman} \) are approximately parallel. A closely related idea exists in neuroscience, where a concept is said to be represented in an ``abstract'' format when concept vectors are highly aligned across contexts \cite{bernardiGeometryAbstraction20,nogueiraGeometryCortical23,shinProtocolGeometric23,fascianelliNeuralRepresentational24a,courellisAbstractRepresentations24,boyleTunedGeometries24,mishchanchukHiddenState24,oneillRepresentationalGeometry24}. Borrowing that neuroscience terminology, we use the term ``abstraction'' to refer to the alignment of concept vectors (e.g. the cosine similarity between \(\mathbf{v}_\mathrm{king} - \mathbf{v}_\mathrm{queen} \) and \(\mathbf{v}_\mathrm{man} - \mathbf{v}_\mathrm{woman} \)) during training.

Understanding how abstract representations arise is important in both deep learning and neuroscience. Many AI interpretability and control methods assume existence of abstract representations by using linear probes to extract concept directions for interpretability and detection \cite{gurneeFindingNeurons23, goldowsky-dillDetectingStrategic25} or model steering \cite{turnerSteeringLanguage24, liInferenceTimeIntervention24}. In neuroscience, abstraction is thought to support the brain's ability to adapt to changing environments via out-of-distribution and compositional generalization \cite{bernardiGeometryAbstraction20, courellisAbstractRepresentations24, itoCompositionalGeneralization22}.


However, our understanding of how abstraction arises remains limited. Previous theoretical work has argued that abstraction becomes perfect (i.e. cosine similarity = 1) once trained to convergence \cite{jiangOriginsLinear24}, or at the global minima of the loss landscape \cite{wangMathematicalTheory26}. This leaves open a simple question: what are the dynamics of abstraction \textit{during} training? Empirically, in real-world settings abstraction often exhibits nontrivial dynamics, can be non-monotonic, and seldom reach perfect abstraction as predicted in simple theories; for example, see \autoref{fig:2fs-ansatz}C (also Figure 4C of \cite{allemanTaskStructure24}). This motivates the central questions of this paper: \textit{What is the trajectory of abstraction during training? How is it affected by properties of the dataset? What is the role of depth and nonlinearity?} In this paper we address these in an analytically solvable setting to understand the learning dynamics of abstraction.

   
\begin{figure}[h]
    \centering
    \includegraphics[width=1.0\linewidth]{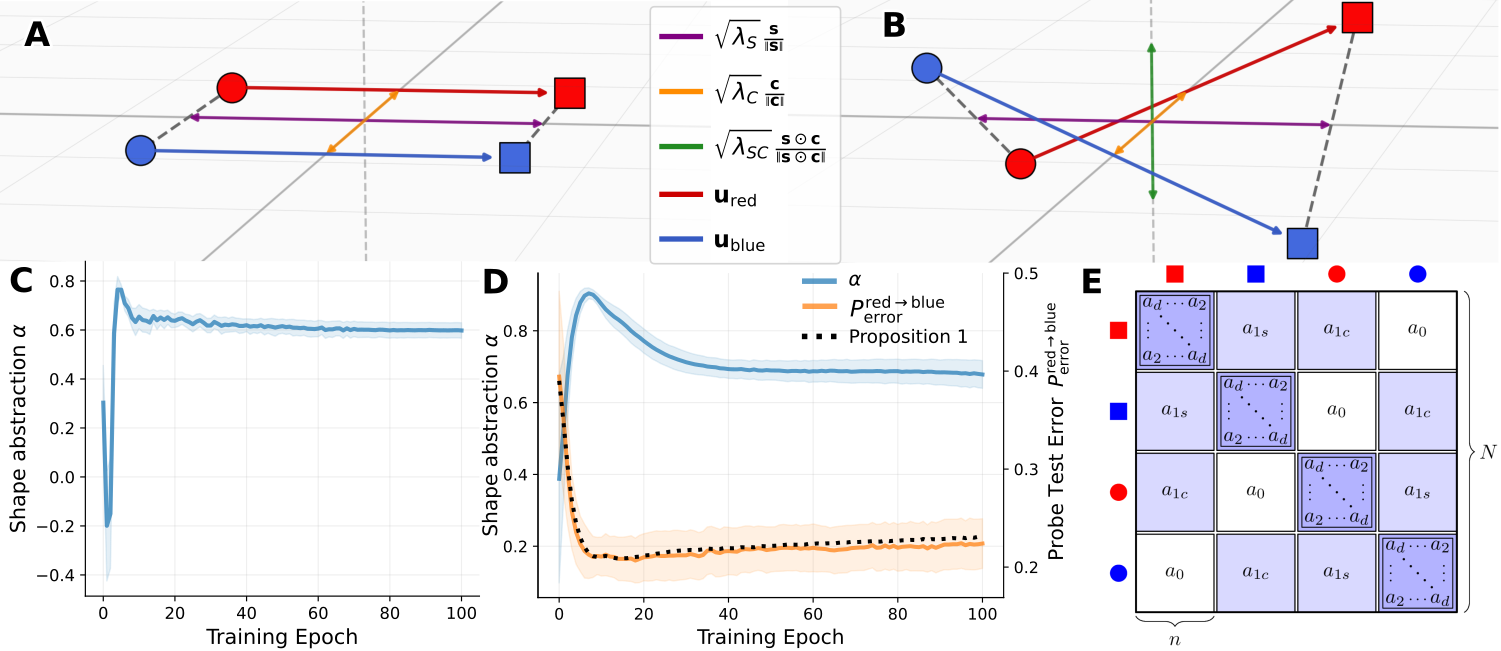}
    \caption{\textbf{Problem setting (see \autoref{section:min-model} for more details)}. We consider a minimal solvable setting with two binary concepts (shape, color) giving us four classes $\mathcal{C} \coloneqq \{\redsquare, \bluesquare, \redcircle, \bluecircle\}$. \textbf{(A)} Schematic of perfectly abstract representation where shape concept vectors $\mathbf{u}_\mathrm{red}$ and $\mathbf{u}_\mathrm{blue}$ have cosine similarity $\alpha_\mathrm{shape} = 1$. \textbf{(B)} In the minimal model, introducing an orthogonal interaction mode ($\mathrm{SC}$, green) with eigenvalue $\lambda_{SC}$ lowers the abstraction $\alpha_\mathrm{shape}$, hence we refer to it as the ``noise'' direction. \textbf{(C)} The abstraction trajectory of a deep ConvNet trained on the 3dshapes dataset \cite{kimDisentanglingFactorising18} exhibits nontrivial dynamics, which is not captured by prior theories. In particular, its trajectory is non-monotonic and it plateaus well below $\alpha = 1$ (see \autoref{appendix:feedforward-convnet} for experiment details). In this paper, our aim is a dynamical theory of abstraction which captures such behavior. \textbf{(D)} Abstraction and probe generalization of deep ReLU network trained on synthetic data (see \autoref{appendix:synthetic-details} for experimental details). \(P_\mathrm{error}^{\mathrm{red}\to\mathrm{blue}}\) (orange) is the generalization error of a shape classifier trained only on red samples and then tested on blue samples. Here, we observe \(P_\mathrm{error}^{\mathrm{red}\to\mathrm{blue}}\) trends with abstraction (blue) and is well-approximated by \autoref{proposition:ccgp} (black dotted), which illustrates how abstraction influences probe generalization error. \textbf{(E)} In our framework we assume the dataset kernels obey a block structure called ``2FS'' (see \autoref{assumption:2fs}). Any 2FS kernel $\mathbf{A}$ is described by only five entry types: $a_d$, $a_2$, $a_{1s}$, $a_{1c}$, $a_0$.}
    \label{fig:2fs-ansatz}
\end{figure}


\paragraph{Key contributions.} In \autoref{section:min-model}-\autoref{section:exact-solutions} we study a minimal linear network setting to obtain exact solutions for the training-time trajectory of abstraction and then use these to derive laws for the final and maximum abstraction reached during training (\autoref{section:fixedpoints-overshoot}). We then extend the theory to arbitrarily deep networks and derive laws for the effect of depth on abstraction (\autoref{section:depth}). In \autoref{section:nonlinear} we extend our theory to infinite-width nonlinear networks, proving that nonlinearities weaken abstraction in features compared to preactivations (\autoref{section:attenuation}) and show that ReLU reshapes dynamics to depend more on inputs than targets (\autoref{section:nonlinear-dynamics}). In \autoref{section:applications}, we apply our theory to deep learning and neuroscience respectively, by showing that ablating nonlinearities improves abstraction and linear probe generalization in DINOv3 and Gemma 4 (\autoref{fig:language-ablation}), and that abstraction improves along the macaque ventral-stream hierarchy (\autoref{fig:neuro}).

\paragraph{Related work.} \autoref{appendix:related-work} contains an extended list of related work; we particularly highlight three areas:

\textit{Abstract representations.} Recent work showed that both high-level brain regions and neural networks develop a representational geometry supporting simple forms of out-of-distribution generalization, often termed ``abstract'' \cite{johnstonAbstractRepresentations23, bernardiGeometryAbstraction20, nogueiraGeometryCortical23, fascianelliNeuralRepresentational24a, courellisAbstractRepresentations24, oneillRepresentationalGeometry24, zhuGeometricFoundation26, itoCompositionalGeneralization22}. This geometry is quantified by \textit{parallelism} \cite{bernardiGeometryAbstraction20}--the cosine similarity between counterfactual concept vectors--which is equivalent to our abstraction score. Recently, \cite{wangMathematicalTheory26} proved that at the global optimum of the loss, networks exhibit perfect abstraction in their last layer. The authors in \cite{wangMathematicalTheory26} considered an arbitrary number of concepts, and their theory's predictions remained unchanged by this number. Motivated by this, in this paper, we develop a theory for the case of two concepts that treats abstraction as a dynamical variable during training, extends to more general input-target geometries, and study its evolution across layers in features and preactivations. In addition, recent empirical work on the choice of nonlinearity showed that the sigmoid-like \texttt{tanh} nonlinearity often results in more abstract representations than using \texttt{ReLU} \cite{allemanTaskStructure24}. In \autoref{section:nonlinear-dynamics}, we provide a theoretical explanation for this phenomenon.

\textit{Linear networks.} We build on a line of previous work on linear network learning dynamics (e.g. \cite{saxeExactSolutions14, lampinenAnalyticTheory19a, kuninGetRich24}), which have been extensively studied as analytically solvable yet insightful settings \cite{namPositionSolve25a, simonThereWill26}. Like \cite{braunExactLearning22, domineLazyRich25} we adopt a matrix Riccati approach to studying learning dynamics, but whereas they study the dynamics of the block matrix 
$\tiny \begin{bmatrix}
  \mathbf{W}^\top \mathbf{W} & \mathbf{W}^\top \mathbf{W}_r^\top \\
  \mathbf{W}_r \mathbf{W} & \mathbf{W}_r \mathbf{W}_r^\top
\end{bmatrix}$ in a two-layer linear network, we employ a reduction on the readout $\mathbf{W}_r$ (see \autoref{assumption:readout}) which enables us to solve the dynamics of the hidden representation kernel in the two-layer case, and also extend to networks of arbitrary depth. 


\textit{Linear representation hypothesis.} Recent work \cite{parkLinearRepresentation24, jiangOriginsLinear24, parkGeometryCategorical25} formalized the definition of linearity for a binary concept as when the representation difference induced by changing that concept lies in a positive cone. In \autoref{appendix:lrh-parallelism}, we show that our measure of abstraction is a continuous relaxation of this condition. \cite{jiangOriginsLinear24} is closest in spirit to our work, as they prove that gradient descent asymptotically results in perfect abstraction. In contrast, we study the dynamics of abstraction during training (rather than just end-of-training), and also the evolution of abstraction across layers and through nonlinearities. 

\section{Problem setting} \label{section:min-model}

In this section, we propose a minimal linear network model which is simple enough to admit exact solutions for dynamics, yet complex enough to exhibit the phenomenon of abstraction. Consider a two-layer linear network which takes a vector input $\x \in \mathbb{R}^{D_x}$ and maps to a vector output $\y \in \mathbb{R}^{D_y}$. The network is parameterized by weight matrices $\mathbf{W} \in \mathbb{R}^{D \times D_x}$ and $\mathbf{W}_r \in \mathbb{R}^{D_y \times D}$, where $D$ is the dimension of the features. The network is trained on the dataset $\mathcal{D} = \{(\x^{(i)}, \y^{(i)})\}_{i=1}^N$ (where $N$ is the number of examples) to minimize MSE loss with ridge penalty $\gamma^{-1}$ on the readout: 
\begin{align} \label{eq:loss}
    \Lc(\mathbf{W}_r, \mathbf{Z}) &= \frac{1}{2} \| \mathbf{W}_r \mathbf{Z} - \mathbf{Y} \|_F^2 + \frac{1}{2 \gamma} \|\mathbf{W}_r\|_F^2,
\end{align}
where we have denoted the input, target and feature matrices as $\mathbf{X} \in \mathbb{R}^{D_x \times N}$, $\mathbf{Y} \in \mathbb{R}^{D_y \times N}$, $\mathbf{Z} \in \mathbb{R}^{D \times N}$ respectively, and $\mathbf{Z} = \mathbf{W} \mathbf{X}$. Following \cite{parkLinearRepresentation24,korchinskiEmergenceLinear25,wangMathematicalTheory26}, we model ``concepts'' as binary latent variables, and consider the case of two binary latents $(s^{(i)}, c^{(i)}) \in \{+1, -1\}^2$. As mnemonics, we refer to these latents and their values as ``\textbf{s}hape'' ($+1$ for square / $-1$ for circle) and ``\textbf{c}olor'' ($+1$ for red / $-1$ for blue). We assume the conditional mean of both $\x^{(i)}$ and $\y^{(i)}$ are given by different \emph{nonlinear} maps of the latents $(s^{(i)}, c^{(i)})$. We can write the generative process as:
\begin{align}
    \x^{(i)} &= f(s^{(i)}, c^{(i)}) + \boldsymbol{\epsilon}_x^{(i)} \quad \text{for} \quad f: \{-1, +1\}^2 \mapsto \mathbb{R}^{D_x}, \label{eq:input-model}\\
    \y^{(i)} &= g(s^{(i)}, c^{(i)}) + \boldsymbol{\epsilon}_y^{(i)} \quad \text{for} \quad g: \{-1, +1\}^2 \mapsto \mathbb{R}^{D_y}, \label{eq:target-model}
\end{align}
and $\boldsymbol{\epsilon}_x^{(i)}$ and $\boldsymbol{\epsilon}_y^{(i)}$ are iid zero-mean noise. This results in four classes $\mathcal{C} \coloneqq \{\redsquare, \bluesquare, \redcircle, \bluecircle\}$ (see \autoref{fig:2fs-ansatz}A) with $n$ samples each, giving us the index set \( \mathcal{I} = \{\pm 1\} \times \{\pm 1\} \times [n] \) and $N = 4n$ total samples. For simplicity, the inputs and targets are partitioned in class order, e.g. $\mathbf{Y} = [\mathbf{Y}_{\redsquare}, \mathbf{Y}_{\bluesquare}, \mathbf{Y}_{\redcircle}, \mathbf{Y}_{\bluecircle}]$. 

\subsection{Abstraction score}

Now we define our quantity of interest, the abstraction score. Let $\smash{\boldsymbol{\mu}_c \coloneqq \frac{1}{|\mathcal{I}_c|} \sum_{i \in \mathcal{I}_c} \z^{(i)}}$ be the mean representation of class $c \in \mathcal{C}$. The shape direction can be measured in two contexts: $\mathbf{u}_\mathrm{red} \coloneqq \boldsymbol{\mu}_{\redsquare} - \boldsymbol{\mu}_{\redcircle}$ and $\mathbf{u}_\mathrm{blue} \coloneqq \boldsymbol{\mu}_{\bluesquare} - \boldsymbol{\mu}_{\bluecircle}$. We define the abstraction of the shape concept as the cosine similarity $\alpha_\mathrm{shape}
  \coloneqq
  \frac{\mathbf{u}_\mathrm{red} \cdot \mathbf{u}_\mathrm{blue}}
  {\|\mathbf{u}_\mathrm{red}\| \|\mathbf{u}_\mathrm{blue}\|}$ (see \autoref{fig:2fs-ansatz}A). 
We can similarly define $\alpha_\mathrm{color}$ by symmetry, but for brevity this paper will focus only on shape abstraction and so hereafter we will drop the subscript, i.e. $\alpha \equiv \alpha_\mathrm{shape}$. The case $\alpha = 1$ is equivalent to the formal definition of linearity in linear representation hypothesis literature \cite{parkLinearRepresentation24, jiangOriginsLinear24} (see \autoref{appendix:lrh-parallelism}). An important motivation for studying abstraction is that it influences the generalization of linear probes. Under mild assumptions, changes in probe generalization error are highly predictable from $\alpha$ (see \autoref{fig:2fs-ansatz}D, orange and black). We can state the following proposition:

\begin{proposition}[Abstraction controls probe transfer] \label{proposition:ccgp}
Let $\widehat{\mathbf{w}} \coloneqq \mathbf{u}_\mathrm{red} / \|\mathbf{u}_\mathrm{red}\|$ be the probe which classifies shape from the red samples and let $\Phi$ be the normal CDF. Transfer it to a blue test sample $\mathbf{z}_\mathrm{blue}$, centered as $\widetilde{\mathbf{z}}_\mathrm{blue} \coloneqq \mathbf{z}_\mathrm{blue} - \smash{\frac{1}{2}} (\boldsymbol{\mu}_{\bluesquare} + \boldsymbol{\mu}_{\bluecircle})$.
If the projected blue scores $\widehat{\mathbf{w}}^\top \widetilde{\mathbf{z}}_\mathrm{blue}$ are approximately Gaussian with variance 
$
\sigma_\mathrm{blue}^{2}
    =
    \widehat{\mathbf{w}}^\top
    \boldsymbol{\Sigma}_\mathrm{blue}
    \widehat{\mathbf{w}},
$
then the test error is approximately:
\begin{align}
P_{\mathrm{error}}^{\mathrm{red}\to\mathrm{blue}}
    \approx
    \Phi
    \left(
        -
        \frac{
            \|\mathbf{u}_\mathrm{blue}\|
        }{
            2\sigma_\mathrm{blue}
        } \cdot \alpha
    \right),
    \quad 
    \text{and in particular,} ~~ 
    \frac{
        \partial P_{\mathrm{error}}^{\mathrm{red}\to\mathrm{blue}}
    }{
        \partial \alpha
    }
    <
    0.
\end{align}
See \autoref{appendix:probe-generalization} for proof and discussion. In neuroscience, this form of generalization is sometimes referred to as the cross-condition generalization performance (CCGP) \cite{bernardiGeometryAbstraction20}. 
\end{proposition}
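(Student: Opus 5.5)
The plan is to compute the probe's score on a centered blue sample directly and reduce the error to a one-dimensional Gaussian tail. The classifier assigns ``square'' when $\widehat{\mathbf{w}}^\top \widetilde{\mathbf{z}}_\mathrm{blue} > 0$ and ``circle'' otherwise. Centering at the blue midpoint is what makes the threshold at $0$ natural. Consider a blue square sample. Its centered mean is
\begin{align*}
\boldsymbol{\mu}_{\bluesquare} - \tfrac{1}{2}(\boldsymbol{\mu}_{\bluesquare} + \boldsymbol{\mu}_{\bluecircle}) = \tfrac{1}{2}\mathbf{u}_\mathrm{blue},
\end{align*}
and for a blue circle it is $-\tfrac{1}{2}\mathbf{u}_\mathrm{blue}$. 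Projecting onto $\widehat{\mathbf{w}}$ then gives mean score $\pm\tfrac{1}{2}\widehat{\mathbf{w}}^\top \mathbf{u}_\mathrm{blue}$. By the definition of $\alpha$, this equals $\pm\tfrac{1}{2}\|\mathbf{u}_\mathrm{blue}\|\,\alpha$.

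Next I will invoke the Gaussian hypothesis. I take $\boldsymbol{\Sigma}_\mathrm{blue}$ to be the within-class covariance of blue samples, assumed shared by $\bluesquare$ and $\bluecircle$, or else their pooled covariance. Then the projected score of a blue square is approximately $\mathcal{N}\big(\tfrac12\|\mathbf{u}_\mathrm{blue}\|\alpha,\ \sigma_\mathrm{blue}^2\big)$, and a blue circle gives the mirror image. The misclassification probability for a square is $\Pr[\text{score}<0] = \Phi\big(-\|\mathbf{u}_\mathrm{blue}\|\alpha/(2\sigma_\mathrm{blue})\big)$, and by symmetry the circle error is the same. Averaging over the two equally sized classes (each has $n$ samples) yields the stated formula.

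For the monotonicity claim I differentiate in $\alpha$ while holding $\|\mathbf{u}_\mathrm{blue}\|$ and $\sigma_\mathrm{blue}$ fixed. The chain rule gives
\begin{align*}
\frac{\partial P}{\partial\alpha} = -\frac{\|\mathbf{u}_\mathrm{blue}\|}{2\sigma_\mathrm{blue}}\,\varphi(\cdot) < 0,
\end{align*}
since the density $\varphi$ is positive, and $\|\mathbf{u}_\mathrm{blue}\| > 0$ and $\sigma_\mathrm{blue} > 0$ in any nondegenerate case.

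The main subtlety is making this partial derivative meaningful. $\sigma_\mathrm{blue}$ depends on $\widehat{\mathbf{w}}$, which rotates as $\alpha$ changes, so the claim must be read as a statement about the explicit dependence with the other two factors held fixed. I would state that interpretation explicitly. I would also state the approximations: a shared class covariance, or one noted as an average if the two class covariances differ, and Gaussianity of the 1D projection. The latter is plausible by a CLT-type argument when the noise $\boldsymbol{\epsilon}_x$ is spread over many coordinates.
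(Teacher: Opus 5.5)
Your proposal is correct and follows the paper's proof essentially step for step. Both arguments use the same chain: the centered blue class means $\pm\frac{1}{2}\mathbf{u}_\mathrm{blue}$, projection onto $\widehat{\mathbf{w}}$ giving margin $\frac{1}{2}\|\mathbf{u}_\mathrm{blue}\|\alpha$, and identical zero-threshold Gaussian errors for the two classes. Both also differentiate with $\|\mathbf{u}_\mathrm{blue}\|$ and $\sigma_\mathrm{blue}$ held fixed and read $\sigma_\mathrm{blue}^2$ as a within-class or pooled variance, so your remark about $\sigma_\mathrm{blue}$ depending on $\widehat{\mathbf{w}}$ matches the interpretation the paper adopts.
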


\subsection{Assumptions}

Our aim is to study $\alpha(t)$ during gradient flow. At every learning time $t$, $\alpha$ is determined by the kernel $\mathbf{Q}(t) \coloneqq \mathbf{Z}(t)^\top \mathbf{Z}(t)$, which depends on the input and target kernels $\boldsymbol{\Sigma}_x \coloneqq \mathbf{X}^\top \mathbf{X}$ and $\boldsymbol{\Sigma}_y \coloneqq \mathbf{Y}^\top \mathbf{Y}$. Generically, the gradient flow dynamics of $\mathbf{Q}$ are intractable without further assumptions, so to obtain exact solutions, we make two key simplifying assumptions which we briefly discuss in this section. 

The first assumption concerns the readout $\mathbf{W}_r$. Note that in \autoref{eq:loss}, we deliberately regularize just $\mathbf{W}_r$ (not $\mathbf{W}$) since since our quantity of interest $\alpha$ is invariant to the scale of $\mathbf{W}$ (notice in \autoref{fig:2fs-ansatz}A, the angle between the \red{red} and \blue{blue} vectors is invariant to their norms). We assume the following:

\begin{assumption}[Variable-Projected Readout] \label{assumption:readout}
We assume the readout is always at its optimum,
\begin{align*}
    \mathbf{W}_r(t) = \mathbf{W}_r^*(\mathbf{Z}(t)) = \gamma \mathbf{Y}(\mathbf{I}_N + \gamma \mathbf{Q}(t))^{-1} \mathbf{Z}(t)^\top, \forall t
\end{align*}
\end{assumption}

Using \autoref{assumption:readout}, we obtain a simplified form of \autoref{eq:loss}, $\smash{\mathcal{L}^* (\mathbf{Z})} = \smash{\frac{1}{2} \mathrm{Tr} \pbracket{ \mathbf{Y}^\top \mathbf{Y} (\mathbf{I}_N + \gamma \mathbf{Q}(t))^{-1}}}$. We argue this is more realistic than freezing \(\mathbf{W}_r\), which is commonly assumed in prior work on learning dynamics, e.g. \cite{saadOnlineLearning95a,aroraFineGrainedAnalysis19}. This corresponds to the limit in which the learning rate for \(\mathbf{W}_r\) is much larger than for \(\mathbf{W}\) \cite{barboniUltrafastFeature25,marionLeveragingTwo23}. In \autoref{appendix:vp-readout} we do a full analysis of \autoref{assumption:readout}, where we show that it still captures the key qualitative behavior of the naturalistic case where \(\mathbf{W}_r\) has a finite learning rate.

The second assumption concerns the structure of $\boldsymbol{\Sigma}_x$ and $\boldsymbol{\Sigma}_y$. By \autoref{proposition:walsh-expansion}, we know that any function of the latents $(s^{(i)}, c^{(i)})$ admits a unique Fourier expansion in the basis $\{1, s^{(i)}, c^{(i)}, s^{(i)}c^{(i)}\}$. Because $g$ in \autoref{eq:target-model} is a function of $(s^{(i)}, c^{(i)})$, we can expand $\mathbf{y}^{(i)}$ as follows:
\begin{align}
  \mathbf{y}^{(i)} 
  = g(s^{(i)}, c^{(i)}) + \boldsymbol{\epsilon}_y^{(i)}
  = \hat{\mathbf{g}}_\varnothing + s^{(i)} \hat{\mathbf{g}}_S + c^{(i)} \hat{\mathbf{g}}_C + s^{(i)} c^{(i)} \hat{\mathbf{g}}_{SC} + \boldsymbol{\epsilon}_y^{(i)}
\end{align}

If we impose pairwise orthogonality between the Fourier components, $\hat{\mathbf{g}}_S \perp \hat{\mathbf{g}}_C \perp \hat{\mathbf{g}}_{SC} \perp \hat{\mathbf{g}}_{\varnothing}$, then the target kernel $\boldsymbol{\Sigma}_y$ diagonalizes in a natural, interpretable basis for studying abstraction. By similar arguments, we assume the same structure holds for $\boldsymbol{\Sigma}_x$ and $\mathbf{Q}$. We refer the reader to \autoref{appendix:2fs-walsh-hadamard} for a full discussion on this assumption, which we formalize in terms of a symmetry group as follows: 

\begin{assumption}[Two-Factor Symmetry (2FS)] \label{assumption:2fs}
Define the symmetry group \(\mathcal{G} \cong (S_n)^4 \rtimes (\mathbb{Z}_2)^2\) which acts on the index set $\mathcal{I} = \{\pm 1\} \times \{\pm 1\} \times [n]$ by permutations of samples within each class together with global relabelings $s \mapsto -s$ and $c \mapsto -c$. Let \( \boldsymbol{\Pi}_g \) be the permutation matrix representing the permutation of \(\mathcal{I}\) induced by \(g \in \mathcal{G}\). We assume that $\boldsymbol{\Sigma}_x, \boldsymbol{\Sigma}_y, \mathbf{Q}(0)$ are invariant under this action:
\begin{align}
  \boldsymbol{\Pi}_g \mathbf{A} \boldsymbol{\Pi}_g^\top = \mathbf{A} \quad \forall g \in \mathcal{G}, \quad \mathbf{A} \in \{ \boldsymbol{\Sigma}_x, \boldsymbol{\Sigma}_y, \mathbf{Q}(0) \}
\end{align}
Any 2FS kernel $\mathbf{A}$ is characterized by a five-entry block structure shown in \autoref{fig:2fs-ansatz}E and admits the eigenmode decomposition $\mathbf{A} = \sum_m \lambda_m^{\smash{(\mathbf{A})}} \mathbf{P}_m$ where $\mathbf{P}_m$ is the orthogonal projector onto mode \(m \in \{I, S, C, SC, G\}\) and $\lambda_m^{\smash{(\mathbf{A})}}$ is the corresponding eigenvalue. $G, S, C, SC$ denote the global mean, shape, color and shape-color interaction modes, and $I$ captures residual within-class variation. 
\end{assumption}

2FS is a minimal symmetry class which preserves the distinction between abstract vs non-abstract representations. Importantly, \autoref{assumption:2fs} affords us simultaneous diagonalization of the kernels in a natural interpretable basis, reducing the dynamics of $\mathbf{Q}$ to scalar ODEs for each eigenmode. 

\paragraph{Signal vs. noise interpretation.} The two modes relevant for shape abstraction are \(S\) and \(SC\). As illustrated in \autoref{fig:2fs-ansatz}A-B, \(S\) controls separability of square vs circle, while \(SC\) controls separability of the XOR dichotomy \(\{\redsquare, \bluecircle\}\) vs \(\{\redcircle, \bluesquare\}\). As $\alpha$ can only be reduced via \(SC\), we refer to it as the ``noise'' mode, and \(S\) as the ``signal'' mode. Motivated by this, for any 2FS kernel $\mathbf{A}$ we can write the abstraction induced by it, $\alpha_\mathbf{A}$, in terms of an ``inverse SNR'' (signal-to-noise ratio) $\smash{\nu(\mathbf{A})}$:
\begin{align}
  \alpha_{\mathbf{A}} = \frac{1 - \nu(\mathbf{A})}{1 + \nu(\mathbf{A})}, \quad \text{where we define} ~~
  \nu(\mathbf{A}) \coloneqq \lambda_{SC}^{(\mathbf{A})} / {\lambda_S^{(\mathbf{A})}} \label{eq:alpha-from-nu}
\end{align}

\section{Solutions to the minimal model} \label{section:exact-solutions}

In \autoref{appendix:reduced-kernel-dynamics}, we derive the gradient flow equation $\dot{\mathbf{Z}} = \mathbf{Z} \mathbf{M}(\mathbf{Q}) \boldsymbol{\Sigma}_x$ and the Riccati equation $\dot{\mathbf{Q}} = \boldsymbol{\Sigma}_x \mathbf{M} \mathbf{Q} + \mathbf{Q} \mathbf{M} \boldsymbol{\Sigma}_x$, where we define $\mathbf{M}(\mathbf{Q}) \coloneqq \gamma (\mathbf{I}_N + \gamma \mathbf{Q})^{-1} \boldsymbol{\Sigma}_y (\mathbf{I}_N + \gamma \mathbf{Q})^{-1}$ as the \textit{effective target kernel}. The targets influence dynamics through $\mathbf{M}$, which is a spectrally filtered version of $\boldsymbol{\Sigma}_y$. In this section, for the abstraction induced by $\mathbf{Q}$, we suppress the subscript such that $\alpha \equiv \alpha_{\mathbf{Q}}$. 

\begin{proposition}[Linear network dynamics] \label{proposition:linear-odes}
Let $\lambda_m$ be the eigenvalue of $\mathbf{Q}$ for mode $m \in \{I, S, C, SC, G\}$. Then $\lambda_m$ and $\alpha$ evolve according to the following respective ODEs:
\begin{align} \label{eq:eigenvalue-odes}
  \dot \lambda_m = 2 \gamma \lambda_m \lambda_m^{(\boldsymbol{\Sigma}_x)} \frac{\lambda_m^{(\boldsymbol{\Sigma}_y)}}{(1 + \gamma \lambda_m)^2}, \qquad 
  \dot \alpha = (1 - \alpha^2) \lambda_S^{(\boldsymbol{\Sigma}_x)} \lambda_S^{(\mathbf{M})} \pbracket{1 - \nu(\boldsymbol{\Sigma}_x) \nu(\mathbf{M})} 
\end{align}
\end{proposition}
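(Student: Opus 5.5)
The plan is to use the Riccati equation $\dot{\mathbf{Q}} = \boldsymbol{\Sigma}_x \mathbf{M} \mathbf{Q} + \mathbf{Q} \mathbf{M} \boldsymbol{\Sigma}_x$ together with \autoref{assumption:2fs} to reduce everything to scalar dynamics. The first step is to show that 2FS is preserved along the flow. If $\mathbf{Q}$ is $\mathcal{G}$-invariant, then $(\mathbf{I}_N + \gamma \mathbf{Q})^{-1}$ is also invariant. Since $\boldsymbol{\Sigma}_y$ and $\boldsymbol{\Sigma}_x$ are invariant, so are $\mathbf{M}(\mathbf{Q})$ and the right-hand side of the Riccati equation. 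The vector field is therefore equivariant under conjugation by $\boldsymbol{\Pi}_g$. Uniqueness of solutions (the right-hand side is smooth wherever $\mathbf{I}_N+\gamma\mathbf{Q}$ is invertible, which holds for $\mathbf{Q}\succeq 0$) then gives $\boldsymbol{\Pi}_g \mathbf{Q}(t) \boldsymbol{\Pi}_g^\top = \mathbf{Q}(t)$ for all $t$, because $\mathbf{Q}(0)$ is invariant. Consequently $\mathbf{Q}(t) = \sum_m \lambda_m(t) \mathbf{P}_m$ with the same fixed projectors $\mathbf{P}_m$ used for $\boldsymbol{\Sigma}_x$ and $\boldsymbol{\Sigma}_y$.

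Next, I substitute the simultaneous diagonalization. Because the $\mathbf{P}_m$ are mutually orthogonal idempotents, each function of the kernels acts modewise: $(\mathbf{I}_N+\gamma\mathbf{Q})^{-1} = \sum_m (1+\gamma\lambda_m)^{-1}\mathbf{P}_m$, so that $\lambda_m^{(\mathbf{M})} = \gamma \lambda_m^{(\boldsymbol{\Sigma}_y)}/(1+\gamma\lambda_m)^2$. All the factors in the Riccati equation commute, so $\dot{\mathbf{Q}} = 2\boldsymbol{\Sigma}_x\mathbf{M}\mathbf{Q}$. Projecting onto $\mathbf{P}_m$ gives $\dot\lambda_m = 2\lambda_m^{(\boldsymbol{\Sigma}_x)}\lambda_m^{(\mathbf{M})}\lambda_m$, which is the first ODE in \autoref{eq:eigenvalue-odes}.

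For the $\alpha$ equation, I work with $\nu = \lambda_{SC}/\lambda_S$. I write $r_m \coloneqq \dot\lambda_m/\lambda_m = 2\lambda_m^{(\boldsymbol{\Sigma}_x)}\lambda_m^{(\mathbf{M})}$, so that $\dot\nu = \nu(r_{SC}-r_S)$. Using \autoref{eq:alpha-from-nu}, I have $d\alpha/d\nu = -2/(1+\nu)^2$. I also use the identities $1-\alpha^2 = 4\nu/(1+\nu)^2$ and $\nu(\boldsymbol{\Sigma}_x)\nu(\mathbf{M}) = \lambda_{SC}^{(\boldsymbol{\Sigma}_x)}\lambda_{SC}^{(\mathbf{M})}/(\lambda_S^{(\boldsymbol{\Sigma}_x)}\lambda_S^{(\mathbf{M})})$. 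These give
\begin{align*}
\dot\alpha = \frac{-2\nu}{(1+\nu)^2}\,(r_{SC}-r_S) = (1-\alpha^2)\,\tfrac{1}{2}(r_S - r_{SC}) = (1-\alpha^2)\lambda_S^{(\boldsymbol{\Sigma}_x)}\lambda_S^{(\mathbf{M})}\bigl(1-\nu(\boldsymbol{\Sigma}_x)\nu(\mathbf{M})\bigr),
\end{align*}
which is the second ODE.

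Before this computation I also need to confirm that $\alpha$ as defined through class means actually equals $\alpha_{\mathbf{Q}}$ in \autoref{eq:alpha-from-nu}. For this I express $\mathbf{u}_\mathrm{red}\cdot\mathbf{u}_\mathrm{blue}$ and $\|\mathbf{u}_\mathrm{red}\|^2$ through the class-mean Gram matrix. In the Walsh basis these become $\lambda_S-\lambda_{SC}$ and $\lambda_S+\lambda_{SC}$ up to a common factor, with the $I$, $G$ and $C$ modes cancelling. I take this identification as given from the setup.

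I expect the main obstacle to be the invariance-preservation step. It needs care in two places: checking that the group action commutes with matrix inversion, and ensuring $\lambda_S>0$ so that $\nu$ is well defined. For the latter, the first ODE shows that $\lambda_m$ cannot change sign, so positivity at $t=0$ persists. The rest of the argument is direct algebra.
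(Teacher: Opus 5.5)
Your proposal is correct and follows essentially the same route as the paper: it preserves 2FS along the Riccati flow and diagonalizes in the common projectors to get $\dot\lambda_m = 2\lambda_m^{(\boldsymbol{\Sigma}_x)}\lambda_m^{(\mathbf{M})}\lambda_m$ with the filtered $\lambda_m^{(\mathbf{M})}$, then differentiates $\alpha = (1-\nu)/(1+\nu)$ using $\dot\nu/\nu = r_{SC}-r_S$ and $1-\alpha^2 = 4\nu/(1+\nu)^2$. Your equivariance-plus-uniqueness argument for invariance of $\mathbf{Q}(t)$ is, if anything, a slightly more rigorous version of the paper's remark that the vector field is tangent to the 2FS algebra.
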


We can then solve \autoref{proposition:linear-odes} to obtain scalar solutions for each eigenvalue (\autoref{theorem:implicit-solution-eigenvalue}), which in turn gives us an exact solution for the trajectory of abstraction as follows (see \autoref{appendix:implicit-solutions} for proofs):

\begin{theorem}[Exact implicit solution] \label{theorem:implicit-solution}
Define the ridge-normalized eigenvalues $\smash{\tilde{\lambda}_m(t)} \coloneqq \gamma \lambda_m(t)$. Let $F(a) \coloneqq \frac{1}{2} a^2 + 2 a + \log a$ for $a > 0$ and $\bar{\nu} \coloneqq \nu(\boldsymbol{\Sigma}_x)\nu(\boldsymbol{\Sigma}_y)$. Then we have the following exact implicit solution for $\alpha(t)$ in parametric form using $\smash{\tilde{\lambda}_S}$ as a monotone clock variable:
\begin{align}
  F \pbracket{\tilde{\lambda}_S \, \frac{1 - \alpha}{1 + \alpha}} - \bar{\nu} F (\tilde{\lambda}_S) = F (\tilde{\lambda}_{SC,0}) - \bar{\nu} F (\tilde{\lambda}_{S,0}), \quad 
  t = \frac{F (\tilde{\lambda}_S) - F (\tilde{\lambda}_{S,0})}{2 \gamma \lambda_S^{(\boldsymbol{\Sigma}_x)}\lambda_S^{(\boldsymbol{\Sigma}_y)}}
\end{align}
\end{theorem}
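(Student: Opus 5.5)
Starting from the eigenvalue ODEs in \autoref{proposition:linear-odes}, multiply by $\gamma$ and pass to the ridge-normalized eigenvalues $\tilde\lambda_m = \gamma\lambda_m$. Each mode then decouples into a separable scalar ODE:
\begin{align*}
  \dot{\tilde\lambda}_m = 2\gamma\,\lambda_m^{(\boldsymbol{\Sigma}_x)}\lambda_m^{(\boldsymbol{\Sigma}_y)}\,\frac{\tilde\lambda_m}{(1+\tilde\lambda_m)^2}.
\end{align*}
Separating variables gives $\frac{(1+a)^2}{a}\,da = \left(a + 2 + \tfrac{1}{a}\right)da$, whose antiderivative is exactly $F(a)=\tfrac12 a^2+2a+\log a$. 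Integrating from $0$ to $t$ therefore yields the per-mode implicit solution
\begin{align*}
  F(\tilde\lambda_m(t)) - F(\tilde\lambda_{m,0}) = 2\gamma\,\lambda_m^{(\boldsymbol{\Sigma}_x)}\lambda_m^{(\boldsymbol{\Sigma}_y)}\,t.
\end{align*}
This is presumably the content of \autoref{theorem:implicit-solution-eigenvalue}. Taking $m=S$ and solving for $t$ gives the second displayed identity directly. Monotonicity of $\tilde\lambda_S$ is needed to use it as a clock. It follows because the right-hand side of the ODE has a fixed sign whenever $\lambda_S^{(\boldsymbol{\Sigma}_x)}\lambda_S^{(\boldsymbol{\Sigma}_y)}>0$, and because $F$ is strictly increasing on $(0,\infty)$ (since $F'(a)=(1+a)^2/a>0$). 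Together these make $t\mapsto\tilde\lambda_S(t)$ a bijection onto its range.

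Next, eliminate $t$ between the $S$ and $SC$ equations. The $SC$ equation reads $F(\tilde\lambda_{SC})-F(\tilde\lambda_{SC,0}) = 2\gamma\lambda_{SC}^{(\boldsymbol{\Sigma}_x)}\lambda_{SC}^{(\boldsymbol{\Sigma}_y)}t$. Substituting the expression for $t$ from the $S$ mode, the prefactor becomes
\begin{align*}
  \frac{\lambda_{SC}^{(\boldsymbol{\Sigma}_x)}\lambda_{SC}^{(\boldsymbol{\Sigma}_y)}}{\lambda_S^{(\boldsymbol{\Sigma}_x)}\lambda_S^{(\boldsymbol{\Sigma}_y)}} = \nu(\boldsymbol{\Sigma}_x)\nu(\boldsymbol{\Sigma}_y) = \bar\nu,
\end{align*}
so that
\begin{align*}
  F(\tilde\lambda_{SC}) - \bar\nu F(\tilde\lambda_S) = F(\tilde\lambda_{SC,0}) - \bar\nu F(\tilde\lambda_{S,0}).
\end{align*}
Finally, invert \autoref{eq:alpha-from-nu} applied to $\mathbf{Q}$: $\nu(\mathbf{Q}) = \lambda_{SC}/\lambda_S = (1-\alpha)/(1+\alpha)$. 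Since $\nu$ is scale-invariant, $\tilde\lambda_{SC} = \tilde\lambda_S\,\tfrac{1-\alpha}{1+\alpha}$, and substituting this gives the first identity.

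The main obstacle is bookkeeping about the domain rather than any computation. $F$ contains $\log a$, so we need $\tilde\lambda_S,\tilde\lambda_{SC}>0$ for all $t$. This follows because each eigenvalue ODE has the form $\dot\lambda = \lambda\cdot(\text{bounded})$, which preserves positivity as long as $\mathbf{Q}(0)$ has positive $S$ and $SC$ eigenvalues. The clock argument also needs $\lambda_S^{(\boldsymbol{\Sigma}_x)}\lambda_S^{(\boldsymbol{\Sigma}_y)}\neq 0$. If some target or input eigenvalue vanishes, the relevant eigenvalue is constant and the formula degenerates consistently: for example $\bar\nu=0$ makes $\tilde\lambda_{SC}$ frozen. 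I would state these nondegeneracy conditions explicitly. I would also remark that the $\dot\alpha$ equation in \autoref{proposition:linear-odes} serves only as a consistency check: differentiating the implicit relation should reproduce it.
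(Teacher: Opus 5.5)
Your proposal is correct and follows the paper's proof essentially step for step. The paper also integrates the separable per-mode ODE for $\tilde\lambda_m$ via $F'(a)=(1+a)^2/a$ (its \autoref{theorem:implicit-solution-eigenvalue}), eliminates $t$ between the $S$ and $SC$ modes using $\bar\nu = b_{SC}/b_S$, substitutes $\tilde\lambda_{SC}=\tilde\lambda_S(1-\alpha)/(1+\alpha)$, and reads off $t$ from the $S$ mode, under the same nondegeneracy assumptions you flag (interior initialization and $\lambda_S^{(\boldsymbol{\Sigma}_x)}\lambda_S^{(\boldsymbol{\Sigma}_y)}>0$).
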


Using \autoref{theorem:implicit-solution}, we derive analytic laws governing the trajectory of $\alpha(t)$ in \autoref{section:fixedpoints-overshoot}-\autoref{section:depth}.

\subsection{What determines the endpoint and trajectory of abstraction?} \label{section:fixedpoints-overshoot}

\begin{theorem}[Terminal abstraction law] \label{thm:fixed-point}
Define the terminal abstraction $\alpha_{\infty} \coloneqq \displaystyle \lim_{t \to \infty} \alpha(t)$. For any non-zero initialization $\lambda_{S,0}, \lambda_{SC,0}>0$, the terminal abstraction follows the simple law:
\begin{align}
  \alpha_{\infty} = \frac{1 - \sqrt{\nu(\boldsymbol{\Sigma}_x)\nu(\boldsymbol{\Sigma}_y)}}{1 + \sqrt{\nu(\boldsymbol{\Sigma}_x)\nu(\boldsymbol{\Sigma}_y)}}
\end{align}
\end{theorem}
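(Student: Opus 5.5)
Our approach is to take $t\to\infty$ in the exact implicit solution of \autoref{theorem:implicit-solution}. Write $\tilde r \coloneqq \tilde\lambda_S\frac{1-\alpha}{1+\alpha} = \tilde\lambda_{SC}$; this is the ridge-normalized $SC$ eigenvalue, since $\nu(\mathbf{Q}) = \lambda_{SC}/\lambda_S$ and \autoref{eq:alpha-from-nu} gives $\frac{1-\alpha}{1+\alpha}=\nu(\mathbf{Q})$. The implicit relation then reads
\begin{align*}
F(\tilde\lambda_{SC}) - \bar\nu F(\tilde\lambda_S) = \text{const}, \qquad F(a)=\tfrac12 a^2+2a+\log a .
\end{align*}
The plan is to show that both $\tilde\lambda_S$ and $\tilde\lambda_{SC}$ diverge, and then read off the ratio $\tilde\lambda_{SC}/\tilde\lambda_S$ from the dominant quadratic term of $F$.

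\textbf{Divergence.} From \autoref{proposition:linear-odes}, each $\dot\lambda_m = 2\gamma\lambda_m\lambda_m^{(\boldsymbol{\Sigma}_x)}\lambda_m^{(\boldsymbol{\Sigma}_y)}/(1+\gamma\lambda_m)^2$. For $m\in\{S,SC\}$ the right-hand side is positive whenever $\lambda_m>0$ and the data eigenvalues are positive. We assume the latter, which is implicit because $\nu$ is finite and nonzero. So each $\lambda_m$ is increasing and cannot converge to a finite limit $L>0$: the derivative would stay bounded below by a positive constant near $L$. Hence $\lambda_m\to\infty$. The same conclusion follows directly from the clock relation $t = (F(\tilde\lambda_S)-F(\tilde\lambda_{S,0}))/(2\gamma\lambda_S^{(\boldsymbol{\Sigma}_x)}\lambda_S^{(\boldsymbol{\Sigma}_y)})$, which gives $F(\tilde\lambda_S)\sim ct$ and hence $\tilde\lambda_S\sim\sqrt{2ct}\to\infty$. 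The analogous scalar solution for the $SC$ mode gives $\tilde\lambda_{SC}\to\infty$. As $\tilde\lambda_S\to\infty$, $F(\tilde\lambda_S)\to\infty$, so $F(\tilde\lambda_{SC}) = \bar\nu F(\tilde\lambda_S)+\text{const}\to\infty$. This also confirms that $\tilde\lambda_{SC}\to\infty$, provided $\bar\nu>0$.

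\textbf{Ratio.} Divide the implicit relation by $\tfrac12\tilde\lambda_S^2$. Since $F(a)/(\tfrac12 a^2)\to 1$ as $a\to\infty$, we obtain
\begin{align*}
\frac{\tilde\lambda_{SC}^2}{\tilde\lambda_S^2}\cdot\frac{F(\tilde\lambda_{SC})}{\tfrac12\tilde\lambda_{SC}^2} = \bar\nu\,\frac{F(\tilde\lambda_S)}{\tfrac12\tilde\lambda_S^2} + o(1).
\end{align*}
Hence $\nu(\mathbf{Q}(t))^2 \to \bar\nu$, i.e.\ $\nu(\mathbf{Q})\to\sqrt{\bar\nu}$. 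Substituting into \autoref{eq:alpha-from-nu}, using $\alpha=(1-\nu)/(1+\nu)$, gives the stated law. The limit is independent of initialization, because the constant on the right-hand side is $o(\tilde\lambda_S^2)$.

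\textbf{Main obstacle.} The hard part is justifying rigorously that both eigenvalues diverge and that the subleading terms ($2a$, $\log a$, and the constant) are negligible relative to $a^2$. This requires positivity of the $S$ and $SC$ eigenvalues of $\boldsymbol{\Sigma}_x$ and $\boldsymbol{\Sigma}_y$. The degenerate case $\bar\nu=0$ must be handled separately: there $\lambda_{SC}$ stays bounded, so $\nu(\mathbf{Q})\to0$ and $\alpha_\infty=1$, which agrees with the formula. As a cross-check, the same limit follows from the $\dot\alpha$ ODE. At large $\lambda$, $\lambda_m^{(\mathbf{M})}\approx\lambda_m^{(\boldsymbol{\Sigma}_y)}/(\gamma\lambda_m^2)$, so $\nu(\mathbf{M})\approx \nu(\boldsymbol{\Sigma}_y)/\nu(\mathbf{Q})^2$. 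The fixed point of $\dot\alpha=0$ then gives $\nu(\boldsymbol{\Sigma}_x)\nu(\boldsymbol{\Sigma}_y)=\nu(\mathbf{Q})^2$, matching the result above.
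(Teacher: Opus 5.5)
Your proposal is correct and follows essentially the same route as the paper. Both arguments take $t\to\infty$ in the exact implicit solution and use that the quadratic term of $F(a)\sim\frac12 a^2$ dominates. The paper inverts $F$ mode by mode to get $\tilde\lambda_m(t)=2\sqrt{\gamma b_m t}+O(1)$ and then takes the ratio, while you divide the time-eliminated conserved relation by $\frac12\tilde\lambda_S^2$; your separate handling of $\bar\nu=0$, where the $SC$ mode stays frozen and the ratio tends to $0$, also matches the paper's treatment of $b_{SC}=0$.
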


The significance of this is that $\alpha_\infty$ is determined solely by the geometric mean of the input and target inverse-SNRs. In particular, terminal abstraction will be perfect when \textit{either} the inputs \textit{or} targets are noiseless. In the usual supervised classification setting, using centered binary (\{0, 1\} or $\pm 1$) multi-output labels to predict the individual latents is equivalent to choosing $\nu(\boldsymbol{\Sigma}_y) = 0$, giving $\alpha_\infty = 1$. Thus \autoref{thm:fixed-point} recovers previous results for perfect abstraction in multi-class classification \cite{wangMathematicalTheory26} while allowing more general input and output statistics. To understand the \textit{trajectory} rather than just the endpoint, we focus on a common setting of the dataset geometry:

\begin{setting}[Signal-dominant dataset geometry] \label{setting:signal-dominated}
Denote $\smash{\alpha_{\boldsymbol{\Sigma}_x}}$ and $\smash{\alpha_{\boldsymbol{\Sigma}_y}}$ as the abstraction levels of the raw inputs and targets respectively. We say that a dataset is in the signal-dominant regime if $\nu(\boldsymbol{\Sigma}_x) \nu(\boldsymbol{\Sigma}_y) < 1$. Equivalently, $\lambda_S^{(\boldsymbol{\Sigma}_x)} \lambda_S^{(\boldsymbol{\Sigma}_y)} > \lambda_{SC}^{(\boldsymbol{\Sigma}_x)} \lambda_{SC}^{(\boldsymbol{\Sigma}_y)} \Longleftrightarrow \alpha_{\boldsymbol{\Sigma}_x} + \alpha_{\boldsymbol{\Sigma}_y} > 0$. 
\end{setting}

\autoref{setting:signal-dominated} is satisfied when the sum of the abstraction of the raw inputs and targets is positive. This is a mild condition in typical supervised learning settings where targets are handcrafted to be low-noise. However, for self-supervised objectives such as next token prediction, $\nu(\boldsymbol{\Sigma}_y)$ can be non-negligible because the target vector is sampled from a distribution. More generally, noisy supervision like next token prediction can still be in this regime if the input's SNR is sufficiently high.

In \autoref{appendix:overshoot-initialization} we show that in \autoref{setting:signal-dominated}, $\alpha(t)$ overshoots $\alpha_\infty$. Unlike terminal abstraction, the \textit{maximum abstraction} reached during training depends on the initialization scale of weights. In \autoref{appendix:initialization-scale} we analyze how initialization affects the maximum abstraction and link it to rich and lazy learning \cite{domineLazyRich25, kuninGetRich24}. 

\begin{theorem}[Initialization scale controls maximum abstraction] \label{theorem:initialization-overshoot}
Assume \autoref{setting:signal-dominated} and $\alpha_{0} < \alpha_{\infty}$. Define the maximum abstraction $\smash{\alpha_\mathrm{max}(\kappa) \coloneqq \sup_{t > 0} \alpha(t)}$. Fix $\nu_{\mathbf{Q}, 0} \coloneqq \lambda_{SC,0} / \lambda_{S,0}$ and consider the family of initializations $\tilde{\lambda}_{S}(0) = \kappa$, $\tilde{\lambda}_{SC,0} = \kappa \, \nu_{\mathbf{Q},0}$ for $\kappa > 0$. Then $\alpha_\mathrm{max}(\kappa)$ is \textbf{strictly decreasing in the initialization scale $\kappa$} and in particular, $\lim_{\kappa \downarrow 0} \alpha_\mathrm{max}(\kappa) = 1$ and $\lim_{\kappa \to \infty} \alpha_\mathrm{max}(\kappa) = \alpha_{\infty}$.
\end{theorem}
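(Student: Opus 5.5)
The plan is to work entirely with the conserved quantity from \autoref{theorem:implicit-solution}. Write $x \coloneqq \tilde{\lambda}_S$ and $y \coloneqq \tilde{\lambda}_{SC} = x\,\nu_{\mathbf{Q}}$, where $\nu_{\mathbf{Q}} = \frac{1-\alpha}{1+\alpha}$ is the inverse SNR of $\mathbf{Q}$. The trajectory then lies on the level curve
\begin{align*}
F(y) - \bar{\nu} F(x) = C(\kappa) \coloneqq F(\kappa\nu_{\mathbf{Q},0}) - \bar{\nu} F(\kappa).
\end{align*}
Since $x$ is a monotone clock and $\alpha$ is a strictly decreasing function of the ratio $r \coloneqq y/x$, maximizing $\alpha$ over $t>0$ is the same as minimizing $r$ along this curve for $x > \kappa$. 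The hypothesis $\alpha_0<\alpha_\infty$ together with \autoref{thm:fixed-point} means $\nu_{\mathbf{Q},0} > \sqrt{\bar\nu}$, and \autoref{setting:signal-dominated} gives $\sqrt{\bar\nu}<1$.

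\textbf{Step 1: locate the critical point.} Use $F'(a) = (1+a)^2/a$. Implicit differentiation of the level curve gives $dy/dx = \bar\nu F'(x)/F'(y)$. A critical point of $r$ satisfies $dy/dx = y/x$, which simplifies to
\begin{align*}
(1+y)^2 = \bar\nu (1+x)^2, \qquad \text{i.e.} \qquad y = \sqrt{\bar\nu}(1+x)-1.
\end{align*}
On this line the ratio is $r^* = \sqrt{\bar\nu} - (1-\sqrt{\bar\nu})/x$, which is strictly increasing in $x$. To show the trajectory actually reaches this line at some $t>0$, I will argue as follows:
\begin{itemize}
\item At $x=\kappa$ the point lies strictly above the line, because $1+\kappa\nu_{\mathbf{Q},0} > \sqrt{\bar\nu}(1+\kappa)$ (using $\nu_{\mathbf{Q},0}>\sqrt{\bar\nu}$ and $1>\sqrt{\bar\nu}$).
\item As $x\to\infty$, the quadratic term of $F$ dominates, so $y^2 \approx \bar\nu x^2$, $r\to\sqrt{\bar\nu}$ from below, and the point ends below the line.
\end{itemize}
So the trajectory crosses the line. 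Uniqueness of the crossing, which follows from Step 2, then makes this crossing the global minimum of $r$ and hence $\alpha_{\max}$.

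\textbf{Step 2: reduce to a scalar equation in $x^*$.} Substitute the line into the conservation law to get $G(x^*) = C(\kappa)$, where
\begin{align*}
G(x) \coloneqq F\big(\sqrt{\bar\nu}(1+x)-1\big) - \bar\nu F(x), \qquad x > x_\ell \coloneqq 1/\sqrt{\bar\nu}-1 .
\end{align*}
A direct computation, using $F'(y)=\bar\nu(1+x)^2/y$ on the line, gives
\begin{align*}
G'(x) = \bar\nu (1+x)^2\,\frac{\sqrt{\bar\nu}\,x - y}{xy} = \bar\nu(1+x)^2\,\frac{1-\sqrt{\bar\nu}}{xy} > 0.
\end{align*}
So $G$ is a strictly increasing bijection from $(x_\ell,\infty)$ onto $(-\infty,\infty)$. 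It tends to $-\infty$ at $x_\ell$ because $\log y\to-\infty$ there, and to $+\infty$ at infinity because its leading term is $\tfrac12(\bar\nu-\bar\nu)x^2$ plus linear terms with positive net coefficient $2\sqrt{\bar\nu}(1-\sqrt{\bar\nu})$; this needs a short check. Hence $x^* = G^{-1}(C(\kappa))$ is well defined and unique.

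\textbf{Step 3: monotonicity in $\kappa$ and the limits.} Compute
\begin{align*}
C'(\kappa) = \frac{(1+\kappa\nu_{\mathbf{Q},0})^2 - \bar\nu(1+\kappa)^2}{\kappa} > 0,
\end{align*}
which is positive by the same inequality as in Step 1. Chaining the monotone maps gives the result:
\begin{itemize}
\item $\kappa\mapsto C(\kappa)$ is increasing;
\item $C\mapsto x^*=G^{-1}(C)$ is increasing;
\item $x^*\mapsto r^*$ is increasing;
\item $r^*\mapsto\alpha_{\max}=\frac{1-r^*}{1+r^*}$ is decreasing.
\end{itemize}
So $\alpha_{\max}$ is strictly decreasing in $\kappa$. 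For the limits:
\begin{itemize}
\item As $\kappa\downarrow0$, $C(\kappa)\approx(1-\bar\nu)\log\kappa\to-\infty$, so $x^*\downarrow x_\ell$, $r^*\to0$ and $\alpha_{\max}\to1$.
\item As $\kappa\to\infty$, $C\to+\infty$ since $\nu_{\mathbf{Q},0}^2>\bar\nu$, so $x^*\to\infty$, $r^*\to\sqrt{\bar\nu}$ and $\alpha_{\max}\to\alpha_\infty$ by \autoref{thm:fixed-point}.
\end{itemize}
The degenerate case $\bar\nu=0$ is treated separately: there $y$ is constant, $r\to0$ monotonically and $\alpha_{\max}=\alpha_\infty=1$, so the statement holds with non-strict equality.

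\textbf{Main obstacle.} I expect the hardest part to be the rigorous global argument in Step 1 and the asymptotic of $G$ at infinity. This requires showing that the level curve, viewed as the graph $y(x)$, stays in $y>0$, crosses the line $1+y=\sqrt{\bar\nu}(1+x)$ exactly once, and that this crossing is a minimum and not just a critical point of $r$. Uniqueness follows from Step 2, since the crossing is the unique solution of $G(x^*)=C(\kappa)$. The sign change of $dr/dx$ across the line, from the comparison $dy/dx \gtrless y/x$, then shows the critical point is a minimum.
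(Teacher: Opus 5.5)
Your proposal is correct and follows essentially the same route as the paper. The paper likewise intersects the conserved curve $F(n)-\rho^2F(s)=C_\kappa$ with the nullcline $n=\rho(1+s)-1$ of $q=n/s$, and shows that $G(s)=F(\rho(1+s)-1)-\rho^2F(s)$ is a strictly increasing bijection. It then proves $dC_\kappa/d\kappa>0$, chains the monotone maps $\kappa\mapsto C_\kappa\mapsto s_*\mapsto q_*=\rho-(1-\rho)/s_*\mapsto\alpha_{\max}$, and takes the same two limits.

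Two small points. The linear coefficient of $G$ at infinity is $\sqrt{\bar{\nu}}(1-\sqrt{\bar{\nu}})$, not $2\sqrt{\bar{\nu}}(1-\sqrt{\bar{\nu}})$; it is still positive, so the conclusion is unaffected. Your ``from below'' asymptotic in Step 1 is not enough by itself to place the curve below the line. You do not need it, though: the sign of $G(x)-C(\kappa)$ already determines which side of the line the curve is on. Together with the curve being above the line at $x=\kappa$, this also gives $x^*>\kappa$.
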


Although the dataset controls terminal abstraction, the initialization scale controls the maximum abstraction reached over its trajectory (see \autoref{fig:linear-theory}A). In the rich limit, the model comes arbitrarily close to perfect abstraction even if the terminal value is low. In contrast, in the lazy limit, abstraction never exceeds its terminal value. How long does abstraction stay perfect in the rich limit? Strikingly, the time spent near perfect abstraction diverges as $\kappa \downarrow 0$. In \autoref{appendix:rich-limit-residence} we prove the following result:

\begin{proposition}[Rich-limit metastability] 
\label{proposition:rich-limit-metastability}
Assume \autoref{setting:signal-dominated} and $\alpha_0 < \alpha_\infty$. Consider the initialization family from \autoref{theorem:initialization-overshoot}. Fix any near-perfect abstraction threshold $1-\varepsilon$ strictly above the terminal value, i.e. $0<\varepsilon<1-\alpha_\infty$. Let $T_\varepsilon^{\mathrm{in}}(\kappa)$ be the first time at which $\alpha(t;\kappa)\ge 1-\varepsilon$, and let $R_\varepsilon(\kappa)$ be the total time spent in this band. Then, as $\kappa\downarrow 0$, we have:
\begin{align}
T_\varepsilon^{\mathrm{in}}(\kappa)=O(1),
\quad
R_\varepsilon(\kappa)=\Theta\left(\log\frac{1}{\kappa}\right), \quad \text{and in particular,} ~~ R_\varepsilon(\kappa)\to\infty
\end{align}
\end{proposition}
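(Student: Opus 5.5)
Work in the ridge-normalized clock variables of \autoref{theorem:implicit-solution}. Write $\bar\nu = \nu(\boldsymbol{\Sigma}_x)\nu(\boldsymbol{\Sigma}_y) < 1$, $\rho \coloneqq \tilde\lambda_{SC}/\tilde\lambda_S = (1-\alpha)/(1+\alpha)$, and $c \coloneqq 2\gamma\lambda_S^{(\boldsymbol{\Sigma}_x)}\lambda_S^{(\boldsymbol{\Sigma}_y)}$. The initialization family gives $\tilde\lambda_{S,0}=\kappa$ and $\tilde\lambda_{SC,0}=\kappa\nu_{\mathbf{Q},0}$. Decoupling the eigenvalue ODEs of \autoref{proposition:linear-odes} yields $F(\tilde\lambda_{SC}(t)) - F(\tilde\lambda_{SC,0}) = \bar c\, t$, where $\bar c = \bar\nu c$ (this is the $SC$ analogue of the $t$-formula), alongside $F(\tilde\lambda_S) - F(\kappa) = ct$.

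\textbf{Step 1: the near-perfect band.} The condition $\alpha \ge 1-\varepsilon$ is equivalent to $\rho \le \rho_\varepsilon \coloneqq \varepsilon/(2-\varepsilon)$. The hypothesis $\varepsilon < 1-\alpha_\infty$ translates to $\rho_\varepsilon < \sqrt{\bar\nu}$, the terminal ratio given by \autoref{thm:fixed-point}. Since $\alpha$ has a single maximum (the overshoot analysis in \autoref{appendix:overshoot-initialization}, using $\dot\alpha$ from \autoref{proposition:linear-odes}), the band is a single time interval $[T^{\mathrm{in}}_\varepsilon, T^{\mathrm{out}}_\varepsilon]$. Therefore $R_\varepsilon = T^{\mathrm{out}} - T^{\mathrm{in}}$, and it suffices to estimate the two crossing times.

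\textbf{Step 2: small-eigenvalue phase.} While both $\tilde\lambda$'s are $\ll 1$, we have $F(a) = \log a + O(a)$. This gives $\tilde\lambda_S \approx \kappa e^{ct}$ and $\tilde\lambda_{SC}\approx \kappa\nu_{\mathbf{Q},0} e^{\bar c t}$, so $\rho(t) \approx \nu_{\mathbf{Q},0} e^{-(c-\bar c)t}$, which decays exponentially at a rate independent of $\kappa$. Hence $\rho$ reaches $\rho_\varepsilon$ at time $T^{\mathrm{in}} \approx \log(\nu_{\mathbf{Q},0}/\rho_\varepsilon)/(c-\bar c) = O(1)$, provided $\tilde\lambda_S$ is still small at that time. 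That holds because $\tilde\lambda_S(T^{\mathrm{in}}) \approx \kappa\, e^{cT^{\mathrm{in}}} = O(\kappa)$, and the $O(a)$ corrections contribute only $O(\kappa)$ errors, so $T^{\mathrm{in}}_\varepsilon(\kappa) = O(1)$. If $\alpha_0 \ge 1-\varepsilon$, then $T^{\mathrm{in}}=0$ and the argument is the same.

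\textbf{Step 3: exit time (the main obstacle).} We need to show that $\rho$ stays below $\rho_\varepsilon$ until $\tilde\lambda_S$ becomes order one, and that this takes time $\Theta(\log(1/\kappa))$. Let $t_1$ be the time at which $\tilde\lambda_S = 1$. Then $ct_1 = F(1) - F(\kappa) = \log(1/\kappa) + O(1)$, so $t_1 = \Theta(\log(1/\kappa))$. In the interval $[T^{\mathrm{in}}, t_1]$ we have $\rho \lesssim \nu_{\mathbf{Q},0}e^{-(c-\bar c)t}$, which keeps decreasing, so $\rho \le \rho_\varepsilon$ throughout. For a rigorous version I would use $F(a) \ge \log a$ and $F(a) \le \log a + \tfrac52 a$ on $(0,1]$ to bound $\log\rho$ above by $-(c-\bar c)t + \log\nu_{\mathbf{Q},0} + O(1)\cdot\kappa e^{ct}$-type terms, which are uniformly controlled for $t\le t_1$. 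This gives $R_\varepsilon \ge t_1 - T^{\mathrm{in}} = \Omega(\log(1/\kappa))$.

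For the upper bound, substitute $\tilde\lambda_{SC} = \rho\tilde\lambda_S$ into the implicit relation $F(\rho\tilde\lambda_S) = \bar\nu F(\tilde\lambda_S) + F(\kappa\nu_{\mathbf{Q},0}) - \bar\nu F(\kappa)$. Once $\tilde\lambda_S$ is large, the $\tfrac12 a^2$ terms dominate, and $\rho \to \sqrt{\bar\nu} > \rho_\varepsilon$. The $\kappa$-dependent constant is $(1-\bar\nu)\log\kappa + O(1)$, which is negligible relative to $\tfrac12\tilde\lambda_S^2$ once $\tilde\lambda_S \ge K\sqrt{\log(1/\kappa)}$ for a fixed constant $K$. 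Hence $\rho > \rho_\varepsilon$ by then. Reaching that level of $\tilde\lambda_S$ from $1$ costs time $[F(K\sqrt{\log(1/\kappa)}) - F(1)]/c = O(\log(1/\kappa))$. Therefore $T^{\mathrm{out}} = O(\log(1/\kappa))$ and $R_\varepsilon = \Theta(\log(1/\kappa))\to\infty$.

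The delicate point is this uniform control of the constant term. If the $\sqrt{\log}$ bookkeeping gets messy, I would fall back on monotonicity of $\rho$ after its minimum together with a crude comparison at $\tilde\lambda_S = (1/\kappa)^{\delta}$ for small $\delta$.
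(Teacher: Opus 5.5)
Your proof is correct, but it reaches the conclusion by a different route than the paper. I use your notation below, with $\rho=\tilde\lambda_{SC}/\tilde\lambda_S$.

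The paper works directly with the crossing equation $F(\rho_\varepsilon s)-\bar\nu F(s)=C_\kappa$, where $C_\kappa=F(\kappa\nu_{\mathbf{Q},0})-\bar\nu F(\kappa)=-(1-\bar\nu)\log\frac{1}{\kappa}+O(1)$. It shows that the left-hand side increases once, then decreases once, and tends to $-\infty$ at both ends. So for small $\kappa$ there are exactly two roots, $s^{\mathrm{in}}\to 0$ and $s^{\mathrm{out}}\to\infty$, and the paper inverts each asymptotically. This gives $(s^{\mathrm{out}})^2\sim\frac{2(1-\bar\nu)}{\bar\nu-\rho_\varepsilon^2}\log\frac{1}{\kappa}$ and the sharp constant $R_\varepsilon\sim\frac{1}{c}\,\frac{1-\rho_\varepsilon^2}{\bar\nu-\rho_\varepsilon^2}\log\frac{1}{\kappa}$.

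You do three things differently:
\begin{itemize}
\item You take the single-interval structure of the band from the unimodality of $\rho$ in \autoref{proposition:overshoot-appendix}.
\item You get $T^{\mathrm{in}}$ from the early exponential phase. Your value $\log(\nu_{\mathbf{Q},0}/\rho_\varepsilon)/(c-\bar c)$ agrees exactly with the paper's entrance asymptotic.
\item You sandwich the exit time. The lower bound is the time $t_1=\frac{1}{c}\log\frac{1}{\kappa}+O(1)$ for $\tilde\lambda_S$ to reach $1$. The upper bound evaluates the crossing inequality at $\tilde\lambda_S=K\sqrt{\log(1/\kappa)}$, which is a one-sided version of the paper's exit-root computation.
\end{itemize}
Your route never solves for the roots and is somewhat more robust, but it yields only $\Theta(\log\frac{1}{\kappa})$. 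The paper's route gives the exact leading coefficient. That coefficient exceeds your $1/c$ by the factor $\frac{1-\rho_\varepsilon^2}{\bar\nu-\rho_\varepsilon^2}>1$, so a fraction $\frac{1-\bar\nu}{1-\rho_\varepsilon^2}$ of the band time is spent after $\tilde\lambda_S$ passes $1$.

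Several points to tighten:
\begin{enumerate}
\item $\rho(t)$ itself need not keep decreasing on $[T^{\mathrm{in}},t_1]$. As $\kappa\downarrow 0$ its minimum sits at $\tilde\lambda_S\to(1-\sqrt{\bar\nu})/\sqrt{\bar\nu}$, which is below $1$ once $\bar\nu>1/4$. Also, your rigorous bound controls $\rho$ only up to a factor $e^{5/2+o(1)}$, so it does not by itself give $\rho\le\rho_\varepsilon$ on all of $[T^{\mathrm{in}},t_1]$. Neither point is harmful. By your Step 1 it suffices that $\rho(t_1)\le\rho_\varepsilon$, and your bound gives $\rho(t_1)\le\nu_{\mathbf{Q},0}e^{5/2+o(1)}e^{-(c-\bar c)t_1}=O(\kappa^{1-\bar\nu})\to 0$.
\item The $\kappa$-dependent constant is not negligible at $\tilde\lambda_S=K\sqrt{\log(1/\kappa)}$; it is only dominated. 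The quadratic terms of $F(\rho_\varepsilon s)$ and $\bar\nu F(s)$ partially cancel, so you need $\tfrac{1}{2}(\bar\nu-\rho_\varepsilon^2)K^2>1-\bar\nu$.
\item Your fallback would fail. Reaching $\tilde\lambda_S=\kappa^{-\delta}$ takes time $\asymp\kappa^{-2\delta}$, so the $\sqrt{\log(1/\kappa)}$ scale is essential for the $O(\log\frac{1}{\kappa})$ upper bound.
\item The case $\alpha_0\ge 1-\varepsilon$ is vacuous, since $\alpha_0<\alpha_\infty<1-\varepsilon$.
\end{enumerate}
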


Perfect abstraction is \textit{\textbf{metastable}} in the rich limit since it lasts for a parametrically long time. In other words, in the rich limit \textit{\textbf{perfect abstraction will look effectively permanent on any fixed training horizon}}, even if the terminal value (\autoref{thm:fixed-point}) is lower. Another way to view this paradox is to see that the following limits do not commute: $\smash{\displaystyle \lim_{\kappa \downarrow 0}\lim_{t \to \infty}\alpha(t;\kappa) = \alpha_{\infty}}$, whereas $\smash{\displaystyle \lim_{t \to \infty}\lim_{\kappa \downarrow 0}\alpha(t;\kappa) = 1}$. 

\begin{figure}
    \centering
    \includegraphics[width=1.0\linewidth]{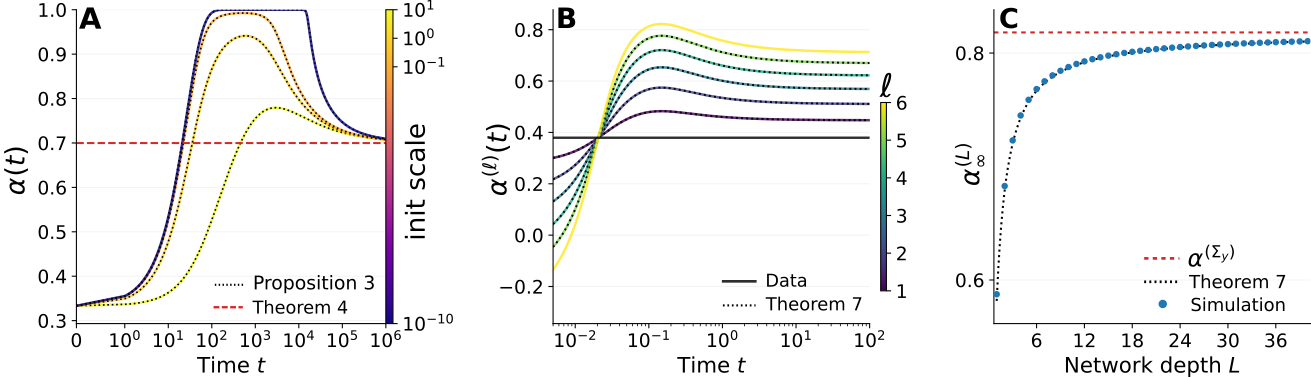}
    \caption{\textbf{Analytic laws governing abstraction dynamics in the minimal model}. All solid colored lines correspond to numerical simulations. \textbf{(A)} shows trajectory of $\alpha(t)$ in an $L=1$ hidden layer network for varying weight initialization. For all initializations, $\alpha(t)$ converges to the red terminal value \autoref{thm:fixed-point}. Meanwhile, the maximum abstraction reached over the trajectory decreases with initialization scale (\autoref{theorem:initialization-overshoot}). For very small initialization, $\alpha(t)$ reaches and stays near $1$ for a long time. \textbf{(B)} Abstraction interpolates layerwise in $L=6$ layer network, where $\ell$ is the layer index; lighter colors correspond to deeper layers. Abstraction improves in deeper layers. \textbf{(C)} Terminal abstraction in final layer increases with network depth $L$, where red limit is given by $\alpha_{\boldsymbol{\Sigma}_y}$.}
    \label{fig:linear-theory}
\vspace{-10pt}
\end{figure}

\subsection{How does depth impact abstraction?} \label{section:depth}

We now extend our model to a deep linear network with $L$ hidden layers. Let the superscript $\ell \in \{1, \ldots, L\}$ denote the layer, so $\smash{\lambda_m^{(\ell)}(t)}$ is the eigenvalue of $\smash{\mathbf{Q}^{(\ell)}}$ for mode $m$ and $\smash{\alpha^{(\ell)}(t)}$ is the abstraction. To derive the dynamics for arbitrary depth, we assume a mild setting similar to \cite{aroraConvergenceAnalysis19, kuninGetRich24}:

\begin{setting}[Layerwise balancing] \label{setting:balanced-depth}
For each mode $m \in \{ I, S, C, SC, G \}$, the layer gains remain equal along training, so $\lambda_m^{\smash{(\ell)}}=\lambda_m^{\smash{(0)}} u_m^{2\ell}$ for a scalar gain $u_m$ shared across layers.
\end{setting}

Under \autoref{setting:balanced-depth}, in \autoref{appendix:depth-dynamics} we derive the following ODE for the eigenvalues of the final hidden layer:
\begin{align}
  \dot \lambda_m = 2L \gamma \lambda^{(\boldsymbol{\Sigma}_y)}_m \pbracket{\lambda^{(\boldsymbol{\Sigma}_x)}_m}^{1/L} \frac{\pbracket{\lambda_m}^{2 - \frac{1}{L}}}{(1 + \gamma \lambda_m)^2}
\end{align}

The first question we can ask using this ODE is: for a deep network with $L$ hidden layers in this setting, how does abstraction differ across layers? In \autoref{appendix:depth-dynamics}, we derive the following answer:

\begin{theorem}[Layerwise interpolation and depth]
\label{theorem:layerwise-interp}
Suppose we are in \autoref{setting:balanced-depth}. Then the abstraction at layer $\ell \in \{1, \ldots, L \}$ obeys the following interpolation in $\mathrm{arctanh}$ space (also see \autoref{fig:linear-theory}B):
\begin{align} \label{eq:layerwise-interp}
\arctanh\alpha^{(\ell)}(t)=\pbracket{1-\frac{\ell}{L}} \arctanh\alpha_{\boldsymbol{\Sigma}_x}+\frac{\ell}{L}\arctanh\alpha^{(L)}(t)
\end{align}
Moreover, the terminal abstraction of the final layer $L$ satisfies:
\begin{align} \label{eq:deep-terminal-abstraction}
\arctanh\alpha_{\infty}^{(L)}=\frac{1}{L+1}\arctanh\alpha_{\boldsymbol{\Sigma}_x}+\frac{L}{L+1}\arctanh\alpha_{\boldsymbol{\Sigma}_y}
\end{align} 
\end{theorem}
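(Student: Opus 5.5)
The plan rests on one identity. By \autoref{eq:alpha-from-nu}, for any 2FS kernel, $\arctanh\alpha_{\mathbf{A}} = \tfrac12\log\frac{1+\alpha_{\mathbf{A}}}{1-\alpha_{\mathbf{A}}} = -\tfrac12\log\nu(\mathbf{A})$. This holds for every $\nu\in(0,\infty)$, i.e. every $\alpha\in(-1,1)$, so no sign condition on $\alpha$ is needed. Both claims then become statements about $\log\nu$: the first is linear interpolation in $\log\nu$, and the second is an asymptotic ratio of eigenvalues. I treat layer $0$ as the input, $\mathbf{Q}^{(0)}=\boldsymbol{\Sigma}_x$, so that $\lambda_m^{(0)}=\lambda_m^{(\boldsymbol{\Sigma}_x)}$ in \autoref{setting:balanced-depth}. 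This is the convention under which the setting reproduces the stated final-layer ODE.

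\textbf{Interpolation.} Under \autoref{setting:balanced-depth}, $\nu(\mathbf{Q}^{(\ell)}) = \lambda_{SC}^{(\ell)}/\lambda_S^{(\ell)} = \nu(\boldsymbol{\Sigma}_x)\,(u_{SC}/u_S)^{2\ell}$. Taking $-\tfrac12\log$ gives
\begin{align*}
\arctanh\alpha^{(\ell)} = \arctanh\alpha_{\boldsymbol{\Sigma}_x} - \ell\log(u_{SC}/u_S).
\end{align*}
The right-hand side is affine in $\ell$. Evaluating it at $\ell=L$ gives $\log(u_{SC}/u_S) = (\arctanh\alpha_{\boldsymbol{\Sigma}_x}-\arctanh\alpha^{(L)})/L$. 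Substituting this back yields \autoref{eq:layerwise-interp} at every $t$.

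\textbf{Terminal law.} I separate variables in the final-layer ODE mode by mode, with $m\in\{S,SC\}$. Define
\begin{align*}
G_L(\lambda)\coloneqq\int_{\lambda_{m,0}}^{\lambda}(1+\gamma x)^2x^{-2+1/L}\,dx .
\end{align*}
Then $G_L(\lambda_m(t)) = 2L\gamma\,\lambda_m^{(\boldsymbol{\Sigma}_y)}\bigl(\lambda_m^{(\boldsymbol{\Sigma}_x)}\bigr)^{1/L}\,t$, which is the depth-$L$ analogue of the clock in \autoref{theorem:implicit-solution}. Next I need both $\lambda_S,\lambda_{SC}\to\infty$. The right-hand side of the ODE is positive, so each $\lambda_m$ is increasing. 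It has no finite limit, because $G_L$ is finite on compacts while $t\to\infty$. There is also no finite-time blow-up, since $G_L(\lambda)\to\infty$ as $\lambda\to\infty$.

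The leading-order asymptotics are $G_L(\lambda) = \gamma^2\frac{L}{L+1}\lambda^{(L+1)/L}(1+o(1))$. The remaining terms grow at most like $\lambda^{1/L}$ or are bounded, including the $\log$ term when $L=1$. Dividing the two modes' clock identities and letting $t\to\infty$ gives
\begin{align*}
\Bigl(\lambda_{SC}/\lambda_S\Bigr)^{(L+1)/L}\to \nu(\boldsymbol{\Sigma}_y)\,\nu(\boldsymbol{\Sigma}_x)^{1/L},
\qquad\text{so}\qquad
\nu^{(L)}_\infty=\nu(\boldsymbol{\Sigma}_y)^{L/(L+1)}\,\nu(\boldsymbol{\Sigma}_x)^{1/(L+1)}.
\end{align*}
Applying $-\tfrac12\log$ gives \autoref{eq:deep-terminal-abstraction}.

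\textbf{Checks and main obstacle.} As a consistency check, $L=1$ gives $\nu_\infty=\sqrt{\nu(\boldsymbol{\Sigma}_x)\nu(\boldsymbol{\Sigma}_y)}$, which recovers \autoref{thm:fixed-point}. The degenerate case $\lambda_{SC}^{(\boldsymbol{\Sigma}_y)}=0$ freezes $\lambda_{SC}$, so $\nu\to0$ and $\alpha\to1$. This matches the formula under the convention $\arctanh 1=\infty$. The step I expect to need the most care is the asymptotic one: showing that the lower-order terms of $G_L$ and the initial constants drop out of the ratio. I would handle this by dividing each clock identity by $\lambda_m^{(L+1)/L}$, which isolates the $o(1)$ corrections once divergence is established.
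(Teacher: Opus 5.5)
Your proposal is correct and follows essentially the same route as the paper: the identity $\arctanh\alpha=-\tfrac12\log\nu$, eliminating the gain ratio $u_{SC}^2/u_S^2$ between layer $\ell$ and layer $L$ for the interpolation, and, for the terminal law, separating variables in the balanced final-layer ODE and taking the leading $\lambda^{(L+1)/L}$ asymptotics of the antiderivative. The differences are minor: the paper uses $a_m=\gamma\lambda_m$ with an explicit antiderivative $G_L$, and it re-derives the ODE from gradient flow on the per-mode gains (you cite the main-text ODE, which is legitimate). Your explicit argument that each $\lambda_m$ diverges without finite-time blow-up is slightly more careful than the paper, which simply asserts the large-$a$ asymptotics.
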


Next we ask: how does $L$ affect the final layer's terminal abstraction? In \autoref{corollary:depth-monotonicity}, we analytically continue $L \in \mathbb{R}_+$ and differentiate \autoref{eq:deep-terminal-abstraction} to show that terminal abstraction is monotonically increasing in depth if $\nu(\boldsymbol{\Sigma}_y) < \nu(\boldsymbol{\Sigma}_x) \Longleftrightarrow \smash{\alpha_{\boldsymbol{\Sigma}_y}} > \smash{\alpha_{\boldsymbol{\Sigma}_x}}$. So if targets are more abstract than inputs, then \textbf{\textit{terminal abstraction will improve monotonically with network depth}} (see \autoref{fig:linear-theory}C).

\subsection{Empirical validation of the minimal model} \label{section:empirical}

To test whether our theory's predictions apply beyond the minimal setting, we train small ResNets on a task derived from the 3dshapes dataset \cite{kimDisentanglingFactorising18}. Despite major departures from our assumptions, the results qualitatively agree with key predictions of our theory (see \autoref{fig:empirical-validation}). See \autoref{appendix:3dshapes-experiments} for further details.

\begin{figure}[h]
    \centering
    \includegraphics[width=1.0\linewidth]{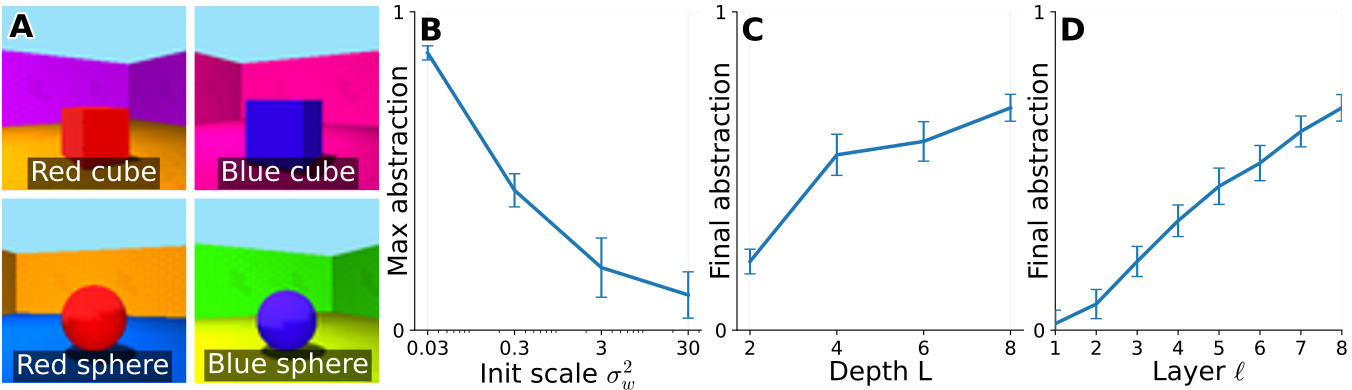}
    \caption{\textbf{Empirical validation of the minimal model in convolutional networks.} \textbf{(A)} We train small ResNets on a task constructed from the 3dshapes dataset \cite{kimDisentanglingFactorising18} using binary shape and color latents, with \(n=1000\) examples per class while nuisance variables (floor color, wall color, size, orientation) vary randomly. For panels B-D, error bars show SEM across 20 seeds. \textbf{(B)} Maximum abstraction in last layer decreases with weight initialization scale \(\sigma_w^2\), in agreement with \autoref{theorem:initialization-overshoot}. \textbf{(C)} Final abstraction after training in last layer increases with network depth \(L\), in agreement with \autoref{theorem:layerwise-interp}. \textbf{(D)} Final abstraction after training increases layerwise in a depth \(L=8\) network, in agreement with \autoref{theorem:layerwise-interp} (i.e. deeper layers are more abstract than early layers).}
    \label{fig:empirical-validation}
\vspace{-10pt}
\end{figure}

\section{Extending the theory to nonlinear networks} \label{section:nonlinear}

In this section we extend our theoretical framework to two-layer nonlinear networks in the infinite-width limit and study which aspects of the linear theory qualitatively survive or change. We now distinguish between preactivations $\mathbf{Z}$ and features $\mathbf{H} = \phi(\mathbf{Z})$ where we consider two types of nonlinearities for $\phi(\cdot)$: the error function $\phi_\beta(z) = \mathrm{erf}(\beta z)$ with $\beta > 0$, and the leaky ReLU (L-ReLU) $\phi_\omega(z) = \max\{z,\omega z\}$ with $\omega \in [0,1]$. We define the 2FS preactivation kernel $\mathbf{Q} \coloneqq \smash{\frac{1}{D}} \mathbf{Z}^\top \mathbf{Z}$, feature kernel $\mathbf{K} \coloneqq \smash{\frac{1}{D}} \mathbf{H}^\top \mathbf{H}$ and effective target kernel $\mathbf{M}(\mathbf{K}) \coloneqq \gamma (\mathbf{I} + \gamma \mathbf{K})^{-1} \boldsymbol{\Sigma}_y (\mathbf{I} + \gamma \mathbf{K})^{-1}$. 

As $D \to \infty$, we assume that the rows of $\mathbf{Z}$ are iid draws $\mathbf{z}^{(a)} \stackrel{\mathrm{iid}}{\sim} \mathcal{N}(0,\mathbf{Q})$ for $a=1,\ldots,D$. Then by law of large numbers, $\mathbf{K}_{ij} \to \mathbb{E}_{z\sim\mathcal{N}(0,\mathbf{Q})} [\phi(z_i)\phi(z_j)]$ and in \autoref{appendix:2fs-preserved} we prove that $\mathbf{K}$ inherits 2FS structure from $\mathbf{Q}$. Thus both $\mathbf{Q},\mathbf{K}$ are described by five types of entries we denote $\{q_\mu, k_\mu\}$, where the subscript $\mu \in \{d, 2, 1s, 1c, 0\}$ indexes the 2FS entries (see \autoref{fig:2fs-ansatz}E). Using canonical results on NNGP kernels \cite{choKernelMethods09, williamsComputingInfinite96} in \autoref{appendix:2fs-preserved} we derive the following kernel maps $\psi: (q_\mu, q_d) \mapsto k_\mu$:
\begin{align} \label{eq:kernel-maps}
  k_\mu = 
  \begin{cases}
    \psi_\beta(q_\mu, q_d) = \frac{2}{\pi} \arcsin \pbracket{\frac{2 \beta^2 q_\mu}{1 + 2 \beta^2 q_d}} & \text{(For erf)} \\
    \psi_\omega(q_\mu, q_d) = \frac{1+\omega^2}{2} q_\mu + \frac{(1-\omega)^2}{2 \pi} \bbracket{\sqrt{q_d^2 - q_\mu^2} - q_\mu \arccos \pbracket{\frac{q_\mu}{q_d}}} & \text{(For L-ReLU)}
  \end{cases}
\end{align}

Using this, we can study two aspects of the effect of nonlinearities on abstraction: (i) the instantaneous effect of nonlinearity on abstraction at any fixed time $t$ (\autoref{section:attenuation}), and (ii) the dynamical effect of nonlinearity on the trajectory of abstraction (\autoref{section:nonlinear-dynamics}).

\subsection{Instantaneous effect of nonlinearities on abstraction} \label{section:attenuation}

A crucial difference to the linear network case is that we can now measure abstraction at both preactivations ($\alpha_{\mathbf{Q}}$) and features ($\alpha_{\mathbf{K}}$). In this section we study the instantaneous effect of nonlinearity on abstraction by comparing $\alpha_{\mathbf{K}}$ to $\alpha_{\mathbf{Q}}$. Here, we ask a simple question: do the erf and L-ReLU nonlinearities (and hence their kernel maps $\psi_\omega$, $\psi_\beta$) improve abstraction? \textbf{\textit{Strikingly, the answer is neither}}. We prove the following result, which is stated informally here and formally in \autoref{theorem:attenuation-law-formal}:

\begin{theorem}[Attenuation law, informal]
\label{thm:nonlinear-attenuation}
Under \autoref{assumption:2fs} and as $D \to \infty$, at any time $t$, the abstraction in the features and preactivations obeys $\alpha_{\mathbf{K}}(t) = \mathcal{A}_{\phi}(\mathbf{Q}) \, \alpha_{\mathbf{Q}}(t)$ where $\mathcal{A}_{\phi}(\mathbf{Q})$ is an attenuation factor $0 \leq \mathcal{A}_{\phi}(\mathbf{Q}) \leq 1$ and \(\phi \in \{\phi_\beta,\phi_\omega\}\). Therefore $|\alpha_{\mathbf{K}}(t)| \leq |\alpha_{\mathbf{Q}}(t)|$ and if $\alpha_{\mathbf{Q}}(t) > 0$, the nonlinearity cannot improve abstraction, and generally weakens it.
\end{theorem}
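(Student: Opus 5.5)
The plan is to reduce $\alpha_{\mathbf{K}}$ and $\alpha_{\mathbf{Q}}$ to explicit ratios of 2FS entries and compare them through the kernel maps \autoref{eq:kernel-maps}. The class-mean Gram matrix of a 2FS kernel $\mathbf{A}$ has four distinct entries:
\begin{itemize}
\item the same-class mean product $\bar a_2 \coloneqq \frac{1}{n}\big(a_d+(n-1)a_2\big)$;
\item $a_{1s}$ for two classes sharing shape;
\item $a_{1c}$ for two classes sharing color;
\item $a_0$ for two classes sharing neither.
\end{itemize}
Expanding $\mathbf{u}_\mathrm{red}\cdot\mathbf{u}_\mathrm{blue}$ and $\|\mathbf{u}_\mathrm{red}\|^2=\|\mathbf{u}_\mathrm{blue}\|^2$ (equal by the $c\mapsto -c$ symmetry) gives
\begin{align*}
\alpha_{\mathbf{A}}=\frac{a_{1s}-a_0}{\bar a_2-a_{1c}} .
\end{align*}
This is consistent with \autoref{eq:alpha-from-nu}, since $\lambda_S\propto \bar a_2+a_{1s}-a_{1c}-a_0$ and $\lambda_{SC}\propto \bar a_2-a_{1s}-a_{1c}+a_0$. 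Applying this to $\mathbf{K}$, whose entries are $k_\mu=\psi(q_\mu,q_d)$ with $q_d$ fixed, yields the factorization $\alpha_{\mathbf{K}}=\mathcal{A}_\phi(\mathbf{Q})\,\alpha_{\mathbf{Q}}$. Here
\begin{align*}
\mathcal{A}_\phi(\mathbf{Q})=\frac{\big(\psi(q_{1s})-\psi(q_0)\big)/(q_{1s}-q_0)}{\big(\bar k_2-\psi(q_{1c})\big)/(\bar q_2-q_{1c})},
\end{align*}
a ratio of two secant slopes of $q\mapsto\psi(q,q_d)$. The degenerate case $q_{1s}=q_0$ is handled by continuity. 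The remaining task is to show $0\le\mathcal{A}_\phi\le 1$.

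Nonnegativity should follow because $\psi(\cdot,q_d)$ is increasing on $[-q_d,q_d]$ for both maps.
\begin{itemize}
\item For erf this is immediate from $\arcsin$.
\item For L-ReLU, $\partial_q\psi_\omega=\frac{1+\omega^2}{2}-\frac{(1-\omega)^2}{2\pi}\arccos(q/q_d)\ge \omega\ge0$.
\end{itemize}
The denominator is also positive, since $\bar q_2-q_{1c}=\tfrac12\|\mathbf{u}_\mathrm{red}\|^2>0$, and similarly for $\mathbf{K}$.

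For the upper bound I would use a dual-kernel (Hermite) expansion rather than pointwise convexity. For fixed $q_d$, both maps can be written as $\psi(q)=\sum_{k\ge1}c_k\,q^k$ with $c_k\ge 0$. For erf these are the odd Taylor coefficients of $\arcsin$, rescaled. For L-ReLU they are the squared Hermite coefficients of $\phi_\omega$; a $k=0$ term would cancel in both differences anyway. Because the diagonal term $k_d=\psi(q_d)$ enters $\bar k_2$ only through the average, $\mathbf{K}$ is a nonnegative combination of Hadamard powers $\mathbf{Q}^{\circ k}$. Each $\mathbf{Q}^{\circ k}$ is itself a 2FS PSD kernel by Schur's product theorem. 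Hence $\alpha_{\mathbf{K}}$ is a mediant, i.e. a weighted average, of the per-order abstractions $\alpha_{\mathbf{Q}^{\circ k}}$, with weights proportional to the positive denominators. It then suffices to show $\alpha_{\mathbf{Q}^{\circ k}}\le\alpha_{\mathbf{Q}}$ for every $k\ge1$ when $\alpha_{\mathbf{Q}}>0$, and that the sign behaves correctly when $\alpha_{\mathbf{Q}}<0$.

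I expect this per-order comparison to be the main obstacle. I would first try the secant-slope argument directly. The map $q\mapsto q^k$ has secant slopes that increase as the interval moves right, at least on the positive axis. So the bound would follow from the ordering $q_0\le q_{1c}$ and $q_{1s}\le\bar q_2$. The second inequality should come from Cauchy--Schwarz on the class means together with the $s\mapsto-s$ symmetry. The first is not automatic, and negative entries break convexity of odd powers; the same issue is what makes $\arcsin$ non-convex on the negative side. If the ordering fails, I would instead work in the $(S,SC)$ eigenbasis. There I would try to show that Hadamard powers of $\mathbf{Q}$ transfer mass from the signal mode $S$ toward $SC$ and the other modes, so that $\nu(\mathbf{Q}^{\circ k})\ge\nu(\mathbf{Q})$. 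This would use the multiplication table of the Walsh characters $\{1,s,c,sc\}$: products of $S$ components feed $G$ and $SC$, which only increases noise relative to signal. Any residual case conditions, such as the sign of $q_0$ or an ordering assumption, would be recorded as the hypotheses of the formal \autoref{theorem:attenuation-law-formal}.
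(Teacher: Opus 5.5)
\paragraph{Verdict: genuine gap at the crux.} Your reduction matches the paper's proof up to the decisive step: the entrywise formula $\alpha_{\mathbf{A}}=(a_{1s}-a_0)/(\bar a-a_{1c})$, the secant-slope form of $\mathcal{A}_\phi$, the sign from monotonicity of $\psi(\cdot,q_d)$, and writing $\mathbf{K}$ as a nonnegative combination of Schur powers $\mathbf{Q}^{\circ k}$ so that $\alpha_{\mathbf{K}}$ is a mediant. The per-order comparison you defer, however, is the entire upper bound, and neither route you sketch proves it.

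\paragraph{Why the convexity route fails.} It needs $\rho_0\le\rho_{1c}$. In terms of the normalized mode masses $\tau_m=\lambda_m^{(\mathbf{Q})}/(4nq_d)$, this is equivalent to $\tau_C\ge\tau_{SC}$, which PSD plus 2FS does not imply. Moreover, powers are neither monotone nor convex on negative correlations. This route is only the paper's ``naturally ordered'' sufficient condition, so falling back on it, or on a sign hypothesis for $q_0$, proves a strictly weaker theorem.

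\paragraph{Why the eigenbasis route aims at the wrong target.} The Walsh multiplication table also feeds the signal mode (products $G\times S$ and $C\times SC$ land in $S$). Per-order abstraction can even flip sign. Take $\tau_I=0$ and $(\tau_G,\tau_S,\tau_C,\tau_{SC})=(0,0.4,0.5,0.1)$: then $\alpha_{\mathbf{Q}}=0.6$ but $\alpha_{\mathbf{Q}^{\circ 2}}=-0.6$. Swapping $\tau_S$ and $\tau_{SC}$ gives $\alpha_{\mathbf{Q}}=-0.6$ and $\alpha_{\mathbf{Q}^{\circ 2}}=0.6$, so $\nu(\mathbf{Q}^{\circ k})\ge\nu(\mathbf{Q})$ is false when $\alpha_{\mathbf{Q}}<0$. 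The correct per-order statement is the two-sided bound $|\alpha_{\mathbf{Q}^{\circ k}}|\le|\alpha_{\mathbf{Q}}|$. The global sign then comes separately from monotonicity of $\psi$.

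\paragraph{The missing idea.} You need a change of coordinates that turns this bound into a term-by-term comparison. Set $X=\tau_G+\tau_C$, $U=\tau_G-\tau_C$, $Y=\tau_S+\tau_{SC}$, and $V=\tau_S-\tau_{SC}$. Then $\bar\rho=X+Y$, $\rho_{1c}=X-Y$, $\rho_{1s}=U+V$, $\rho_0=U-V$, and $\alpha_{\mathbf{Q}}=V/Y$. PSD gives $|U|\le X$ and $|V|\le Y$. Keeping only the odd binomial terms,
\[
\bigl|(U+V)^k-(U-V)^k\bigr| = 2|V|\Big|\sum_{j\ \mathrm{odd}}\tbinom{k}{j}U^{k-j}V^{j-1}\Big| \le 2|V|\sum_{j\ \mathrm{odd}}\tbinom{k}{j}X^{k-j}Y^{j-1} = \frac{|V|}{Y}\bigl[(X+Y)^k-(X-Y)^k\bigr].
\]

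\paragraph{The within-class mode.} Your sketch also glosses over this. The same-class average for $\mathbf{Q}^{\circ k}$ is $\overline{\rho^k}=(1+(n-1)\rho_2^k)/n$, not $\bar\rho^{\,k}$, and the $I$ mode lies outside the $\{1,s,c,sc\}$ table. The paper handles it by writing $\overline{\rho^k}=\mathbb{E}[(X+Y+\tau_I W)^k]$ for a two-point, mean-zero $W$ with $\mathbb{E}[W^j]\ge 0$ for $j\ge 2$. This gives $\overline{\rho^k}\ge (X+Y)^k$, so the per-order denominator only grows.

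\paragraph{Closing the argument.} Together these give $|N_k|\le|\alpha_{\mathbf{Q}}|\,\Delta_k$ for every $k$, where $N_k=\rho_{1s}^k-\rho_0^k$ and $\Delta_k=\overline{\rho^k}-\rho_{1c}^k$. No ordering or sign hypotheses are needed, and your mediant argument then completes the proof.
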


In \autoref{appendix:attenuation} we derive the exact expression for $\mathcal{A}_{\phi}(\mathbf{Q})$, verify the bound $\mathcal{A}_{\phi}(\mathbf{Q}) \leq 1$ for erf and leaky ReLU, and discuss sufficient conditions for the law to hold for any other nonlinearities. We conjecture this result holds for a wider family of nonlinearities, and leave explicit proofs of this for future work. \autoref{thm:nonlinear-attenuation} immediately motivates applications for the extraction of more generalizable probing/steering vectors, which we study in \autoref{section:applications} in realistic finite-width settings.

\subsection{Dynamical effect of nonlinearities on abstraction} \label{section:nonlinear-dynamics}

In \autoref{section:attenuation} we showed the instantaneous effect of nonlinearity on $\alpha_{\mathbf{Q}}(t)$ at any fixed time $t$. A natural next step is to understand how nonlinearities affect the \textit{dynamics} of abstraction. In particular, we ask: how does the choice of nonlinearity $\phi$ affect the terminal abstraction $\alpha_{\mathbf{Q}, \infty}$ compared to the linear case? To answer this question, we first derive the ODEs for the nonlinear case:

\begin{theorem}[Nonlinear network abstraction dynamics] \label{thm:nonlinear-dynamics}
Let $\lambda_m$ be the eigenvalue of $\mathbf{Q}$ for mode $m \in \{I, S, C, SC, G\}$. As $D \to \infty$, $\lambda_m$ and $\alpha$ evolve according to the following ODEs respectively:
\begin{align} \label{eq:nonlinear-odes}
\dot{\lambda}_m = 2 \lambda_m \lambda_m^{(\boldsymbol{\Sigma}_x)} \lambda_m^{(\mathbf{R}_\phi)},
\qquad \dot{\alpha}_{\mathbf{Q}}
=
(1-\alpha_{\mathbf{Q}}^2)\lambda_S^{(\boldsymbol{\Sigma}_x)} \lambda_S^{(\mathbf{R}_\phi)} (1-\nu(\boldsymbol{\Sigma}_x) \nu(\mathbf{R}_\phi)),
\end{align}
where \(\mathbf{R}_\phi\) is a symmetric matrix with the following closed-form expressions:
\begin{align}
\mathbf{R}_\phi
=
\begin{cases}
\displaystyle
\mathbf{M}(\mathbf{K}) \odot \mathbf{G}_\beta
-
\zeta_\beta
\operatorname{Diag}\!\left(
\big(
\mathbf{M}(\mathbf{K})
\odot
\mathbf{G}_\beta
\odot
\mathbf{Q}
\big)
\mathbf{1}
\right),
&
\text{(For erf)}
\\[1em]
\displaystyle
\mathbf{M}(\mathbf{K}) \odot \mathbf{E}_\omega
+
\operatorname{Diag}\!\left(
\big(
\mathbf{M}(\mathbf{K})
\odot
\mathbf{D}_\omega
\big)
\mathbf{1}
\right),
&
\text{(For L-ReLU)}.
\end{cases}
\end{align}
Where $\zeta_\beta$, $\mathbf{G}_\beta$, \(\mathbf{D}_\omega\) and \(\mathbf{E}_\omega\) depend on the nonlinearity (full expressions in \autoref{appendix:nonlinear-riccati})
\end{theorem}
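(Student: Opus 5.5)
We derive the infinite-width gradient flow for $\mathbf{Z}=\mathbf{W}\mathbf{X}$ under \autoref{assumption:readout} with features $\mathbf{H}=\phi(\mathbf{Z})$. Then we show that the resulting Riccati-type equation for $\mathbf{Q}$ has the form $\dot{\mathbf{Q}}=\boldsymbol{\Sigma}_x\mathbf{R}_\phi\mathbf{Q}+\mathbf{Q}\mathbf{R}_\phi\boldsymbol{\Sigma}_x$, with $\mathbf{R}_\phi$ a 2FS-invariant matrix. Once this is in hand, the eigenvalue and $\alpha$ ODEs follow exactly as in \autoref{proposition:linear-odes}, with $\mathbf{M}$ replaced by $\mathbf{R}_\phi$.

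\textbf{Step 1 (gradient of the reduced loss).} With the optimal readout, the loss is $\mathcal{L}^*=\frac12\mathrm{Tr}[\boldsymbol{\Sigma}_y(\mathbf{I}+\gamma\mathbf{K})^{-1}]$, with the $1/D$ normalisation absorbed into $\gamma$. Its differential is $d\mathcal{L}^*=-\frac{1}{2}\mathrm{Tr}[\mathbf{M}(\mathbf{K})\,d\mathbf{K}]$, and $d\mathbf{K}=\frac1D(d\mathbf{H}^\top\mathbf{H}+\mathbf{H}^\top d\mathbf{H})$. This gives $\partial\mathcal{L}^*/\partial\mathbf{H}\propto-\mathbf{H}\mathbf{M}$. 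By the chain rule through $\phi$, $\partial\mathcal{L}^*/\partial\mathbf{Z}=-(\mathbf{H}\mathbf{M})\odot\phi'(\mathbf{Z})$, and then $\dot{\mathbf{W}}=[(\mathbf{H}\mathbf{M})\odot\phi'(\mathbf{Z})]\mathbf{X}^\top$. Therefore $\dot{\mathbf{Z}}=\mathbf{V}\boldsymbol{\Sigma}_x$ with $\mathbf{V}\coloneqq(\mathbf{H}\mathbf{M})\odot\phi'(\mathbf{Z})$, and
\[
\dot{\mathbf{Q}}=\tfrac1D(\boldsymbol{\Sigma}_x\mathbf{V}^\top\mathbf{Z}+\mathbf{Z}^\top\mathbf{V}\boldsymbol{\Sigma}_x).
\]

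\textbf{Step 2 (the key step: infinite-width expectation).} We must show $\frac1D\mathbf{V}^\top\mathbf{Z}\to\mathbf{R}_\phi\mathbf{Q}$. Entrywise, $(\frac1D\mathbf{V}^\top\mathbf{Z})_{ij}\to\sum_k M_{ik}\,\mathbb{E}[\phi(z_k)\phi'(z_i)z_j]$ with $z\sim\mathcal{N}(0,\mathbf{Q})$. We apply Stein's lemma (Gaussian integration by parts) to $z_j$:
\[
\mathbb{E}[z_j\,\phi(z_k)\phi'(z_i)]=Q_{jk}\,\mathbb{E}[\phi'(z_k)\phi'(z_i)]+Q_{ji}\,\mathbb{E}[\phi(z_k)\phi''(z_i)] \quad (k\ne i),
\]
with the obvious modification when $k=i$. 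The first term produces $(\mathbf{M}\odot\mathbf{G})\mathbf{Q}$, where $G_{ik}=\mathbb{E}[\phi'(z_i)\phi'(z_k)]$. The second term is proportional to $Q_{ij}$ with a coefficient depending only on $i$, so it produces a diagonal matrix times $\mathbf{Q}$, namely $\mathrm{Diag}((\mathbf{M}\odot\mathbf{C})\mathbf{1})\mathbf{Q}$ with $C_{ik}=\mathbb{E}[\phi(z_k)\phi''(z_i)]$.

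For erf, $\phi''(z)=-2\beta^2z\,\phi'(z)$. One further Stein step (or the closed-form Gaussian integral) gives $C_{ik}=-\zeta_\beta\,G_{ik}Q_{ik}$, where $\zeta_\beta$ depends only on $q_d$. This yields the stated form $\mathbf{M}\odot\mathbf{G}_\beta-\zeta_\beta\mathrm{Diag}((\mathbf{M}\odot\mathbf{G}_\beta\odot\mathbf{Q})\mathbf{1})$, with $\mathbf{G}_\beta$ the arcsine-derivative kernel, which has the form $\propto(1+2\beta^2q_d)^2-4\beta^4q_\mu^2)^{-1/2}$.

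For leaky ReLU, $\phi''=(1-\omega)\delta$ distributionally. The cleaner route there is to use $\phi(z)=z\phi'(z)$, since both sides are positively homogeneous. We then compute $\mathbb{E}[z_j z_k\phi'(z_k)\phi'(z_i)]$ directly from the arc-cosine kernel formulas of \cite{choKernelMethods09} and split it into an $\mathbf{E}_\omega\odot$-part plus a diagonal $\mathbf{D}_\omega$-part. We expect this to be the main obstacle: a careful Gaussian computation with the kink. We would check it by differentiating $\psi_\omega$ in \autoref{eq:kernel-maps} with respect to $q_\mu$ and $q_d$, using $\partial k_{ik}/\partial Q_{ij}$ identities. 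In fact the neatest uniform derivation is $\frac1D\mathbf{V}^\top\mathbf{Z}=\frac12\sum_{k}M_{ik}\,\partial K_{ik}/\partial(\cdot)$ applied through the kernel map $\psi$. This suggests defining $\mathbf{R}_\phi$ through $\partial\psi/\partial q_\mu$ (the $\odot$ part) and $\partial\psi/\partial q_d$ (the diagonal part), which reproduces both cases.

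\textbf{Step 3 (reduction to modes).} By \autoref{appendix:2fs-preserved}, $\mathbf{K}$, $\mathbf{G}$, $\mathbf{E}$ and $\mathbf{D}$ are 2FS whenever $\mathbf{Q}$ is. Hadamard products of 2FS matrices are 2FS, and $\mathrm{Diag}(\mathbf{A}\mathbf{1})$ is a multiple of the identity. Hence $\mathbf{R}_\phi$ is 2FS-invariant and symmetric, and it commutes with $\boldsymbol{\Sigma}_x$ and $\mathbf{Q}$. It follows that 2FS is preserved along the flow, since the vector field is equivariant, and the flow gives $\dot\lambda_m=2\lambda_m\lambda_m^{(\boldsymbol{\Sigma}_x)}\lambda_m^{(\mathbf{R}_\phi)}$.

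Finally, $\alpha=(1-\nu)/(1+\nu)$ with $\nu=\lambda_{SC}/\lambda_S$. We compute
\[
\dot\nu=2\nu\bigl(\lambda_{SC}^{(\boldsymbol{\Sigma}_x)}\lambda_{SC}^{(\mathbf{R})}-\lambda_S^{(\boldsymbol{\Sigma}_x)}\lambda_S^{(\mathbf{R})}\bigr),
\]
and $\dot\alpha=-2\dot\nu/(1+\nu)^2$. Using $1-\alpha^2=4\nu/(1+\nu)^2$ gives $\dot\alpha=(1-\alpha^2)\lambda_S^{(\boldsymbol{\Sigma}_x)}\lambda_S^{(\mathbf{R}_\phi)}(1-\nu(\boldsymbol{\Sigma}_x)\nu(\mathbf{R}_\phi))$, exactly as in the linear case.
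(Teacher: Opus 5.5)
Your proposal is correct and follows essentially the paper's route. Both arguments run the per-neuron gradient flow under the optimal readout. Both then apply Stein's lemma to $z_j$ to get $\frac{1}{D}\mathbf{V}^\top\mathbf{Z}\to\mathbf{R}_\phi\mathbf{Q}$ with $\mathbf{R}_\phi=\mathbf{M}\odot\mathbf{G}+\operatorname{Diag}((\mathbf{M}\odot\mathbf{C})\mathbf{1})$, handle erf via $\phi_\beta''=-2\beta^2 z\phi_\beta'$ and a second Stein step, and finish with 2FS diagonalization exactly as in the linear case.

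The leaky-ReLU case is not an obstacle and needs no homogeneity or arc-cosine detour. The paper just inserts the distributional $\phi_\omega''=(1-\omega)\delta$ into your own Step 2 identity. This gives $\mathbf{E}_\omega=\mathbb{E}[\phi_\omega'(u)\phi_\omega'(v)]$ and $\mathbf{D}_\omega=(1-\omega)\mathbb{E}[\delta(u)\phi_\omega(v)]=\frac{(1-\omega)^2}{2\pi}\sqrt{1-\rho^2}$, where $\rho$ is the pairwise preactivation correlation.
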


Notice that in \autoref{eq:nonlinear-odes} if we replace $\mathbf{R}_\phi$ with $\mathbf{M}(\mathbf{Q})$, we recover the linear theory (\autoref{proposition:linear-odes}). Recall that $\mathbf{M}(\mathbf{Q})$ was a spectrally filtered version of the target kernel $\boldsymbol{\Sigma}_y$. So by introducing $\phi$, we change how targets influence the dynamics via the more complicated \textit{nonlinear gain matrix} $\mathbf{R}_\phi$.  

\begin{figure}[h]
    \centering
    \includegraphics[width=1.0\linewidth]{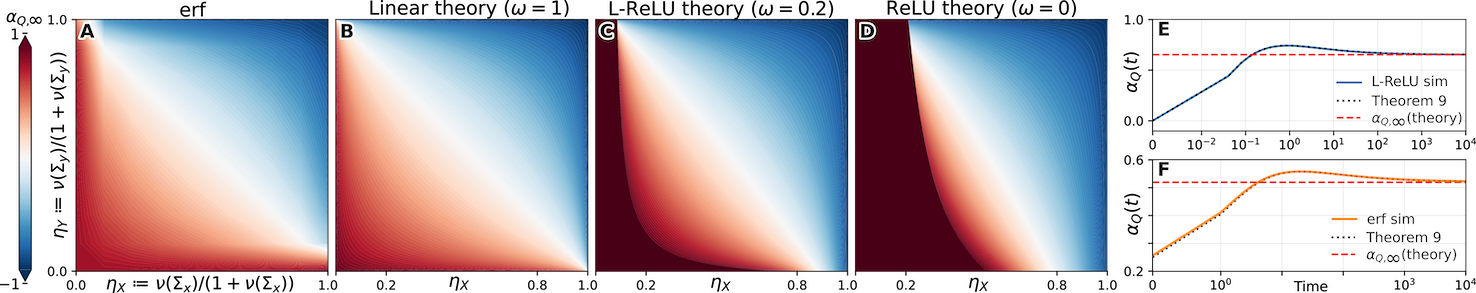}
    \caption{\textbf{Effect of nonlinearities on terminal abstraction}. Panels A-D show phase diagrams of $\alpha_{\mathbf{Q}, \infty}$ as a function of noise strength $\eta_\bullet \coloneqq \nu(\boldsymbol{\Sigma}_\bullet) / (1 + \nu(\boldsymbol{\Sigma}_\bullet))$ for inputs and targets. \textbf{(A)} Erf terminal abstraction with weak regularization ($\beta=1, \gamma=1000$) appears qualitatively well-described by the linear theory. \textbf{(B)} Linear theory from \autoref{thm:fixed-point}. \textbf{(C-D)} L-ReLU terminal abstraction under \autoref{setting:simplified-targets} evaluated by solving \autoref{eq:relufixedpointequation}. Decreasing $\omega$ warps the shape to make $\alpha_{\mathbf{Q}, \infty}$ more sensitive to inputs than targets and creates a band of near-perfect abstraction at low $\eta_X$. \textbf{(E)} Abstraction trajectory of L-ReLU network ($\omega = 0.5$), with red line evaluated by solving \autoref{eq:relufixedpointequation}. \textbf{(F)} Abstraction trajectory of erf network ($\beta = 0.1, \gamma=1000$) where red line is from linear theory (\autoref{thm:fixed-point}).}
    \label{fig:nonlinear}
\vspace{-5pt}
\end{figure}

\paragraph{(a) Erf.}
Consider the effectively ridgeless case $\gamma \to \infty$. Then the erf phase diagram for $\alpha_{\mathbf{Q}, \infty}$ appears qualitatively well-described by the linear network theory (\autoref{fig:nonlinear}A-B). Denoting $\alpha_{\infty}^{\mathrm{linear}}$ as the linear terminal value, in \autoref{appendix:erf-fixed-point} we show that the nonlinear correction is perturbative:

\begin{proposition}[Erf terminal abstraction]
\label{proposition:erf-fixed-point}
There exists a correction factor $ E_{\mathrm{erf}} > 0 $ such that
\begin{align} \label{eq:erf-linear-limit}
\alpha_{\mathbf{Q}, \infty}
=
\frac{1-\sqrt{\nu_{\boldsymbol{\Sigma}_x} \nu_{\boldsymbol{\Sigma}_y} E_{\mathrm{erf}}}}
{1+\sqrt{\nu_{\boldsymbol{\Sigma}_x} \nu_{\boldsymbol{\Sigma}_y} E_{\mathrm{erf}}}}
=
\alpha_{\infty}^{\mathrm{linear}}
+
O\!\left(
\beta^2 + \frac{1}{\gamma \beta^2}
\right).
\end{align}
In particular, as $\beta \to 0$ and $\gamma \beta^2 \to \infty$, the $\mathrm{erf}$ terminal abstraction converges to the linear one.
\end{proposition}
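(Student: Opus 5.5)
The plan is to reduce the erf fixed-point problem to the same algebraic structure that produced \autoref{thm:fixed-point}. By \autoref{thm:nonlinear-dynamics}, $\dot{\alpha}_{\mathbf{Q}}$ vanishes at an interior fixed point exactly when $\nu(\boldsymbol{\Sigma}_x)\,\nu(\mathbf{R}_\beta)=1$. This is the analogue of the linear condition $\nu(\boldsymbol{\Sigma}_x)\nu(\mathbf{M})=1$. So the work is to write $\nu(\mathbf{R}_\beta)$ at late times in terms of $\nu(\mathbf{Q})$ and the data inverse-SNRs.

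\textbf{Linear-theory baseline.} In the linear case with $\gamma\lambda_m\to\infty$, the definition of $\mathbf{M}$ gives $\lambda_m^{(\mathbf{M})}\approx \lambda_m^{(\boldsymbol{\Sigma}_y)}/(\gamma\lambda_m^2)$. Hence $\nu(\mathbf{M})\approx\nu(\boldsymbol{\Sigma}_y)/\nu(\mathbf{Q})^2$, and imposing $\nu(\boldsymbol{\Sigma}_x)\nu(\mathbf{M})=1$ yields $\nu(\mathbf{Q}_\infty)=\sqrt{\nu(\boldsymbol{\Sigma}_x)\nu(\boldsymbol{\Sigma}_y)}$.

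\textbf{Erf computation.} I would repeat this calculation using the erf kernel map $\psi_\beta$ from \autoref{eq:kernel-maps} and the explicit $\mathbf{G}_\beta,\zeta_\beta$ from \autoref{appendix:nonlinear-riccati}.

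\begin{itemize}
\item Since all kernels are 2FS, both $\mathbf{K}$ and $\mathbf{R}_\beta$ diagonalize in the common projector basis. Each eigenvalue is a fixed linear combination of the five entry types.
\item The $S$ and $SC$ eigenvalues of $\mathbf{K}$ are obtained by applying the arcsin map entrywise to $q_\mu$ and then taking the corresponding contrasts of $k_\mu$.
\item I would then form $\nu(\mathbf{R}_\beta)$ and write
\[
\nu(\mathbf{R}_\beta)=\frac{\nu(\boldsymbol{\Sigma}_y)}{\nu(\mathbf{Q})^2}\,E_{\mathrm{erf}}^{-1}.
\]
This equation defines the correction factor $E_{\mathrm{erf}}$ as the ratio of the true gain to its linearized counterpart, evaluated at the fixed point.
\item Substituting into $\nu(\boldsymbol{\Sigma}_x)\nu(\mathbf{R}_\beta)=1$ gives $\nu(\mathbf{Q}_\infty)^2=\nu_{\boldsymbol{\Sigma}_x}\nu_{\boldsymbol{\Sigma}_y}E_{\mathrm{erf}}$. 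Applying \autoref{eq:alpha-from-nu} then yields the closed form in \autoref{eq:erf-linear-limit}.
\end{itemize}

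\textbf{Positivity.} Positivity of $E_{\mathrm{erf}}$ follows because the $S$ and $SC$ eigenvalues of $\mathbf{K}$, $\mathbf{G}_\beta$ and $\mathbf{R}_\beta$ are positive at the fixed point. This holds because $\arcsin$ is odd and increasing, and its Taylor series has nonnegative coefficients, so by Schur's product theorem it preserves positive semidefiniteness on the relevant modes.

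\textbf{Perturbative estimate.} I would expand in two small parameters.

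\begin{itemize}
\item For small $\beta$, $\psi_\beta(q_\mu,q_d)=\frac{4\beta^2}{\pi}q_\mu\,(1+O(\beta^2 q))$. Thus $\mathbf{K}$ equals $\frac{4\beta^2}{\pi}\mathbf{Q}$ plus a relative $O(\beta^2)$ correction. Similarly, $\mathbf{G}_\beta$ is a constant matrix plus $O(\beta^2)$, and the $\zeta_\beta$ diagonal term is $O(\beta^2)$.
\item Therefore $\mathbf{R}_\beta$ equals the linear effective kernel $\mathbf{M}$ (with rescaled ridge $\gamma\to 4\gamma\beta^2/\pi$) up to relative $O(\beta^2)$.
\item The linear result with finite effective ridge $\tilde\gamma=4\gamma\beta^2/\pi$ differs from the ridgeless formula by $O(1/(\tilde\gamma\lambda))$. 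This comes from expanding $(1+\tilde\gamma\lambda_m)^{-2}=(\tilde\gamma\lambda_m)^{-2}(1+O(1/(\tilde\gamma\lambda_m)))$, which gives the $O(1/(\gamma\beta^2))$ term.
\item Combining, $E_{\mathrm{erf}}=1+O(\beta^2+1/(\gamma\beta^2))$. Since the map $x\mapsto(1-\sqrt{x})/(1+\sqrt{x})$ is smooth away from $x=0$, the error transfers to $\alpha_{\mathbf{Q},\infty}$, with the $\nu_{\boldsymbol{\Sigma}_x}\nu_{\boldsymbol{\Sigma}_y}=0$ case handled separately as a trivial boundary.
\end{itemize}

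\textbf{Main obstacle.} The hardest step is controlling where the eigenvalues sit at the fixed point. The expansions require $\beta^2 q_d$ to stay small while $\gamma\lambda_{SC}$ stays large, and $q_d$ grows during training. I would first show that at the fixed point the $S$ and $SC$ eigenvalues are bounded, since the loss saturates once $\gamma\lambda^{(\mathbf{K})}\gg1$ because $\mathbf{R}$ decays like $\lambda^{-2}$. That bound gives $\beta^2 q_d=O(\beta^2)$ uniformly. If this fails, I would instead define $E_{\mathrm{erf}}$ exactly through the fixed-point equation, which gives the closed form and positivity unconditionally, and restrict the $O(\cdot)$ claim to the regime where those bounds hold.
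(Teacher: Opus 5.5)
Your approach is the paper's. You impose $\nu_X\,\nu(\mathbf{R}_\beta)=1$, write $\nu(\mathbf{R}_\beta)$ as the linearized value $\nu_Y/\nu_Q^2$ times a correction factor, and expand that factor in $\beta^2$ and $1/(\gamma\beta^2)$. The paper makes your aggregate step ``$\mathbf{R}_\beta\approx\mathbf{M}$ up to relative $O(\beta^2)$'' explicit. Because $\mathbf{M}\odot\mathbf{G}_\beta\odot\mathbf{Q}$ is 2FS, it has a common row sum $b$, so $\mathbf{R}_\beta=\mathbf{A}-\zeta_\beta b\,\mathbf{I}$ with $\mathbf{A}=\mathbf{M}\odot\mathbf{G}_\beta$. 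Then $E_{\mathrm{erf}}=E_{\mathbf{Q}\to\mathbf{K}}E_JE_{\mathrm{Stein}}E_\gamma$, covering the arcsin linearization, $\mathbf{G}_\beta\approx\mathrm{const}\cdot\mathbf{1}\mathbf{1}^\top$, the identity shift, and the finite ridge, and each factor is expanded separately. The uniform shift is a relative $O(\beta^2)$ perturbation of the $S$ and $SC$ eigenvalues only if $b/\lambda_S^{(\mathbf{A})}$ and $b/\lambda_{SC}^{(\mathbf{A})}$ stay $O(1)$. The paper assumes this, and your argument needs it too.

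Three corrections.

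(i) Your displayed definition $\nu(\mathbf{R}_\beta)=(\nu_Y/\nu_Q^2)E_{\mathrm{erf}}^{-1}$ gives $\nu_Q^2=\nu_X\nu_Y/E_{\mathrm{erf}}$, not $\nu_X\nu_Y E_{\mathrm{erf}}$. To match your verbal description (true gain over linearized gain) and your conclusion, it should read $(\nu_Y/\nu_Q^2)E_{\mathrm{erf}}$.

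(ii) The Schur-product argument does not give positive $\mathbf{R}_\beta$ eigenvalues. It yields $\mathbf{M}\odot\mathbf{G}_\beta\succeq 0$, but then $\zeta_\beta b\,\mathbf{I}$ is subtracted. Here $b\ge 0$ is the $G$-eigenvalue of the PSD matrix $\mathbf{M}\odot\mathbf{G}_\beta\odot\mathbf{Q}$, so the shift can push eigenvalues negative. You do not need this argument anyway: at an interior fixed point $\nu(\mathbf{R}_\beta)=1/\nu_X>0$, hence $E_{\mathrm{erf}}=\nu_Q^2/(\nu_X\nu_Y)>0$ automatically.

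(iii) Your primary plan for the main obstacle would fail. The fixed point is a fixed point of $\alpha_{\mathbf{Q}}$, not of $\mathbf{Q}$. Decay of the effective target kernel like $\lambda^{-2}$ does not bound the eigenvalues. In the linear theory exactly this decay gives $\dot\lambda_m\propto 1/\lambda_m$, so $\lambda_m(t)\sim 2\sqrt{\lambda_m^{(\boldsymbol{\Sigma}_x)}\lambda_m^{(\boldsymbol{\Sigma}_y)}\,t/\gamma}\to\infty$. Your fallback is the right move and is what the paper does. The $O(\beta^2+1/(\gamma\beta^2))$ estimate holds under standing nondegeneracy assumptions: $q_d$ and the $q_\mu$ remain $O(1)$, so $\lambda^{(\mathbf{K})}\asymp\beta^2$, and $b/\lambda^{(\mathbf{A})}=O(1)$.
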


\paragraph{(b) Leaky ReLU.}
The more interesting case is ReLU given its dramatic difference from the linear theory (\autoref{fig:nonlinear}D). To study $\alpha_{\mathbf{Q},\infty}$ analytically we introduce the following simplifying assumptions:

\begin{setting}[Centered, signal-balanced regime] \label{setting:simplified-targets}
We assume $\lambda_S^{(\mathbf{A})} = \lambda_C^{(\mathbf{A})}$ for $\mathbf{A} \in \{ \boldsymbol{\Sigma}_x, \boldsymbol{\Sigma}_y, \mathbf{Q}(0) \}$ (the kernels are signal-balanced) and $\lambda_G^{(\boldsymbol{\Sigma}_y)} = \lambda_G^{(\mathbf{Q})}(0) = 0$ ($\boldsymbol{\Sigma}_y$ and $\mathbf{Q}(0)$ are centered).
\end{setting}

Define the shorthands: $\smash{\nu_X \coloneqq \nu(\boldsymbol{\Sigma}_x), \nu_Y \coloneqq \nu(\boldsymbol{\Sigma}_y), \nu_Q \coloneqq \nu(\mathbf{Q})}$. Under \autoref{setting:simplified-targets}, we can make a useful change of variables to the angle $\theta$ which parameterizes the terminal abstraction $\alpha_{\mathbf{Q}, \infty}(\theta)$: 
\begin{align}
\theta \coloneqq \arccos \pbracket{\frac{\nu_Q - 2}{\nu_Q + 2}} \in [0, \pi] \quad 
\implies \alpha_{\mathbf{Q}, \infty}(\theta) = -\frac{1 + 3 \cos \theta}{3 + \cos \theta}
\end{align}
Evaluating $\alpha_{\mathbf{Q}, \infty}$ under our theory then reduces to solving a scalar fixed point equation for $\theta$\footnote{Note: for each initial condition, evaluating $\alpha_{\mathbf{Q}, \infty}$ requires only a single scalar root solve without numerical integration.}:
\begin{align} \label{eq:relufixedpointequation}
  \nu_X \nu_R(\theta; \omega, \nu_Y) = 1
\end{align}
Where $\nu_R$ is the inverse SNR of $\mathbf{R}_\omega$ with explicit expression given in \autoref{appendix:relu-fixedpoint}, \autoref{eq:inverse-snr-relu}. To understand why the ReLU phase diagram differs so much, the most instructive part of \autoref{fig:nonlinear}D to study is the bottom edge corresponding to noiseless targets $\nu_Y = \eta_Y = 0$. This edge captures the strongest difference: in linear networks, $\nu_Y = 0$ forces $\alpha_{\mathbf{Q}, \infty} = 1$ regardless of the input, whereas for ReLU, this is no longer true. In \autoref{appendix:relu-fixedpoint} we derive the following analytic characterization for this edge:

\begin{proposition}[L-ReLU terminal abstraction with noiseless targets] \label{proposition:relu-fixed-point}
Suppose \autoref{setting:simplified-targets} holds and $\nu_Y = \eta_Y = 0$. The terminal abstraction $\alpha_{\mathbf{Q}, \infty}(\theta)$ is parameterized by the equation:
\begin{align}
  \nu_X = \frac{4 \pi \omega + (1 - \omega)^2 (2 \pi - \theta - \sin \theta)}{(1 - \omega)^2 (\theta - \sin \theta)}, \quad \text{or equivalently,} ~~ 
  \eta_X = \frac{\frac{4 \pi \omega}{(1 - \omega)^2} + 2 \pi - \theta - \sin \theta}{\frac{4 \pi \omega}{(1 - \omega)^2} + 2 \pi - 2 \sin \theta}
\end{align}
Moreover, $\alpha_{\mathbf{Q}, \infty} = 1$ iff $\nu_X \leq \nu_{\mathrm{crit}}(\omega)$ or $\eta_X \leq \eta_{\mathrm{crit}}(\omega)$, where $\nu_{\mathrm{crit}}(\omega)$ and $\eta_{\mathrm{crit}}(\omega)$ are:
\begin{align}
\nu_{\mathrm{crit}}(\omega) = 1 + \frac{4 \omega}{(1 - \omega)^2}, \quad \text{or equivalently}, ~ \eta_{\mathrm{crit}}(\omega) = \frac{4 \omega + (1 - \omega)^2}{4 \omega + 2 (1 - \omega)^2}
\end{align}
In particular, for pure ReLU (i.e. $\omega = 0$) we have $\nu_{\mathrm{crit}}(0) = 1$ and $\eta_{\mathrm{crit}}(0) = \frac{1}{2}$.
\end{proposition}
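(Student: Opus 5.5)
The plan is to reduce the fixed-point condition \autoref{eq:relufixedpointequation} to an explicit scalar function of $\theta$ by computing the $S$ and $SC$ eigenvalues of $\mathbf{R}_\omega$ from \autoref{thm:nonlinear-dynamics} in closed form when $\nu_Y=0$. By \autoref{thm:nonlinear-dynamics}, $\alpha_{\mathbf{Q}}$ is stationary exactly when $\nu_X\nu(\mathbf{R}_\omega)=1$ or $\alpha_{\mathbf{Q}}=\pm1$. So it suffices to write $\nu(\mathbf{R}_\omega)$ as a function of $\theta$ alone, invert the relation, and then analyse where a root exists.

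\textbf{Step 1: the effective target kernel.} Under \autoref{setting:simplified-targets} with $\nu_Y=0$, $\boldsymbol{\Sigma}_y$ has weight only on the $S$ and $C$ modes, and these two eigenvalues are equal. Since $\mathbf{M}(\mathbf{K})$ is a spectral filter of $\boldsymbol{\Sigma}_y$ and $\mathbf{K}$ is 2FS, $\mathbf{M}$ is then proportional to $(s_is_j+c_ic_j)$ with a scalar prefactor $m>0$. Its block entries are therefore:
\begin{itemize}
\item $2m$ within a class;
\item $0$ between classes that differ in exactly one latent;
\item $-2m$ between "opposite-corner" classes $\{\redsquare,\bluecircle\}$ and $\{\redcircle,\bluesquare\}$.
\end{itemize}
Consequently, in the Hadamard products $\mathbf{M}\odot\mathbf{E}_\omega$ and $\mathbf{M}\odot\mathbf{D}_\omega$, only the within-class and opposite-corner values of $\mathbf{E}_\omega$ and $\mathbf{D}_\omega$ matter.

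\textbf{Step 2: identify $\theta$ geometrically.} With $\mathbf{Q}$ centred and signal-balanced, the class means have Gram entries $2\lambda_S+\lambda_{SC}$ (same class), $-\lambda_{SC}$ (one latent differs) and $-2\lambda_S+\lambda_{SC}$ (opposite corner). I will work at the terminal state, treating within-class spread as negligible in the angle relevant to $\mathbf{E}_\omega$ and $\mathbf{D}_\omega$; this needs checking against how $q_d$ enters. Normalising, the opposite-corner cosine is $(\nu_Q-2)/(\nu_Q+2)=\cos\theta$. So $\theta$ is exactly the angle between opposite-corner classes, and the formula $\alpha_{\mathbf{Q},\infty}(\theta)=-(1+3\cos\theta)/(3+\cos\theta)$ follows from \autoref{eq:alpha-from-nu} by substituting $\nu_Q=2(1+\cos\theta)/(1-\cos\theta)$.

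\textbf{Step 3: the inverse SNR of $\mathbf{R}_\omega$.} The entries of $\mathbf{E}_\omega$ and $\mathbf{D}_\omega$ (given in \autoref{appendix:nonlinear-riccati}) come from differentiating the Cho kernel $\psi_\omega$ in \autoref{eq:kernel-maps}. I expect them to evaluate to arc-cosine forms, for instance $E_\omega(\theta)\propto \omega+\tfrac{(1-\omega)^2}{2\pi}(\pi-\theta)$, with $\mathbf{D}_\omega$ contributing a term in $\sin\theta$.

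Projecting onto the two modes:
\begin{itemize}
\item on the $S$ mode, opposite corners carry opposite sign, so the off-diagonal contribution enters with a plus sign, giving $\lambda_S^{(\mathbf{R})}\propto 2E(0)+2E(\theta)+d$;
\item on the $SC$ mode, opposite corners share the same value of $sc$, so the contribution flips sign, giving $\lambda_{SC}^{(\mathbf{R})}\propto 2E(0)-2E(\theta)+d$;
\item $d$ is the diagonal term from $\mathrm{Diag}((\mathbf{M}\odot\mathbf{D}_\omega)\mathbf{1})$.
\end{itemize}
The scale factors $m$, $n$ and $q_d$ should cancel in the ratio. Substituting into $\nu_X=1/\nu_R$ should then give $\nu_X=\frac{4\pi\omega+(1-\omega)^2(2\pi-\theta-\sin\theta)}{(1-\omega)^2(\theta-\sin\theta)}$. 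The $\eta_X$ form follows from $\eta=\nu/(1+\nu)$ by routine algebra. I expect this step to be the main obstacle: getting the diagonal $\mathbf{D}_\omega$ term and its $\sin\theta$ dependence exactly right. I would first sanity-check the computation at $\omega=1$, where the network is linear and the result must reduce to $\nu_X\to\infty$, consistent with $\alpha_{\mathbf{Q},\infty}=1$ from \autoref{thm:fixed-point}.

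\textbf{Step 4: the threshold.} Let $h(\theta)$ denote the right-hand side. I would show $h$ is strictly decreasing on $(0,\pi]$. This should follow because the derivative of $(2\pi-\theta-\sin\theta+c)/(\theta-\sin\theta)$ has numerator of fixed sign, using $1-\cos\theta\ge0$. Moreover $h(\theta)\to\infty$ as $\theta\to0$, and $h(\pi)=1+4\omega/(1-\omega)^2=\nu_{\mathrm{crit}}(\omega)$.

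Hence:
\begin{itemize}
\item if $\nu_X>\nu_{\mathrm{crit}}$, there is a unique interior root $\theta<\pi$, giving $\alpha_{\mathbf{Q},\infty}<1$;
\item if $\nu_X\le\nu_{\mathrm{crit}}$, no interior root exists, and $\nu_X\nu_R<1$ holds along the trajectory, so by \autoref{thm:nonlinear-dynamics} $\dot\alpha_{\mathbf{Q}}>0$ until $\theta\to\pi$, i.e. $\nu_Q\to0$ and $\alpha_{\mathbf{Q},\infty}=1$.
\end{itemize}
The $\eta_{\mathrm{crit}}$ expression and the pure-ReLU values ($\omega=0$) then follow by substitution.
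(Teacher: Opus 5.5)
Your proposal is correct and follows essentially the same route as the paper. With $\nu_Y=0$ the one-flip entries of $\mathbf{M}$ vanish, so $\lambda_S^{(\mathbf{R}_\omega)}$ and $\lambda_{SC}^{(\mathbf{R}_\omega)}$ reduce (after multiplying by $\pi$) to $4\pi\omega+(1-\omega)^2(2\pi-\theta-\sin\theta)$ and $(1-\omega)^2(\theta-\sin\theta)$, exactly the paper's $A_S(\theta,0)$, $A_{SC}(\theta,0)$ computation (which it organizes via $\mathbf{R}_\omega=\omega\mathbf{M}+(1-\omega)^2\mathbf{R}_{\mathrm{ReLU}}$), followed by the same monotonicity analysis of $F_\omega$ with $F_\omega(\pi)=\nu_{\mathrm{crit}}(\omega)$; your added sign-of-$\dot\alpha_{\mathbf{Q}}$ argument for the $\nu_X\le\nu_{\mathrm{crit}}$ direction (valid at all times since $\lambda_{SC}^{(\mathbf{M})}=0$ exactly) is slightly more explicit than the paper, though, like the paper, it takes for granted that $\alpha_{\mathbf{Q}}$ does not stall as $\lambda_S^{(\mathbf{M})}\to 0$.
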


Thus for pure ReLU, noiseless targets result in $\alpha_{\mathbf{Q}, \infty} = 1$ only on the left half of the bottom edge of \autoref{fig:nonlinear}D where $\eta_X \leq 1/2$. Furthermore, we can also analytically solve the bottom right corner of \autoref{fig:nonlinear}D which is the sharpest contrast from the linear network case. Here, we take the limit $\nu_X \to \infty$, which forces $\theta \downarrow 0$ and $\alpha_{\mathbf{Q}, \infty} \to -1$, resulting in \textit{complete anti-abstraction}. This analytically characterizes previously observed empirical differences between ReLU and sigmoid-like nonlinearities \cite{allemanTaskStructure24}: \textbf{\textit{abstraction dynamics under ReLU are much less sensitive to target geometry, but much more sensitive to input geometry}}. We conjecture this may be a reason why ReLU-like nonlinearities excel in self-supervised settings like next token prediction where we cannot handcraft low-noise targets, but can rely on high-SNR structure in the input data. 

\section{Applications to deep learning and neuroscience} \label{section:applications}

Here we demonstrate two applications of our theory in deep learning and neuroscience respectively. 

\begin{figure}[h]
    \centering
    \includegraphics[width=1.0\linewidth]{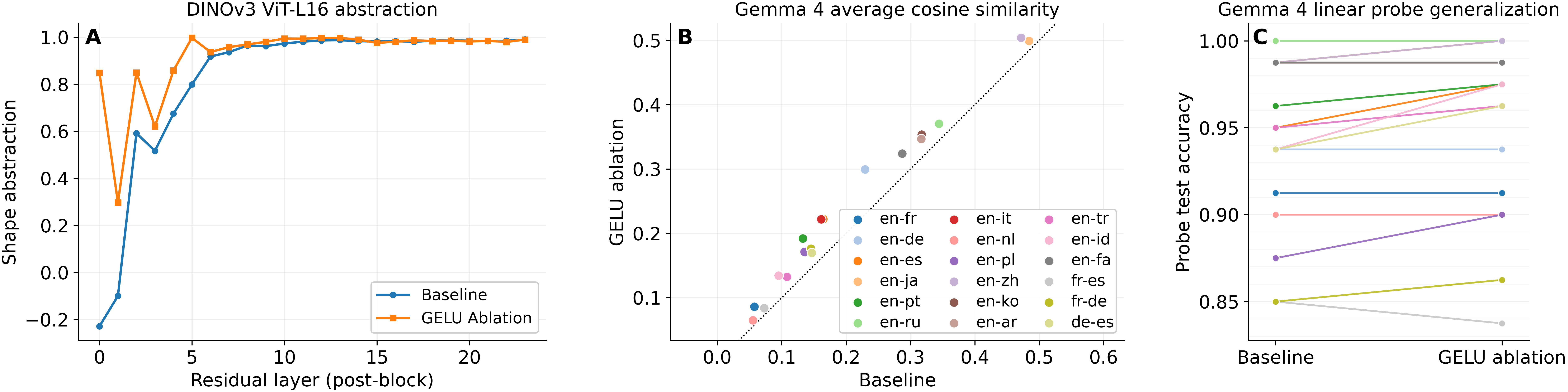}
    \caption{\textbf{Local GELU ablation improves concept abstraction and linear probe generalization in transformers}. We apply \autoref{thm:nonlinear-attenuation} to transformers by using the local GELU ablation technique described in \autoref{section:applications}. \textbf{(A)} For visual concepts, we evaluate DINOv3 ViT-L/16 \cite{simeoniDINOv325a} on the 3dshapes task described in \autoref{section:empirical}. Local GELU ablation improves or preserves shape abstraction in \textbf{\textit{every layer}} compared to baseline. Also, baseline abstraction improves layerwise which agrees with \autoref{theorem:layerwise-interp}. \textbf{(B)} For language concepts, we analyze the final residual stream of Gemma 4 E2B \cite{teamGemmaOpen24} evaluated on 18 bilingual concepts (e.g. \texttt{English-Spanish}), each instantiated by 80 word pairs (e.g. \texttt{Man, Hombre}). Local GELU ablation \textbf{\textit{improves abstraction for all 18 bilingual concepts}} (i.e. all points lie above diagonal). \textbf{(C)} The same intervention improves or preserves linear probe generalization for 17/18 concepts, with an average accuracy improvement of $\sim$1 percentage point.}
    \label{fig:language-ablation}
\end{figure}

\paragraph{Application to transformers.} A concept direction is useful only if it generalizes. Combining \autoref{thm:nonlinear-attenuation} with \autoref{proposition:ccgp} motivates a practical prediction: concept/steering vectors extracted via linear probes should be more abstract and generalize better after ablating nonlinearities. To operationalize this, we introduce the \textbf{local GELU ablation} procedure: (1) Measure abstraction in residual stream activations at layer \(\ell\) to get a baseline; (2) in the MLP block immediately preceding the activations, replace GELU with the identity; (3) rerun the forward pass and measure abstraction in the modified activations. We test local GELU ablation in both visual and language concepts in frontier open transformer models (see \autoref{fig:language-ablation}). Refer to \autoref{appendix:transformer-app-details} for further experiment details. 

\paragraph{Application to neural population codes.} \autoref{theorem:layerwise-interp} suggests a testable neuroscience prediction: representations should be more abstract in higher/deeper brain regions which are closer to downstream behaviour and learning signals. We test this on data from macaque visual cortex and find agreement with this prediction (see \autoref{fig:neuro}). See \autoref{appendix:majaj-app-details} for discussion and experimental details.

\begin{figure}[h] 
    \centering
    \includegraphics[width=1.0\linewidth]{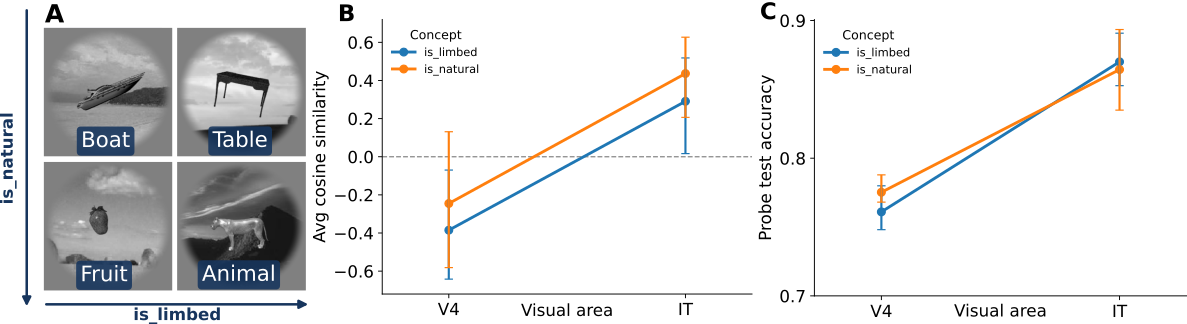}
    \caption{\textbf{Abstraction increases along macaque ventral visual stream.} We test the prediction of \autoref{theorem:layerwise-interp} that deeper cortical layers should exhibit greater abstraction using a subset of recordings in macaque V4 and IT from the public Majaj-Hong dataset \cite{majajSimpleLearned15}. \textbf{(A)} We construct a \(2 \times 2\) factor design around the concepts of \textit{naturalness} and \textit{limbedness}. \textbf{(B)} In a pooled site-matched analysis, both concepts are significantly more abstract in IT than in V4 (error bars are 95\% bootstrap intervals). \textbf{(C)} Linear probes trained to decode both concepts generalize significantly better in IT than in V4.}
    \label{fig:neuro}
\end{figure}

\section{Discussion} 

In this paper we developed a dynamical theory of abstraction in deep networks. We started in a simple, solvable linear network setting (\autoref{section:min-model}), which we fully solved (\autoref{section:exact-solutions}) and extended to nonlinear networks (\autoref{section:nonlinear}). Throughout, we provided evidence that our theory captures nontrivial phenomena in realistic settings (\autoref{section:empirical}) and demonstrated applications of our theory to both deep learning and neuroscience (\autoref{section:applications}). \textbf{Limitations.} Key limitations of our theory include (i) the simplifying assumptions made in \autoref{section:min-model} to obtain exact solutions and (ii) the difficulty of extending our theory to deep nonlinear networks. Despite these, experiments show that key qualitative predictions of our theory still hold when departing from our assumptions. \textbf{Future work.} Promising directions include (i) relaxation of our theoretical assumptions; (ii) broader application and analysis of the local GELU ablation method; and (iii) in-depth study of the theory's implications for interpretability and control of LLMs. 

\clearpage \bibliographystyle{unsrt}
\bibliography{bib}

\clearpage 
\section*{Appendix}
\appendix
\etocdepthtag.toc{appendix}

\begingroup
\etocsettagdepth{main}{none}
\etocsettagdepth{appendix}{subsection}
\etocsettocstyle{\section*{Appendix Contents}}{}
\tableofcontents
\endgroup

\clearpage

\counterwithin{theorem}{section}
\renewcommand{\thetheorem}{\thesection.\arabic{theorem}}

\renewcommand{\thelemma}{\thetheorem}
\renewcommand{\theproposition}{\thetheorem}
\renewcommand{\thecorollary}{\thetheorem}

\section{Supplementary discussion}

\subsection{Further related work} \label{appendix:related-work}

In this section we present a more comprehensive discussion of related work.

\paragraph{Abstract representations.} Recent work showed that both high-level brain regions and neural networks develop a representational geometry supporting simple forms of out-of-distribution generalization, often termed ``abstract'' \cite{johnstonAbstractRepresentations23, bernardiGeometryAbstraction20, nogueiraGeometryCortical23, fascianelliNeuralRepresentational24a, courellisAbstractRepresentations24, oneillRepresentationalGeometry24, zhuGeometricFoundation26, itoCompositionalGeneralization22}. This geometry is quantified by \textit{parallelism} \cite{bernardiGeometryAbstraction20}--the cosine similarity between counterfactual concept vectors--which is equivalent to our abstraction score. Recent empirical work on the choice of nonlinearity showed that the sigmoid-like \texttt{tanh} nonlinearity often results in more abstract representations than using \texttt{ReLU} \cite{allemanTaskStructure24}. In this paper, we provide a theoretical explanation for this phenomenon in \autoref{section:nonlinear-dynamics}. Recently, \cite{wangMathematicalTheory26} proved that at the global optimum of the loss, networks exhibit perfect abstraction in their last layer. The authors in \cite{wangMathematicalTheory26} considered an arbitrary number of concepts, and their theory's predictions remained unchanged by this number. Motivated by this, we develop a theory for the case of two concepts that treats abstraction as a dynamical variable during training, extends to more general input-target geometries, and study its evolution across layers in features and preactivations. 

\paragraph{Linear networks.} We build on a line of previous work on linear network learning dynamics (e.g. \cite{saxeExactSolutions14, lampinenAnalyticTheory19a, kuninGetRich24}), which have been extensively studied as analytically solvable yet insightful settings \cite{namPositionSolve25a, simonThereWill26}. Like \cite{braunExactLearning22, domineLazyRich25} we adopt a matrix Riccati approach to studying learning dynamics, but whereas they study the dynamics of the block matrix 
$\tiny \begin{bmatrix}
  \mathbf{W}^\top \mathbf{W} & \mathbf{W}^\top \mathbf{W}_r^\top \\
  \mathbf{W}_r \mathbf{W} & \mathbf{W}_r \mathbf{W}_r^\top
\end{bmatrix}$ in a two-layer linear network, we employ a variable projection reduction on the readout $\mathbf{W}_r$ (see \autoref{assumption:readout}) which enables us to solve the continuous-time dynamics of the hidden representation kernel $\mathbf{Q} = (\mathbf{W} \mathbf{X})^\top \mathbf{WX}$ in the two-layer case, and also extend our analysis to networks of arbitrary depth. 

\paragraph{Linear word analogies in word embedding models.} The idea of abstraction has strong parallels with the linear analogy structures discovered in word embedding models such as Word2Vec \cite{mikolovLinguisticRegularities13}. The origin of such structure has been the subject of much recent theoretical work \cite{allenAnalogiesExplained19, karkadaClosedFormTraining25, korchinskiEmergenceLinear25, karkadaSymmetryLanguage26}. \cite{karkadaClosedFormTraining25} is closest in spirit to our work, as they take a learning dynamics approach by analytically solving the gradient flow dynamics of a Word2Vec-like embedding model. However, their theory focuses on sequential learning of the eigenmodes of the approximate pointwise mutual information matrix, whereas we directly study the dynamics of cosine similarity between counterfactual concept directions. More broadly, these works study abstraction specifically arising from natural language co-occurrence statistics, whereas we study abstraction in a domain-agnostic setting, and then specialize the theory to vision and language concepts empirically.

\paragraph{Linear representation hypothesis.} Recent theoretical work has made the linear representation hypothesis precise for LLMs \cite{parkLinearRepresentation24, jiangOriginsLinear24, parkGeometryCategorical25}. \cite{parkLinearRepresentation24} formalized the notion of linearity for binary concepts such that a concept is said to be linearly embedded when every representation difference induced by changing only that concept lies in a common positive cone. In \autoref{appendix:lrh-parallelism}, we show that our measure of abstraction is a continuous extension of this exact cone-linearity condition. \cite{jiangOriginsLinear24} is closest in spirit to our work, as they prove that gradient descent asymptotically results in perfect abstraction in a simplified setting. In contrast, we study the dynamics of abstraction during training (rather than just end-of-training), and also study the evolution of abstraction across layers and through nonlinearities. 

\subsection{Relationship between abstraction and linear representation hypothesis (LRH)} \label{appendix:lrh-parallelism}

In this section we discuss the claim made in \autoref{section:min-model} that the abstraction or ``parallelism score'' from theoretical neuroscience or abstraction score is a graded extension of the exact linear representation condition adopted in the LRH literature, which we will call ``cone-linearity''. In particular, we will show thatwhen abstraction is perfect (meaning \(\alpha = 1\)), the context-specific concept vectors all lie in one common positive cone, which is exactly the cone-linearity condition used in \cite{parkLinearRepresentation24, jiangOriginsLinear24}. To avoid overloading notation, we write generic representation vectors as \(\mathbf{r}\) throughout this section.

\paragraph{Preliminaries.} First we unify the notation used in this paper and \cite{parkLinearRepresentation24, jiangOriginsLinear24}. Let \(\mathcal{V}\) be a representation space and denote the representation associated with shape value \(s\) and color value \(c\) with:
\begin{align}
    \mathbf{r}_{s,c}\in\mathcal{V},
    \qquad
    (s,c)\in\{+1,-1\}^{2},
\end{align}
In the main text, \(\mathbf{r}_{s,c}\) is the class centroid \(\boldsymbol{\mu}_{s,c}\) of the learned features. We take \(s=+1\) to mean square, \(s=-1\) to mean circle, \(c=+1\) to mean red, and \(c=-1\) to mean blue. The two context-specific shape vectors can be written as:
\begin{align}
    \boldsymbol{\delta}^{S}_{+}
    &\coloneqq
    \mathbf{r}_{+1,+1}-\mathbf{r}_{-1,+1}
    \equiv
    \mathbf{u}_{\mathrm{red}}
    \\
    \boldsymbol{\delta}^{S}_{-}
    &\coloneqq
    \mathbf{r}_{+1,-1}-\mathbf{r}_{-1,-1}
    \equiv
    \mathbf{u}_{\mathrm{blue}}
\end{align}
Thus the abstraction score from the main text is
\begin{align}
    \alpha
    =
    \frac{
        \left\langle
            \boldsymbol{\delta}^{S}_{+},
            \boldsymbol{\delta}^{S}_{-}
        \right\rangle
    }{
        \left\|
            \boldsymbol{\delta}^{S}_{+}
        \right\|
        \left\|
            \boldsymbol{\delta}^{S}_{-}
        \right\|
    }
    \label{eq:appendix-alpha-as-cosine}
\end{align}

And for a nonzero vector \(\mathbf{v}\), define its positive cone by
\begin{align}
    \operatorname{Cone}(\mathbf{v})
    \coloneqq
    \left\{
        a\mathbf{v}: a>0
    \right\}
\end{align}
Note that the positivity matters: perfect linearity is \(\alpha=1\), not merely \(|\alpha|=1\). If \(\alpha=-1\), then
\begin{align}
    \boldsymbol{\delta}^{S}_{-}
    =
    -a\boldsymbol{\delta}^{S}_{+}
    \qquad
    \text{for some } a>0
\end{align}
The two context-specific vectors then span the same line, but they point in opposite directions. This would mean that the square-minus-circle direction in one color context is the circle-minus-square direction in the other color context. \cite{parkLinearRepresentation24,jiangOriginsLinear24} use positive cones precisely to rule this out.

\begin{proposition}[Perfect abstraction is exactly cone-linearity]
\label{proposition:perfect-abstraction-lrh}
Assume
\(\boldsymbol{\delta}^{S}_{+}\neq 0\) and \(\boldsymbol{\delta}^{S}_{-}\neq 0\). Then the following are equivalent:
\begin{enumerate}[label=(\roman*)]
    \item The shape abstraction score is perfect:
    \begin{align}
        \alpha = 1
    \end{align}
    \item There exists a nonzero vector \(\overline{\mathbf{r}}_{S}\in\mathcal{V}\) such that
    \begin{align}
        \boldsymbol{\delta}^{S}_{+}
        \in
        \operatorname{Cone}(\overline{\mathbf{r}}_{S}),
        \qquad
        \boldsymbol{\delta}^{S}_{-}
        \in
        \operatorname{Cone}(\overline{\mathbf{r}}_{S})
        \label{eq:appendix-two-context-cone-linearity}
    \end{align}
    \item At the level of the representations \(\mathbf{r}_{s,c}\), the shape concept has a linear representation in the cone sense used by \cite{parkLinearRepresentation24, jiangOriginsLinear24}.
\end{enumerate}
\end{proposition}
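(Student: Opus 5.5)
I will prove the equivalences by the cycle (i)$\Rightarrow$(ii)$\Rightarrow$(i), and then show that (iii) is a restatement of (ii). The one analytic input is the equality case of Cauchy--Schwarz. For nonzero vectors, $\langle \boldsymbol{\delta}^{S}_{+}, \boldsymbol{\delta}^{S}_{-}\rangle = \|\boldsymbol{\delta}^{S}_{+}\|\|\boldsymbol{\delta}^{S}_{-}\|$ holds if and only if $\boldsymbol{\delta}^{S}_{-} = a\,\boldsymbol{\delta}^{S}_{+}$ for some $a>0$.

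\emph{(i)$\Rightarrow$(ii).} Suppose $\alpha=1$. By \eqref{eq:appendix-alpha-as-cosine}, Cauchy--Schwarz holds with equality. Since neither vector is zero, this gives $\boldsymbol{\delta}^{S}_{-}=a\boldsymbol{\delta}^{S}_{+}$ for a scalar $a$. Substituting back, $\alpha=\operatorname{sign}(a)$, so $a>0$. Take $\overline{\mathbf{r}}_S \coloneqq \boldsymbol{\delta}^{S}_{+}$. Then $\boldsymbol{\delta}^{S}_{+}=1\cdot\overline{\mathbf{r}}_S$ and $\boldsymbol{\delta}^{S}_{-}=a\overline{\mathbf{r}}_S$, so both lie in $\operatorname{Cone}(\overline{\mathbf{r}}_S)$.

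\emph{(ii)$\Rightarrow$(i).} Suppose $\boldsymbol{\delta}^{S}_{\pm}=a_\pm\overline{\mathbf{r}}_S$ with $a_\pm>0$. Plugging into the cosine gives
\[
\alpha=\frac{a_+a_-\|\overline{\mathbf{r}}_S\|^2}{a_+a_-\|\overline{\mathbf{r}}_S\|^2}=1 .
\]
The nonzero hypothesis guarantees that the denominator does not vanish.

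\emph{(ii)$\Leftrightarrow$(iii).} This step is mostly bookkeeping, and I expect it to be the main obstacle, because it depends on matching the definition in \cite{parkLinearRepresentation24,jiangOriginsLinear24} precisely. I will quote their definition: a binary concept $W$ has a linear representation $\overline{\mathbf{r}}_W$ if, for every pair of inputs differing only in $W$ (with all other concept values held fixed), the representation difference lies in $\operatorname{Cone}(\overline{\mathbf{r}}_W)$.

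In our two-concept setting, the only other concept is color. So the counterfactual pairs for shape are exactly $(\mathbf{r}_{+1,c},\mathbf{r}_{-1,c})$ for $c\in\{+1,-1\}$, and their differences are exactly $\boldsymbol{\delta}^{S}_{+}$ and $\boldsymbol{\delta}^{S}_{-}$. The reversed pairs give $-\boldsymbol{\delta}^{S}_{\pm}$. These are handled by the orientation convention of the cited works: differences are always taken as (concept value $+1$) minus (concept value $-1$), which is what forces the positive cone rather than a line.

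With that convention, their condition reads literally as \eqref{eq:appendix-two-context-cone-linearity}. I will state the convention explicitly, since without it only $|\alpha|=1$ would follow, and note that identifying $\mathbf{r}_{s,c}$ with the class centroid $\boldsymbol{\mu}_{s,c}$ makes this apply to the main-text $\alpha$.
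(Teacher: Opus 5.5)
Your proposal is correct and follows essentially the same route as the paper: the equality case of Cauchy--Schwarz with $\overline{\mathbf{r}}_S=\boldsymbol{\delta}^S_+$ for (i)$\Leftrightarrow$(ii), and a direct unpacking of the counterfactual-pair definition of \cite{parkLinearRepresentation24,jiangOriginsLinear24} for (ii)$\Leftrightarrow$(iii). The differences are cosmetic. The paper obtains the equality case from the identity $\bigl\|\boldsymbol{\delta}^S_+/\|\boldsymbol{\delta}^S_+\| - \boldsymbol{\delta}^S_-/\|\boldsymbol{\delta}^S_-\|\bigr\|^2 = 2(1-\alpha)$, and it leaves implicit the orientation convention (differences taken as value $+1$ minus value $-1$) that you state explicitly.
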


\begin{proof}
The equivalence between (i) and (ii) is exactly the equality condition for Cauchy--Schwarz. Starting from \autoref{eq:appendix-alpha-as-cosine}, we have
\begin{align}
    \left\|
        \frac{\boldsymbol{\delta}^{S}_{+}}{\left\|\boldsymbol{\delta}^{S}_{+}\right\|}
        -
        \frac{\boldsymbol{\delta}^{S}_{-}}{\left\|\boldsymbol{\delta}^{S}_{-}\right\|}
    \right\|^{2}
    &=
    \left\|
        \frac{\boldsymbol{\delta}^{S}_{+}}{\left\|\boldsymbol{\delta}^{S}_{+}\right\|}
    \right\|^{2}
    +
    \left\|
        \frac{\boldsymbol{\delta}^{S}_{-}}{\left\|\boldsymbol{\delta}^{S}_{-}\right\|}
    \right\|^{2}
    -
    2
    \frac{
        \left\langle
            \boldsymbol{\delta}^{S}_{+},
            \boldsymbol{\delta}^{S}_{-}
        \right\rangle
    }{
        \left\|
            \boldsymbol{\delta}^{S}_{+}
        \right\|
        \left\|
            \boldsymbol{\delta}^{S}_{-}
        \right\|
    }
    \\
    &=
    2(1-\alpha)
\end{align}
Therefore \(\alpha=1\) if and only if
\begin{align}
    \frac{\boldsymbol{\delta}^{S}_{+}}{\left\|\boldsymbol{\delta}^{S}_{+}\right\|}
    =
    \frac{\boldsymbol{\delta}^{S}_{-}}{\left\|\boldsymbol{\delta}^{S}_{-}\right\|}
\end{align}
Equivalently,
\begin{align}
    \boldsymbol{\delta}^{S}_{-}
    =
    a\boldsymbol{\delta}^{S}_{+},
    \qquad
    a
    =
    \frac{
        \left\|\boldsymbol{\delta}^{S}_{-}\right\|
    }{
        \left\|\boldsymbol{\delta}^{S}_{+}\right\|
    }
    >
    0
\end{align}
Thus both context-specific shape vectors lie in the same positive cone. Taking
\begin{align}
    \overline{\mathbf{r}}_{S}
    =
    \boldsymbol{\delta}^{S}_{+}
\end{align}
proves (ii). Conversely, if (ii) holds, then there exist \(a_{+},a_{-}>0\) such that
\begin{align}
    \boldsymbol{\delta}^{S}_{+}
    =
    a_{+}\overline{\mathbf{r}}_{S},
    \qquad
    \boldsymbol{\delta}^{S}_{-}
    =
    a_{-}\overline{\mathbf{r}}_{S}
\end{align}
Substituting into \autoref{eq:appendix-alpha-as-cosine} gives
\begin{align}
    \alpha
    =
    \frac{
        a_{+}a_{-}
        \left\|
            \overline{\mathbf{r}}_{S}
        \right\|^{2}
    }{
        a_{+}
        \left\|
            \overline{\mathbf{r}}_{S}
        \right\|
        a_{-}
        \left\|
            \overline{\mathbf{r}}_{S}
        \right\|
    }
    =
    1
\end{align}
This proves (i) \(\Longleftrightarrow\) (ii).

Next we connect (ii) to the LRH definitions (iii). In \cite{parkLinearRepresentation24,jiangOriginsLinear24} the authors define an embedding representation of a target binary concept \(s\) by requiring that every counterfactual pair of context embeddings which changes \(s\) (while leaving off-target concepts unchanged) has difference in a common cone. In our notation, the target concept is shape \(s\), and the off-target concept is color \(c\). For each color fixed at \(+1\) or \(-1\), this condition can be written as:
\begin{align}
    \boldsymbol{\delta}^{S}_{+}, \boldsymbol{\delta}^{S}_{-}
    \in
    \operatorname{Cone}(\overline{\mathbf{r}}_{s})
\end{align}
Thus cone-linearity says precisely that both \(\boldsymbol{\delta}^{S}_{+}\) and \(\boldsymbol{\delta}^{S}_{-}\) lie in one common positive cone, which is (ii). This proves (ii) \(\Longleftrightarrow\) (iii), which completes the proof.
\end{proof}

\paragraph{How \cite{parkLinearRepresentation24, jiangOriginsLinear24} relates to our results.}
The results of \cite{parkLinearRepresentation24, jiangOriginsLinear24} are exact LRH statements: a concept is linearly represented when all of its counterfactual concept vectors lie in a common positive cone. The abstraction score \(\alpha\) turns this exact binary property into a continuous dynamical quantity:
\begin{align}
    \alpha=1
    \quad
    \Longleftrightarrow
    \quad
    \text{exact LRH cone-linearity},
\end{align}
while \(\alpha<1\) measures the angular deviation from exact cone-linearity.

This is why \(\alpha(t)\) is the natural dynamical variable for the present paper. Jiang et al. \cite{jiangOriginsLinear24} prove that, in their simplified gradient-descent setting, $\lim_{t \to \infty} \alpha(t) = 1$. In our notation, we say that the terminal abstraction is perfect. By contrast, \autoref{thm:fixed-point} characterizes when perfect abstraction occurs in the minimal linear network studied here:
\begin{align}
    \alpha_{\infty}
    =
    \frac{
        1-\sqrt{\nu(\boldsymbol{\Sigma}_x)\nu(\boldsymbol{\Sigma}_y)}
    }{
        1+\sqrt{\nu(\boldsymbol{\Sigma}_x)\nu(\boldsymbol{\Sigma}_y)}
    }
\end{align}
In particular, if either the inputs or targets have no interaction noise, then
\begin{align}
    \nu(\boldsymbol{\Sigma}_x)\nu(\boldsymbol{\Sigma}_y)=0
    \qquad
    \Longrightarrow
    \qquad
    \alpha_{\infty}=1
\end{align}
Thus our terminal law recovers exact cone-linearity in the noiseless cases, while also describing partially abstract representations when input and target geometries contain interaction noise. 

\clearpage
\subsection{Two-factor symmetry (\autoref{assumption:2fs})} \label{appendix:2fs-walsh-hadamard}

This section provides an unabridged explanation and discussion on \autoref{assumption:2fs} in \autoref{section:min-model}. The goal is to motivate 2FS kernels in more detail and show they arise naturally from a simple and interpretable picture in which both inputs and targets are functions of two binary latent variables.

\paragraph{Walsh-Hadamard coordinates for two binary latents.}
The natural coordinate system for functions of the two binary latent variables \(s, c\) is the Walsh-Hadamard basis, an analogue of the Fourier basis.

\begin{proposition}[Walsh-Hadamard expansion] \label{proposition:walsh-expansion}
Let \((s,c) \in \{-1,+1\}^2\). Then any vector-valued function \(\mathbf{f}: (s, c) \to \mathbb{R}^d\) admits a unique expansion
\begin{align}
  \mathbf{f}(s,c)
  =
  \mathbf{f}_\varnothing
  +
  s\mathbf{f}_S
  +
  c\mathbf{f}_C
  +
  sc\mathbf{f}_{SC},
\end{align}
where the Fourier coefficients are
\begin{align}
  \mathbf{f}_\varnothing
  &=
  \frac{1}{4}\sum_{s,c}\mathbf{f}(s,c)
  &
  \mathbf{f}_S
  &=
  \frac{1}{4}\sum_{s,c}s\mathbf{f}(s,c)
  \\
  \mathbf{f}_C
  &=
  \frac{1}{4}\sum_{s,c}c\mathbf{f}(s,c)
  &
  \mathbf{f}_{SC}
  &=
  \frac{1}{4}\sum_{s,c}sc\mathbf{f}(s,c)
\end{align}
\end{proposition}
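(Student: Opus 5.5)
The statement to prove is the Walsh--Hadamard expansion (\autoref{proposition:walsh-expansion}). The plan is to treat it as a statement about a basis of the four-dimensional space of scalar functions on $\{-1,+1\}^2$, and then lift it to vector-valued $\mathbf{f}$ coordinatewise. This works because the expansion and the coefficient formulas are linear and act on each of the $d$ output coordinates independently.

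First, I will verify that the four characters $\chi_\varnothing = 1$, $\chi_S = s$, $\chi_C = c$, $\chi_{SC} = sc$ are orthonormal under the averaging inner product
\begin{align*}
\langle u, v\rangle = \tfrac14\sum_{s,c} u(s,c)v(s,c).
\end{align*}
The key observation is that each product $\chi_A\chi_B$ is again a character, namely $\chi_{A\triangle B}$, because $s^2 = c^2 = 1$. Every nonconstant character sums to zero over the four points: for example, $\sum_{s,c} s = 0$ and $\sum_{s,c} sc = (\sum_s s)(\sum_c c) = 0$. Hence $\langle \chi_A, \chi_B\rangle = \delta_{AB}$.

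Next come existence and uniqueness together. Orthonormality gives linear independence, and the space of functions on four points has dimension $4$, so the characters form a basis. Every scalar $f$ therefore has a unique expansion $f = \sum_A \hat f_A \chi_A$. Taking the inner product of both sides with $\chi_B$ gives $\hat f_B = \langle f, \chi_B\rangle$, and this reproduces exactly the four stated coefficient formulas. As a more concrete check of existence, I could write the $4\times 4$ Hadamard matrix $\mathbf{H}$ of character values, note that $\mathbf{H}^\top\mathbf{H} = 4\mathbf{I}$, and invert it explicitly.

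Finally, I apply the scalar result to each coordinate $f_k$ of $\mathbf{f}$ and stack the results. This yields vector coefficients $\mathbf{f}_A = \tfrac14\sum_{s,c}\chi_A(s,c)\mathbf{f}(s,c)$. Uniqueness transfers as well: any two vector expansions would agree coordinatewise, so by scalar uniqueness their coefficients must coincide.

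None of these steps is a real obstacle. The only point needing care is stating the dimension count, or equivalently the invertibility of $\mathbf{H}$, which is what makes the expansion unique rather than merely possible.
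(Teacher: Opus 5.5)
Your proposal is correct and follows essentially the same route as the paper: you show the four characters are orthonormal under the uniform inner product and use the dimension count to get a basis. You then read off the coefficients as orthogonal projections and lift coordinatewise to vector-valued $\mathbf{f}$, with uniqueness inherited from the scalar case.
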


\begin{proof}
Let
\begin{align}
\chi_\varnothing(s,c)=1,
\qquad
\chi_S(s,c)=s,
\qquad
\chi_C(s,c)=c,
\qquad
\chi_{SC}(s,c)=sc
\end{align}
These four scalar functions are orthonormal under the uniform inner product on \(\{-1,+1\}^2\):
\begin{align}
\frac{1}{4}\sum_{s,c}\chi_a(s,c)\chi_b(s,c)
=
\begin{cases}
1, & a=b,\\
0, & a\neq b
\end{cases}
\end{align}
For example, \(\frac{1}{4}\sum_{s,c}s=0\), \(\frac{1}{4}\sum_{s,c}sc=0\), and \(\frac{1}{4}\sum_{s,c}s^2=1\). Since there are four functions on a four-point domain, this orthonormal set is a basis for all scalar functions on the domain. Applying the scalar expansion coordinatewise to \(\mathbf{f}\) gives the displayed vector-valued expansion. The coefficient formulas are exactly the orthogonal projections of \(\mathbf{f}\) onto each basis function, and uniqueness follows from basis uniqueness.
\end{proof}

\paragraph{Sample indexing and the five 2FS modes.}
We next connect the Walsh-Hadamard basis to the sample-level kernel symmetry in \autoref{assumption:2fs}. Index samples by triples
\begin{align}
 i=(s_i,c_i,a_i),
 \qquad
 s_i,c_i\in\{-1,+1\},
 \qquad
 a_i\in\{1,\ldots,n\},
\end{align}
where \(a_i\) is the within-class replicate index. Thus \(N=4n\). Define four normalized class-constant vectors in \(\mathbb{R}^N\):
\begin{align}
(\boldsymbol{\chi}_G)_i
&=
\frac{1}{\sqrt{N}},
&
(\boldsymbol{\chi}_S)_i
&=
\frac{s_i}{\sqrt{N}},
&
(\boldsymbol{\chi}_C)_i
&=
\frac{c_i}{\sqrt{N}},
&
(\boldsymbol{\chi}_{SC})_i
&=
\frac{s_i c_i}{\sqrt{N}}
\end{align}
Balanced sampling implies that these four vectors are orthonormal. Let
\begin{align}
\mathbf{P}_G
&=
\boldsymbol{\chi}_G\boldsymbol{\chi}_G^\top,
&
\mathbf{P}_S
&=
\boldsymbol{\chi}_S\boldsymbol{\chi}_S^\top,
&
\mathbf{P}_C
&=
\boldsymbol{\chi}_C\boldsymbol{\chi}_C^\top,
&
\mathbf{P}_{SC}
&=
\boldsymbol{\chi}_{SC}\boldsymbol{\chi}_{SC}^\top,
\end{align}
and
\begin{align}
\mathbf{P}_I
=
\I_N-\mathbf{P}_G-\mathbf{P}_S-\mathbf{P}_C-\mathbf{P}_{SC}
\end{align}
The projector \(\mathbf{P}_I\) is the within-class residual subspace: it contains vectors whose entries sum to zero within each of the four fine classes. Thus \(G,S,C,SC\) describe the four-dimensional class-centroid geometry, while \(I\) describes variation left over after class averaging.

The next proposition makes precise the equivalence between the group-invariance definition of 2FS in \autoref{assumption:2fs}, the five entry types shown in \autoref{fig:2fs-ansatz}D, and the simultaneous diagonalization used throughout the main text.

\begin{proposition}[Equivalent forms of a 2FS kernel] \label{proposition:2fs-projector-decomposition}
Let \(\mathbf{A}\in\mathbb{R}^{N\times N}\) be a kernel matrix, with \(N=4n\) and \(n>1\). The following descriptions are equivalent.

\begin{enumerate}[label=(\roman*)]
\item \(\mathbf{A}\) is invariant under \(\mathcal{G}\cong (S_n)^4\rtimes (\mathbb{Z}_2)^2\), meaning that for permutation matrix \(\boldsymbol{\Pi}_g\)
\begin{align}
\boldsymbol{\Pi}_g \mathbf{A} \boldsymbol{\Pi}_g^\top=\mathbf{A}
\qquad
\text{for all }g\in\mathcal{G}
\end{align}
\item \(\mathbf{A}\) has five entry types:
\begin{align}
\mathbf{A}_{ij}
=
\begin{cases}
a_d,
& i=j,
\\
a_2,
& i\neq j \text{ and } (s_i,c_i)=(s_j,c_j),
\\
a_{1s},
& s_i=s_j \text{ and } c_i\neq c_j,
\\
a_{1c},
& s_i\neq s_j \text{ and } c_i=c_j,
\\
a_0,
& s_i\neq s_j \text{ and } c_i\neq c_j
\end{cases}
\end{align}
Here \(a_{1s}\) denotes an entry between samples with the same shape and different color, while \(a_{1c}\) denotes an entry between samples with the same color and different shape.
\item \(\mathbf{A}\) is diagonal in the five projectors:
\begin{align}
\mathbf{A}
=
\lambda_G^{(\mathbf{A})}\mathbf{P}_G
+
\lambda_S^{(\mathbf{A})}\mathbf{P}_S
+
\lambda_C^{(\mathbf{A})}\mathbf{P}_C
+
\lambda_{SC}^{(\mathbf{A})}\mathbf{P}_{SC}
+
\lambda_I^{(\mathbf{A})}\mathbf{P}_I
\end{align}
\end{enumerate}
For a kernel with entry types as in (ii), define
\begin{align}
\bar a
\coloneqq
\frac{a_d+(n-1)a_2}{n},
\end{align}
the inner product between two class centroids from the same fine class. Then the eigenvalues in (iii) are
\begin{align}
\lambda_G^{(\mathbf{A})}
&=
 n(\bar a+a_{1s}+a_{1c}+a_0),
\\
\lambda_S^{(\mathbf{A})}
&=
 n(\bar a+a_{1s}-a_{1c}-a_0),
\\
\lambda_C^{(\mathbf{A})}
&=
 n(\bar a-a_{1s}+a_{1c}-a_0),
\\
\lambda_{SC}^{(\mathbf{A})}
&=
 n(\bar a-a_{1s}-a_{1c}+a_0),
\\
\lambda_I^{(\mathbf{A})}
&=
 a_d-a_2
\end{align}
Conversely, if \(\mathbf{A}\) has the projector decomposition in (iii), then its five entry types are
\begin{align}
 a_d
 &=
 \frac{\lambda_G^{(\mathbf{A})}+\lambda_S^{(\mathbf{A})}+\lambda_C^{(\mathbf{A})}+\lambda_{SC}^{(\mathbf{A})}}{4n}
 +
 \left(1-\frac{1}{n}\right)\lambda_I^{(\mathbf{A})},
\\
 a_2
 &=
 \frac{\lambda_G^{(\mathbf{A})}+\lambda_S^{(\mathbf{A})}+\lambda_C^{(\mathbf{A})}+\lambda_{SC}^{(\mathbf{A})}}{4n}
 -
 \frac{1}{n}\lambda_I^{(\mathbf{A})},
\\
 a_{1s}
 &=
 \frac{\lambda_G^{(\mathbf{A})}+\lambda_S^{(\mathbf{A})}-\lambda_C^{(\mathbf{A})}-\lambda_{SC}^{(\mathbf{A})}}{4n},
\\
 a_{1c}
 &=
 \frac{\lambda_G^{(\mathbf{A})}-\lambda_S^{(\mathbf{A})}+\lambda_C^{(\mathbf{A})}-\lambda_{SC}^{(\mathbf{A})}}{4n},
\\
 a_0
 &=
 \frac{\lambda_G^{(\mathbf{A})}-\lambda_S^{(\mathbf{A})}-\lambda_C^{(\mathbf{A})}+\lambda_{SC}^{(\mathbf{A})}}{4n}
\end{align}
\end{proposition}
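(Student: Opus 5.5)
The plan is to prove the cycle (i) $\Rightarrow$ (ii) $\Rightarrow$ (iii) $\Rightarrow$ (i), and to obtain the explicit eigenvalue and entry formulas along the way.

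\emph{(i) $\Rightarrow$ (ii).} Invariance means $\mathbf{A}_{g(i),g(j)}=\mathbf{A}_{ij}$ for every $g\in\mathcal{G}$, so $\mathbf{A}$ is constant on the orbits of $\mathcal{G}$ acting on ordered pairs $(i,j)$. I will classify these orbits. The class-relation of a pair, described by the pair of indicators $(\mathbf{1}[s_i=s_j],\mathbf{1}[c_i=c_j])$ together with whether $i=j$, is preserved by $\mathcal{G}$. Conversely, two pairs with the same relation lie in one orbit. To see this, first use the $(\mathbb{Z}_2)^2$ relabelings to move $(s_i,c_i)$ to a fixed class; this preserves the relation, so it also fixes $j$'s class. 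Then use the within-class permutations $(S_n)^4$ to move the replicate indices. The case $i\ne j$ in the same class needs $n>1$ and a permutation within a single class sending one ordered pair of distinct replicates to another, which $S_n$ supplies. This yields exactly the five orbits $d,2,1s,1c,0$.

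\emph{(ii) $\Rightarrow$ (iii).} I will write the five-entry matrix as $\mathbf{A}=(a_d-a_2)\mathbf{I}_N+\mathbf{B}$. Here $\mathbf{B}$ is class-constant: its entries depend only on the class pair, with $a_2$ on same-class pairs including the diagonal. Thus $\mathbf{B}=\mathbf{E}\,\mathbf{b}\,\mathbf{E}^\top$, where $\mathbf{E}\in\mathbb{R}^{N\times 4}$ is the class-membership matrix and $\mathbf{b}$ is the $4\times4$ matrix of entry values. On the $2\times2$ class grid, $\mathbf{b}$ is a group-invariant matrix for $(\mathbb{Z}_2)^2$, so it is diagonalized by the characters $1,s,c,sc$, that is, by the Walsh--Hadamard basis of Proposition~\ref{proposition:walsh-expansion}. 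Lifting through $\mathbf{E}$ (with $\mathbf{E}^\top\mathbf{E}=n\mathbf{I}_4$) shows that $\boldsymbol{\chi}_m$ is an eigenvector of $\mathbf{B}$ with eigenvalue $n\,(\text{character sum})$, and that $\mathbf{B}$ annihilates $\operatorname{range}\mathbf{P}_I$. Computing $\mathbf{A}\boldsymbol{\chi}_S$ on a single row gives $a_d+(n-1)a_2 + n a_{1s} - n a_{1c} - n a_0 = n(\bar a+a_{1s}-a_{1c}-a_0)$, and the other three character modes follow the same pattern. On $\operatorname{range}\mathbf{P}_I$ only the identity part survives, giving $\lambda_I=a_d-a_2$. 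Since the five projectors are orthogonal and sum to $\mathbf{I}_N$, this establishes (iii) and the eigenvalue formulas.

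\emph{(iii) $\Rightarrow$ (i), and the inverse formulas.} Each $\boldsymbol{\chi}_m\boldsymbol{\chi}_m^\top$ is invariant: under $\Pi_g$ the vector $\boldsymbol{\chi}_m$ is mapped to $\pm\boldsymbol{\chi}_m$, and the sign cancels in the outer product. The matrix $\mathbf{P}_I$ is invariant because it is $\mathbf{I}_N$ minus invariant terms. Hence any linear combination of the five projectors is invariant. For the converse entry formulas, I will evaluate $\sum_m\lambda_m(\mathbf{P}_m)_{ij}$ directly, using $(\mathbf{P}_m)_{ij}=\chi_m(i)\chi_m(j)/N$ and $(\mathbf{P}_I)_{ij}=\delta_{ij}-\tfrac1n\mathbf{1}[\text{same class}]$. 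The latter follows because $\sum_{m\ne I}\mathbf{P}_m$ is the block-averaging operator, by completeness of the four characters on the grid. Reading off the five cases gives the stated $a_\mu$; the $(1-\tfrac1n)$ and $-\tfrac1n$ terms come from $\mathbf{P}_I$.

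The main care point is the orbit classification in (i) $\Rightarrow$ (ii). One must check that the semidirect product actually achieves transitivity within each relation type, since the relabelings permute classes and so must be composed with the within-class permutations. This is also exactly where the hypothesis $n>1$ enters, because it is what makes the $a_2$ orbit nonempty. Everything else is bookkeeping with the characters.
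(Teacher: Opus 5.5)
Your proposal is correct and follows essentially the same route as the paper: the cycle (i)$\Rightarrow$(ii)$\Rightarrow$(iii)$\Rightarrow$(i), with the class-mode eigenvalues obtained by applying $\mathbf{A}$ to the characters $\boldsymbol{\chi}_m$ row by row, the eigenvalue $a_d-a_2$ on the within-class sum-zero subspace, and the inverse formulas read off from the entries of the projectors. Your explicit orbit-transitivity check in (i)$\Rightarrow$(ii), the decomposition $\mathbf{A}=(a_d-a_2)\mathbf{I}_N+\mathbf{E}\mathbf{b}\mathbf{E}^\top$, and the $\pm\boldsymbol{\chi}_m$ sign argument in (iii)$\Rightarrow$(i) are a bit more careful than the paper's wording, but the underlying argument is the same.
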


\begin{proof}
First suppose \(\mathbf{A}\) is invariant under \(\mathcal{G}\). Arbitrary permutations within each fine class force all diagonal entries within the four classes to be equal, all off-diagonal entries within each fine class to be equal, and all entries between any fixed pair of fine classes to be constant. The global relabelings \(s\mapsto -s\) and \(c\mapsto -c\) then identify class pairs with the same agreement pattern: same shape and different color, same color and different shape, or different in both factors. Hence \(\mathbf{A}\) has exactly the five entry types in (ii).

Next assume (ii). We show that the five-entry form is diagonal in the projectors above. Consider the action of \(\mathbf{A}\) on \(\boldsymbol{\chi}_S\). Fix a row \(i\) with latent values \((s_i,c_i)\). Summing over the four classes gives
\begin{align}
(\mathbf{A}\boldsymbol{\chi}_S)_i
&=
\frac{1}{\sqrt{N}}
\Big[
\big(a_d+(n-1)a_2\big)s_i
+
 n a_{1s}s_i
-
 n a_{1c}s_i
-
 n a_0s_i
\Big]
\\
&=
 n(\bar a+a_{1s}-a_{1c}-a_0)(\boldsymbol{\chi}_S)_i
\end{align}
Thus \(\boldsymbol{\chi}_S\) is an eigenvector with eigenvalue \(n(\bar a+a_{1s}-a_{1c}-a_0)\). The same calculation with \(1\), \(c_i\), and \(s_i c_i\) gives the displayed formulas for \(\lambda_G^{(\mathbf{A})}\), \(\lambda_C^{(\mathbf{A})}\), and \(\lambda_{SC}^{(\mathbf{A})}\).

It remains to check the residual subspace. Let \(\mathbf{v}\in\operatorname{range}(\mathbf{P}_I)\), so for every fine class \((s,c)\),
\begin{align}
\sum_{i:(s_i,c_i)=(s,c)}v_i=0
\end{align}
For a row \(i\), all contributions from other classes vanish because \(\mathbf{A}_{ij}\) is constant over each other fine class and the sum of \(v_j\) in that class is zero. Within the same class as \(i\),
\begin{align}
(\mathbf{A}\mathbf{v})_i
&=
a_d v_i
+
a_2\sum_{j\neq i:(s_j,c_j)=(s_i,c_i)}v_j
\\
&=
a_d v_i-a_2v_i
=
(a_d-a_2)v_i
\end{align}
Thus every vector in the residual subspace is an eigenvector with eigenvalue \(a_d-a_2\). This proves (iii) and the eigenvalue formulas.

Conversely, each projector \(\mathbf{P}_G,\mathbf{P}_S,\mathbf{P}_C,\mathbf{P}_{SC},\mathbf{P}_I\) is invariant under within-class permutations and global sign flips. Any linear combination of them is therefore invariant, proving (iii) \(\Rightarrow\) (i). The entry formulas follow by writing out the entries of the projectors. For instance, \((\mathbf{P}_S)_{ij}=s_i s_j/N\), \((\mathbf{P}_{SC})_{ij}=s_i c_i s_j c_j/N\), while \((\mathbf{P}_I)_{ij}=1-1/n\) on the diagonal, \((\mathbf{P}_I)_{ij}=-1/n\) for distinct samples in the same fine class, and \((\mathbf{P}_I)_{ij}=0\) for samples in different fine classes.
\end{proof}

\paragraph{Abstraction in the 2FS eigenbasis.}
In the main text we show that shape abstraction has a closed form in just the \(S\) and \(SC\) eigenvalues. We now derive this identity explicitly.

\begin{proposition}[Shape abstraction from 2FS eigenvalues] \label{proposition:2fs-abstraction-eigenvalues}
Let \(\mathbf{Q}=\mathbf{Z}^\top \mathbf{Z}\) be a 2FS kernel. If \(\lambda_S^{(\mathbf{Q})}+\lambda_{SC}^{(\mathbf{Q})}>0\), then the shape abstraction score defined in \autoref{section:min-model} is
\begin{align}
\alpha
=
\frac{\lambda_S^{(\mathbf{Q})}-\lambda_{SC}^{(\mathbf{Q})}}{\lambda_S^{(\mathbf{Q})}+\lambda_{SC}^{(\mathbf{Q})}}
=
\frac{1-\nu(\mathbf{Q})}{1+\nu(\mathbf{Q})},
\qquad
\nu(\mathbf{Q})
=
\frac{\lambda_{SC}^{(\mathbf{Q})}}{\lambda_S^{(\mathbf{Q})}}
\end{align}
\end{proposition}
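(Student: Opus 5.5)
Our plan is to express the class centroids $\boldsymbol{\mu}_{s,c}$ through the Walsh--Hadamard coordinates of \autoref{proposition:walsh-expansion}, and then to read off all the required inner products from the 2FS projector decomposition in \autoref{proposition:2fs-projector-decomposition}.

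\textbf{Step 1: reduce inner products of concept vectors to centroid Gram entries.} Because $\mathbf{Q}=\mathbf{Z}^\top\mathbf{Z}$, the inner product of two centroids is an average of $\mathbf{Q}$-entries over the corresponding class blocks. For two distinct classes the block is constant, so $\langle\boldsymbol{\mu}_{s,c},\boldsymbol{\mu}_{s',c'}\rangle$ equals the single entry $q_{1s}$, $q_{1c}$ or $q_0$. Within a single class the average includes the diagonal, giving $\|\boldsymbol{\mu}_{s,c}\|^2=\bar q=(q_d+(n-1)q_2)/n$. The centroid Gram matrix is therefore a $4\times4$ matrix of the same 2FS type with diagonal $\bar q$.

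\textbf{Step 2: expand $\mathbf{u}_\mathrm{red}$ and $\mathbf{u}_\mathrm{blue}$.} We have
\begin{align}
\|\mathbf{u}_\mathrm{red}\|^2=\|\mathbf{u}_\mathrm{blue}\|^2=2\bar q-2q_{1c}.
\end{align}
The square and circle centroids within a fixed colour share colour and differ in shape, which is why the cross term is $q_{1c}$. For the inner product,
\begin{align}
\mathbf{u}_\mathrm{red}\cdot\mathbf{u}_\mathrm{blue}=\langle\boldsymbol{\mu}_{+,+}-\boldsymbol{\mu}_{-,+},\,\boldsymbol{\mu}_{+,-}-\boldsymbol{\mu}_{-,-}\rangle=2q_{1s}-2q_0,
\end{align}
since same-shape/different-colour pairs contribute $q_{1s}$ and pairs differing in both factors contribute $q_0$.

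\textbf{Step 3: convert to eigenvalues.} By \autoref{proposition:2fs-projector-decomposition},
\begin{align}
\lambda_S+\lambda_{SC}=2n(\bar q-q_{1c}),\qquad \lambda_S-\lambda_{SC}=2n(q_{1s}-q_0).
\end{align}
Dividing the inner product by the product of the equal norms gives
\begin{align}
\alpha=\frac{q_{1s}-q_0}{\bar q-q_{1c}}=\frac{\lambda_S-\lambda_{SC}}{\lambda_S+\lambda_{SC}}.
\end{align}
Dividing numerator and denominator by $\lambda_S$ then yields the $\nu$ form.

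\textbf{Step 4: check well-definedness.} The hypothesis $\lambda_S+\lambda_{SC}>0$ is exactly the condition that $\|\mathbf{u}_\mathrm{red}\|^2=\|\mathbf{u}_\mathrm{blue}\|^2>0$, so the cosine is defined. Since $\mathbf{Q}$ is PSD, $\lambda_S\ge0$, and $\lambda_S>0$ is needed for $\nu$ to make sense; when $\lambda_S=0$ the first expression still holds and gives $\alpha=-1$.

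\textbf{Main obstacle.} The only delicate step is the bookkeeping in Step 2: attaching the correct label, $q_{1s}$ or $q_{1c}$, to each cross term. The key observations are that the two shape vectors have equal norm, by the colour-flip symmetry, and that the diagonal enters only through $\bar q$.
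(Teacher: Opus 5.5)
Your proof is correct and follows the paper's argument essentially step for step. You compute $\|\mathbf{u}_\mathrm{red}\|^2=\|\mathbf{u}_\mathrm{blue}\|^2=2(\bar q-q_{1c})$ and $\mathbf{u}_\mathrm{red}\cdot\mathbf{u}_\mathrm{blue}=2(q_{1s}-q_0)$ from the five entry types, then substitute $\bar q-q_{1c}=(\lambda_S^{(\mathbf{Q})}+\lambda_{SC}^{(\mathbf{Q})})/(2n)$ and $q_{1s}-q_0=(\lambda_S^{(\mathbf{Q})}-\lambda_{SC}^{(\mathbf{Q})})/(2n)$ from the projector decomposition, exactly as the paper does; your remark that at $\lambda_S^{(\mathbf{Q})}=0$ the $\nu$ form must be read as a limit giving $\alpha=-1$ is a small refinement the paper leaves implicit.
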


\begin{proof}
Let \(q_d,q_2,q_{1s},q_{1c},q_0\) be the five entry types of \(\mathbf{Q}\), and define
\begin{align}
\bar q
=
\frac{q_d+(n-1)q_2}{n}
\end{align}
The quantity \(\bar q\) is the squared norm of a fine-class centroid, because a centroid averages one diagonal kernel entry and \(n-1\) off-diagonal within-class entries per sample.

Write \(\boldsymbol{\mu}_{s,c}\) for the centroid of class \((s,c)\). The two context-specific shape directions are
\begin{align}
\mathbf{u}_{+}
&=
\boldsymbol{\mu}_{+,+}-\boldsymbol{\mu}_{-,+},
&
\mathbf{u}_{-}
&=
\boldsymbol{\mu}_{+,-}-\boldsymbol{\mu}_{-,-}
\end{align}
Here \(c=+1\) and \(c=-1\) are the two color contexts. By the five-entry structure,
\begin{align}
\|\mathbf{u}_{+}\|^2
&=
\|\mathbf{u}_{-}\|^2
=
2(\bar q-q_{1c}),
\\
\langle \mathbf{u}_{+},\mathbf{u}_{-}\rangle
&=
2(q_{1s}-q_0)
\end{align}
Therefore
\begin{align}
\alpha
=
\frac{q_{1s}-q_0}{\bar q-q_{1c}}
\end{align}
Using the eigenvalue formulas from \autoref{proposition:2fs-projector-decomposition},
\begin{align}
q_{1s}-q_0
&=
\frac{\lambda_S^{(\mathbf{Q})}-\lambda_{SC}^{(\mathbf{Q})}}{2n},
\\
\bar q-q_{1c}
&=
\frac{\lambda_S^{(\mathbf{Q})}+\lambda_{SC}^{(\mathbf{Q})}}{2n}
\end{align}
Substituting these two identities gives the result.
\end{proof}

\subsubsection{Generative model for 2FS kernels} \label{appendix:2fs-generative-model}

Here we demonstrate one sufficient generative model for \(\boldsymbol{\Sigma}_x\) and \(\boldsymbol{\Sigma}_y\) to satisfy \autoref{assumption:2fs}. The same argument applies to inputs and targets, so write a generic vector \(\mathbf{r}(s,c)\in\mathbb{R}^d\), which can be either \(\mathbf{x}(s,c)\) or \(\mathbf{y}(s,c)\). Suppose first that samples in the same fine class share the same vector:
\begin{align}
\mathbf{r}^{(i)}
=
\mathbf{r}(s_i,c_i)
\end{align}
By \autoref{proposition:walsh-expansion},
\begin{align}
\mathbf{r}(s,c)
=
\mathbf{r}_\varnothing
+s\mathbf{r}_S
+c\mathbf{r}_C
+sc\mathbf{r}_{SC}
\end{align}
The term \(\mathbf{r}_S\) is the additive shape component, \(\mathbf{r}_C\) is the additive color component, and \(\mathbf{r}_{SC}\) is the shape--color interaction component. The interaction component is exactly the part that makes the shape direction context-dependent: for a fixed color \(c\),
\begin{align}
\mathbf{r}(+1,c)-\mathbf{r}(-1,c)
=
2\mathbf{r}_S+2c\mathbf{r}_{SC}
\end{align}
Thus, if \(\mathbf{r}_{SC}=0\), the shape direction is the same in both colors. If \(\mathbf{r}_{SC}\neq0\), the shape direction changes with color. Define the kernel \(\mathbf{R}\in\mathbb{R}^{N\times N}\) with entries \(\mathbf{R}_{ij}=\langle \mathbf{r}^{(i)},\mathbf{r}^{(j)}\rangle\):
\begin{align}
\mathbf{R}_{ij}
=
\sum_{a,b\in\{\varnothing,S,C,SC\}}
\chi_a(s_i,c_i)\chi_b(s_j,c_j)
\langle \mathbf{r}_a,\mathbf{r}_b\rangle
\label{eq:appendix-general-walsh-gram}
\end{align}
Here \(\chi_\varnothing=1\), \(\chi_S=s\), \(\chi_C=c\), and \(\chi_{SC}=sc\). If the four Walsh components are mutually orthogonal,
\begin{align}
\langle \mathbf{r}_a,\mathbf{r}_b\rangle=0
\qquad
\text{for }a\neq b,
\end{align}
then all cross-terms in \autoref{eq:appendix-general-walsh-gram} vanish, and
\begin{align}
\mathbf{R}_{ij}
=
\|\mathbf{r}_\varnothing\|^2
+s_i s_j\|\mathbf{r}_S\|^2
+c_i c_j\|\mathbf{r}_C\|^2
+s_i c_i s_j c_j\|\mathbf{r}_{SC}\|^2
\label{eq:appendix-orthogonal-walsh-gram}
\end{align}
This depends only on whether the two samples agree in shape, agree in color, agree in both, or agree in neither. Equivalently,
\begin{align}
\mathbf{R}
=
N\|\mathbf{r}_\varnothing\|^2\mathbf{P}_G
+
N\|\mathbf{r}_S\|^2\mathbf{P}_S
+
N\|\mathbf{r}_C\|^2\mathbf{P}_C
+
N\|\mathbf{r}_{SC}\|^2\mathbf{P}_{SC}
\label{eq:appendix-latent-function-2fs-decomposition}
\end{align}
Thus \(\mathbf{R}\) is 2FS, with
\begin{align}
\lambda_G^{(\mathbf{R})}
&=
N\|\mathbf{r}_\varnothing\|^2,
&
\lambda_S^{(\mathbf{R})}
&=
N\|\mathbf{r}_S\|^2,
&
\lambda_C^{(\mathbf{R})}
&=
N\|\mathbf{r}_C\|^2,
&
\lambda_{SC}^{(\mathbf{R})}
&=
N\|\mathbf{r}_{SC}\|^2,
&
\lambda_I^{(\mathbf{R})}
&=
0
\end{align}
Applying this construction with \(\mathbf{r}=\mathbf{x}\) motivates a 2FS input kernel \(\boldsymbol{\Sigma}_x=\mathbf{X}^\top \mathbf{X}\). Applying it with \(\mathbf{r}=\mathbf{y}\) motivates a 2FS target kernel \(\boldsymbol{\Sigma}_y=\mathbf{Y}^\top \mathbf{Y}\). This derivation also explains the signal/noise language used in the main text. The inverse SNR \(\nu(\mathbf{R})\) is given by:
\begin{align}
\nu(\mathbf{R})
=
\frac{\lambda_{SC}^{(\mathbf{R})}}{\lambda_S^{(\mathbf{R})}}
=
\frac{\|\mathbf{r}_{SC}\|^2}{\|\mathbf{r}_S\|^2}
\end{align}
The \(S\)-mode measures the strength of the context-independent shape direction. The \(SC\)-mode measures the strength of the context-dependent correction to that direction. Therefore large \(\lambda_S^{(\mathbf{R})}\) improves shape abstraction, while large \(\lambda_{SC}^{(\mathbf{R})}\) reduces it. Indeed, if the class vectors themselves are used as representations, then
\begin{align}
\alpha_R
=
\frac{\|\mathbf{r}_S\|^2-\|\mathbf{r}_{SC}\|^2}
{\|\mathbf{r}_S\|^2+\|\mathbf{r}_{SC}\|^2},
\end{align}
which is the same formula as \autoref{proposition:2fs-abstraction-eigenvalues}.

\paragraph{Within-class variation and the \(I\) mode.}
The class-deterministic construction above has \(\lambda_I=0\). The full 2FS assumption is more permissive. It allows samples within the same fine class to vary, as long as the residual variation is exchangeable within each fine class and respects the same global relabeling symmetries. For example, suppose that via zero-mean iid noise terms $\boldsymbol{\epsilon}^{(s,c,a)}$, we have:
\begin{align}
\mathbf{r}^{(s,c,a)}
=
\mathbf{f}(s,c)+\boldsymbol{\epsilon}^{(s,c,a)},
\end{align}
where the latent component \(\mathbf{f}\) has the orthogonal Walsh structure above, and the residual Gram contributions
\begin{align}
\langle \boldsymbol{\epsilon}^{(s_i,c_i,a_i)},\boldsymbol{\epsilon}^{(s_j,c_j,a_j)}\rangle,
\qquad
\langle \mathbf{f}(s_i,c_i),\boldsymbol{\epsilon}^{(s_j,c_j,a_j)}\rangle
+
\langle \boldsymbol{\epsilon}^{(s_i,c_i,a_i)},\mathbf{f}(s_j,c_j)\rangle
\end{align}
have five entry types after class balancing or averaging. Then the total Gram kernel is still 2FS. The \(I\)-mode captures precisely the part of this variation that is invisible to class centroids: directions that fluctuate within a fine class but average to zero within that class. This is why \autoref{assumption:2fs} includes \(I\), even though abstraction itself depends only on the class-level \(S\) and \(SC\) modes.

\paragraph{The initial kernel is approximately 2FS under random initialization.}
Previously we motivated how \(\boldsymbol{\Sigma}_x\) and \(\boldsymbol{\Sigma}_y\) can be 2FS. Next we motivate how the feature/preactivation kernel \(\mathbf{Q}(0)\) might also also 2FS at initialization. In the linear network of \autoref{section:min-model},
\begin{align}
\mathbf{Z}(0)=\mathbf{W}_0\mathbf{X},
\qquad
\mathbf{Q}(0)=\mathbf{Z}(0)^\top \mathbf{Z}(0)=\mathbf{X}^\top \mathbf{W}_0^\top \mathbf{W}_0\mathbf{X}
\end{align}
If \(\mathbf{W}_0\) is isotropic and zero mean, then \(\mathbf{W}_0^\top \mathbf{W}_0\) is close to a scalar multiple of the identity at large width. Consequently \(\mathbf{Q}(0)\) is close to the same scalar multiple of \(\mathbf{X}^\top \mathbf{X}=\boldsymbol{\Sigma}_x\). The following proposition makes this concentration statement explicit in a simple Gaussian case.

\begin{proposition}[Inheritance of 2FS in expectation] \label{proposition:random-init-2fs}
Assume \(\mathbf{Z}(0)=\mathbf{W}_0\mathbf{X}\), where \(\mathbf{W}_0\in\mathbb{R}^{D\times D_x}\) has independent Gaussian rows
\begin{align}
\mathbf{w}_r
\sim
\mathcal{N}\left(0,\frac{\sigma_w^2}{D}\I_{D_x}\right),
\qquad
r=1,\ldots,D
\end{align}
Let \(\boldsymbol{\Sigma}_x=\mathbf{X}^\top \mathbf{X}\) and \(\mathbf{Q}(0)=\mathbf{Z}(0)^\top \mathbf{Z}(0)\). Conditional on \(\mathbf{X}\),
\begin{align}
\mathbb{E}\left[\mathbf{Q}(0)\mid \mathbf{X}\right]
=
\sigma_w^2\boldsymbol{\Sigma}_x
\end{align}
Therefore, if \(\boldsymbol{\Sigma}_x\) is 2FS, then \(\mathbb{E}[\mathbf{Q}(0)\mid \mathbf{X}]\) is 2FS. Moreover, for every pair \((i,j)\),
\begin{align}
\mathrm{Var}\left(\mathbf{Q}_{ij}(0)\mid \mathbf{X}\right)
=
\frac{\sigma_w^4}{D}
\left(
(\boldsymbol{\Sigma}_x)_{ii}(\boldsymbol{\Sigma}_x)_{jj}
+
(\boldsymbol{\Sigma}_x)_{ij}^2
\right)
\label{eq:appendix-q0-entry-variance}
\end{align}
If \(B\coloneqq \max_i(\boldsymbol{\Sigma}_x)_{ii}\), then for a universal constant \(c>0\),
\begin{align}
\mathbb{P}\left(
\max_{i,j}\left|\mathbf{Q}_{ij}(0)-\sigma_w^2(\boldsymbol{\Sigma}_x)_{ij}\right|
\ge t
\,\middle|\,
\mathbf{X}
\right)
\le
2N^2
\exp\left[
-c\min\left(
\frac{Dt^2}{\sigma_w^4B^2},
\frac{Dt}{\sigma_w^2B}
\right)
\right]
\label{eq:appendix-q0-entry-concentration}
\end{align}
Consequently, whenever \(\log N/D\) is small,
\begin{align}
\max_{i,j}\left|\mathbf{Q}_{ij}(0)-\sigma_w^2(\boldsymbol{\Sigma}_x)_{ij}\right|
=
O_p\left(\sigma_w^2B\sqrt{\frac{\log N}{D}}\right)
\end{align}
In particular, if \(\boldsymbol{\Sigma}_x\) is 2FS, then \(\mathbf{Q}(0)\) is approximately 2FS at large width.
\end{proposition}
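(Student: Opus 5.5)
The plan is to write each entry of $\mathbf{Q}(0)$ as a sum of $D$ independent terms and apply standard Gaussian moment identities and Bernstein concentration. Writing $\mathbf{x}^{(i)}$ for the $i$-th column of $\mathbf{X}$, we have
\begin{align}
\mathbf{Q}_{ij}(0)=\sum_{r=1}^{D}(\mathbf{w}_r^\top\mathbf{x}^{(i)})(\mathbf{w}_r^\top\mathbf{x}^{(j)}).
\end{align}
Conditional on $\mathbf{X}$, the pair $(g_i,g_j)\coloneqq(\mathbf{w}_r^\top\mathbf{x}^{(i)},\mathbf{w}_r^\top\mathbf{x}^{(j)})$ is a centered bivariate Gaussian with covariance $\frac{\sigma_w^2}{D}\begin{bmatrix}(\boldsymbol{\Sigma}_x)_{ii}&(\boldsymbol{\Sigma}_x)_{ij}\\(\boldsymbol{\Sigma}_x)_{ij}&(\boldsymbol{\Sigma}_x)_{jj}\end{bmatrix}$, and these pairs are i.i.d. across $r$.

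For the expectation, $\mathbb{E}[g_ig_j]=\frac{\sigma_w^2}{D}(\boldsymbol{\Sigma}_x)_{ij}$, so summing over $D$ rows gives $\mathbb{E}[\mathbf{Q}(0)\mid\mathbf{X}]=\sigma_w^2\boldsymbol{\Sigma}_x$. This matrix is 2FS whenever $\boldsymbol{\Sigma}_x$ is, because $\boldsymbol{\Pi}_g(\sigma_w^2\boldsymbol{\Sigma}_x)\boldsymbol{\Pi}_g^\top=\sigma_w^2\boldsymbol{\Sigma}_x$ (invariance is preserved under scalar multiplication).

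For the variance, Isserlis' theorem gives $\mathbb{E}[g_i^2g_j^2]=\mathbb{E}[g_i^2]\mathbb{E}[g_j^2]+2\,\mathbb{E}[g_ig_j]^2$. Hence a single term has variance
\begin{align}
\frac{\sigma_w^4}{D^2}\big((\boldsymbol{\Sigma}_x)_{ii}(\boldsymbol{\Sigma}_x)_{jj}+(\boldsymbol{\Sigma}_x)_{ij}^2\big),
\end{align}
and multiplying by $D$ independent terms yields \autoref{eq:appendix-q0-entry-variance}.

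For the tail bound, each summand $g_ig_j-\mathbb{E}[g_ig_j]$ is a centered product of two Gaussians and is therefore sub-exponential. Its $\psi_1$ norm is bounded by $\|g_i\|_{\psi_2}\|g_j\|_{\psi_2}\lesssim\frac{\sigma_w^2}{D}\sqrt{(\boldsymbol{\Sigma}_x)_{ii}(\boldsymbol{\Sigma}_x)_{jj}}\le\frac{\sigma_w^2B}{D}$, and centering changes this only by a constant factor. Bernstein's inequality for sums of $D$ independent sub-exponential variables with $K=\sigma_w^2B/D$ gives
\begin{align}
2\exp\!\left[-c\min\!\left(\frac{t^2}{DK^2},\frac{t}{K}\right)\right]=2\exp\!\left[-c\min\!\left(\frac{Dt^2}{\sigma_w^4B^2},\frac{Dt}{\sigma_w^2B}\right)\right].
\end{align}
A union bound over the $N^2$ index pairs then yields \autoref{eq:appendix-q0-entry-concentration}.

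For the $O_p$ rate, set $t=C\sigma_w^2B\sqrt{\log N/D}$. When $\log N/D\le1$, the quadratic branch of the minimum is the active one, and the bound becomes $2N^{2-cC^2}$. This tends to zero for large $C$, which gives the stated rate. Finally, this rate means $\mathbf{Q}(0)$ is uniformly entrywise close to the 2FS matrix $\sigma_w^2\boldsymbol{\Sigma}_x$, which is the claimed approximate 2FS at large width.

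The only nonroutine point is the sub-exponential norm bound with the correct constant dependence on $B$ and $D$. This is standard (Vershynin, Lemma 2.7.7 and Theorem 2.8.1), so I expect no real obstacle beyond citing it carefully.
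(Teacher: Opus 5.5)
Your proof is correct and follows essentially the same route as the paper's: you write $\mathbf{Q}(0)$ as a sum over i.i.d.\ Gaussian rows, use linearity for the mean and Isserlis for the per-row variance, bound the sub-exponential scale by $\sigma_w^2B/D$ for Bernstein, and finish with a union bound over the $N^2$ pairs. One small slip in your added $O_p$ step: with $t=C\sigma_w^2B\sqrt{\log N/D}$, the quadratic branch is active only when $\log N/D\le 1/C^2$, not whenever $\log N/D\le 1$; however, $\log N\le D$ already gives $\min(\cdot)\ge\min(C,C^2)\log N$, so the bound $2N^{2-c\min(C,C^2)}$ still yields the stated rate.
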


\begin{proof}
Let \(\mathbf{z}_r\in\mathbb{R}^N\) be row \(r\) of \(\mathbf{Z}(0)\), so
\begin{align}
(\mathbf{z}_r)_i
=
\mathbf{w}_r^\top\mathbf{x}^{(i)}
\end{align}
Conditional on \(\mathbf{X}\), each \(\mathbf{z}_r\) is a centered Gaussian vector with covariance
\begin{align}
\mathbb{E}\left[\mathbf{z}_r\mathbf{z}_r^\top\mid \mathbf{X}\right]
=
\frac{\sigma_w^2}{D}\boldsymbol{\Sigma}_x
\end{align}
Since
\begin{align}
\mathbf{Q}(0)
=
\sum_{r=1}^D\mathbf{z}_r\mathbf{z}_r^\top,
\end{align}
linearity of expectation gives
\begin{align}
\mathbb{E}\left[\mathbf{Q}(0)\mid \mathbf{X}\right]
=
D\cdot\frac{\sigma_w^2}{D}\boldsymbol{\Sigma}_x
=
\sigma_w^2\boldsymbol{\Sigma}_x
\end{align}

For the variance, write
\begin{align}
\mathbf{C}
=
\frac{\sigma_w^2}{D}\boldsymbol{\Sigma}_x
\end{align}
For a single row \(\mathbf{z}_r\), Isserlis' theorem gives
\begin{align}
\mathbb{E}\left[(\mathbf{z}_r)_i(\mathbf{z}_r)_j(\mathbf{z}_r)_i(\mathbf{z}_r)_j\mid \mathbf{X}\right]
=
\mathbf{C}_{ii}\mathbf{C}_{jj}+2\mathbf{C}_{ij}^2
\end{align}
Therefore
\begin{align}
\mathrm{Var}\left((\mathbf{z}_r)_i(\mathbf{z}_r)_j\mid \mathbf{X}\right)
=
\mathbf{C}_{ii}\mathbf{C}_{jj}+\mathbf{C}_{ij}^2
\end{align}
The rows are independent, so variances add across \(r=1,\ldots,D\), yielding \autoref{eq:appendix-q0-entry-variance}. Finally, each centered product
\begin{align}
(\mathbf{z}_r)_i(\mathbf{z}_r)_j
-
\mathbb{E}\left[(\mathbf{z}_r)_i(\mathbf{z}_r)_j\mid \mathbf{X}\right]
\end{align}
is sub-exponential with scale at most a constant multiple of \(\sigma_w^2B/D\), since \(\mathbf{C}_{ii}\le \sigma_w^2B/D\) for all \(i\). Bernstein's inequality gives the displayed tail bound for each fixed \((i,j)\), and a union bound over the \(N^2\) pairs gives \autoref{eq:appendix-q0-entry-concentration}.
\end{proof}

There is also a useful distributional way to state the same point. If \(\boldsymbol{\Sigma}_x\) is 2FS, then for every \(g\in\mathcal{G}\), \(\boldsymbol{\Pi}_g\Sigma_x \boldsymbol{\Pi}_g^\top=\boldsymbol{\Sigma}_x\). Therefore each Gaussian row \(\mathbf{z}_r\) has the same distribution as \(\boldsymbol{\Pi}_g\mathbf{z}_r\), and hence
\begin{align}
\boldsymbol{\Pi}_g \mathbf{Q}(0)\boldsymbol{\Pi}_g^\top
\stackrel{d}{=}
\mathbf{Q}(0)
\end{align}
Thus random initialization does not prefer any particular element of a symmetry orbit. A finite-width draw can still break the symmetry by sampling noise, but the preceding concentration bound shows that this symmetry-breaking component is small at large width.

To make the last sentence precise, let \(\Pi_{\mathrm{2FS}}\) denote group averaging over \(\mathcal{G}\):
\begin{align}
\Pi_{\mathrm{2FS}}(A)
=
\frac{1}{|\mathcal{G}|}\sum_{g\in\mathcal{G}}\boldsymbol{\Pi}_gA\boldsymbol{\Pi}_g^\top
\end{align}
This is the orthogonal projection onto the 2FS subspace of matrices. If \(\boldsymbol{\Sigma}_x\) is 2FS, then \(\Pi_{\mathrm{2FS}}(\sigma_w^2\boldsymbol{\Sigma}_x)=\sigma_w^2\boldsymbol{\Sigma}_x\). Hence
\begin{align}
\|\mathbf{Q}(0)-\Pi_{\mathrm{2FS}}(\mathbf{Q}(0))\|_{\max}
&\le
\|\mathbf{Q}(0)-\sigma_w^2\boldsymbol{\Sigma}_x\|_{\max}
+
\|\Pi_{\mathrm{2FS}}(\mathbf{Q}(0)-\sigma_w^2\boldsymbol{\Sigma}_x)\|_{\max}
\\
&\le
2\|\mathbf{Q}(0)-\sigma_w^2\boldsymbol{\Sigma}_x\|_{\max}
\end{align}
The concentration bound above therefore implies
\begin{align}
\|\mathbf{Q}(0)-\Pi_{\mathrm{2FS}}(\mathbf{Q}(0))\|_{\max}
=
O_p\left(\sigma_w^2B\sqrt{\frac{\log N}{D}}\right)
\end{align}
Thus the non-2FS component of \(\mathbf{Q}(0)\) vanishes in the large-width limit, provided \(N\) is fixed or grows slowly enough relative to \(D\). In that limit,
\begin{align}
\mathbf{Q}(0)
\approx
\sigma_w^2\boldsymbol{\Sigma}_x, 
\qquad
\lambda_m^{(\mathbf{Q}(0))}
\approx
\sigma_w^2\lambda_m^{(\boldsymbol{\Sigma}_x)},
\qquad
\nu(\mathbf{Q}(0))
\approx
\nu(\boldsymbol{\Sigma}_x)
\end{align}

\clearpage
\subsection{Variable-projected readout (\autoref{assumption:readout})} \label{appendix:vp-readout}

\begin{figure}[h]
    \centering
    \includegraphics[width=1.0\linewidth]{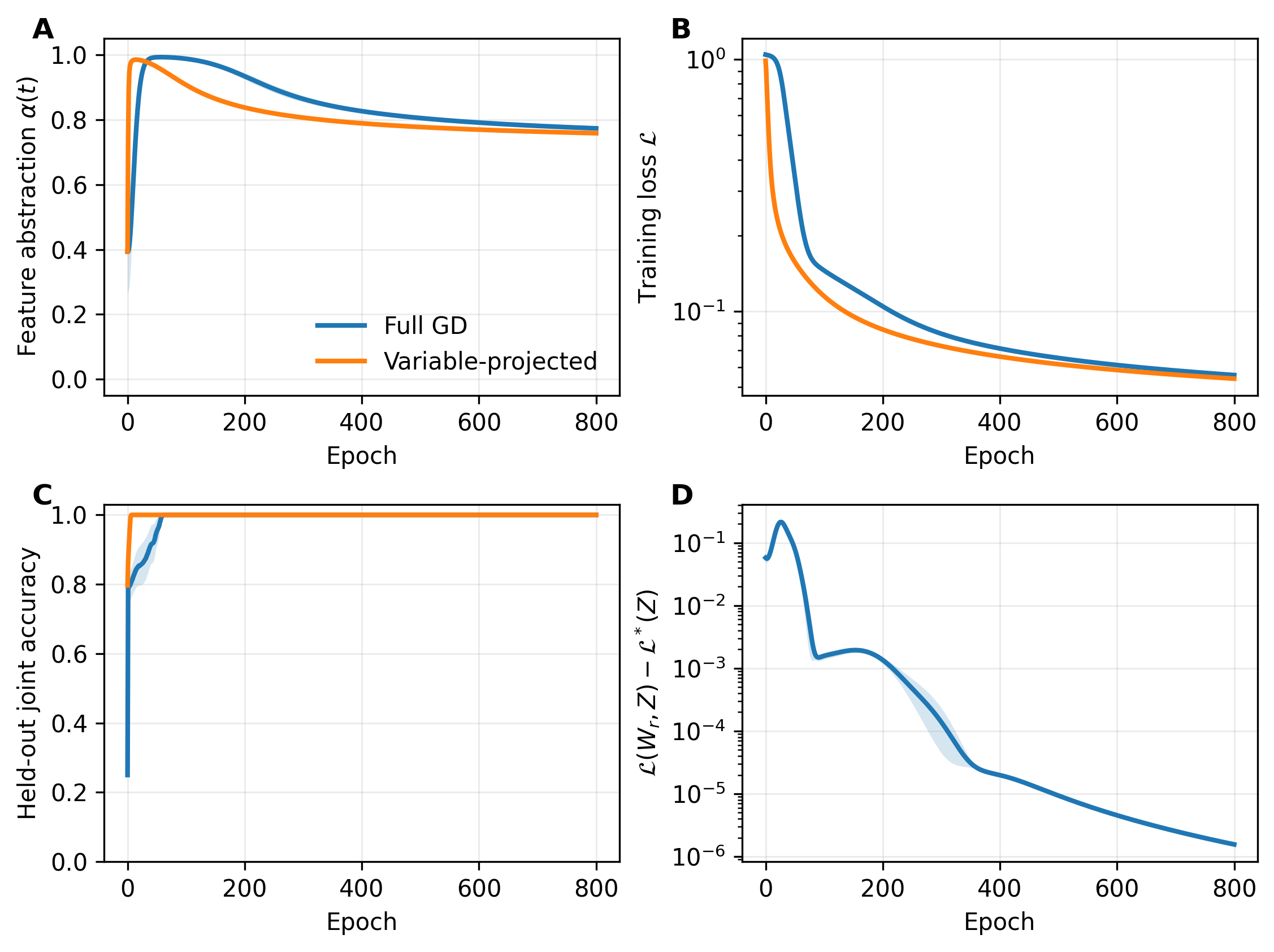}
    \caption{
    \textbf{Variable-projected readout versus ordinary equal-timescale readout training.}
    We compare full-batch gradient descent on the full loss \(\mathcal{L}(\mathbf{W}_r,\mathbf{Z})\) against gradient descent on the reduced loss \(\mathcal{L}^*(\mathbf{Z}) = \min_{\mathbf{W}_r}\mathcal{L}(\mathbf{W}_r,\mathbf{Z})\), corresponding to \autoref{assumption:readout}. Curves show mean \(\pm\) SEM over \(12\) random seeds.
    \textbf{A.} Shape abstraction \(\alpha(t)\).
    \textbf{B.} Training loss.
    \textbf{C.} Test accuracy.
    \textbf{D.} Loss gap for the full-gradient-descent baseline. The variable-projected method has zero loss gap by construction.
    }
    \label{fig:vp-readout-dynamics}
\end{figure}

In this section we analyze \autoref{assumption:readout} and compare it with the case of full gradient descent in which both \(\mathbf{W}\) and \(\mathbf{W}_r\) are learned. \autoref{assumption:readout} was motivated from canonical work on variable projection \cite{golubSeparableNonlinear03, kimTrainingTwoLayered08} as well as recent work on two-timescale training which study regimes where the readout weights and feature weights have different learning rates \cite{barboniUltrafastFeature25,marionLeveragingTwo23}. We argue that this is a less restrictive and more realistic assumption than the fixed-readout simplification often used in classical statistical mechanical analyses of neural network dynamics \cite{saadOnlineLearning95a} or recent work on dynamics of two-layer ReLU networks \cite{aroraFineGrainedAnalysis19}. Still, this approximation is not exact and so we compare it against realistic dynamics. Specifically, we compare two training procedures. The first is full gradient descent on the complete loss
\[
    \mathcal{L}(\mathbf{W}_r,\mathbf{Z})
    =
    \frac{1}{2}\|\mathbf{W}_r \mathbf{Z} - \mathbf{Y}\|_F^2
    +
    \frac{1}{2\gamma}\|\mathbf{W}_r\|_F^2,
    \qquad \mathbf{Z} = WX,
\]
where both \(\mathbf{W}\) and \(\mathbf{W}_r\) are updated with the same learning rate. The second is gradient descent on the simplified objective where at every training step the readout is set to its ridge optimum
\[
    \mathcal{L}^*(\mathbf{Z})
    =
    \frac{1}{2} \mathrm{Tr} \pbracket{ \mathbf{Y}^\top \mathbf{Y} (\I_N + \gamma \mathbf{Q}(t))^{-1}}
\]
We perform simulations use a balanced synthetic two-factor dataset with latents \((s,c) \in \{\pm 1\}^2\). For each class, inputs are generated as
\[
    \mathbf{x}
    =
    a_s s \mathbf{u}_s
    +
    a_c c \mathbf{u}_c
    +
    a_{sc} sc\, \mathbf{u}_{sc}
    +
    \sigma \epsilon / \sqrt{D_x},
\]
where \(\mathbf{u}_s,\mathbf{u}_c,\mathbf{u}_{sc}\) are orthonormal directions in \(\mathbb{R}^{D_x}\), generated independently for each seed. Targets are
\[
    \mathbf{y} = [s,c,\rho sc]^\top
\]
We used \(D_x = 20\), \(a_s = 1.0\), \(a_c = 0.7\), \(a_{sc} = 0.6\), \(\sigma = 0.6\), and \(\rho = 0.3\). The training set contained \(64\) examples per class, for \(N_{\mathrm{train}} = 256\), and the held-out test set contained \(256\) examples per class, for \(N_{\mathrm{test}} = 1024\). The test set used the same latent directions as the corresponding training set but fresh noise. We scaled the data matrices by the square root of the number of samples so that the unnormalized loss remained \(O(1)\).

The network had hidden dimension \(d = 4\), with \(\mathbf{W} \in \mathbb{R}^{4 \times 20}\) and \(\mathbf{W}_r \in \mathbb{R}^{3 \times 4}\). Feature weights were initialized as Gaussian entries with standard deviation \(0.2/\sqrt{20}\), while the readout was initialized at zero. We used ridge parameter \(\gamma = 10\), learning rate \(0.1\), and trained for \(800\) full-batch updates. Both methods used the same dataset and the same initialization of \(\mathbf{W}\) for each seed. Results are averaged over \(12\) random seeds, with shaded regions denoting the standard error of the mean. For each method, we report the feature abstraction
\[
    \alpha(t)
    =
    \frac{
        \mathbf{u}_{\mathrm{red}}(t)^\top \mathbf{u}_{\mathrm{blue}}(t)
    }{
        \|\mathbf{u}_{\mathrm{red}}(t)\|\,
        \|\mathbf{u}_{\mathrm{blue}}(t)\|
    },
\]
where \(\mathbf{u}_{\mathrm{red}}\) and \(\mathbf{u}_{\mathrm{blue}}\) are the shape directions in the two color contexts. We also report the training loss, held-out joint accuracy on the shape and color coordinates, and the loss gap
\[
    \mathcal{L}(\mathbf{W}_r(t),\mathbf{Z}(t))
    -
    \mathcal{L}^*(\mathbf{Z}(t))
\]
For the variable-projected dynamics this gap is zero by construction, so the loss gap is shown only for the full-gradient-descent baseline. The results in \autoref{fig:vp-readout-dynamics} show that \autoref{assumption:readout} mainly differs from full gradient descent earlier in training. At initialization, the full-gradient-descent model has a zero readout and must spend early training steps learning a useful linear map from features to targets. In contrast, the variable-projected model immediately uses the best ridge readout for the current features. Consequently, the loss for the simplified dynamics improves faster in the earlier part of training. The abstraction trajectory also differs early on: abstraction peaks for variable projection. However, the discrepancy becomes much smaller once the full-gradient-descent readout starts to equilibrate, and both loss and abstraction levels become close later on in training. These simulations support \autoref{assumption:readout} as a useful approximation for later feature dynamics, but not as a faithful model of the earliest readout transient.

\clearpage
\section{Derivations and proofs for results in \autoref{section:min-model} (linear networks)}

\subsection{Probe generalization error} \label{appendix:probe-generalization}

This appendix proves and discusses \autoref{proposition:ccgp}. The proposition is a formal version of the following geometric statement: a shape probe trained in one color context transfers to the other color context precisely to the extent that the shape directions in the two contexts are aligned. In the notation of \autoref{section:min-model}, the red-context shape direction is
\begin{align}
    \mathbf{u}_\mathrm{red}
    \coloneqq
    \boldsymbol{\mu}_{\redsquare}
    -
    \boldsymbol{\mu}_{\redcircle},
\end{align}
and the blue-context shape direction is
\begin{align}
    \mathbf{u}_\mathrm{blue}
    \coloneqq
    \boldsymbol{\mu}_{\bluesquare}
    -
    \boldsymbol{\mu}_{\bluecircle}
\end{align}
Their cosine similarity is the abstraction score
\begin{align}
    \alpha
    =
    \frac{
        \mathbf{u}_\mathrm{red}^{\top}
        \mathbf{u}_\mathrm{blue}
    }{
        \|\mathbf{u}_\mathrm{red}\|
        \|\mathbf{u}_\mathrm{blue}\|
    }
\end{align}
Thus \(\alpha = 1\) means that the shape direction is perfectly shared across red and blue contexts, \(\alpha = 0\) means that the red-context probe is orthogonal to the blue-context shape direction, and \(\alpha < 0\) means that the red-context probe points in the wrong direction when transferred to blue samples.

\begin{proof}[Proof of \autoref{proposition:ccgp}]
We prove the result for the non-degenerate case
\begin{align}
    \|\mathbf{u}_\mathrm{red}\| > 0,
    \qquad
    \|\mathbf{u}_\mathrm{blue}\| > 0,
    \qquad
    \sigma_\mathrm{blue} > 0
\end{align}
Degenerate cases are obtained as limits. The probe trained on the red samples is the unit vector
\begin{align}
    \widehat{\mathbf{w}}
    =
    \frac{
        \mathbf{u}_\mathrm{red}
    }{
        \|\mathbf{u}_\mathrm{red}\|
    }
\end{align}
We transfer this probe to the blue samples. Let \(y \in \{+1,-1\}\) denote the blue shape label, with \(y=+1\) corresponding to \(\bluesquare\) and \(y=-1\) corresponding to \(\bluecircle\). As in \autoref{proposition:ccgp}, we center each blue sample by the midpoint of the two blue class means:
\begin{align}
    \widetilde{\mathbf{z}}_\mathrm{blue}
    =
    \mathbf{z}_\mathrm{blue}
    -
    \frac{1}{2}
    \left(
        \boldsymbol{\mu}_{\bluesquare}
        +
        \boldsymbol{\mu}_{\bluecircle}
    \right).
\end{align}

\paragraph{Step 1: compute the centered blue class means.}
For a blue square sample, the mean of the centered representation is
\begin{align}
    \boldsymbol{\mu}_{\bluesquare}
    -
    \frac{1}{2}
    \left(
        \boldsymbol{\mu}_{\bluesquare}
        +
        \boldsymbol{\mu}_{\bluecircle}
    \right)
    &=
    \frac{1}{2}
    \left(
        \boldsymbol{\mu}_{\bluesquare}
        -
        \boldsymbol{\mu}_{\bluecircle}
    \right)
    \\
    &=
    \frac{1}{2}
    \mathbf{u}_\mathrm{blue}
\end{align}
Similarly, for a blue circle sample,
\begin{align}
    \boldsymbol{\mu}_{\bluecircle}
    -
    \frac{1}{2}
    \left(
        \boldsymbol{\mu}_{\bluesquare}
        +
        \boldsymbol{\mu}_{\bluecircle}
    \right)
    &=
    -
    \frac{1}{2}
    \left(
        \boldsymbol{\mu}_{\bluesquare}
        -
        \boldsymbol{\mu}_{\bluecircle}
    \right)
    \\
    &=
    -
    \frac{1}{2}
    \mathbf{u}_\mathrm{blue}
\end{align}
Therefore the centered blue samples have class means
\begin{align}
    \mathbb{E}
    \left[
        \widetilde{\mathbf{z}}_\mathrm{blue}
        \mid
        y
    \right]
    =
    y
    \frac{
        \mathbf{u}_\mathrm{blue}
    }{
        2
    }
\end{align}

\paragraph{Step 2: project the blue class means onto the red probe.}
Define the one-dimensional transferred probe score
\begin{align}
    T
    \coloneqq
    \widehat{\mathbf{w}}^{\top}
    \widetilde{\mathbf{z}}_\mathrm{blue}
\end{align}
The projected blue class mean is
\begin{align}
    \mathbb{E}
    \left[
        T
        \mid
        y
    \right]
    &=
    \widehat{\mathbf{w}}^{\top}
    \mathbb{E}
    \left[
        \widetilde{\mathbf{z}}_\mathrm{blue}
        \mid
        y
    \right]
    \\
    &=
    y
    \frac{
        \widehat{\mathbf{w}}^{\top}
        \mathbf{u}_\mathrm{blue}
    }{
        2
    }
\end{align}
Now substitute the definition of the red probe:
\begin{align}
    \widehat{\mathbf{w}}^{\top}
    \mathbf{u}_\mathrm{blue}
    &=
    \frac{
        \mathbf{u}_\mathrm{red}^{\top}
        \mathbf{u}_\mathrm{blue}
    }{
        \|\mathbf{u}_\mathrm{red}\|
    }
    \\
    &=
    \|\mathbf{u}_\mathrm{blue}\|
    \frac{
        \mathbf{u}_\mathrm{red}^{\top}
        \mathbf{u}_\mathrm{blue}
    }{
        \|\mathbf{u}_\mathrm{red}\|
        \|\mathbf{u}_\mathrm{blue}\|
    }
    \\
    &=
    \|\mathbf{u}_\mathrm{blue}\| \alpha
\end{align}
Thus the projected blue class means are
\begin{align}
    \mathbb{E}
    \left[
        T
        \mid
        y
    \right]
    =
    y
    \frac{
        \|\mathbf{u}_\mathrm{blue}\|\alpha
    }{
        2
    }
\end{align}
The key point is that \(\alpha\) directly multiplies the transferred margin. Holding \(\|\mathbf{u}_\mathrm{blue}\|\) fixed, increasing \(\alpha\) increases the separation of the two blue classes along the red probe direction.

\paragraph{Step 3: apply the Gaussian score approximation.}
Write a centered blue sample as
\begin{align}
    \widetilde{\mathbf{z}}_\mathrm{blue}
    =
    y
    \frac{
        \mathbf{u}_\mathrm{blue}
    }{
        2
    }
    +
    \boldsymbol{\varepsilon},
    \qquad
    \mathbb{E}
    \left[
        \boldsymbol{\varepsilon}
        \mid
        y
    \right]
    =
    0
\end{align}
The variance term in \autoref{proposition:ccgp} should be read as the projected within-class, or pooled within-class, blue variance:
\begin{align}
    \sigma_\mathrm{blue}^{2}
    =
    \widehat{\mathbf{w}}^{\top}
    \boldsymbol{\Sigma}_\mathrm{blue}
    \widehat{\mathbf{w}}
\end{align}
Equivalently,
\begin{align}
    \mathrm{Var}
    \left(
        \widehat{\mathbf{w}}^{\top}
        \boldsymbol{\varepsilon}
        \mid
        y
    \right)
    =
    \sigma_\mathrm{blue}^{2}
\end{align}
Under the Gaussian score approximation, the projected score is therefore
\begin{align}
    T
    \mid
    y
    \approx
    \mathcal{N}
    \left(
        y
        \frac{
            \|\mathbf{u}_\mathrm{blue}\|\alpha
        }{
            2
        },
        \sigma_\mathrm{blue}^{2}
    \right),
    \qquad
    y \in \{+1,-1\}
\end{align}

\paragraph{Step 4: compute the zero-threshold transfer error.}
The transferred classifier predicts
\begin{align}
    \widehat{y}
    =
    \operatorname{sign}(T)
\end{align}
For the \(y=+1\) blue class, an error occurs when \(T < 0\). Therefore
\begin{align}
    \mathbb{P}
    \left(
        \widehat{y} \neq y
        \mid
        y = +1
    \right)
    &=
    \mathbb{P}
    \left(
        T < 0
        \mid
        y = +1
    \right)
    \\
    &\approx
    \Phi
    \left(
        \frac{
            0
            -
            \frac{1}{2}
            \|\mathbf{u}_\mathrm{blue}\|\alpha
        }{
            \sigma_\mathrm{blue}
        }
    \right)
    \\
    &=
    \Phi
    \left(
        -
        \frac{
            \|\mathbf{u}_\mathrm{blue}\|\alpha
        }{
            2\sigma_\mathrm{blue}
        }
    \right)
\end{align}
For the \(y=-1\) blue class, an error occurs when \(T > 0\). Therefore
\begin{align}
    \mathbb{P}
    \left(
        \widehat{y} \neq y
        \mid
        y = -1
    \right)
    &=
    \mathbb{P}
    \left(
        T > 0
        \mid
        y = -1
    \right)
    \\
    &\approx
    1
    -
    \Phi
    \left(
        \frac{
            0
            +
            \frac{1}{2}
            \|\mathbf{u}_\mathrm{blue}\|\alpha
        }{
            \sigma_\mathrm{blue}
        }
    \right)
    \\
    &=
    \Phi
    \left(
        -
        \frac{
            \|\mathbf{u}_\mathrm{blue}\|\alpha
        }{
            2\sigma_\mathrm{blue}
        }
    \right),
\end{align}
where we used \(1-\Phi(a)=\Phi(-a)\). The two conditional errors are the same. Hence the total blue transfer error is
\begin{align}
    P_{\mathrm{error}}^{\mathrm{red}\to\mathrm{blue}}
    &\approx
    \Phi
    \left(
        -
        \frac{
            \|\mathbf{u}_\mathrm{blue}\|\alpha
        }{
            2\sigma_\mathrm{blue}
        }
    \right)
    \\
    &=
    \Phi
    \left(
        -
        \frac{
            \|\mathbf{u}_\mathrm{blue}\|
        }{
            2\sigma_\mathrm{blue}
        }
        \cdot
        \alpha
    \right)
\end{align}
This is the expression claimed in \autoref{proposition:ccgp}.

\paragraph{Step 5: show monotonic improvement with abstraction.}
Now hold \(\|\mathbf{u}_\mathrm{blue}\|\) and \(\sigma_\mathrm{blue}\) fixed, and define the positive constant
\begin{align}
    c
    \coloneqq
    \frac{
        \|\mathbf{u}_\mathrm{blue}\|
    }{
        2\sigma_\mathrm{blue}
    }
    >
    0
\end{align}
Then
\begin{align}
    P_{\mathrm{error}}^{\mathrm{red}\to\mathrm{blue}}
    \approx
    \Phi(-c\alpha)
\end{align}
Let \(\varphi = \Phi'\) denote the standard normal density. Differentiating with respect to \(\alpha\) gives
\begin{align}
    \frac{
        \partial
        P_{\mathrm{error}}^{\mathrm{red}\to\mathrm{blue}}
    }{
        \partial \alpha
    }
    &=
    -c
    \varphi(-c\alpha)
    \\
    &=
    -c
    \varphi(c\alpha)
    \\
    &<
    0,
\end{align}
because \(c>0\) and \(\varphi(r)>0\) for all \(r \in \mathbb{R}\). Thus, holding all other quantities fixed, increasing abstraction strictly decreases the transferred probe error. This proves \autoref{proposition:ccgp}.
\end{proof}

\paragraph{Interpretation.}
The proof shows that abstraction controls the transferred probe margin. The relevant normalized margin is
\begin{align}
    \frac{
        \text{projected blue class separation}
    }{
        \text{projected blue noise}
    }
    =
    \frac{
        \|\mathbf{u}_\mathrm{blue}\|\alpha
    }{
        2\sigma_\mathrm{blue}
    }
\end{align}
Thus \(\alpha\) directly influences the probability of probe failure. When \(\alpha = 0\), the red probe is orthogonal to the blue shape direction, so the two blue classes have the same projected mean and the error is approximately \(\Phi(0)=1/2\). When \(\alpha > 0\), the red probe points at least partially in the correct blue shape direction, producing better-than-chance transfer. When \(\alpha < 0\), the red probe points in the opposite direction after transfer, producing worse-than-chance error because the sign of the classifier is inherited from the red context.

Note that the transferred error can also change if the blue class separation \(\|\mathbf{u}_\mathrm{blue}\|\) changes, or if the projected within-class variance \(\sigma_\mathrm{blue}^{2}\) changes. \autoref{proposition:ccgp} isolates the effect of abstraction itself: for fixed blue-context signal strength and fixed projected noise, larger \(\alpha\) gives a larger positive margin and therefore smaller probe generalization error.

\clearpage
\subsection{Reduced kernel dynamics} \label{appendix:reduced-kernel-dynamics}

In this subsection we derive the reduced feature dynamics used in
\autoref{section:exact-solutions} and then prove \autoref{proposition:linear-odes}.
The main idea is that, after optimizing out the readout, the targets enter the
feature dynamics through a single \(N\times N\) matrix
\[
\mathbf{M}(\mathbf{Q})
\coloneqq
\gamma
(\I_N+\gamma \mathbf{Q})^{-1}
\boldsymbol{\Sigma}_y
(\I_N+\gamma \mathbf{Q})^{-1},
\qquad
\mathbf{Q} \coloneqq \mathbf{Z}^\top \mathbf{Z},
\qquad
\boldsymbol{\Sigma}_y \coloneqq \mathbf{Y}^\top \mathbf{Y}
\]
This matrix is the effective target kernel. It is ``effective'' because it is
not the raw target kernel \(\boldsymbol{\Sigma}_y\), but rather a spectrally filtered version of \(\boldsymbol{\Sigma}_y\) that depends on the current feature kernel \(\mathbf{Q}\). The dynamics of \(\mathbf{Q}\) are given by a Riccati equation driven by \(\mathbf{M}(\mathbf{Q})\).

\paragraph{Variable projection over the readout.}

Recall the original objective:
\[
\Lc(\mathbf{W}_r,\mathbf{Z})
=
\frac{1}{2}\|\mathbf{W}_r\mathbf{Z}-\mathbf{Y}\|_F^2
+
\frac{1}{2\gamma}\|\mathbf{W}_r\|_F^2
\]
Now we apply \autoref{assumption:readout}. For fixed \(\mathbf{Z}\), the first-order condition for \(\mathbf{W}_r\) is
\begin{align}
0
&=
(\mathbf{W}_r\mathbf{Z}-\mathbf{Y})\mathbf{Z}^\top+\gamma^{-1}\mathbf{W}_r
\\
&=
\mathbf{W}_r(\mathbf{Z}\mathbf{Z}^\top+\gamma^{-1}\I_D)-\mathbf{Y}\mathbf{Z}^\top
\end{align}
Therefore
\begin{align}
\mathbf{W}_r^*(\mathbf{Z})
&=
\mathbf{Y}\mathbf{Z}^\top(\mathbf{Z}\mathbf{Z}^\top+\gamma^{-1}\I_D)^{-1}
\end{align}
Using
\[
\mathbf{Z}^\top(\I_D+\gamma \mathbf{Z}\mathbf{Z}^\top)^{-1}
=
(\I_N+\gamma \mathbf{Z}^\top \mathbf{Z})^{-1}\mathbf{Z}^\top,
\]
we can also write this optimum in the sample-space form
\begin{align}
\mathbf{W}_r^*(\mathbf{Z})
&=
\gamma
\mathbf{Y}(\I_N+\gamma \mathbf{Q})^{-1}\mathbf{Z}^\top,
\qquad
\mathbf{Q}=\mathbf{Z}^\top \mathbf{Z}
\label{eq:appendix-wr-star}
\end{align}
This is the expression stated in \autoref{assumption:readout}.

Let
\[
\mathbf{B}(\mathbf{Q})\coloneqq(\I_N+\gamma \mathbf{Q})^{-1}
\]
Then, at the readout optimum,
\begin{align}
\mathbf{W}_r^*\mathbf{Z}
&=
\gamma \mathbf{Y}\mathbf{B}(\mathbf{Q})\mathbf{Q}
\end{align}
Since \(\mathbf{B}(\mathbf{Q})\) is a function of \(\mathbf{Q}\), it commutes with \(\mathbf{Q}\), and
\[
\mathbf{B}(\mathbf{Q})+\gamma \mathbf{B}(\mathbf{Q})\mathbf{Q}=\I_N
\]
Thus
\begin{align}
\mathbf{W}_r^*\mathbf{Z}-\mathbf{Y}
&=
\gamma \mathbf{Y}\mathbf{B}(\mathbf{Q})\mathbf{Q}-\mathbf{Y}
=
-\mathbf{Y}\mathbf{B}(\mathbf{Q})
\end{align}
The data-fit term becomes
\begin{align}
\frac{1}{2}\|\mathbf{W}_r^*\mathbf{Z}-\mathbf{Y}\|_F^2
&=
\frac{1}{2}
\Tr\!\left(
\mathbf{Y}\mathbf{B}(\mathbf{Q})^2\mathbf{Y}^\top
\right)
=
\frac{1}{2}
\Tr\!\left(
\boldsymbol{\Sigma}_y \mathbf{B}(\mathbf{Q})^2
\right)
\end{align}
The ridge term is
\begin{align}
\frac{1}{2\gamma}\|\mathbf{W}_r^*\|_F^2
&=
\frac{1}{2\gamma}
\Tr\!\left(
\gamma^2 \mathbf{Y}\mathbf{B}(\mathbf{Q})\mathbf{Z}^\top \mathbf{Z}\mathbf{B}(\mathbf{Q})\mathbf{Y}^\top
\right)
\\
&=
\frac{\gamma}{2}
\Tr\!\left(
\boldsymbol{\Sigma}_y \mathbf{B}(\mathbf{Q})\mathbf{Q}\mathbf{B}(\mathbf{Q})
\right)
\end{align}
Adding the two terms gives
\begin{align}
\Lc^*(\mathbf{Z})
&\coloneqq
\Lc(\mathbf{W}_r^*(\mathbf{Z}),\mathbf{Z})
\\
&=
\frac{1}{2}
\Tr\!\left(
\boldsymbol{\Sigma}_y \mathbf{B}(\mathbf{Q})^2
\right)
+
\frac{\gamma}{2}
\Tr\!\left(
\boldsymbol{\Sigma}_y \mathbf{B}(\mathbf{Q})\mathbf{Q}\mathbf{B}(\mathbf{Q})
\right)
\\
&=
\frac{1}{2}
\Tr\!\left(
\boldsymbol{\Sigma}_y \mathbf{B}(\mathbf{Q})\bigl(\mathbf{B}(\mathbf{Q})+\gamma \mathbf{Q}\mathbf{B}(\mathbf{Q})\bigr)
\right)
\\
&=
\frac{1}{2}
\Tr\!\left(
\boldsymbol{\Sigma}_y \mathbf{B}(\mathbf{Q})
\right)
\\
&=
\frac{1}{2}
\Tr\!\left(
\mathbf{Y}^\top \mathbf{Y}(\I_N+\gamma \mathbf{Q})^{-1}
\right).
\label{eq:appendix-reduced-objective}
\end{align}
This proves the reduced objective stated in the main text.

\paragraph{Effective target kernel.}

We now differentiate \(\Lc^*(\mathbf{Z})\). Since
\[
\mathbf{B}(\mathbf{Q})=(\I_N+\gamma \mathbf{Q})^{-1},
\]
its differential is
\[
d\mathbf{B}
=
-\gamma \mathbf{B}(d\mathbf{Q})\mathbf{B}
\]
Therefore
\begin{align}
d\Lc^*
&=
\frac{1}{2}
\Tr\!\left(
\boldsymbol{\Sigma}_y\,d\mathbf{B}
\right)
\\
&=
-\frac{\gamma}{2}
\Tr\!\left(
\boldsymbol{\Sigma}_y \mathbf{B}(d\mathbf{Q})\mathbf{B}
\right)
\\
&=
-\frac{1}{2}
\Tr\!\left(
\gamma \mathbf{B}\boldsymbol{\Sigma}_y \mathbf{B}\,d\mathbf{Q}
\right)
\end{align}
This motivates the definition of the effective target kernel from the main text:
\begin{align}
\mathbf{M}(\mathbf{Q})
&\coloneqq
\gamma \mathbf{B}(\mathbf{Q})\boldsymbol{\Sigma}_y \mathbf{B}(\mathbf{Q})
=
\gamma
(\I_N+\gamma \mathbf{Q})^{-1}
\boldsymbol{\Sigma}_y
(\I_N+\gamma \mathbf{Q})^{-1}
\label{eq:appendix-effective-target}
\end{align}
Equivalently,
\[
\nabla_{\mathbf{Q}} \Lc^*(\mathbf{Q})
=
-\frac{1}{2}\mathbf{M}(\mathbf{Q})
\]
Because \(\mathbf{Q}=\mathbf{Z}^\top \mathbf{Z}\),
\[
d\mathbf{Q}=d\mathbf{Z}^\top \mathbf{Z}+\mathbf{Z}^\top d\mathbf{Z}
\]
Using the symmetry of \(\mathbf{M}(\mathbf{Q})\), we obtain
\begin{align}
d\Lc^*
&=
-\frac{1}{2}
\Tr\!\left(
\mathbf{M}(\mathbf{Q})(d\mathbf{Z}^\top \mathbf{Z}+\mathbf{Z}^\top d\mathbf{Z})
\right)
\\
&=
-\Tr\!\left(
(\mathbf{Z}\mathbf{M}(\mathbf{Q}))^\top d\mathbf{Z}
\right)
\end{align}
Hence
\[
\nabla_{\mathbf{Z}}\Lc^*(\mathbf{Z})=-\mathbf{Z}\mathbf{M}(\mathbf{Q})
\]

The learned feature matrix is \(\mathbf{Z}=\mathbf{W}\mathbf{X}\). Therefore
\[
\nabla_{\mathbf{W}}\Lc^*
=
\nabla_{\mathbf{Z}}\Lc^*\,\mathbf{X}^\top
=
-\mathbf{Z}\mathbf{M}(\mathbf{Q})\mathbf{X}^\top
\]
Gradient flow on \(\mathbf{W}\) gives
\[
\dot{\mathbf{W}}=-\nabla_{\mathbf{W}}\Lc^*
=
\mathbf{Z}\mathbf{M}(\mathbf{Q})\mathbf{X}^\top
\]
Multiplying by \(\mathbf{X}\), the feature dynamics are
\begin{align}
\dot{\mathbf{Z}}
&=
\dot{\mathbf{W}} \mathbf{X}
=
\mathbf{Z}\mathbf{M}(\mathbf{Q})\mathbf{X}^\top \mathbf{X}
=
\mathbf{Z}\mathbf{M}(\mathbf{Q})\boldsymbol{\Sigma}_x,
\label{eq:appendix-feature-flow}
\end{align}
where \(\boldsymbol{\Sigma}_x\coloneqq \mathbf{X}^\top \mathbf{X}\). Differentiating \(\mathbf{Q}=\mathbf{Z}^\top \mathbf{Z}\), we get the following matrix Riccati equation for the dynamics of the kernel \(\mathbf{Q}\) which is mentioned at the beginning of \autoref{section:exact-solutions}:
\begin{align}
\dot{\mathbf{Q}}
&=
\dot{\mathbf{Z}}^\top \mathbf{Z}+\mathbf{Z}^\top \dot{\mathbf{Z}}
\\
&=
\boldsymbol{\Sigma}_x \mathbf{M}(\mathbf{Q})\mathbf{Q}+\mathbf{Q}\mathbf{M}(\mathbf{Q})\boldsymbol{\Sigma}_x
\label{eq:appendix-linear-riccati}
\end{align}

\paragraph{Spectral filtering by the effective target kernel.}

Under \autoref{assumption:2fs}, the matrices \(\mathbf{Q}\), \(\boldsymbol{\Sigma}_x\), and
\(\boldsymbol{\Sigma}_y\) are diagonalizable in the same 2FS basis. Let
\(\{\mathbf{P}_m\}_{m\in\{I,S,C,SC,G\}}\) denote the common orthogonal projectors onto
the 2FS modes. Then
\[
\mathbf{Q}
=
\sum_m \lambda_m \mathbf{P}_m,
\qquad
\boldsymbol{\Sigma}_y
=
\sum_m \lambda_m^{(\boldsymbol{\Sigma}_y)}\mathbf{P}_m
\]
Because \(B(\mathbf{Q})=(\I_N+\gamma \mathbf{Q})^{-1}\) is a function of \(\mathbf{Q}\),
\[
B(\mathbf{Q})
=
\sum_m
\frac{1}{1+\gamma\lambda_m}\mathbf{P}_m
\]
Substituting into \autoref{eq:appendix-effective-target} gives
\begin{align}
\mathbf{M}(\mathbf{Q})
&=
\gamma
\left(
\sum_m
\frac{1}{1+\gamma\lambda_m}\mathbf{P}_m
\right)
\left(
\sum_m
\lambda_m^{(\boldsymbol{\Sigma}_y)}\mathbf{P}_m
\right)
\left(
\sum_m
\frac{1}{1+\gamma\lambda_m}\mathbf{P}_m
\right)
\\
&=
\sum_m
\gamma
\frac{\lambda_m^{(\boldsymbol{\Sigma}_y)}}{(1+\gamma\lambda_m)^2}
\mathbf{P}_m
\end{align}
Therefore the eigenvalues of \(\mathbf{M}(\mathbf{Q})\) are
\begin{align}
\lambda_m^{(\mathbf{M})}
=
\gamma
\frac{\lambda_m^{(\boldsymbol{\Sigma}_y)}}{(1+\gamma\lambda_m)^2}
\label{eq:appendix-M-filter}
\end{align}
Thus \(\mathbf{M}(\mathbf{Q})\) is exactly a spectrally filtered version of \(\boldsymbol{\Sigma}_y\): each
target eigenvalue \(\lambda_m^{(\boldsymbol{\Sigma}_y)}\) is multiplied by the scalar filter
\[
h_\gamma(\lambda_m)
=
\frac{\gamma}{(1+\gamma\lambda_m)^2}
\]

The inverse-SNR of \(\mathbf{M}\) is also filtered:
\begin{align}
\nu(\mathbf{M})
&=
\frac{\lambda_{SC}^{(\mathbf{M})}}{\lambda_S^{(\mathbf{M})}}
\\
&=
\frac{\lambda_{SC}^{(\boldsymbol{\Sigma}_y)}}{\lambda_S^{(\boldsymbol{\Sigma}_y)}}
\left(
\frac{1+\gamma\lambda_S}{1+\gamma\lambda_{SC}}
\right)^2
\\
&=
\nu(\boldsymbol{\Sigma}_y)
\left(
\frac{1+\gamma\lambda_S}{1+\gamma\lambda_{SC}}
\right)^2
\label{eq:appendix-nu-M-filter}
\end{align}
Thus the target noise seen by the feature dynamics is itself a dynamical quantity:
it depends on the current signal and noise eigenvalues of \(\mathbf{Q}\).

Note also that if \(\mathbf{Q}(t)\) is 2FS, then \(B(\mathbf{Q}(t))\) is 2FS, hence \(\mathbf{M}(\mathbf{Q}(t))\) is 2FS, and the right-hand side of \autoref{eq:appendix-linear-riccati} is again 2FS. Since \(\mathbf{Q}(0)\) is 2FS by \autoref{assumption:2fs}, the ODE vector field is tangent to the 2FS algebra, so \(\mathbf{Q}(t)\) remains 2FS for all \(t\) for which the flow exists. So the 2FS projectors diagonalize the dynamics at every time.

\paragraph{Mode formula for abstraction.} Here we derive the closed form expression for abstraction in terms of the 2FS mode eigenvalues used throughout the main text. At the class-centroid level, a 2FS representation admits a Walsh decomposition of the form
\[
\boldsymbol{\mu}_{s,c}
=
\boldsymbol{\mu}_G
+
s\boldsymbol{\mu}_S
+
c\boldsymbol{\mu}_C
+
sc\boldsymbol{\mu}_{SC},
\qquad
(s,c)\in\{-1,+1\}^2
\]
The two context-specific shape directions are
\[
\mathbf{u}_{c=+1}
=
\boldsymbol{\mu}_{+,+}-\boldsymbol{\mu}_{-,+}
=
2(\boldsymbol{\mu}_S+\boldsymbol{\mu}_{SC}),
\]
and
\[
\mathbf{u}_{c=-1}
=
\boldsymbol{\mu}_{+,-}-\boldsymbol{\mu}_{-,-}
=
2(\boldsymbol{\mu}_S-\boldsymbol{\mu}_{SC})
\]
The 2FS modes are orthogonal, so
\[
\langle \boldsymbol{\mu}_S,\boldsymbol{\mu}_{SC}\rangle=0
\]
Therefore
\begin{align}
\alpha
&=
\frac{
\langle \mathbf{u}_{c=+1},\mathbf{u}_{c=-1}\rangle
}{
\|\mathbf{u}_{c=+1}\|\,\|\mathbf{u}_{c=-1}\|
}
\\
&=
\frac{
\|\boldsymbol{\mu}_S\|^2-\|\boldsymbol{\mu}_{SC}\|^2
}{
\|\boldsymbol{\mu}_S\|^2+\|\boldsymbol{\mu}_{SC}\|^2
}
\end{align}
The eigenvalues \(\lambda_S\) and \(\lambda_{SC}\) of \(\mathbf{Q}\) are proportional to
\(\|\boldsymbol{\mu}_S\|^2\) and \(\|\boldsymbol{\mu}_{SC}\|^2\), with the same
positive class-size factor. Thus
\begin{align}
\alpha
&=
\frac{\lambda_S-\lambda_{SC}}{\lambda_S+\lambda_{SC}}
=
\frac{1-\nu(\mathbf{Q})}{1+\nu(\mathbf{Q})},
\qquad
\nu(\mathbf{Q})\coloneqq \frac{\lambda_{SC}}{\lambda_S}
\label{eq:appendix-alpha-mode-form}
\end{align}

\paragraph{Proof of \autoref{proposition:linear-odes}.}

First, we derive the ODE for the eigenvalues of $\mathbf{Q}$. Since all matrices are diagonal in the same 2FS basis, write
\[
\boldsymbol{\Sigma}_x
=
\sum_m
\lambda_m^{(\boldsymbol{\Sigma}_x)}\mathbf{P}_m,
\qquad
\mathbf{M}(\mathbf{Q})
=
\sum_m
\lambda_m^{(\mathbf{M})}\mathbf{P}_m,
\qquad
\mathbf{Q}
=
\sum_m
\lambda_m \mathbf{P}_m
\]
Substituting these decompositions into
\autoref{eq:appendix-linear-riccati} gives
\begin{align}
\dot{\mathbf{Q}}
&=
\boldsymbol{\Sigma}_x \mathbf{M} \mathbf{Q}+\mathbf{Q} \mathbf{M}\boldsymbol{\Sigma}_x
\\
&=
2
\sum_m
\lambda_m^{(\boldsymbol{\Sigma}_x)}
\lambda_m^{(\mathbf{M})}
\lambda_m
\mathbf{P}_m
\end{align}
Writing $\lambda_m$ as the eigenvalue of $\mathbf{Q}$ in mode $m$, we have
\[
\lambda_m = \langle \mathbf{P}_m, \mathbf{Q} \rangle
\]
Therefore each eigenvalue evolves independently as
\begin{align}
\dot\lambda_m
&=
2
\lambda_m^{(\boldsymbol{\Sigma}_x)}
\lambda_m^{(\mathbf{M})}
\lambda_m
\end{align}
Using the spectral filter formula \autoref{eq:appendix-M-filter}, we obtain
\begin{align}
\dot \lambda_m
&=
2\gamma
\lambda_m
\lambda_m^{(\boldsymbol{\Sigma}_x)}
\frac{\lambda_m^{(\boldsymbol{\Sigma}_y)}}{(1+\gamma\lambda_m)^2}
\label{eq:appendix-eigenvalue-ode-derived}
\end{align}
This is the first expression in \autoref{proposition:linear-odes}.

Now we derive the ODE for abstraction. Let
\[
s\coloneqq \lambda_S,
\qquad
n\coloneqq \lambda_{SC},
\qquad
q\coloneqq \nu(\mathbf{Q})=\frac{n}{s}
\]
Then
\[
\alpha=\frac{1-q}{1+q}
\]
From the eigenvalue ODE,
\[
\dot s
=
2s\lambda_S^{(\boldsymbol{\Sigma}_x)}\lambda_S^{(\mathbf{M})},
\qquad
\dot n
=
2n\lambda_{SC}^{(\boldsymbol{\Sigma}_x)}\lambda_{SC}^{(\mathbf{M})}
\]
Hence
\begin{align}
\frac{\dot q}{q}
&=
\frac{\dot n}{n}
-
\frac{\dot s}{s}
\\
&=
2\lambda_{SC}^{(\boldsymbol{\Sigma}_x)}\lambda_{SC}^{(\mathbf{M})}
-
2\lambda_S^{(\boldsymbol{\Sigma}_x)}\lambda_S^{(\mathbf{M})}
\\
&=
2\lambda_S^{(\boldsymbol{\Sigma}_x)}\lambda_S^{(\mathbf{M})}
\left(
\nu(\boldsymbol{\Sigma}_x)\nu(\mathbf{M})-1
\right)
\end{align}
Therefore
\begin{align}
\dot q
=
2q\lambda_S^{(\boldsymbol{\Sigma}_x)}\lambda_S^{(\mathbf{M})}
\left(
\nu(\boldsymbol{\Sigma}_x)\nu(\mathbf{M})-1
\right)
\label{eq:appendix-q-ode}
\end{align}
Since
\[
\frac{d\alpha}{dq}
=
-\frac{2}{(1+q)^2},
\]
we get
\begin{align}
\dot\alpha
&=
-\frac{2}{(1+q)^2}\dot q
\\
&=
\frac{4q}{(1+q)^2}
\lambda_S^{(\boldsymbol{\Sigma}_x)}\lambda_S^{(\mathbf{M})}
\left(
1-\nu(\boldsymbol{\Sigma}_x)\nu(\mathbf{M})
\right)
\end{align}
Finally,
\[
1-\alpha^2
=
1-\left(\frac{1-q}{1+q}\right)^2
=
\frac{4q}{(1+q)^2}
\]
Thus
\begin{align}
\dot\alpha
&=
(1-\alpha^2)
\lambda_S^{(\boldsymbol{\Sigma}_x)}
\lambda_S^{(\mathbf{M})}
\left(
1-\nu(\boldsymbol{\Sigma}_x)\nu(\mathbf{M})
\right)
\label{eq:appendix-alpha-ode-derived}
\end{align}
This is the second expression in \autoref{proposition:linear-odes}.

\clearpage
\subsection{Exact solutions for eigenvalues and abstraction} \label{appendix:implicit-solutions}

In this section we derive exact implicit expressions for the trajectories of the eigenvalues and abstraction. 

\subsubsection{Exact solutions for eigenvalues} \label{appendix:implicit-solutions-eigenvalues}

Here we solve the eigenvalue ODE in \autoref{proposition:linear-odes} to prove the following statement:

\begin{theorem}[Exact implicit solution for eigenvalues] \label{theorem:implicit-solution-eigenvalue}
Define the ridge-normalized eigenvalues $\tilde{\lambda}_m(t) \coloneqq \gamma \lambda_m(t)$. Let $F(a) \coloneqq \frac{1}{2} a^2 + 2 a + \log a$ for $a > 0$. Then the trajectory of the ridge-normalized eigenvalues obeys the following exact relation:
\begin{align}
    F\pbracket{\tilde{\lambda}_m(t)} = 2 \gamma \lambda_m^{(\boldsymbol{\Sigma}_x)} \lambda_m^{(\boldsymbol{\Sigma}_y)} t + F\pbracket{\tilde{\lambda}_{m, 0}}
\end{align}
\end{theorem}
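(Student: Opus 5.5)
The plan is to integrate the scalar eigenvalue ODE from \autoref{proposition:linear-odes} by separation of variables, after rescaling to the ridge-normalized variable. Set $\tilde{\lambda}_m = \gamma \lambda_m$, so $\dot{\tilde{\lambda}}_m = \gamma \dot\lambda_m$. Substituting $\lambda_m = \tilde{\lambda}_m/\gamma$ into \autoref{eq:appendix-eigenvalue-ode-derived} gives
\begin{align}
\dot{\tilde{\lambda}}_m = 2\gamma \lambda_m^{(\boldsymbol{\Sigma}_x)}\lambda_m^{(\boldsymbol{\Sigma}_y)} \, \frac{\tilde{\lambda}_m}{(1+\tilde{\lambda}_m)^2}.
\end{align}
The constant $c_m \coloneqq 2\gamma \lambda_m^{(\boldsymbol{\Sigma}_x)}\lambda_m^{(\boldsymbol{\Sigma}_y)}$ collects all the data dependence. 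The right-hand side is autonomous and depends on $\tilde{\lambda}_m$ alone, so the equation separates.

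Next I would write the separated form $\frac{(1+a)^2}{a}\,da = c_m\,dt$ and expand $\frac{(1+a)^2}{a} = a + 2 + \frac{1}{a}$. Its antiderivative is exactly $\frac12 a^2 + 2a + \log a = F(a)$, so $F'(a) = (1+a)^2/a$. Equivalently, along trajectories the chain rule gives
\begin{align}
\frac{d}{dt}F(\tilde{\lambda}_m(t)) = F'(\tilde{\lambda}_m)\dot{\tilde{\lambda}}_m = c_m.
\end{align}
Integrating from $0$ to $t$ then yields $F(\tilde{\lambda}_m(t)) = c_m t + F(\tilde{\lambda}_{m,0})$, which is the claimed relation.

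The only real care point is well-posedness on the domain $a>0$, since $F$ involves $\log a$. I would argue this from the form of the ODE. Positive semidefiniteness of $\mathbf{Q}$, $\boldsymbol{\Sigma}_x$ and $\boldsymbol{\Sigma}_y$ gives $c_m \ge 0$ and $\tilde{\lambda}_m \ge 0$. The vector field vanishes at $\tilde{\lambda}_m = 0$, so by uniqueness an initially positive eigenvalue stays positive. Positivity together with $c_m \ge 0$ also makes it nondecreasing, so it cannot reach zero. The field is moreover bounded by $c_m$ for $a$ large, which rules out finite-time blow-up, so the solution exists for all $t\ge0$.

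To make $F(\tilde{\lambda}_m(t))$ meaningful I would therefore assume $\tilde{\lambda}_{m,0}>0$, which is the setting of the later results. The degenerate case $\tilde{\lambda}_{m,0}=0$ is the trivial solution $\tilde{\lambda}_m \equiv 0$ and lies outside the domain of $F$. Finally, $F$ is strictly increasing on $(0,\infty)$ because $F'>0$. This makes the relation a genuine implicit solution that determines $\tilde{\lambda}_m(t)$ uniquely, and it is what justifies using $\tilde{\lambda}_S$ as a monotone clock in \autoref{theorem:implicit-solution}.
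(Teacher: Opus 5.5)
Your proposal is correct and follows essentially the same route as the paper: pass to $\tilde{\lambda}_m=\gamma\lambda_m$, observe $F'(a)=(1+a)^2/a$, apply the chain rule to get $\frac{d}{dt}F(\tilde{\lambda}_m(t))=2\gamma\lambda_m^{(\boldsymbol{\Sigma}_x)}\lambda_m^{(\boldsymbol{\Sigma}_y)}$, and integrate. Your well-posedness remarks go a little beyond the paper, which only notes separately that a zero-initialized mode stays at zero: positivity is preserved, the vector field is bounded by $c_m/4$ so there is no blow-up, and $F$ is strictly monotone.
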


\begin{proof}
For any mode \(m\in\{I,S,C,SC,G\}\), define
\[
b_m
\coloneqq
\lambda_m^{(\boldsymbol{\Sigma}_x)}
\lambda_m^{(\boldsymbol{\Sigma}_y)}
\]
The eigenvalue ODE from \autoref{proposition:linear-odes} is
\begin{align}
\dot\lambda_m
=
2\gamma b_m
\frac{\lambda_m}{(1+\gamma\lambda_m)^2}
\label{eq:appendix-scalar-lambda-ode}
\end{align}
The natural dimensionless variable for this ODE is
\begin{align}
\tilde\lambda_m(t)
\coloneqq
\gamma\lambda_m(t)
\label{eq:appendix-ridge-variable}
\end{align}
Then
\[
\dot{\tilde\lambda}_m
=
\gamma\dot\lambda_m
=
2\gamma b_m
\frac{\tilde\lambda_m}{(1+\tilde\lambda_m)^2}
\]
Let
\[
F(a)
\coloneqq
\frac{1}{2}a^2+2a+\log a,
\qquad
a>0
\]
Then
\[
F'(a)
=
a+2+\frac{1}{a}
=
\frac{(1+a)^2}{a}
\]
Therefore
\begin{align}
\frac{d}{dt}F(\tilde\lambda_m(t))
&=
F'(\tilde\lambda_m(t))\dot{\tilde\lambda}_m(t)
\\
&=
\frac{(1+\tilde\lambda_m(t))^2}{\tilde\lambda_m(t)}
\left(
2\gamma b_m
\frac{\tilde\lambda_m(t)}{(1+\tilde\lambda_m(t))^2}
\right)
\\
&=
2\gamma b_m
\end{align}
Integrating from \(0\) to \(t\) gives the exact expression in \autoref{theorem:implicit-solution-eigenvalue}:
\begin{align}
F(\tilde\lambda_m(t))
=
2\gamma
\lambda_m^{(\boldsymbol{\Sigma}_x)}
\lambda_m^{(\boldsymbol{\Sigma}_y)}
t
+
F(\tilde{\lambda}_{m, 0}),
\qquad
\tilde{\lambda}_{m, 0}\coloneqq \gamma\lambda_{m}(0)
\label{eq:appendix-exact-implicit-eigenvalue}
\end{align}
This concludes the proof 
\end{proof}

\paragraph{An exact inverse-function solution.}

The function \(F\) is strictly increasing on \((0,\infty)\), because
\[
F'(a)=\frac{(1+a)^2}{a}>0
\]
Also,
\[
\lim_{a\downarrow 0}F(a)=-\infty,
\qquad
\lim_{a\to\infty}F(a)=+\infty
\]
Since \(F\) is a bijection from \((0,\infty)\) to \(\mathbb{R}\), it is invertible. Define:
\[
\Xi \coloneqq F^{-1}
\]
Then the exact eigenvalue solution can be written as
\begin{align}
\tilde\lambda_m(t)
=
\Xi
\left(
F(\tilde{\lambda}_{m, 0})
+
2\gamma b_m t
\right)
\label{eq:appendix-exact-eigenvalue-inverse}
\end{align}
Equivalently,
\[
\lambda_m(t)
=
\frac{1}{\gamma}
\Xi
\left(
F(\gamma\lambda_{m,0})
+
2\gamma
\lambda_m^{(\boldsymbol{\Sigma}_x)}
\lambda_m^{(\boldsymbol{\Sigma}_y)}
t
\right)
\]
This is an exact solution in terms of the inverse of \(F\). However, \(\Xi\) cannot be expressed in terms of elementary functions because \(F(a)\) contains both a quadratic term and a logarithmic term.

\subsubsection{Exact solutions for abstraction} \label{appendix:exact-solutions-abstraction}

In this section we derive expressions for the abstraction trajectory in terms of both the inverse-function \(\Xi\) and an implicit parametric form. The shape abstraction depends only on the \(S\) and \(SC\) modes:
\[
\alpha(t)
=
\frac{\lambda_S(t)-\lambda_{SC}(t)}
{\lambda_S(t)+\lambda_{SC}(t)}
=
\frac{\tilde{\lambda}_S(t)-\tilde{\lambda}_{SC}(t)}
{\tilde{\lambda}_S(t)+\tilde{\lambda}_{SC}(t)}
\]
Define
\[
b_S
\coloneqq
\lambda_S^{(\boldsymbol{\Sigma}_x)}
\lambda_S^{(\boldsymbol{\Sigma}_y)},
\qquad
b_{SC}
\coloneqq
\lambda_{SC}^{(\boldsymbol{\Sigma}_x)}
\lambda_{SC}^{(\boldsymbol{\Sigma}_y)}
\]
Using \autoref{eq:appendix-exact-eigenvalue-inverse}, we obtain an exact expression for $\alpha(t)$ in terms of the inverse \(\Xi\):
\begin{align}
\alpha(t)
=
\frac{
\Xi\!\left(F(\tilde{\lambda}_{S, 0})+2\gamma b_S t\right)
-
\Xi\!\left(F(\tilde{\lambda}_{SC, 0})+2\gamma b_{SC} t\right)
}{
\Xi\!\left(F(\tilde{\lambda}_{S, 0})+2\gamma b_S t\right)
+
\Xi\!\left(F(\tilde{\lambda}_{SC, 0})+2\gamma b_{SC} t\right)
}
\label{eq:appendix-exact-alpha-inverse}
\end{align}
There is also a useful implicit parametric form. Assume \(b_S>0\), and define
\[
\bar{\nu}
\coloneqq
\frac{b_{SC}}{b_S}
=
\frac{
\lambda_{SC}^{(\boldsymbol{\Sigma}_x)}
\lambda_{SC}^{(\boldsymbol{\Sigma}_y)}
}{
\lambda_S^{(\boldsymbol{\Sigma}_x)}
\lambda_S^{(\boldsymbol{\Sigma}_y)}
}
=
\nu(\boldsymbol{\Sigma}_x)\nu(\boldsymbol{\Sigma}_y)
\]
From \autoref{eq:appendix-exact-implicit-eigenvalue},
\[
F(\tilde{\lambda}_S(t))
=
2\gamma b_S t+F(\tilde{\lambda}_{S, 0}),
\]
and
\[
F(\tilde{\lambda}_{SC}(t))
=
2\gamma b_{SC} t+F(\tilde{\lambda}_{SC, 0})
=
2\gamma \bar{\nu} b_S t+F(\tilde{\lambda}_{SC, 0})
\]
Eliminating \(t\) gives the exact conserved relation
\begin{align}
F(\tilde{\lambda}_{SC}(t))
-
\bar{\nu} F(\tilde{\lambda}_S(t))
=
F(\tilde{\lambda}_{SC, 0})
-
\bar{\nu} F(\tilde{\lambda}_{S, 0})
\label{eq:appendix-alpha-conserved-relation}
\end{align}
Now write
\[
q(t)
\coloneqq
\frac{\tilde{\lambda}_{SC}(t)}{\tilde{\lambda}_S(t)}
=
\frac{1-\alpha(t)}{1+\alpha(t)}
\]
Then \(\tilde{\lambda}_{SC}(t)=q(t)\tilde{\lambda}_S(t)\), and
\autoref{eq:appendix-alpha-conserved-relation} becomes
\begin{align}
F\!\left(
\tilde{\lambda}_{S}
\frac{1-\alpha}{1+\alpha}
\right)
-
\bar{\nu} F(\tilde{\lambda}_{S})
=
F(\tilde{\lambda}_{SC, 0})
-
\bar{\nu} F(\tilde{\lambda}_{S, 0})
\label{eq:appendix-alpha-implicit}
\end{align}
The corresponding time is
\begin{align}
t
=
\frac{
F(\tilde{\lambda}_{S})-F(\tilde{\lambda}_{S, 0})
}{
2\gamma b_S
}
\label{eq:appendix-alpha-parametric-time}
\end{align}
Equations \autoref{eq:appendix-alpha-implicit} and
\autoref{eq:appendix-alpha-parametric-time} give an exact implicit parametric
solution for \(\alpha(t)\), using \(\tilde{\lambda}_{S}\) as the clock variable.

\paragraph{Zero initialization.}

The formulas involving \(F\) assume \(\tilde{\lambda}_{m, 0}>0\). This is the generic case for
small random initialization. If a mode is initialized exactly at zero, then the
multiplicative ODE:
\[
\dot\lambda_m
=
2\gamma b_m
\frac{\lambda_m}{(1+\gamma\lambda_m)^2}
\]
keeps it at zero for all time. Thus exactly zero-initialized modes are invariant
boundary cases and should be handled separately. The terminal law below is the
generic interior statement, with the boundary cases obtained directly from the
same ODE.

\clearpage
\subsection{Asymptotics} \label{appendix:asymptotics}

\paragraph{Early-time asymptotics.} When \(\tilde\lambda_m\ll 1\),
\[
F(\tilde\lambda_m)
=
\log \tilde\lambda_m+2\tilde\lambda_m+\frac{1}{2}\tilde\lambda_m^2
=
\log \tilde\lambda_m+O(\tilde\lambda_m)
\]
Thus, while the mode remains in this regime,
\begin{align}
\log \tilde\lambda_m(t)
&=
\log \tilde{\lambda}_{m, 0}
+
2\gamma b_m t
+
O(\tilde\lambda_m(t)+\tilde{\lambda}_{m, 0}),
\end{align}
so to leading order
\begin{align}
\tilde\lambda_m(t)
\approx
\tilde{\lambda}_{m, 0}\exp(2\gamma b_m t)
\label{eq:appendix-early-mode-growth}
\end{align}
For the inverse-SNR \(q(t)=\tilde{\lambda}_{SC}(t)/\tilde{\lambda}_S(t)\), this gives
\begin{align}
q(t)
\approx
q_0
\exp\!\left(
2\gamma(b_{SC}-b_S)t
\right)
\label{eq:appendix-early-q-growth}
\end{align}
Thus, in the signal-dominant regime \(b_S>b_{SC}\), the noise-to-signal ratio
initially decreases exponentially, and abstraction initially increases toward
perfect abstraction.

\paragraph{Late-time asymptotics.}

For large \(a\),
\[
F(a)
=
\frac{1}{2}a^2+2a+\log a
=
\frac{1}{2}a^2\bigl(1+o(1)\bigr)
\]
Equivalently, for the dummy variable \(r\), as \(r \to\infty\),
\begin{align}
F^{-1}(r)
=
\sqrt{2r}
-
2
+
O\!\left(
\frac{\log r}{\sqrt{r}}
\right)
\label{eq:appendix-F-inverse-asymptotic}
\end{align}
Applying this to
\[
\tilde\lambda_m(t)
=
\Xi(F(\tilde{\lambda}_{m, 0})+2\gamma b_m t)
\]
gives, for any mode with \(b_m>0\),
\begin{align}
\tilde\lambda_m(t)
&=
2\sqrt{\gamma b_m t}
+
O(1)
\label{eq:appendix-late-r-asymptotic}
\end{align}
and equivalently, for the original eigenvalue variable,
\begin{align}
\lambda_m(t)
&=
2\sqrt{\frac{b_m t}{\gamma}}
+
O(\gamma^{-1})
\end{align}

\paragraph{Proof of \autoref{thm:fixed-point}.}

Assume \(b_S>0\), so that the shape signal mode grows. In the generic interior
case \(\tilde{\lambda}_{S, 0}>0\) and \(\tilde{\lambda}_{SC, 0}>0\), the exact solution gives
\[
F(\tilde{\lambda}_S(t))
=
2\gamma b_S t+O(1),
\qquad
F(\tilde{\lambda}_{SC}(t))
=
2\gamma b_{SC} t+O(1)
\]
If \(b_{SC}>0\), then by the late-time asymptotic
\autoref{eq:appendix-late-r-asymptotic},
\begin{align}
\frac{\tilde{\lambda}_{SC}(t)}{\tilde{\lambda}_S(t)}
&\longrightarrow
\frac{
2\sqrt{\gamma b_{SC}t}
}{
2\sqrt{\gamma b_S t}
}
=
\sqrt{\frac{b_{SC}}{b_S}}
\end{align}
If \(b_{SC}=0\), then the \(SC\) mode does not grow, while \(\tilde{\lambda}_S(t)\to\infty\), so the same formula holds with limiting ratio \(0\). Therefore, in nondegenerate cases we have:
\begin{align}
\lim_{t\to\infty}
\frac{\lambda_{SC}(t)}{\lambda_S(t)}
&=
\lim_{t\to\infty}
\frac{\tilde{\lambda}_{SC}(t)}{\tilde{\lambda}_S(t)}
\\
&=
\sqrt{
\frac{
\lambda_{SC}^{(\boldsymbol{\Sigma}_x)}
\lambda_{SC}^{(\boldsymbol{\Sigma}_y)}
}{
\lambda_S^{(\boldsymbol{\Sigma}_x)}
\lambda_S^{(\boldsymbol{\Sigma}_y)}
}
}
\\
&=
\sqrt{
\nu(\boldsymbol{\Sigma}_x)\nu(\boldsymbol{\Sigma}_y)
}
\label{eq:appendix-terminal-q}
\end{align}
Since
\[
\alpha(t)
=
\frac{1-q(t)}{1+q(t)},
\qquad
q(t)
=
\frac{\lambda_{SC}(t)}{\lambda_S(t)},
\]
Therefore we conclude that
\begin{align}
\alpha_\infty
&\coloneqq
\lim_{t\to\infty}\alpha(t)
\\
&=
\frac{
1-\sqrt{\nu(\boldsymbol{\Sigma}_x)\nu(\boldsymbol{\Sigma}_y)}
}{
1+\sqrt{\nu(\boldsymbol{\Sigma}_x)\nu(\boldsymbol{\Sigma}_y)}
}
\label{eq:appendix-terminal-alpha-law}
\end{align}
Which proves the expression for \autoref{thm:fixed-point}. The constants \(F(\tilde{\lambda}_{S, 0})\) and \(F(\tilde{\lambda}_{SC, 0})\) disappear in the large-time limit, which is why the terminal abstraction is independent of the initialization scale in the generic interior. Thus the initialization affects the trajectory, and in particular the maximum abstraction reached before the late-time regime, but not the terminal value in \autoref{eq:appendix-terminal-alpha-law}.

\clearpage
\subsection{Non-monotonicity and overshoot} \label{appendix:overshoot-initialization}

This subsection proves the overshoot claim discussed in \autoref{section:fixedpoints-overshoot}. The main text states that, in the signal-dominant regime, abstraction can rise above its terminal value before eventually returning to it. Here we prove a slightly stronger phase-plane statement: the inverse-SNR of the representation first decreases, crosses below its terminal value, reaches a unique minimum, and then increases back to its terminal value from below. Since abstraction is a decreasing function of this inverse-SNR, this means abstraction first increases past its terminal value, reaches a unique maximum, and then decreases back to the terminal value from above.

\paragraph{Reduction to the shape and interaction modes.}
Abstraction depends only on the \(S\) and \(SC\) eigenvalues of the feature kernel \(\mathbf{Q}(t)\). We therefore define
\begin{align}
s(t)
&\coloneqq
\tilde{\lambda}_{S}(t),
&
n(t)
&\coloneqq
\tilde{\lambda}_{SC}(t),
&
q(t)
&\coloneqq
\frac{n(t)}{s(t)}
\end{align}
Here \(s(t)\) is the ridge-normalized shape eigenvalue, \(n(t)\) is the ridge-normalized interaction eigenvalue, and \(q(t)\) is the inverse-SNR of \(\mathbf{Q}(t)\) for shape abstraction. Since ridge normalization multiplies both eigenvalues by the same factor \(\gamma\), it does not change their ratio. Hence
\begin{align}
\alpha(t)
=
\frac{\lambda_S(t)-\lambda_{SC}(t)}
{\lambda_S(t)+\lambda_{SC}(t)}
=
\frac{s(t)-n(t)}
{s(t)+n(t)}
=
\frac{1-q(t)}{1+q(t)}
\label{eq:appendix-overshoot-alpha-q}
\end{align}
Thus increasing abstraction is equivalent to decreasing \(q(t)\), because
\begin{align}
\frac{d}{dq}
\left(
\frac{1-q}{1+q}
\right)
=
-\frac{2}{(1+q)^2}
<
0
\label{eq:appendix-overshoot-alpha-decreasing}
\end{align}

Define the following helper variables to simplify notation:
\begin{align}
b_S
&\coloneqq
\lambda_S^{(\boldsymbol{\Sigma}_x)}
\lambda_S^{(\boldsymbol{\Sigma}_y)},
\\
\bar{\nu}
&\coloneqq
\nu(\boldsymbol{\Sigma}_x)\nu(\boldsymbol{\Sigma}_y)
=
\frac{
\lambda_{SC}^{(\boldsymbol{\Sigma}_x)}
\lambda_{SC}^{(\boldsymbol{\Sigma}_y)}
}{
\lambda_S^{(\boldsymbol{\Sigma}_x)}
\lambda_S^{(\boldsymbol{\Sigma}_y)}
},
\\
\rho
&\coloneqq
\sqrt{\bar{\nu}}
\end{align}
The terminal abstraction law in \autoref{thm:fixed-point} can then be written as
\begin{align}
q_\infty
=
\rho,
\qquad
\alpha_\infty
=
\frac{1-\rho}{1+\rho}
\label{eq:appendix-overshoot-terminal-values}
\end{align}
Throughout this subsection we assume the non-degenerate signal-dominant regime
\begin{align}
0<\bar{\nu}<1,
\qquad
\text{equivalently}
\qquad
0<\rho<1
\end{align}
The degenerate case \(\bar{\nu}=0\) has \(\alpha_\infty=1\), so strict overshoot above the terminal abstraction is impossible because \(\alpha(t)\leq 1\). We also work in the interior \(s_0>0\), \(n_0>0\), where
\begin{align}
s_0\coloneqq s(0),
\qquad
n_0\coloneqq n(0),
\qquad
q_0\coloneqq q(0)=\frac{n_0}{s_0}
\end{align}
The assumption \(\alpha_0<\alpha_\infty\) is equivalent to
\begin{align}
q_0>\rho
\label{eq:appendix-overshoot-q0-above-rho}
\end{align}
We can see that the map \(q\mapsto (1-q)/(1+q)\) is strictly decreasing by \autoref{eq:appendix-overshoot-alpha-decreasing}.

\paragraph{Exact two-mode dynamics.}
By \autoref{theorem:implicit-solution-eigenvalue}, the ridge-normalized eigenvalues satisfy
\begin{align}
F(s(t))
&=
F(s_0)
+
2\gamma b_S t,
\\
F(n(t))
&=
F(n_0)
+
2\gamma \bar{\nu} b_S t,
\label{eq:appendix-overshoot-implicit-sn}
\end{align}
where
\begin{align}
F(a)
=
\frac{1}{2}a^2+2a+\log a,
\qquad
a>0
\end{align}
Notice the factor \(2\gamma b_S t\). This follows from the ridge-normalized ODE in \autoref{proposition:linear-odes}; differentiating \(F(\tilde{\lambda}_m(t))\) gives \(2\gamma \lambda_m^{(\boldsymbol{\Sigma}_x)}\lambda_m^{(\boldsymbol{\Sigma}_y)}\), not \(2\lambda_m^{(\boldsymbol{\Sigma}_x)}\lambda_m^{(\boldsymbol{\Sigma}_y)}/\gamma\).

Since
\begin{align}
F'(a)
=
a+2+\frac{1}{a}
=
\frac{(1+a)^2}{a}
>
0,
\end{align}
the function \(F\) is strictly increasing on \((0,\infty)\). Differentiating \autoref{eq:appendix-overshoot-implicit-sn} gives the explicit scalar ODEs
\begin{align}
\dot{s}
&=
2\gamma b_S
\frac{s}{(1+s)^2},
\\
\dot{n}
&=
2\gamma \bar{\nu} b_S
\frac{n}{(1+n)^2}
\label{eq:appendix-overshoot-s-n-odes}
\end{align}
In particular, \(s(t)\) and \(n(t)\) are strictly increasing, and \(s(t)\) may be used as a clock variable.

\begin{proposition}[Overshoot and non-monotonicity of abstraction]
\label{proposition:overshoot-appendix}
Assume \(0<\bar{\nu}<1\) and \(\alpha_0<\alpha_\infty\). Equivalently, assume \(0<\rho<1\) and \(q_0>\rho\). Then:
\begin{enumerate}[label=(\roman*)]
    \item There is a unique finite time \(T_{\mathrm{cross}}>0\) such that
    \begin{align}
    q(T_{\mathrm{cross}})=\rho,
    \qquad
    \alpha(T_{\mathrm{cross}})=\alpha_\infty
    \end{align}
    Moreover,
    \begin{align}
    q(t)>\rho
    \quad \text{for } 0\leq t<T_{\mathrm{cross}},
    \qquad
    q(t)<\rho
    \quad \text{for every finite } t>T_{\mathrm{cross}}
    \end{align}
    Equivalently,
    \begin{align}
    \alpha(t)<\alpha_\infty
    \quad \text{for } 0\leq t<T_{\mathrm{cross}},
    \qquad
    \alpha(t)>\alpha_\infty
    \quad \text{for every finite } t>T_{\mathrm{cross}}
    \end{align}

    \item There is a unique finite time \(t_*>T_{\mathrm{cross}}\) at which \(q(t)\) attains its global minimum. Consequently, \(\alpha(t)\) attains its unique global maximum at \(t_*\), and this maximum is strictly larger than \(\alpha_\infty\).
\end{enumerate}
Thus abstraction overshoots its terminal value and is non-monotone along training.
\end{proposition}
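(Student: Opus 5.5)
The plan is to study the sign of $\dot q$ in the $(s,n)$ phase plane, using the explicit ODEs \autoref{eq:appendix-overshoot-s-n-odes}. From them,
\begin{align}
\frac{\dot q}{q}=\frac{\dot n}{n}-\frac{\dot s}{s}
=2\gamma b_S\left[\frac{\rho^2}{(1+n)^2}-\frac{1}{(1+s)^2}\right],
\end{align}
so $\operatorname{sign}\dot q=\operatorname{sign} h(t)$, where
\begin{align}
h(t)\coloneqq \rho\,(1+s(t))-(1+n(t)).
\end{align}
Two lines in the plane organise everything. The first is the level set $q=\rho$, which is the line $n=\rho s$. The second is the nullcline $h=0$, which is the line $n=\rho s+\rho-1$. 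Since $\rho<1$, the nullcline lies strictly below the level set.

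\textbf{Part (i).} First, whenever $q\geq\rho$ we have $n\geq \rho s$, so $h\leq \rho-1<0$. Hence $\dot q<0$ strictly throughout the closed region $q\geq\rho$. Two consequences follow. At time $0$ we have $q_0>\rho$, so $q$ initially decreases. Moreover, at any time with $q=\rho$ the crossing is transversal and downward. So $q$ can cross the level $\rho$ at most once, and after that crossing it can never return to $q\geq\rho$.

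It remains to show that the crossing actually happens in finite time. For this I will use the late-time inversion \autoref{eq:appendix-F-inverse-asymptotic} together with \autoref{eq:appendix-overshoot-implicit-sn}. These give
\begin{align}
s(t)=2\sqrt{\gamma b_S t}-2+o(1),\qquad n(t)=2\rho\sqrt{\gamma b_S t}-2+o(1),
\end{align}
since $\sqrt{\bar\nu}=\rho$. Therefore
\begin{align}
n-\rho s\;\to\; -2(1-\rho)<0,
\end{align}
so $q<\rho$ for all large $t$. Combined with $q_0>\rho$ and continuity, this yields a unique $T_{\mathrm{cross}}$ with the stated sign pattern. The translation to $\alpha$ is then immediate by the monotonicity \autoref{eq:appendix-overshoot-alpha-decreasing}.

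\textbf{Part (ii).} Critical points of $q$ are exactly the zeros of $h$, so I will show $h$ has exactly one zero. Differentiating, $\dot h=\rho\dot s-\dot n$. At a zero of $h$ we have $(1+n)^2=\rho^2(1+s)^2$, so $\dot n=2\gamma b_S\, n/(1+s)^2$ there. Hence, at any zero of $h$,
\begin{align}
\dot h=\frac{2\gamma b_S}{(1+s)^2}\,(\rho s-n).
\end{align}
On the nullcline $n=\rho s+\rho-1<\rho s$, so $\dot h>0$ at every zero, and $h$ can only cross zero upward. This gives at most one zero.

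For existence, note that $h<0$ on $[0,T_{\mathrm{cross}}]$ by part (i). From the same asymptotics,
\begin{align}
h=\rho-1-(n-\rho s)\;\to\;\rho-1+2(1-\rho)=1-\rho>0 .
\end{align}
So there is a unique $t_*>T_{\mathrm{cross}}$ with $h(t_*)=0$. Then $q$ is strictly decreasing before $t_*$ and strictly increasing after it, so $t_*$ is the unique global minimizer of $q$. Since $q$ increases towards its limit $\rho$ after $t_*$, we get $q(t_*)<\rho$, and hence $\alpha(t_*)>\alpha_\infty$.

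\textbf{Main obstacle.} The delicate step is the constant-order asymptotics. The limits of $n-\rho s$ and of $h$ depend on the $-2$ correction in $F^{-1}(r)=\sqrt{2r}-2+O(\log r/\sqrt r)$, not only on the leading $\sqrt{2r}$ term. I therefore need to check that the error terms, including the shift by the constants $F(s_0)$ and $F(n_0)$ inside the argument of $F^{-1}$, really are $o(1)$. I will do this directly from $F(a)=\tfrac12(a+2)^2-2+\log a$, which gives $F^{-1}(r)+2=\sqrt{2r+4-2\log F^{-1}(r)}$. Expanding this shows the correction is $O(\log r/\sqrt r)$, uniformly under bounded shifts of $r$.
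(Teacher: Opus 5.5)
Your proof is correct, but it reaches the conclusion by a different mechanism from the paper's. Both arguments start from the same sign formula for $\dot q$ and the same nullcline $n=\rho(1+s)-1$. The paper then works along the exact conserved trajectory $F(n)-\rho^2F(s)=C$. Substituting $n=\rho s$ and $n=\rho(1+s)-1$ gives the scalar functions $H_\infty(s)=F(\rho s)-\rho^2F(s)$ and $G(s)=F(\rho(1+s)-1)-\rho^2F(s)$. Each is shown to be a strictly increasing bijection onto $\mathbb{R}$, so existence and uniqueness of $s_{\mathrm{cross}}$ and $s_*$ follow together from exact root counting, with no asymptotics. Existence and uniqueness of $T_{\mathrm{cross}}$ and $t_*$ then follow by using $s$ as a clock.

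You instead separate the two issues. Uniqueness comes from transversality: $\dot q<0$ on the whole closed region $q\ge\rho$, and $\dot h>0$ at every zero of $h$. This uses only $\rho<1$ and the form of the ODEs, and avoids computing $H_\infty'$ or $G'$. Existence comes from the constant-order expansion $n-\rho s\to-2(1-\rho)$. That makes everything hinge on the $-2$ correction in $F^{-1}$, with error terms uniform under the shifts $F(s_0)$ and $F(n_0)$. You correctly flag this, and your identity $F^{-1}(r)+2=\sqrt{2r+4-2\log F^{-1}(r)}$ settles it.

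The paper's exact route encodes the same information in the linear growth of $H_\infty$ and $G$. As a by-product it gives the explicit characterization $G(s_*)=C$ with $q(t_*)=\rho-(1-\rho)/s_*$. That characterization is what the paper reuses to prove that peak abstraction is strictly decreasing in the initialization scale $\kappa$. Your argument establishes the overshoot cleanly, but it would need that extra characterization for the follow-up result.
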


\begin{proof}
\medskip
\noindent\textbf{Step 1: the conserved trajectory in the \((s,n)\)-plane.}
Eliminating time from \autoref{eq:appendix-overshoot-implicit-sn} gives
\begin{align}
F(n(t))-\rho^2 F(s(t))
=
F(n_0)-\rho^2F(s_0)
\eqqcolon
C
\label{eq:appendix-overshoot-trajectory-constant}
\end{align}
This equation describes the trajectory in the \((s,n)\)-plane.

\medskip
\noindent\textbf{Step 2: the unique crossing of the terminal ratio.}
The condition \(\alpha(t)=\alpha_\infty\) is equivalent to \(q(t)=\rho\), or
\begin{align}
n(t)=\rho s(t)
\end{align}
Substituting this relation into the conserved trajectory equation
\autoref{eq:appendix-overshoot-trajectory-constant}, define
\begin{align}
H_\infty(s)
&\coloneqq
F(\rho s)-\rho^2F(s)
\end{align}
Expanding \(F\) gives
\begin{align}
H_\infty(s)
&=
2\rho(1-\rho)s
+
\log \rho
+
(1-\rho^2)\log s
\label{eq:appendix-overshoot-H-infty}
\end{align}
Therefore
\begin{align}
H_\infty'(s)
&=
2\rho(1-\rho)
+
\frac{1-\rho^2}{s}
>
0
\end{align}
Also,
\begin{align}
\lim_{s\downarrow 0}H_\infty(s)
=
-\infty,
\qquad
\lim_{s\to\infty}H_\infty(s)
=
+\infty
\end{align}
Hence \(H_\infty\) is a strictly increasing bijection from \((0,\infty)\) to \(\mathbb{R}\). It follows that the equation
\begin{align}
H_\infty(s)=C
\end{align}
has a unique positive solution. Call it \(s_{\mathrm{cross}}\).

We next show that this crossing happens after initialization. Since \(q_0>\rho\), we have
\begin{align}
n_0=q_0s_0>\rho s_0
\end{align}
Because \(F\) is strictly increasing,
\begin{align}
C
=
F(n_0)-\rho^2F(s_0)
>
F(\rho s_0)-\rho^2F(s_0)
=
H_\infty(s_0)
\end{align}
Since \(H_\infty\) is strictly increasing, this implies
\begin{align}
s_{\mathrm{cross}}>s_0
\end{align}
The crossing time is therefore finite and positive:
\begin{align}
T_{\mathrm{cross}}
=
\frac{
F(s_{\mathrm{cross}})-F(s_0)
}{
2\gamma b_S
}
>
0
\label{eq:appendix-overshoot-crossing-time}
\end{align}

It remains to identify which side of the terminal ratio the trajectory lies on before and after this crossing. For any time \(t\),
\begin{align}
q(t)>\rho
&\Longleftrightarrow
n(t)>\rho s(t)
\\
&\Longleftrightarrow
F(n(t))>F(\rho s(t))
\\
&\Longleftrightarrow
C>H_\infty(s(t))
\end{align}
Because \(s(t)\) is strictly increasing and \(H_\infty\) is strictly increasing, this inequality holds exactly when \(s(t)<s_{\mathrm{cross}}\), i.e. exactly before \(T_{\mathrm{cross}}\). Similarly, for every finite time \(t>T_{\mathrm{cross}}\), we have \(s(t)>s_{\mathrm{cross}}\), hence
\begin{align}
C<H_\infty(s(t)),
\end{align}
and therefore
\begin{align}
q(t)<\rho
\end{align}
Using the monotone relationship \autoref{eq:appendix-overshoot-alpha-q}, this proves the first claim.

\medskip
\noindent\textbf{Step 3: the unique minimum of \(q(t)\).}
We now show that the trajectory is non-monotone. From
\autoref{eq:appendix-overshoot-s-n-odes},
\begin{align}
\frac{\dot{q}}{q}
&=
\frac{\dot{n}}{n}
-
\frac{\dot{s}}{s}
\\
&=
2\gamma b_S
\left[
\frac{\rho^2}{(1+n)^2}
-
\frac{1}{(1+s)^2}
\right].
\label{eq:appendix-overshoot-qdot}
\end{align}
Since \(q>0\), the sign of \(\dot{q}\) is the sign of the bracketed quantity. Thus
\begin{align}
\dot{q}=0
&\Longleftrightarrow
\frac{\rho}{1+n}
=
\frac{1}{1+s}
\\
&\Longleftrightarrow
n
=
\rho(1+s)-1
\label{eq:appendix-overshoot-nullcline}
\end{align}
This is the nullcline of the ratio \(q(t)\). It is the curve in the \((s,n)\)-plane along which the inverse-SNR stops decreasing and starts increasing.

The nullcline has positive \(n\) only when
\begin{align}
s>s_{\mathrm{c}},
\qquad
s_{\mathrm{c}}
\coloneqq
\frac{1-\rho}{\rho}
\end{align}
On this domain, substitute the nullcline relation
\autoref{eq:appendix-overshoot-nullcline} into the conserved trajectory
\autoref{eq:appendix-overshoot-trajectory-constant} and define
\begin{align}
G(s)
&\coloneqq
F\!\left(\rho(1+s)-1\right)
-
\rho^2F(s),
\qquad
s>s_{\mathrm{c}}
\end{align}
We claim that \(G\) is strictly increasing. Differentiating gives
\begin{align}
G'(s)
&=
\rho
F'\!\left(\rho(1+s)-1\right)
-
\rho^2F'(s)
\end{align}
Using \(F'(a)=(1+a)^2/a\) and
\begin{align}
1+\rho(1+s)-1
=
\rho(1+s),
\end{align}
we obtain
\begin{align}
G'(s)
&=
\rho^2(1+s)^2
\left[
\frac{\rho}{\rho(1+s)-1}
-
\frac{1}{s}
\right]
\\
&=
\frac{
\rho^2(1+s)^2(1-\rho)
}{
s\left(\rho(1+s)-1\right)
}
>
0
\label{eq:appendix-overshoot-G-prime}
\end{align}
Moreover,
\begin{align}
\lim_{s\downarrow s_{\mathrm{c}}}G(s)
=
-\infty,
\end{align}
because the argument \(\rho(1+s)-1\) of \(F\) tends to \(0\) from above, and
\begin{align}
\lim_{a\downarrow 0}F(a)=-\infty
\end{align}
At the other endpoint,
\begin{align}
G(s)
=
\rho(1-\rho)s
+
(1-\rho^2)\log s
+
O(1),
\qquad
s\to\infty,
\end{align}
so
\begin{align}
\lim_{s\to\infty}G(s)
=
+\infty
\end{align}
Therefore, for every real value of the trajectory constant \(C\), there is a unique solution
\begin{align}
G(s_*)=C
\label{eq:appendix-overshoot-sstar}
\end{align}
This unique \(s_*\) is the unique intersection between the trajectory and the nullcline.

We now check that this intersection occurs after initialization. If \(s_0\leq s_{\mathrm{c}}\), then \(s_*>s_{\mathrm{c}}\geq s_0\). If \(s_0>s_{\mathrm{c}}\), then the nullcline value at initialization is positive, and
\begin{align}
n_0-\left(\rho(1+s_0)-1\right)
&=
s_0(q_0-\rho)
+
(1-\rho)
>
0,
\end{align}
where we used \(q_0>\rho\) and \(0<\rho<1\). Since \(F\) is strictly increasing,
\begin{align}
C
=
F(n_0)-\rho^2F(s_0)
>
F\!\left(\rho(1+s_0)-1\right)-\rho^2F(s_0)
=
G(s_0)
\end{align}
Because \(G\) is strictly increasing, this implies \(s_*>s_0\). Thus in all cases the nullcline intersection occurs after initialization. The corresponding time is
\begin{align}
t_*
=
\frac{
F(s_*)-F(s_0)
}{
2\gamma b_S
}
>
0
\label{eq:appendix-overshoot-tstar}
\end{align}

Finally, we determine the sign of \(\dot{q}\). From
\autoref{eq:appendix-overshoot-qdot},
\begin{align}
\dot{q}<0
&\Longleftrightarrow
\frac{\rho}{1+n}
<
\frac{1}{1+s}
\\
&\Longleftrightarrow
n>\rho(1+s)-1
\label{eq:appendix-overshoot-qdot-negative}
\end{align}
If \(s(t)<s_*\), then either \(s(t)\leq s_{\mathrm{c}}\), in which case \(\rho(1+s(t))-1\leq 0<n(t)\), or \(s_{\mathrm{c}}<s(t)<s_*\), in which case \(G(s(t))<C\). In the second case, the conserved trajectory gives
\begin{align}
F(n(t))
=
C+\rho^2F(s(t))
>
G(s(t))+\rho^2F(s(t))
=
F\!\left(\rho(1+s(t))-1\right)
\end{align}
Since \(F\) is strictly increasing,
\begin{align}
n(t)>\rho(1+s(t))-1
\end{align}
Thus \(\dot{q}(t)<0\) whenever \(s(t)<s_*\), equivalently whenever \(t<t_*\).

Similarly, if \(s(t)>s_*\), then \(G(s(t))>C\), so
\begin{align}
F(n(t))
<
F\!\left(\rho(1+s(t))-1\right),
\end{align}
and hence
\begin{align}
n(t)<\rho(1+s(t))-1
\end{align}
Therefore \(\dot{q}(t)>0\) whenever \(t>t_*\).

Thus \(q(t)\) strictly decreases on \([0,t_*)\), reaches its unique global minimum at \(t_*\), and strictly increases on \((t_*,\infty)\). At the minimum, the trajectory lies on the nullcline, so
\begin{align}
q(t_*)
&=
\frac{\rho(1+s_*)-1}{s_*}
\\
&=
\rho
-
\frac{1-\rho}{s_*}
\label{eq:appendix-overshoot-qstar}
\end{align}
Since \(s_*>s_{\mathrm{c}}=(1-\rho)/\rho\), this minimum is positive. Since \(s_*<\infty\), it is also strictly below \(\rho\):
\begin{align}
0<q(t_*)<\rho
\end{align}
Consequently \(q(t)\) crosses the terminal ratio \(\rho\) before it reaches its minimum, so \(T_{\mathrm{cross}}<t_*\).

Using the strictly decreasing relationship between \(\alpha\) and \(q\) from
\autoref{eq:appendix-overshoot-alpha-decreasing}, \(\alpha(t)\) strictly increases on \([0,t_*)\), reaches its unique global maximum at \(t_*\), and strictly decreases on \((t_*,\infty)\). Since \(q(t_*)<\rho\), this maximum satisfies
\begin{align}
\alpha(t_*)
=
\frac{1-q(t_*)}{1+q(t_*)}
>
\frac{1-\rho}{1+\rho}
=
\alpha_\infty
\end{align}
By \autoref{thm:fixed-point}, \(\alpha(t)\to\alpha_\infty\) as \(t\to\infty\). Combining this convergence with the crossing result above, \(\alpha(t)\) approaches \(\alpha_\infty\) from above for all sufficiently large finite times. This proves the proposition.
\end{proof}

\paragraph{Interpretation.}
The derivations above analyze the mechanism behind overshoot. In the early time regime, \autoref{appendix:asymptotics} shows that
\begin{align}
q(t)
\approx
q_0
\exp\!\left(
2\gamma b_S(\bar{\nu}-1)t
\right),
\end{align}
so \(q(t)\) initially decreases when \(\bar{\nu}<1\). This is the initial abstraction-improving phase: the signal mode grows faster than the interaction mode. However, the late-time dynamics do not preserve this initial exponential growth-rate comparison. Instead, the late-time asymptotics force
\begin{align}
q(t)
=
\frac{n(t)}{s(t)}
\longrightarrow
\sqrt{\bar{\nu}}
=
\rho,
\end{align}
as proved in \autoref{thm:fixed-point}. The trajectory therefore undershoots the terminal inverse-SNR \(\rho\) before returning to it. Since abstraction is a decreasing function of inverse-SNR, this undershoot of \(q(t)\) is exactly the overshoot of \(\alpha(t)\) above \(\alpha_\infty\).

\clearpage
\subsection{Initialization scale} \label{appendix:initialization-scale}

This section characterizes the time at which the abstraction trajectory peaks and proves \autoref{theorem:initialization-overshoot}. Define
\begin{align}
s(t)
&\coloneqq
\tilde{\lambda}_{S}(t),
&
n(t)
&\coloneqq
\tilde{\lambda}_{SC}(t),
&
q(t)
&\coloneqq
\frac{n(t)}{s(t)}
\end{align}
We also write
\begin{align}
\bar{\nu}
&\coloneqq
\nu(\boldsymbol{\Sigma}_x)\nu(\boldsymbol{\Sigma}_y),
&
\rho
&\coloneqq
\sqrt{\bar{\nu}}
\end{align}
Thus \(\rho\) is the terminal value of \(q(t)\), and
\begin{align}
\alpha_\infty
=
\frac{1-\rho}{1+\rho}
\end{align}
Throughout this subsection, we work in the non-degenerate signal-dominated case
\begin{align}
0<\rho<1
\end{align}

\paragraph{Characterizing the peak abstraction time.}
Fix the initial inverse-SNR
\begin{align}
q_0
\coloneqq
\nu_{\mathbf{Q},0}
=
\frac{n_0}{s_0}
\end{align}
The initialization family in \autoref{theorem:initialization-overshoot} is
\begin{align}
s_0=\kappa,
\qquad
n_0=\kappa q_0
\end{align}
The assumption \(\alpha_0<\alpha_\infty\) is equivalent to
\begin{align}
q_0>\rho,
\end{align}
because \(q\mapsto (1-q)/(1+q)\) is strictly decreasing on \((0,\infty)\).

By \autoref{theorem:implicit-solution-eigenvalue}, the ridge-normalized eigenvalues satisfy
\begin{align}
F(s(t))
&=
2\gamma b_S t
+
F(\kappa),
\\
F(n(t))
&=
2\gamma \rho^2 b_S t
+
F(\kappa q_0),
\end{align}
where
\begin{align}
b_S
&\coloneqq
\lambda_S^{(\boldsymbol{\Sigma}_x)}\lambda_S^{(\boldsymbol{\Sigma}_y)},
&
F(a)
&=
\frac{1}{2}a^2+2a+\log a
\end{align}
The factor \(2\gamma b_S\) follows from the ridge-normalized variable \(\tilde\lambda_m=\gamma\lambda_m\) under the ridge convention used in \autoref{proposition:linear-odes}.
Since
\begin{align}
F'(a)
=
a+2+\frac{1}{a}
=
\frac{(1+a)^2}{a},
\end{align}
differentiating the implicit solutions gives
\begin{align}
\dot{s}
&=
2\gamma b_S
\frac{s}{(1+s)^2},
\\
\dot{n}
&=
2\gamma \rho^2 b_S
\frac{n}{(1+n)^2}
\end{align}
Therefore
\begin{align}
\frac{\dot{q}}{q}
&=
\frac{\dot{n}}{n}
-
\frac{\dot{s}}{s}
\\
&=
2\gamma b_S
\left[
\frac{\rho^2}{(1+n)^2}
-
\frac{1}{(1+s)^2}
\right].
\label{eq:qdot-initialization-proof}
\end{align}
Because \(s,n>0\), the condition \(\dot{q}=0\) is equivalent to
\begin{align}
\frac{\rho}{1+n}
=
\frac{1}{1+s},
\end{align}
or
\begin{align}
n
=
\rho(1+s)-1
\label{eq:q-critical-curve}
\end{align}
This curve is the nullcline of \(q(t)\). It is the curve along which the ratio \(q(t)=n(t)/s(t)\) stops decreasing and starts increasing.

Next, subtracting \(\rho^2\) times the implicit equation for \(s(t)\) from the implicit equation for \(n(t)\) eliminates time:
\begin{align}
F(n(t))-\rho^2F(s(t))
=
C_\kappa,
\label{eq:trajectory-constant-kappa}
\end{align}
where
\begin{align}
C_\kappa
\coloneqq
F(\kappa q_0)-\rho^2F(\kappa)
\end{align}
This is the trajectory equation in the \((s,n)\)-plane. To find where the trajectory intersects the nullcline \autoref{eq:q-critical-curve}, substitute
\begin{align}
n=\rho(1+s)-1
\end{align}
into \autoref{eq:trajectory-constant-kappa}. This gives the scalar function
\begin{align}
G(s)
\coloneqq
F\!\left(\rho(1+s)-1\right)
-
\rho^2F(s),
\qquad
s>\frac{1-\rho}{\rho}
\end{align}
The lower endpoint is the point at which the nullcline first has positive \(n\).

We now show that \(G\) is strictly increasing on this domain. Using
\begin{align}
1+\left(\rho(1+s)-1\right)
=
\rho(1+s),
\end{align}
we obtain
\begin{align}
G'(s)
&=
\rho
F'\!\left(\rho(1+s)-1\right)
-
\rho^2F'(s)
\\
&=
\rho^2(1+s)^2
\left[
\frac{\rho}{\rho(1+s)-1}
-
\frac{1}{s}
\right]
\\
&=
\frac{
\rho^2(1+s)^2(1-\rho)
}{
s\left(\rho(1+s)-1\right)
}
>
0
\end{align}
Moreover,
\begin{align}
\lim_{s\downarrow (1-\rho)/\rho}G(s)
=
-\infty,
\qquad
\lim_{s\to\infty}G(s)
=
+\infty
\end{align}
The second limit follows, for example, from the expansion
\begin{align}
G(s)
=
\rho(1-\rho)s
+
(1-\rho^2)\log s
+
O(1),
\qquad
s\to\infty
\end{align}
Therefore, for every \(\kappa>0\), there is a unique solution \(s_*(\kappa)\) to
\begin{align}
G(s_*(\kappa))=C_\kappa
\label{eq:sstar-definition}
\end{align}
At this point, the corresponding interaction coordinate is
\begin{align}
n_*(\kappa)
=
\rho(1+s_*(\kappa))-1,
\end{align}
and hence
\begin{align}
q_*(\kappa)
\coloneqq
\frac{n_*(\kappa)}{s_*(\kappa)}
=
\rho
-
\frac{1-\rho}{s_*(\kappa)}
\label{eq:qstar-kappa}
\end{align}

It remains to justify that this point is indeed the unique global minimum of \(q(t)\). From \autoref{eq:qdot-initialization-proof},
\begin{align}
\dot{q}<0
&\Longleftrightarrow
\frac{\rho}{1+n}<\frac{1}{1+s}
\\
&\Longleftrightarrow
n>\rho(1+s)-1
\label{eq:qdot-negative-condition}
\end{align}
At initialization,
\begin{align}
1+n_0-\rho(1+s_0)
&=
1+\kappa q_0-\rho(1+\kappa)
\\
&=
1-\rho+\kappa(q_0-\rho)
>
0,
\end{align}
so \(q(t)\) initially decreases.

The unique solution \(s_*(\kappa)\) occurs after initialization. To see this, if \(\kappa\leq (1-\rho)/\rho\), then \(s_*(\kappa)>(1-\rho)/\rho\geq \kappa\). If instead \(\kappa>(1-\rho)/\rho\), then the nullcline value \(\rho(1+\kappa)-1\) is positive, and the inequality above implies
\begin{align}
n_0>\rho(1+\kappa)-1
\end{align}
Since \(F\) is strictly increasing,
\begin{align}
C_\kappa
&=
F(n_0)-\rho^2F(\kappa)
\\
&>
F\!\left(\rho(1+\kappa)-1\right)-\rho^2F(\kappa)
\\
&=
G(\kappa)
\end{align}
Because \(G\) is strictly increasing and \(G(s_*(\kappa))=C_\kappa\), this also implies \(s_*(\kappa)>\kappa\).

Since \(s(t)\) is strictly increasing in time, the unique value \(s_*(\kappa)\) corresponds to the unique time
\begin{align}
t_*(\kappa)
=
\frac{1}{2\gamma b_S}
\left[
F(s_*(\kappa))-F(\kappa)
\right]
\label{eq:tstar-kappa}
\end{align}
This is the unique time at which the trajectory intersects the nullcline \(\dot q=0\).

Finally, the sign of \(\dot q\) changes from negative to positive at this time. If \(s(t)<s_*(\kappa)\), then either \(s(t)\leq (1-\rho)/\rho\), in which case \(\rho(1+s(t))-1\leq 0<n(t)\), or \(s(t)>(1-\rho)/\rho\), in which case \(G(s(t))<C_\kappa\). In both cases,
\begin{align}
n(t)>\rho(1+s(t))-1,
\end{align}
and therefore \(\dot q(t)<0\). Similarly, if \(s(t)>s_*(\kappa)\), then \(G(s(t))>C_\kappa\), which implies
\begin{align}
n(t)<\rho(1+s(t))-1,
\end{align}
and therefore \(\dot q(t)>0\). Thus \(q(t)\) decreases for \(t<t_*(\kappa)\) and increases for \(t>t_*(\kappa)\). Consequently,
\begin{align}
q_{\min}(\kappa)=q_*(\kappa)
\end{align}
Because
\begin{align}
\frac{d}{dq}
\frac{1-q}{1+q}
=
-\frac{2}{(1+q)^2}
<
0,
\end{align}
the abstraction \(\alpha(t)=(1-q(t))/(1+q(t))\) is maximized at the same time \(t_*(\kappa)\). Therefore
\begin{align}
\alpha_{\max}(\kappa)
=
\frac{1-q_*(\kappa)}{1+q_*(\kappa)}
\label{eq:alphamax-qstar}
\end{align}

\paragraph{Proof of \autoref{theorem:initialization-overshoot}.}
We now prove the theorem using the peak characterization above. Recall from \autoref{eq:alphamax-qstar} that
\begin{align}
\alpha_{\max}(\kappa)
=
\frac{1-q_*(\kappa)}{1+q_*(\kappa)},
\end{align}
where
\begin{align}
q_*(\kappa)
=
\rho
-
\frac{1-\rho}{s_*(\kappa)}
\end{align}
and \(s_*(\kappa)\) is the unique solution of
\begin{align}
G(s_*(\kappa))=C_\kappa,
\end{align}
with
\begin{align}
C_\kappa
=
F(\kappa q_0)-\rho^2F(\kappa)
\end{align}

We first show that \(\alpha_{\max}(\kappa)\) is strictly decreasing in \(\kappa\). Differentiate \(C_\kappa\):
\begin{align}
\frac{dC_\kappa}{d\kappa}
&=
q_0F'(\kappa q_0)-\rho^2F'(\kappa)
\\
&=
\kappa(q_0^2-\rho^2)
+
2(q_0-\rho^2)
+
\frac{1-\rho^2}{\kappa}
>
0
\end{align}
The strict positivity follows from \(q_0>\rho\) and \(0<\rho<1\). Since \(G\) is strictly increasing, the identity
\begin{align}
G(s_*(\kappa))=C_\kappa
\end{align}
implies that \(s_*(\kappa)\) is strictly increasing in \(\kappa\). Hence
\begin{align}
q_*(\kappa)
=
\rho
-
\frac{1-\rho}{s_*(\kappa)}
\end{align}
is also strictly increasing in \(\kappa\). Finally, since
\begin{align}
\frac{d}{dq}
\frac{1-q}{1+q}
=
-\frac{2}{(1+q)^2}
<
0,
\end{align}
it follows that
\begin{align}
\alpha_{\max}(\kappa)
=
\frac{1-q_*(\kappa)}{1+q_*(\kappa)}
\end{align}
is strictly decreasing in \(\kappa\).

Next we compute the limiting behavior as we take the rich \((\kappa\downarrow0)\) and lazy \((\kappa\to\infty)\) limits. As \(\kappa\downarrow0\), we have
\begin{align}
C_\kappa
&=
F(\kappa q_0)-\rho^2F(\kappa)
\\
&=
(1-\rho^2)\log \kappa
+
\log q_0
+
o(1)
\to
-\infty
\end{align}
Since \(G\) is strictly increasing and tends to \(-\infty\) at the lower endpoint \((1-\rho)/\rho\), this implies
\begin{align}
s_*(\kappa)
\downarrow
\frac{1-\rho}{\rho}
\end{align}
Therefore
\begin{align}
q_*(\kappa)
=
\rho
-
\frac{1-\rho}{s_*(\kappa)}
\to
0,
\end{align}
and hence
\begin{align}
\lim_{\kappa\downarrow0}
\alpha_{\max}(\kappa)
=
\frac{1-0}{1+0}
=
1
\end{align}

On the other hand, as \(\kappa\to\infty\),
\begin{align}
C_\kappa
&=
F(\kappa q_0)-\rho^2F(\kappa)
\\
&=
\frac{1}{2}\kappa^2(q_0^2-\rho^2)
+
O(\kappa)
\to
+\infty
\end{align}
Since \(G(s)\to+\infty\) only as \(s\to\infty\), we have
\begin{align}
s_*(\kappa)
\to
\infty
\end{align}
Therefore
\begin{align}
q_*(\kappa)
=
\rho
-
\frac{1-\rho}{s_*(\kappa)}
\to
\rho,
\end{align}
and thus
\begin{align}
\lim_{\kappa\to\infty}
\alpha_{\max}(\kappa)
=
\frac{1-\rho}{1+\rho}
=
\alpha_\infty
\end{align}
This completes the proof of all the statements in \autoref{theorem:initialization-overshoot}.
\qed

\paragraph{Interpretation.}
The proof reveals why initialization scale controls the peak abstraction but not the terminal abstraction. The terminal value is determined by the asymptotic ratio
\begin{align}
q(t)
\to
\rho
=
\sqrt{\bar{\nu}}
=
\sqrt{\nu(\boldsymbol{\Sigma}_x)\nu(\boldsymbol{\Sigma}_y)},
\end{align}
which depends only on input and target geometry. By contrast, the maximum abstraction is controlled by the lowest value reached by \(q(t)\), and this depends on the trajectory constant
\begin{align}
C_\kappa
=
F(\kappa q_0)-\rho^2F(\kappa)
\end{align}
For small \(\kappa\), the logarithmic part of \(F\) dominates, forcing the minimum of \(q(t)\) close to zero and hence pushing \(\alpha_{\max}(\kappa)\) close to perfect abstraction. For large \(\kappa\), the quadratic part of \(F\) dominates, and the minimum of \(q(t)\) occurs only after \(q(t)\) has dipped by a vanishing amount below its terminal ratio \(\rho\). This is the lazy-limit behavior in which overshoot above the terminal abstraction disappears asymptotically.

\clearpage
\subsection{Rich-limit residence-time metastability} \label{appendix:rich-limit-residence}

In this section, we analyze the residence time of the trajectory in the near-perfect abstraction band and prove \autoref{proposition:rich-limit-metastability}. As before, define
\begin{align}
s(t)
&\coloneqq
\tilde{\lambda}_S(t),
&
n(t)
&\coloneqq
\tilde{\lambda}_{SC}(t),
&
q(t)
&\coloneqq
\frac{n(t)}{s(t)}
\end{align}
Recall that shape abstraction is
\begin{align}
\alpha(t)=\frac{1-q(t)}{1+q(t)}
\end{align}
Let
\begin{align}
\bar{\nu}
&\coloneqq
\nu(\boldsymbol{\Sigma}_x)\nu(\boldsymbol{\Sigma}_y)
=
\frac{\lambda_{SC}^{(\boldsymbol{\Sigma}_x)}\lambda_{SC}^{(\boldsymbol{\Sigma}_y)}}
{\lambda_S^{(\boldsymbol{\Sigma}_x)}\lambda_S^{(\boldsymbol{\Sigma}_y)}},
\\
\rho
&\coloneqq
\sqrt{\bar{\nu}},
\\
b_S
&\coloneqq
\lambda_S^{(\boldsymbol{\Sigma}_x)}\lambda_S^{(\boldsymbol{\Sigma}_y)}
\end{align}
Under \autoref{setting:signal-dominated}, \(0<\rho<1\). The initialization family from \autoref{theorem:initialization-overshoot} is
\begin{align}
s(0)=\kappa,
\qquad
n(0)=\kappa q_0,
\qquad
q_0\coloneqq\nu_{\mathbf{Q},0}
\end{align}
Moreover, \(\alpha_0<\alpha_\infty\) is equivalent to \(q_0>\rho\), since
\begin{align}
\alpha_\infty=\frac{1-\rho}{1+\rho}
\end{align}

Fix \(0<\varepsilon<1-\alpha_\infty\), and define
\begin{align}
r_\varepsilon
\coloneqq
\frac{\varepsilon}{2-\varepsilon}
\end{align}
The condition \(\alpha(t)\ge 1-\varepsilon\) is equivalent to
\begin{align}
q(t)\le r_\varepsilon
\end{align}
Also, \(0<\varepsilon<1-\alpha_\infty\) is equivalent to \(r_\varepsilon<\rho\). Therefore
\begin{align}
q_0>\rho>r_\varepsilon,
\end{align}
so the trajectory begins outside the near-perfect band. Since \(q(t)\to\rho>r_\varepsilon\), it also eventually leaves the band.

By \autoref{theorem:implicit-solution-eigenvalue}, the ridge-normalized eigenvalues satisfy
\begin{align}
F(s(t))
&=
2\gamma b_S t+F(\kappa),
\\
F(n(t))
&=
2\gamma \rho^2 b_S t+F(\kappa q_0),
\end{align}
where
\begin{align}
F(a)=\frac{1}{2}a^2+2a+\log a
\end{align}
At a crossing of the threshold \(q(t)=r_\varepsilon\), we have \(n=r_\varepsilon s\). Eliminating \(t\) gives
\begin{align}
F(r_\varepsilon s)-\rho^2 F(s)
=
F(\kappa q_0)-\rho^2 F(\kappa)
\end{align}
Define
\begin{align}
H_\varepsilon(s)
\coloneqq
F(r_\varepsilon s)-\rho^2 F(s)
\end{align}
Then threshold crossings are exactly the positive solutions of
\begin{align}
H_\varepsilon(s)=C_\kappa,
\qquad
C_\kappa\coloneqq F(\kappa q_0)-\rho^2F(\kappa)
\end{align}

We first note that, for sufficiently small \(\kappa\), there are exactly two threshold crossings. Expanding \(H_\varepsilon\),
\begin{align}
H_\varepsilon(s)
&=
\frac{1}{2}(r_\varepsilon^2-\rho^2)s^2
+2(r_\varepsilon-\rho^2)s
+\log r_\varepsilon
+(1-\rho^2)\log s
\end{align}
Since \(r_\varepsilon<\rho\), we have \(H_\varepsilon(s)\to-\infty\) as \(s\to\infty\). Since \(1-\rho^2>0\), we also have \(H_\varepsilon(s)\to-\infty\) as \(s\downarrow0\). Furthermore,
\begin{align}
s H_\varepsilon'(s)
=
-(\rho^2-r_\varepsilon^2)s^2
+2(r_\varepsilon-\rho^2)s
+(1-\rho^2)
\end{align}
This quadratic has exactly one positive root: its discriminant is positive, and the product of its two roots is negative because its leading coefficient is negative while its constant term is positive. Hence \(H_\varepsilon\) increases once and then decreases once. Also,
\begin{align}
C_\kappa
=
(1-\rho^2)\log \kappa+
\log q_0+
o(1)
\to
-\infty
\qquad
\text{as }\kappa\downarrow0
\end{align}
Therefore, for sufficiently small \(\kappa\), the equation \(H_\varepsilon(s)=C_\kappa\) has two positive solutions. Denote them by
\begin{align}
s_\varepsilon^{\mathrm{in}}(\kappa)
\qquad\text{and}\qquad
s_\varepsilon^{\mathrm{out}}(\kappa),
\end{align}
with \(s_\varepsilon^{\mathrm{in}}(\kappa)<s_\varepsilon^{\mathrm{out}}(\kappa)\). Because \(F\) is strictly increasing and \(q_0>r_\varepsilon\), we have \(H_\varepsilon(\kappa)<C_\kappa\), so the first root occurs after initialization. Since \(s(t)\) is increasing in \(t\), these two roots correspond to the entrance and exit times from the near-perfect abstraction band. Thus
\begin{align}
T_\varepsilon^{\mathrm{in}}(\kappa)
&=
\frac{1}{2\gamma b_S}
\left[
F\!\left(s_\varepsilon^{\mathrm{in}}(\kappa)\right)-F(\kappa)
\right]
\\
T_\varepsilon^{\mathrm{out}}(\kappa)
&=
\frac{1}{2\gamma b_S}
\left[
F\!\left(s_\varepsilon^{\mathrm{out}}(\kappa)\right)-F(\kappa)
\right]
\end{align}
The residence time is
\begin{align}
\mathbf{R}_\varepsilon(\kappa)
=
T_\varepsilon^{\mathrm{out}}(\kappa)
-
T_\varepsilon^{\mathrm{in}}(\kappa)
\end{align}

We now estimate the entrance time. The entrance root satisfies \(s_\varepsilon^{\mathrm{in}}(\kappa)\to0\) as \(\kappa\downarrow0\). Using \(F(a)=\log a+O(a)\) as \(a\downarrow0\), the crossing equation gives
\begin{align}
(1-\rho^2)\log s_\varepsilon^{\mathrm{in}}(\kappa)
+
\log r_\varepsilon
+
o(1)
=
(1-\rho^2)\log \kappa
+
\log q_0
+
o(1)
\end{align}
Therefore
\begin{align}
\log\frac{s_\varepsilon^{\mathrm{in}}(\kappa)}{\kappa}
=
\frac{1}{1-\rho^2}
\log\frac{q_0}{r_\varepsilon}
+
o(1)
\end{align}
Substituting into the expression for \(T_\varepsilon^{\mathrm{in}}(\kappa)\), again using \(F(a)=\log a+O(a)\), yields
\begin{align}
T_\varepsilon^{\mathrm{in}}(\kappa)
=
\frac{1}{2\gamma b_S(1-\rho^2)}
\log\frac{q_0}{r_\varepsilon}
+
o(1)
\end{align}
In particular,
\begin{align}
T_\varepsilon^{\mathrm{in}}(\kappa)=O(1)
\end{align}

We next estimate the exit time. The exit root satisfies \(s_\varepsilon^{\mathrm{out}}(\kappa)\to\infty\). Let \(L_\kappa\coloneqq\log(1/\kappa)\). Since
\begin{align}
C_\kappa
=
F(\kappa q_0)-\rho^2 F(\kappa)
=
-(1-\rho^2)L_\kappa+O(1),
\end{align}
the crossing equation gives
\begin{align}
-\frac{1}{2}(\rho^2-r_\varepsilon^2)
\left(s_\varepsilon^{\mathrm{out}}(\kappa)\right)^2
+
O\!\left(s_\varepsilon^{\mathrm{out}}(\kappa)+\log s_\varepsilon^{\mathrm{out}}(\kappa)\right)
=
-(1-\rho^2)L_\kappa+O(1)
\end{align}
It follows that
\begin{align}
\left(s_\varepsilon^{\mathrm{out}}(\kappa)\right)^2
=
\frac{2(1-\rho^2)}{\rho^2-r_\varepsilon^2}L_\kappa
+
O\!\left(\sqrt{L_\kappa}\right)
\end{align}
Substituting this into
\begin{align}
T_\varepsilon^{\mathrm{out}}(\kappa)
=
\frac{1}{2\gamma b_S}
\left[
F\!\left(s_\varepsilon^{\mathrm{out}}(\kappa)\right)-F(\kappa)
\right],
\end{align}
and using
\begin{align}
F\!\left(s_\varepsilon^{\mathrm{out}}(\kappa)\right)
&=
\frac{1}{2}
\left(s_\varepsilon^{\mathrm{out}}(\kappa)\right)^2
+
O\!\left(\sqrt{L_\kappa}\right),
\\
-F(\kappa)
&=
L_\kappa+O(1),
\end{align}
we obtain
\begin{align}
T_\varepsilon^{\mathrm{out}}(\kappa)
=
\frac{1}{2\gamma b_S}
\frac{1-r_\varepsilon^2}{\rho^2-r_\varepsilon^2}
\log\frac{1}{\kappa}
+
O\!\left(\sqrt{\log\frac{1}{\kappa}}\right)
\end{align}
Since the entrance time is \(O(1)\), the residence time obeys
\begin{align}
\mathbf{R}_\varepsilon(\kappa)
&=
T_\varepsilon^{\mathrm{out}}(\kappa)
-
T_\varepsilon^{\mathrm{in}}(\kappa)
\\
&=
\frac{1}{2\gamma b_S}
\frac{1-r_\varepsilon^2}{\rho^2-r_\varepsilon^2}
\log\frac{1}{\kappa}
+
O\!\left(\sqrt{\log\frac{1}{\kappa}}\right)
\end{align}
Thus
\begin{align}
\mathbf{R}_\varepsilon(\kappa)=\Theta\!\left(\log\frac{1}{\kappa}\right),
\end{align}
and in particular \(\mathbf{R}_\varepsilon(\kappa)\to\infty\) as \(\kappa\downarrow0\). This proves \autoref{proposition:rich-limit-metastability}.
\qed

\clearpage
\subsection{Depth dynamics} \label{appendix:depth-dynamics}

This appendix derives the depth results stated in \autoref{section:depth}. The main goal is to prove \autoref{theorem:layerwise-interp}: first, that balanced deep linear networks interpolate layerwise between the data and the final hidden layer in \(\arctanh\)-space, and second, that the terminal abstraction of the final hidden layer interpolates between the data and the targets in \(\arctanh\)-space.
We then prove that terminal abstraction of the final hidden layer is monotonically increasing with network depth, as discussed in \autoref{eq:deep-terminal-abstraction}.

\paragraph{Preliminaries.}
Define the following for an \(L\)-hidden-layer linear network, where \(\ell\in\{1,\ldots,L\}\) indexes the hidden layers:
\begin{align}
\mathbf{Q}^{(0)} \coloneqq \boldsymbol{\Sigma}_x,
\qquad
\mathbf{Q}^{(\ell)}(t) \coloneqq \mathbf{Z}^{(\ell)}(t)^\top \mathbf{Z}^{(\ell)}(t),
\qquad
\ell \in \{1,\ldots,L\}
\end{align}
When no layer superscript is shown, \(\lambda_m(t)\) denotes the final hidden-layer eigenvalue \(\lambda_m^{(L)}(t)\). We also write
\begin{align}
x_m \coloneqq \lambda_m^{(\boldsymbol{\Sigma}_x)}=\lambda_m^{(0)},
\qquad
y_m \coloneqq \lambda_m^{(\boldsymbol{\Sigma}_y)}
\end{align}
For any 2FS kernel \(A\), recall that
\begin{align}
\nu(A)
=
\frac{\lambda_{SC}^{(A)}}{\lambda_S^{(A)}},
\qquad
\alpha_A
=
\frac{1-\nu(A)}{1+\nu(A)}
\end{align}
Whenever \(0<\nu(A)<\infty\), a useful reparameterization is obtained by applying \(\arctanh\):
\begin{align}
\arctanh \alpha_A
&=
\frac{1}{2}
\log
\frac{1+\alpha_A}{1-\alpha_A}
\\
&=
\frac{1}{2}
\log
\frac{1}{\nu(A)}
\\
&=
\frac{1}{2}
\log
\frac{\lambda_S^{(A)}}{\lambda_{SC}^{(A)}}
\label{eq:appendix-arctanh-log-snr}
\end{align}
Thus \(\arctanh \alpha\) is one half of the log SNR. This is why the depth result takes a particularly simple linear form in \(\arctanh\alpha\)-space. Boundary cases with \(\nu(A)=0\) or \(\nu(A)=\infty\) are understood by continuity whenever the corresponding limit is finite or infinite in the extended real line.

\paragraph{Deriving the balanced deep eigenvalue ODE.}
We first derive the scalar eigenvalue dynamics for the final hidden layer. Let \(g_{j,m}(t)\) denote the scalar gain applied by hidden layer \(j\) to mode \(m \in\{I,S,C,SC,G\}\). Then the final-layer eigenvalue is
\begin{align}
\lambda_m(t)
=
x_m
\prod_{j=1}^{L} g_{j,m}(t)^2 
\label{eq:appendix-final-eigenvalue-product}
\end{align}
The reduced loss from \autoref{assumption:readout} is
\begin{align}
\Lc^*(\mathbf{Q})
=
\frac{1}{2}
\Tr\!
\left[
\boldsymbol{\Sigma}_y (\mathbf{I}+\gamma \mathbf{Q})^{-1}
\right]
\end{align}
Therefore, in the common 2FS eigenbasis,
\begin{align}
\frac{\partial \Lc^*}{\partial \lambda_m}
=
-
\frac{1}{2}
\frac{\gamma y_m}{(1+\gamma\lambda_m)^2}
\end{align}
It is convenient to define the positive effective target gain
\begin{align}
\mu_m(\lambda_m)
\coloneqq
-2
\frac{\partial \Lc^*}{\partial \lambda_m}
=
\frac{
\gamma y_m
}{
(1+\gamma\lambda_m)^2
}
\label{eq:appendix-depth-effective-target-gain}
\end{align}
This is the same spectral filter as in \autoref{eq:appendix-M-filter}. In particular,
\begin{align}
\mu_m(\lambda_m)
\sim
\frac{y_m}{\gamma\lambda_m^2}
\qquad
\text{as}
\qquad
\lambda_m\to\infty
\end{align}
Differentiating \autoref{eq:appendix-final-eigenvalue-product} with respect to \(g_{j,m}\) gives
\begin{align}
\frac{\partial \lambda_m}{\partial g_{j,m}}
=
\frac{2\lambda_m}{g_{j,m}}
\end{align}
Therefore gradient flow gives
\begin{align}
\dot g_{j,m}
&=
-
\frac{\partial \Lc^*}{\partial g_{j,m}}
\\
&=
-
\frac{\partial \Lc^*}{\partial \lambda_m}
\frac{\partial \lambda_m}{\partial g_{j,m}}
\\
&=
\mu_m(\lambda_m)
\frac{\lambda_m}{g_{j,m}}
\end{align}
Now differentiate \(\lambda_m\):
\begin{align}
\dot \lambda_m
&=
\sum_{j=1}^{L}
\frac{\partial \lambda_m}{\partial g_{j,m}}
\dot g_{j,m}
\\
&=
\sum_{j=1}^{L}
\frac{2\lambda_m}{g_{j,m}}
\mu_m(\lambda_m)
\frac{\lambda_m}{g_{j,m}}
\\
&=
2\mu_m(\lambda_m)\lambda_m^2
\sum_{j=1}^{L}
\frac{1}{g_{j,m}^2}
\label{eq:appendix-depth-ode-before-balancing}
\end{align}
Under \autoref{setting:balanced-depth}, all hidden-layer gains for mode \(m\) have the same squared magnitude:
\begin{align}
g_{1,m}^2
=
\cdots
=
g_{L,m}^2
=
u_m^2
\end{align}
Since \(\lambda_m=x_m u_m^{2L}\), we have
\begin{align}
u_m^2
=
\left(
\frac{\lambda_m}{x_m}
\right)^{1/L}
\end{align}
Substituting this into \autoref{eq:appendix-depth-ode-before-balancing} yields
\begin{align}
\dot \lambda_m
&=
2L
\mu_m(\lambda_m)
\lambda_m^2
\left(
\frac{x_m}{\lambda_m}
\right)^{1/L}
\\
&=
2L
\gamma
y_m
x_m^{1/L}
\frac{
\lambda_m^{2-\frac{1}{L}}
}{
(1+\gamma\lambda_m)^2
}
\label{eq:appendix-deep-scalar-ode}
\end{align}
Equivalently, restoring the original notation,
\begin{align}
\dot \lambda_m^{(L)}
=
2L\gamma
\lambda_m^{(\boldsymbol{\Sigma}_y)}
\left(
\lambda_m^{(0)}
\right)^{1/L}
\frac{
\left(\lambda_m^{(L)}\right)^{2-\frac{1}{L}}
}{
\left(1+\gamma\lambda_m^{(L)}\right)^2
}
\end{align}
For \(L=1\), this reduces to the shallow scalar ODE in \autoref{proposition:linear-odes}.

\paragraph{Exact implicit solution of the deep scalar ODE.}
Although the terminal abstraction law can be obtained directly from the large-\(\lambda_m\) asymptotics, it is helpful to record the exact implicit solution. Define the ridge-normalized eigenvalue
\begin{align}
a_m(t)
\coloneqq
\gamma\lambda_m(t)
\end{align}
Then \autoref{eq:appendix-deep-scalar-ode} becomes
\begin{align}
\dot a_m
=
2L
\gamma^{1/L}
x_m^{1/L}
y_m
\frac{
a_m^{2-\frac{1}{L}}
}{
(1+a_m)^2
}
\label{eq:appendix-deep-normalized-ode}
\end{align}
For \(L\geq 1\), define \(G_L:(0,\infty)\to\mathbb{R}\) by
\begin{align}
G_L(a)
\coloneqq
\begin{cases}
\frac{1}{2}a^2+2a+\log a,
&
L=1,
\\[6pt]
\frac{L}{L+1}a^{1+\frac{1}{L}}
+
2L a^{\frac{1}{L}}
-
\frac{L}{L-1}a^{-\frac{L-1}{L}},
&
L>1
\end{cases}
\label{eq:appendix-depth-implicit-antiderivative}
\end{align}
A direct differentiation gives
\begin{align}
G_L'(a)
=
\frac{(1+a)^2}{a^{2-\frac{1}{L}}}
\end{align}
Therefore
\begin{align}
\frac{d}{dt}G_L(a_m(t))
&=
G_L'(a_m(t))\dot a_m(t)
\\
&=
2L
\gamma^{1/L}
x_m^{1/L}
y_m
\end{align}
Integrating in time gives the exact implicit solution
\begin{align}
G_L(a_m(t))
=
2L
\gamma^{1/L}
x_m^{1/L}
y_m t
+
G_L(a_m(0))
\label{eq:appendix-depth-implicit-solution}
\end{align}
For \(L=1\), this reduces to the shallow-network implicit solution from \autoref{theorem:implicit-solution}, with \(G_1(a)=\frac{1}{2}a^2+2a+\log a\) and \(a_m=\gamma\lambda_m\). The implicit solution above assumes \(a_m(0)>0\). If \(a_m(0)=0\), then the corresponding mode remains on the invariant boundary \(a_m(t)=0\) under the scalar ODE.

\paragraph{Layerwise interpolation.}
We now prove the first statement of \autoref{theorem:layerwise-interp}. Under \autoref{setting:balanced-depth},
\begin{align}
\lambda_m^{(\ell)}(t)
=
x_m u_m(t)^{2\ell}
\end{align}
For the signal and interaction modes, this gives
\begin{align}
\nu^{(\ell)}(t)
&\coloneqq
\nu(\mathbf{Q}^{(\ell)}(t))
=
\frac{\lambda_{SC}^{(\ell)}(t)}{\lambda_S^{(\ell)}(t)}
\\
&=
\frac{x_{SC}}{x_S}
\left(
\frac{u_{SC}(t)^2}{u_S(t)^2}
\right)^{\ell}
\\
&=
\nu_X
\left(
\frac{u_{SC}(t)^2}{u_S(t)^2}
\right)^{\ell},
\end{align}
where \(\nu_X\coloneqq\nu(\boldsymbol{\Sigma}_x)\). At the final hidden layer,
\begin{align}
\nu^{(L)}(t)
=
\nu_X
\left(
\frac{u_{SC}(t)^2}{u_S(t)^2}
\right)^L
\end{align}
Eliminating the ratio \(u_{SC}^2/u_S^2\), we obtain
\begin{align}
\nu^{(\ell)}(t)
=
\nu_X^{1-\frac{\ell}{L}}
\left(
\nu^{(L)}(t)
\right)^{\frac{\ell}{L}}
\label{eq:appendix-layerwise-nu-interp}
\end{align}
Taking \(-\frac{1}{2}\log\) of both sides and using \autoref{eq:appendix-arctanh-log-snr} gives
\begin{align}
\arctanh\alpha^{(\ell)}(t)
&=
\left(
1-\frac{\ell}{L}
\right)
\arctanh\alpha^{(\boldsymbol{\Sigma}_x)}
+
\frac{\ell}{L}
\arctanh\alpha^{(L)}(t)
\end{align}
This proves \autoref{eq:layerwise-interp}. The key point is that balancing makes the log inverse-SNR interpolate linearly across depth. Since \(\arctanh\alpha\) is one half of the log SNR, the interpolation is linear in \(\arctanh\alpha\)-space rather than in \(\alpha\)-space itself.

\paragraph{Terminal final-layer abstraction.}
We next prove the second statement of \autoref{theorem:layerwise-interp}. Assume first that the relevant signal and interaction eigenvalues are strictly positive:
\begin{align}
x_S,x_{SC},y_S,y_{SC}>0
\end{align}
From \autoref{eq:appendix-depth-implicit-solution}, and using
\begin{align}
G_L(a)
\sim
\frac{L}{L+1}a^{1+\frac{1}{L}}
\qquad
\text{as}
\qquad
a\to\infty,
\end{align}
we have
\begin{align}
a_m(t)^{1+\frac{1}{L}}
\sim
2(L+1)
\gamma^{1/L}
x_m^{1/L}
y_m t
\end{align}
Because \(a_m=\gamma\lambda_m\), ratios of \(a_m\)'s and \(\lambda_m\)'s are identical. Hence
\begin{align}
\left(
\frac{
\lambda_{SC}^{(L)}(t)
}{
\lambda_S^{(L)}(t)
}
\right)^{1+\frac{1}{L}}
&\longrightarrow
\frac{
x_{SC}^{1/L}y_{SC}
}{
x_S^{1/L}y_S
}
\\
&=
\nu_X^{1/L}\nu_Y,
\end{align}
where
\begin{align}
\nu_Y\coloneqq\nu(\boldsymbol{\Sigma}_y)
=
\frac{y_{SC}}{y_S}
\end{align}
Therefore the terminal final-layer inverse-SNR is
\begin{align}
\nu_{\infty}^{(L)}
\coloneqq
\lim_{t\to\infty}
\nu(\mathbf{Q}^{(L)}(t))
=
\nu_X^{\frac{1}{L+1}}
\nu_Y^{\frac{L}{L+1}}
\label{eq:appendix-deep-terminal-nu}
\end{align}
Applying \(\arctanh\alpha=-\frac{1}{2}\log\nu\), we obtain
\begin{align}
\arctanh\alpha_{\infty}^{(L)}
&=
-
\frac{1}{2}
\log
\nu_{\infty}^{(L)}
\\
&=
-
\frac{1}{2}
\left[
\frac{1}{L+1}\log\nu_X
+
\frac{L}{L+1}\log\nu_Y
\right]
\\
&=
\frac{1}{L+1}
\arctanh\alpha^{(\boldsymbol{\Sigma}_x)}
+
\frac{L}{L+1}
\arctanh\alpha^{(\boldsymbol{\Sigma}_y)}
\label{eq:appendix-deep-terminal-arctanh}
\end{align}
This proves \autoref{eq:deep-terminal-abstraction}. The same formula extends to the boundary cases \(\nu_X=0\), \(\nu_Y=0\), \(\nu_X=\infty\), or \(\nu_Y=\infty\) by taking the corresponding extended-real limit, provided the signal mode used to define the inverse-SNR is nonzero.

Combining the layerwise interpolation with the terminal final-layer law gives the terminal abstraction at every hidden layer:
\begin{align}
\arctanh\alpha_{\infty}^{(\ell;L)}
&=
\left(
1-\frac{\ell}{L}
\right)
\arctanh\alpha^{(\boldsymbol{\Sigma}_x)}
+
\frac{\ell}{L}
\left[
\frac{1}{L+1}\arctanh\alpha^{(\boldsymbol{\Sigma}_x)}
+
\frac{L}{L+1}\arctanh\alpha^{(\boldsymbol{\Sigma}_y)}
\right]
\\
&=
\left(
1-\frac{\ell}{L+1}
\right)
\arctanh\alpha^{(\boldsymbol{\Sigma}_x)}
+
\frac{\ell}{L+1}
\arctanh\alpha^{(\boldsymbol{\Sigma}_y)}
\label{eq:appendix-terminal-layerwise-depth-interp}
\end{align}
Thus, at convergence, the hidden layers occupy equally spaced locations in \(\arctanh\alpha\)-space between the input geometry and the target geometry, with the final hidden layer sitting at fraction \(L/(L+1)\) of the way from inputs to targets. Next, we analytically continue the network depth to positive reals \(L \in \mathbb{R}_+\) and show that terminal abstraction is monotonically increasing with \(L\) whenever \(\nu(\boldsymbol{\Sigma}_y)<\nu(\boldsymbol{\Sigma}_x)\).

\begin{corollary}[Depth monotonicity of terminal abstraction]
\label{corollary:depth-monotonicity}
Assume \(0<\nu(\boldsymbol{\Sigma}_x),\nu(\boldsymbol{\Sigma}_y)<\infty\). Treating the depth \(L\) as a continuous variable, the terminal final-layer abstraction satisfies
\begin{align}
\frac{\partial}{\partial L}
\arctanh\alpha_{\infty}^{(L)}
=
\frac{1}{2(L+1)^2}
\log
\frac{\nu(\boldsymbol{\Sigma}_x)}{\nu(\boldsymbol{\Sigma}_y)}
\label{eq:appendix-depth-derivative}
\end{align}
Consequently, terminal abstraction increases with depth whenever
\begin{align}
\nu(\boldsymbol{\Sigma}_y)<\nu(\boldsymbol{\Sigma}_x),
\end{align}
or equivalently whenever
\begin{align}
\alpha^{(\boldsymbol{\Sigma}_y)}>\alpha^{(\boldsymbol{\Sigma}_x)}
\end{align}
\end{corollary}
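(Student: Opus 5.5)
The plan is to differentiate the closed-form terminal law already established in \autoref{eq:appendix-deep-terminal-arctanh}, treating $L$ as a positive real parameter. Write $\nu_X \coloneqq \nu(\boldsymbol{\Sigma}_x)$ and $\nu_Y \coloneqq \nu(\boldsymbol{\Sigma}_y)$. By \autoref{eq:appendix-arctanh-log-snr}, we have $\arctanh\alpha^{(\boldsymbol{\Sigma}_x)} = -\tfrac12\log\nu_X$ and $\arctanh\alpha^{(\boldsymbol{\Sigma}_y)} = -\tfrac12\log\nu_Y$. Both are finite because $0<\nu_X,\nu_Y<\infty$. The terminal law \autoref{eq:deep-terminal-abstraction} then reads
\begin{align}
\arctanh\alpha_\infty^{(L)} = -\frac{1}{2}\left[\frac{1}{L+1}\log\nu_X + \frac{L}{L+1}\log\nu_Y\right].
\end{align}
The right-hand side is a smooth function of $L\in(0,\infty)$. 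This is the analytic continuation: the derivation of \autoref{eq:appendix-deep-terminal-nu} only used the exponents $1/L$ and $1+1/L$, so the formula makes sense for real $L>0$.

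Next I differentiate, using $\frac{d}{dL}\frac{1}{L+1} = -\frac{1}{(L+1)^2}$ and $\frac{d}{dL}\frac{L}{L+1} = \frac{1}{(L+1)^2}$. This gives
\begin{align}
\frac{\partial}{\partial L}\arctanh\alpha_\infty^{(L)} = -\frac{1}{2(L+1)^2}\left(\log\nu_Y-\log\nu_X\right) = \frac{1}{2(L+1)^2}\log\frac{\nu_X}{\nu_Y},
\end{align}
which is \autoref{eq:appendix-depth-derivative}.

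For the monotonicity claim, note that $\arctanh$ is strictly increasing on $(-1,1)$. Hence $\alpha_\infty^{(L)}$ has the same monotonicity in $L$ as $\arctanh\alpha_\infty^{(L)}$. The prefactor $1/(2(L+1)^2)$ is positive, so the sign of the derivative is the sign of $\log(\nu_X/\nu_Y)$. This is positive exactly when $\nu_Y<\nu_X$. Finally, $\nu\mapsto(1-\nu)/(1+\nu)$ is strictly decreasing, so $\nu_Y<\nu_X$ is equivalent to $\alpha^{(\boldsymbol{\Sigma}_y)}>\alpha^{(\boldsymbol{\Sigma}_x)}$.

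There is no real obstacle here; the only delicate point is justifying the continuation in $L$. The derivative is taken of the closed-form expression, not of any network quantity, so no properties of the dynamics at non-integer depth are needed. I will also note that the hypothesis $0<\nu_X,\nu_Y<\infty$ keeps all logarithms finite.
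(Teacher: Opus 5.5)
Your proposal is correct and follows essentially the same route as the paper: differentiate the closed-form terminal law in $L$, use $\arctanh\alpha = -\tfrac{1}{2}\log\nu$, and conclude via the strict monotonicity of $\arctanh$ and of $\nu\mapsto(1-\nu)/(1+\nu)$. The only differences are cosmetic. The paper differentiates $(A_{\mathbf{X}}+LA_{\mathbf{Y}})/(L+1)$ before substituting the log-SNR identity, rather than after. It also records the integer finite difference $\bigl(\arctanh\alpha^{(\boldsymbol{\Sigma}_y)}-\arctanh\alpha^{(\boldsymbol{\Sigma}_x)}\bigr)/\bigl((L+1)(L+2)\bigr)$, to show the sign conclusion does not rely on the continuous relaxation.
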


\begin{proof}
Let
\begin{align}
A_{\mathbf{X}}
\coloneqq
\arctanh\alpha^{(\boldsymbol{\Sigma}_x)},
\qquad
A_{\mathbf{Y}}
\coloneqq
\arctanh\alpha^{(\boldsymbol{\Sigma}_y)}
\end{align}
From \autoref{eq:deep-terminal-abstraction},
\begin{align}
\arctanh\alpha_{\infty}^{(L)}
=
\frac{A_{\mathbf{X}}+L A_{\mathbf{Y}}}{L+1}
\end{align}
Differentiating with respect to \(L\) gives
\begin{align}
\frac{\partial}{\partial L}
\arctanh\alpha_{\infty}^{(L)}
&=
\frac{
A_{\mathbf{Y}}(L+1)-(A_{\mathbf{X}}+L A_{\mathbf{Y}})
}{
(L+1)^2
}
\\
&=
\frac{A_{\mathbf{Y}}-A_{\mathbf{X}}}{(L+1)^2}
\end{align}
Using \autoref{eq:appendix-arctanh-log-snr},
\begin{align}
A_{\mathbf{Y}}-A_{\mathbf{X}}
&=
-
\frac{1}{2}\log\nu(\boldsymbol{\Sigma}_y)
+
\frac{1}{2}\log\nu(\boldsymbol{\Sigma}_x)
\\
&=
\frac{1}{2}
\log
\frac{\nu(\boldsymbol{\Sigma}_x)}{\nu(\boldsymbol{\Sigma}_y)}
\end{align}
Therefore
\begin{align}
\frac{\partial}{\partial L}
\arctanh\alpha_{\infty}^{(L)}
=
\frac{1}{2(L+1)^2}
\log
\frac{\nu(\boldsymbol{\Sigma}_x)}{\nu(\boldsymbol{\Sigma}_y)}
\end{align}
Since \(\arctanh\) is strictly increasing on \((-1,1)\), the sign of this derivative is also the sign of the change in \(\alpha_{\infty}^{(L)}\) itself. Thus terminal abstraction strictly increases with continuous depth whenever
\begin{align}
\nu(\boldsymbol{\Sigma}_x)>\nu(\boldsymbol{\Sigma}_y)
\end{align}
Finally, the equivalence with \(\alpha^{(\boldsymbol{\Sigma}_y)}>\alpha^{(\boldsymbol{\Sigma}_x)}\) follows because
\begin{align}
\alpha
=
\frac{1-\nu}{1+\nu}
\end{align}
is strictly decreasing in \(\nu\):
\begin{align}
\frac{d}{d\nu}
\frac{1-\nu}{1+\nu}
=
-
\frac{2}{(1+\nu)^2}
<
0
\end{align}
This concludes the proof.
\end{proof}
Therefore terminal abstraction improves monotonically with integer depth whenever the target kernel is more abstract than the input kernel. Note that in actual implementations of neural networks, \(L\) is an integer and so the derivative \(\partial/\partial L\) should be interpreted with some care. The corresponding finite difference has the same sign:
\begin{align}
\arctanh\alpha_{\infty}^{(L+1)}
-
\arctanh\alpha_{\infty}^{(L)}
=
\frac{
\arctanh\alpha^{(\boldsymbol{\Sigma}_y)}-\arctanh\alpha^{(\boldsymbol{\Sigma}_x)}
}{
(L+1)(L+2)
}
\label{eq:appendix-depth-finite-difference}
\end{align}
Therefore the depth-monotonicity conclusion is not an artifact of differentiating with respect to a continuous relaxation of depth.

\paragraph{Discussion on \autoref{setting:balanced-depth}.}
We now discuss and motivate the balanced depth setting. This assumption is the mode-wise analogue of the standard balancedness condition for deep linear networks. Write the gain of hidden layer \(j\) on 2FS mode \(m\) as \(g_{j,m}(t)\), and define the squared gain
\begin{align}
    h_{j,m}(t)
    \coloneqq
    g_{j,m}(t)^2 
\end{align}
Then the eigenvalue of the representation kernel after layer \(\ell\) can be written as
\begin{align}
    \lambda_m^{(\ell)}(t)
    =
    \lambda_m^{(0)}
    \prod_{j=1}^{\ell} h_{j,m}(t)
    \label{eq:appendix-layerwise-general-gains}
\end{align}
Thus \autoref{setting:balanced-depth} is the special case
\begin{align}
    h_{1,m}(t)=h_{2,m}(t)=\cdots=h_{L,m}(t)
    \qquad
    \text{for every mode } m
    \label{eq:appendix-mode-balanced-gains}
\end{align}
Equivalently, for each mode \(m\), the representation growth is distributed equally across hidden layers. This is closely related to the usual notion of balancedness in the deep linear network literature. In \cite{aroraConvergenceAnalysis19}, adjacent weight matrices are called \(\delta\)-balanced when
\begin{align}
    \left\|
    \mathbf{W}_{j+1}^{\top}\mathbf{W}_{j+1}
    -
    \mathbf{W}_j \mathbf{W}_j^{\top}
    \right\|_F
    \leq
    \delta
    \qquad
    \text{for all adjacent layers } j
    \label{eq:appendix-arora-balancedness}
\end{align}
The perfectly balanced case \(\delta=0\) forces adjacent layers to have the same nonzero singular values. This is essentially the case studied in \cite{kuninGetRich24}. If the singular directions are aligned with the 2FS modes relevant to our reduced dynamics, then \autoref{eq:appendix-arora-balancedness} reduces exactly to \autoref{setting:balanced-depth}, i.e.:
\begin{align}
    h_{j+1,m}=h_{j,m}
    \qquad
    \text{for every } j,m
\end{align}

There are several natural ways to initialize inside \autoref{setting:balanced-depth}. In the reduced mode-wise description, one can simply choose, for each mode \(m\), a positive initial per-layer squared gain \(\rho_m>0\) and set
\begin{align}
    h_{1,m}(0)
    =
    h_{2,m}(0)
    =
    \cdots
    =
    h_{L,m}(0)
    =
    \rho_m 
\end{align}
Then
\begin{align}
    \lambda_m^{(\ell)}(0)
    =
    \lambda_m^{(0)}
    \rho_m^{\ell}
\end{align}
Equivalently, if one wants to prescribe an initial final-layer eigenvalue \(\lambda_{m,\mathrm{init}}^{(L)}\) for a mode with \(\lambda_m^{(0)}>0\), one should choose
\begin{align}
    \rho_m
    =
    \left(
    \frac{
        \lambda_{m,\mathrm{init}}^{(L)}
    }{
        \lambda_m^{(0)}
    }
    \right)^{1/L}
    \label{eq:appendix-balanced-init-rho}
\end{align}
This is the scalar-mode version of the SVD-balanced initialization procedure used for deep linear networks: instead of placing all of an end-to-end singular value in one layer, one distributes its \(L\)-th root evenly across the \(L\) hidden factors. At the matrix level, a concrete exact construction is to choose hidden factors whose singular vectors align with the relevant mode subspaces and whose singular values are distributed equally across depth. For \autoref{fig:linear-theory}A--B, we use the orthogonal initialization such that \(\mathbf{W}_\ell(0)=\sigma O_\ell\) for an orthogonal matrix \(O_\ell\) and the same \(\sigma\) for every hidden layer.

Furthermore, if \autoref{setting:balanced-depth} holds at initialization, then it remains balanced throughout training:

\begin{lemma}[Persistence of mode-wise balancedness]
\label{lemma:balanced-gains-persist}
Assume the reduced loss depends on the hidden factors only through the final hidden-layer kernel eigenvalues
\begin{align}
    \lambda_m^{(L)}(t)
    =
    \lambda_m^{(0)}
    \prod_{j=1}^{L} h_{j,m}(t)
\end{align}
Under hidden-layer gradient flow with a shared learning rate, the adjacent squared-gain differences
\begin{align}
    h_{j+1,m}(t)-h_{j,m}(t)
\end{align}
are conserved for every mode \(m\) and every adjacent pair \(j,j+1\). In particular, if the hidden gains are balanced at initialization, then they remain balanced for all training time.
\end{lemma}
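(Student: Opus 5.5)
The plan is to exploit the standard conservation-law argument for balanced deep linear networks, applied mode by mode. Fix a mode \(m\) and, as in the derivation of \autoref{eq:appendix-depth-ode-before-balancing}, treat the scalar gains \(g_{j,m}\) as the hidden-layer parameters, with \(h_{j,m}=g_{j,m}^2\). By hypothesis the reduced loss \(\Lc^*\) depends on these gains only through \(\lambda_m^{(L)}=\lambda_m^{(0)}\prod_{j}h_{j,m}\). Gradient flow with a shared learning rate therefore gives, exactly as computed above,
\begin{align}
\dot g_{j,m}
=
-\frac{\partial \Lc^*}{\partial \lambda_m^{(L)}}\,\frac{\partial \lambda_m^{(L)}}{\partial g_{j,m}}
=
\mu_m\!\left(\lambda_m^{(L)}\right)\frac{\lambda_m^{(L)}}{g_{j,m}},
\end{align}
using \(\partial\lambda_m^{(L)}/\partial g_{j,m}=2\lambda_m^{(L)}/g_{j,m}\) and the effective target gain \(\mu_m\) from \autoref{eq:appendix-depth-effective-target-gain}.

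The key step is then immediate: differentiate the squared gain,
\begin{align}
\dot h_{j,m}=2g_{j,m}\dot g_{j,m}=2\mu_m\!\left(\lambda_m^{(L)}\right)\lambda_m^{(L)},
\end{align}
which is independent of the layer index \(j\). Hence \(\frac{d}{dt}(h_{j+1,m}-h_{j,m})=0\) for every adjacent pair and every mode, so these differences are conserved. If the gains are balanced at \(t=0\), all differences vanish initially and so vanish for all \(t\) in the existence interval, which is exactly \autoref{setting:balanced-depth}. Note that this also shows the modes decouple, since \(\mu_m\) involves only \(\lambda_m^{(L)}\) by the 2FS diagonalization of \(\Lc^*\).

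The main obstacle I anticipate is justifying the reduction to scalar gains: the actual parameters are weight matrices, and the claim that gradient flow on them induces gradient flow on the \(g_{j,m}\) requires that the singular directions of each \(\mathbf{W}_j\) stay aligned with the 2FS mode subspaces. I would handle this by stating the lemma, as it is phrased, at the level of the reduced mode-wise parameterization. I would then remark that, at the matrix level, the analogous statement is the classical invariance of \(\mathbf{W}_{j+1}^\top\mathbf{W}_{j+1}-\mathbf{W}_j\mathbf{W}_j^\top\) under gradient flow \cite{aroraConvergenceAnalysis19}. This invariance holds for any loss of the end-to-end product and reduces to the scalar statement when the factors are mode-aligned. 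A minor point is that the division by \(g_{j,m}\) requires \(g_{j,m}\neq0\); working directly with \(\dot h_{j,m}=2\mu_m\lambda_m^{(L)}\), which equals \(2\mu_m\lambda_m^{(0)}\prod_k h_{k,m}\), avoids this issue entirely.
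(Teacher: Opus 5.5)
Your argument is correct and is essentially the paper's proof. Both apply the chain rule through \(\lambda_m^{(L)}\) to obtain \(\dot h_{j,m}=2g_{j,m}\dot g_{j,m}=-4\lambda_m^{(L)}\,\partial\Lc^*/\partial\lambda_m^{(L)}=2\mu_m\lambda_m^{(L)}\), which is independent of \(j\), so adjacent squared-gain differences are conserved. Your side remarks on mode alignment of the matrix-level weights and on avoiding division by \(g_{j,m}\) are sensible refinements that the paper leaves implicit.
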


\begin{proof}
Fix a mode \(m\) and write
\begin{align}
    \lambda_m(t)
    \equiv
    \lambda_m^{(L)}(t)
    =
    \lambda_m^{(0)}
    \prod_{j=1}^{L} g_{j,m}(t)^2
\end{align}
Since \(\Lc^*\) depends on \(g_{j,m}\) only through \(\lambda_m\), the chain rule gives
\begin{align}
    \frac{\partial \Lc^*}{\partial g_{j,m}}
    =
    \frac{\partial \Lc^*}{\partial \lambda_m}
    \frac{\partial \lambda_m}{\partial g_{j,m}}
\end{align}
But
\begin{align}
    \frac{\partial \lambda_m}{\partial g_{j,m}}
    =
    \frac{2\lambda_m}{g_{j,m}}
\end{align}
Therefore gradient flow gives
\begin{align}
    \dot g_{j,m}
    =
    -
    \frac{\partial \Lc^*}{\partial g_{j,m}}
    =
    -
    \frac{\partial \Lc^*}{\partial \lambda_m}
    \frac{2\lambda_m}{g_{j,m}}
\end{align}
Now differentiate the squared gain \(h_{j,m}=g_{j,m}^2\):
\begin{align}
    \dot h_{j,m}
    &=
    2g_{j,m}\dot g_{j,m}
    \\
    &=
    -4\lambda_m
    \frac{\partial \Lc^*}{\partial \lambda_m}
    \label{eq:appendix-hdot-independent-of-layer}
\end{align}
The right-hand side of \autoref{eq:appendix-hdot-independent-of-layer} depends on the mode \(m\) and on the final eigenvalue \(\lambda_m\), but it does not depend on the layer index \(j\). Hence, for every adjacent pair,
\begin{align}
    \frac{d}{dt}
    \left(
    h_{j+1,m}-h_{j,m}
    \right)
    =
    \dot h_{j+1,m}-\dot h_{j,m}
    =
    0
\end{align}
Thus all adjacent squared-gain differences are conserved. If
\begin{align}
    h_{j+1,m}(0)=h_{j,m}(0)
\end{align}
at initialization, then
\begin{align}
    h_{j+1,m}(t)=h_{j,m}(t)
\end{align}
for all \(t\geq0\). This proves the claim.
\end{proof}

\clearpage
\section{Derivations and proofs for results in \autoref{section:nonlinear} (nonlinear networks)}

\subsection{Preservation of 2FS in large width} \label{appendix:2fs-preserved}

This section proves the claim used in \autoref{section:nonlinear}: in the infinite-width limit, if the preactivation kernel \(\mathbf{Q}\) is 2FS, then applying either the erf or leaky-ReLU nonlinearity produces a feature kernel \(\mathbf{K}\) that is also 2FS. We also derive the entrywise kernel maps in \autoref{eq:kernel-maps}. 

\paragraph{2FS as invariance under sample permutations.}

Recall from \autoref{assumption:2fs} that a kernel \(A\) is 2FS if
\begin{align}
\boldsymbol{\Pi}_g A \boldsymbol{\Pi}_g^\top = A,
\qquad
\forall g\in \mathcal{G},
\end{align}
where \(\mathcal{G}\cong (S_n)^4\rtimes(\mathbb{Z}_2)^2\) acts by arbitrary within-class permutations together with global relabelings \(s\mapsto -s\) and \(c\mapsto -c\). Equivalently, \(A\) is constant on the five pair orbits induced by this group. Writing sample \(i\) as \(i=(s_i,c_i,r_i)\), where \(r_i\in\{1,\ldots,n\}\) indexes the exemplar inside the fine class, these five entry types are
\begin{align}
a_{ij}
=
\begin{cases}
a_d, & i=j,\\
a_2, & i\neq j,\ s_i=s_j,\ c_i=c_j,\\
a_{1s}, & s_i=s_j,\ c_i\neq c_j,\\
a_{1c}, & s_i\neq s_j,\ c_i=c_j,\\
a_0, & s_i\neq s_j,\ c_i\neq c_j
\end{cases}
\end{align}
Thus, to prove that \(\mathbf{K}\) is 2FS, it is enough to show that \(\mathbf{K}_{ij}\) depends only on which of these five categories the pair \((i,j)\) belongs to.

\paragraph{Infinite-width feature kernel.}

Let \(\mathbf{z}^{(a)}\in\mathbb{R}^N\) denote the vector of preactivations of hidden unit \(a\) across the \(N\) samples. We assume that, in the infinite-width limit the rows are iid draws from:
\begin{align}
\mathbf{z}^{(a)} \stackrel{\mathrm{iid}}{\sim} \mathcal{N}(0,\mathbf{Q}),
\qquad
a=1,\ldots,D
\end{align}
For a coordinatewise nonlinearity \(\phi\), define
\begin{align}
\mathbf{H}_{ai}=\phi(z_i^{(a)}),
\qquad
\mathbf{K} \coloneqq \frac{1}{D}\mathbf{H}^\top \mathbf{H}
\end{align}
Then, entrywise,
\begin{align}
\mathbf{K}_{ij}
=
\frac{1}{D}\sum_{a=1}^D
\phi(z_i^{(a)})\phi(z_j^{(a)})
\end{align}
Since \(N\) is fixed and \(\phi(z_i)\phi(z_j)\) has finite expectation for both erf and leaky ReLU, the law of large numbers gives
\begin{align}
\mathbf{K}_{ij}
\xrightarrow[D\to\infty]{\mathrm{a.s.}}
\mathbb{E}_{z\sim\mathcal{N}(0,\mathbf{Q})}
\left[
\phi(z_i)\phi(z_j)
\right]
\end{align}
Thus the infinite-width feature kernel is the deterministic matrix
\begin{align}
\mathbf{K}_\phi(\mathbf{Q})
=
\mathbb{E}_{z\sim\mathcal{N}(0,\mathbf{Q})}
\left[
\phi(\mathbf{z})\phi(\mathbf{z})^\top
\right]
\end{align}
At finite width, \(\mathbf{K}\) has \(D^{-1/2}\)-scale empirical fluctuations around this limit, so \(\mathbf{K}\) is only exactly 2FS in the infinite-width limit.

\begin{proposition}[Coordinatewise nonlinearities preserve 2FS in the Gaussian infinite-width limit]
\label{proposition:2fs-preserved-infinite-width}
Let \(\mathbf{Q}\) be 2FS, so \(\boldsymbol{\Pi}_g \mathbf{Q} \boldsymbol{\Pi}_g^\top=\mathbf{Q}\) for every \(g\in\mathcal{G}\). Let \(z\sim\mathcal{N}(0,\mathbf{Q})\), and let
\begin{align}
\mathbf{K}_\phi(\mathbf{Q})
\coloneqq
\mathbb{E}
\left[
\phi(\mathbf{z})\phi(\mathbf{z})^\top
\right],
\end{align}
where \(\phi\) acts coordinatewise. Then
\begin{align}
\boldsymbol{\Pi}_g \mathbf{K}_\phi(\mathbf{Q}) \boldsymbol{\Pi}_g^\top = \mathbf{K}_\phi(\mathbf{Q}),
\qquad
\forall g\in\mathcal{G}
\end{align}
Therefore \(\mathbf{K}_\phi(\mathbf{Q})\) is 2FS. In particular, erf and leaky ReLU preserve 2FS in the infinite-width kernel limit.
\end{proposition}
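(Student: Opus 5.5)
The plan is to prove the invariance directly from the distributional symmetry of the Gaussian vector, with no explicit kernel formulas needed. Fix \(g\in\mathcal{G}\) and write \(\boldsymbol{\Pi}=\boldsymbol{\Pi}_g\), a permutation matrix. The key observation is that a coordinatewise nonlinearity commutes with coordinate permutations: for any \(\mathbf{v}\in\mathbb{R}^N\), \(\phi(\boldsymbol{\Pi}\mathbf{v})=\boldsymbol{\Pi}\phi(\mathbf{v})\), since \((\boldsymbol{\Pi}\mathbf{v})_i=v_{\pi(i)}\) and \(\phi\) acts entrywise. This is the only place where the coordinatewise structure of \(\phi\) is used.

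Next I will show that \(\boldsymbol{\Pi}\mathbf{z}\stackrel{d}{=}\mathbf{z}\) for \(\mathbf{z}\sim\mathcal{N}(0,\mathbf{Q})\). A linear image of a centered Gaussian is again a centered Gaussian, and here its covariance is \(\boldsymbol{\Pi}\mathbf{Q}\boldsymbol{\Pi}^\top=\mathbf{Q}\) by the 2FS hypothesis. (This is the same argument used earlier for \(\mathbf{Q}(0)\) after \autoref{proposition:random-init-2fs}.) Combining the two steps gives
\begin{align}
\boldsymbol{\Pi}\mathbf{K}_\phi(\mathbf{Q})\boldsymbol{\Pi}^\top
=\mathbb{E}\left[\boldsymbol{\Pi}\phi(\mathbf{z})\phi(\mathbf{z})^\top\boldsymbol{\Pi}^\top\right]
=\mathbb{E}\left[\phi(\boldsymbol{\Pi}\mathbf{z})\phi(\boldsymbol{\Pi}\mathbf{z})^\top\right]
=\mathbb{E}\left[\phi(\mathbf{z})\phi(\mathbf{z})^\top\right]
=\mathbf{K}_\phi(\mathbf{Q}),
\end{align}
where the third equality uses equality in distribution. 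By \autoref{proposition:2fs-projector-decomposition} (i)\(\Rightarrow\)(ii),(iii), \(\mathbf{K}_\phi(\mathbf{Q})\) is then 2FS, with five entry types and the projector decomposition.

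The only genuine obstacle is integrability, since the expectations must be finite for the chain of equalities to be meaningful. For erf this is immediate because \(|\phi|\le 1\). For leaky ReLU, \(|\phi(z)|\le|z|\), so \(|\phi(z_i)\phi(z_j)|\le \tfrac12(z_i^2+z_j^2)\), whose expectation is bounded by \(\max_i \mathbf{Q}_{ii}<\infty\). More generally, any \(\phi\) of polynomial growth suffices.

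For the final \emph{in particular}, I will also observe that each entry \(\mathbf{K}_{ij}\) depends only on the bivariate marginal \((\mathbf{Q}_{ii},\mathbf{Q}_{jj},\mathbf{Q}_{ij})\), which equals \((q_d,q_d,q_\mu)\) by 2FS. This gives the entrywise maps \(\psi(q_\mu,q_d)\) of \autoref{eq:kernel-maps} via the standard arcsine and arc-cosine kernel integrals of \cite{williamsComputingInfinite96, choKernelMethods09}, and it independently confirms the five-entry structure.
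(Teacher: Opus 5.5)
Your proposal is correct and follows essentially the same argument as the paper: you use the Gaussian invariance $\boldsymbol{\Pi}_g\mathbf{z}\stackrel{d}{=}\mathbf{z}$, which follows from $\boldsymbol{\Pi}_g\mathbf{Q}\boldsymbol{\Pi}_g^\top=\mathbf{Q}$, together with the fact that the coordinatewise $\phi$ commutes with permutations, and then run the same chain of equalities for $\mathbf{K}_\phi(\mathbf{Q})$. Your integrability check and your bivariate-marginal remark also match what the paper says around the proposition: it notes that the expectations are finite for erf and leaky ReLU, and that $k_\mu=\psi_\phi(q_\mu,q_d)$.
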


\begin{proof}
Because \(\mathbf{z} \sim\mathcal{N}(0,\mathbf{Q})\) and \(\boldsymbol{\Pi}_g \mathbf{Q} \boldsymbol{\Pi}_g^\top=\mathbf{Q}\), we have
\begin{align}
\boldsymbol{\Pi}_g \mathbf{z}
\sim
\mathcal{N}(0,\boldsymbol{\Pi}_g \mathbf{Q} \boldsymbol{\Pi}_g^\top)
=
\mathcal{N}(0,\mathbf{Q})
\end{align}
Thus \(\boldsymbol{\Pi}_g \mathbf{z}\stackrel{d}{=}\mathbf{z}\). Also, since \(\phi\) is applied coordinatewise, it commutes with sample permutations:
\begin{align}
\phi(\boldsymbol{\Pi}_g \mathbf{z})=\boldsymbol{\Pi}_g\phi(\mathbf{z})
\end{align}
Using these two facts,
\begin{align}
\boldsymbol{\Pi}_g \mathbf{K}_\phi(\mathbf{Q})\boldsymbol{\Pi}_g^\top
&=
\mathbb{E}
\left[
\boldsymbol{\Pi}_g\phi(\mathbf{z})\phi(\mathbf{z})^\top \boldsymbol{\Pi}_g^\top
\right]
\\
&=
\mathbb{E}
\left[
\phi(\boldsymbol{\Pi}_g \mathbf{z})\phi(\boldsymbol{\Pi}_g \mathbf{z})^\top
\right]
\\
&=
\mathbb{E}
\left[
\phi(\mathbf{z})\phi(\mathbf{z})^\top
\right]
\\
&=
\mathbf{K}_\phi(\mathbf{Q})
\end{align}
This proves that \(\mathbf{K}_\phi(\mathbf{Q})\) is invariant under every permutation in \(\mathcal{G}\), hence is 2FS.
\end{proof}


\paragraph{Entrywise form of the infinite-width map.} We now derive the explicit maps from the 2FS entries of \(\mathbf{Q}\) to the 2FS entries of \(\mathbf{K}\), which are the maps used in \autoref{section:nonlinear}.

Assume \(q_d>0\). If \(q_d=0\), positive semidefiniteness forces \(\mathbf{Q}=0\), so \(z=0\) almost surely and both erf and leaky-ReLU feature kernels are the zero matrix; the claim then follows trivially. Since \(\mathbf{Q}\) is 2FS, all diagonal entries equal \(q_d\). For any pair \((i,j)\), the pair \((z_i,z_j)\) is centered jointly Gaussian with covariance
\begin{align}
\begin{pmatrix}
q_d & q_{ij}\\
q_{ij} & q_d
\end{pmatrix}
\end{align}
Therefore
\begin{align}
\mathbf{K}_{ij}
=
\mathbb{E}
\left[
\phi(z_i)\phi(z_j)
\right]
=
\psi_\phi(q_{ij},q_d)
\end{align}
for a scalar function \(\psi_\phi\) depending only on the nonlinearity. Hence, for each 2FS entry type \(\mu\in\{d,2,1s,1c,0\}\),
\begin{align}
k_\mu = \psi_\phi(q_\mu,q_d)
\end{align}
This immediately implies that \(\mathbf{K}\) has the same five-entry 2FS structure as \(\mathbf{Q}\). The rest of this appendix derives \(\psi_\phi\) for erf and leaky ReLU.

\paragraph{Erf kernel map.}

Let
\begin{align}
\begin{pmatrix}
u\\
v
\end{pmatrix}
\sim
\mathcal{N}
\left(
0,
\begin{pmatrix}
q_d & q\\
q & q_d
\end{pmatrix}
\right)
\end{align}
We use the convention
\begin{align}
\phi_\beta(x)
=
\mathrm{erf}(\beta x),
\qquad
\beta>0
\end{align}
For a scalar \(x\), the error function has the Gaussian sign representation
\begin{align}
\mathrm{erf}(x)
=
\mathbb{E}_{\epsilon}
\left[
\operatorname{sign}(x+\epsilon)
\right],
\qquad
\epsilon\sim\mathcal{N}\!\left(0,\frac{1}{2}\right)
\end{align}
Indeed,
\begin{align}
\mathbb{E}_{\epsilon}
\left[
\operatorname{sign}(x+\epsilon)
\right]
=
2\Phi(\sqrt{2}x)-1
=
\mathrm{erf}(x)
\end{align}
Let \(\epsilon_1,\epsilon_2\stackrel{\mathrm{iid}}{\sim}\mathcal{N}(0,1/2)\), independent of \((u,v)\), and define
\begin{align}
A
=
\beta u+\epsilon_1,
\qquad
B
=
\beta v+\epsilon_2
\end{align}
Then \((A,B)\) is centered jointly Gaussian with
\begin{align}
\operatorname{Var}(A)
=
\operatorname{Var}(B)
=
\beta^2 q_d+\frac{1}{2},
\qquad
\operatorname{Cov}(A,B)
=
\beta^2 q
\end{align}
Therefore the correlation of \((A,B)\) is
\begin{align}
r
=
\frac{\beta^2 q}{\beta^2 q_d+\frac{1}{2}}
=
\frac{2\beta^2 q}{1+2\beta^2 q_d}
\end{align}
For a centered bivariate Gaussian pair with correlation \(r\), the Gaussian sign identity gives
\begin{align}
\mathbb{E}
\left[
\operatorname{sign}(A)\operatorname{sign}(B)
\right]
=
\frac{2}{\pi}\arcsin(r)
\end{align}
Combining these identities,
\begin{align}
\mathbb{E}
\left[
\mathrm{erf}(\beta u)\mathrm{erf}(\beta v)
\right]
&=
\mathbb{E}
\left[
\operatorname{sign}(A)\operatorname{sign}(B)
\right]
\\
&=
\frac{2}{\pi}
\arcsin\!\left(
\frac{2\beta^2 q}{1+2\beta^2 q_d}
\right)
\end{align}
Therefore, if \(\mathbf{Q}\) is 2FS, the erf feature-kernel entries are
\begin{align}
k_\mu
=
\psi_\beta(q_\mu,q_d)
=
\frac{2}{\pi}
\arcsin\!\left(
\frac{2\beta^2 q_\mu}{1+2\beta^2 q_d}
\right),
\qquad
\mu\in\{d,2,1s,1c,0\}
\end{align}

\paragraph{Leaky-ReLU kernel map.}

Now let
\begin{align}
\phi_\omega(x)
=
\max\{x,\omega x\}
=
\omega x+(1-\omega)x_+,
\qquad
x_+\coloneqq \max\{x,0\},
\qquad
\omega\in[0,1]
\end{align}
Using the same centered Gaussian pair \((u,v)\) with variance \(q_d\) and covariance \(q\), we expand
\begin{align}
\mathbb{E}
\left[
\phi_\omega(u)\phi_\omega(v)
\right]
&=
\omega^2\mathbb{E}[uv]
+
\omega(1-\omega)
\left(
\mathbb{E}[u v_+]
+
\mathbb{E}[u_+ v]
\right)
+
(1-\omega)^2
\mathbb{E}[u_+v_+]
\end{align}
The mixed terms are simple. Since \(\mathbb{E}[u\mid v]=(q/q_d)v\),
\begin{align}
\mathbb{E}[u v_+]
&=
\mathbb{E}
\left[
\mathbb{E}[u\mid v]\,v\mathbf{1}_{\{v>0\}}
\right]
\\
&=
\frac{q}{q_d}
\mathbb{E}
\left[
v^2\mathbf{1}_{\{v>0\}}
\right]
\\
&=
\frac{q}{2}
\end{align}
By symmetry, \(\mathbb{E}[u_+v]=q/2\). Hence
\begin{align}
\mathbb{E}
\left[
\phi_\omega(u)\phi_\omega(v)
\right]
=
\omega q
+
(1-\omega)^2
\mathbb{E}[u_+v_+]
\end{align}
Now define
\begin{align}
\rho\coloneqq \frac{q}{q_d},
\qquad
\theta_\rho \coloneqq \arccos(\rho)
\end{align}
For a centered bivariate Gaussian pair with equal variances, the standard ReLU kernel integral is
\begin{align}
\mathbb{E}[u_+v_+]
=
\frac{1}{2\pi}
\left[
\sqrt{q_d^2-q^2}
+
q(\pi-\theta_\rho)
\right]
\end{align}
One quick derivation is by Price's theorem: if \(F(q)\coloneqq \mathbb{E}[u_+v_+]\) with \(q_d\) fixed, then
\begin{align}
\frac{\partial F}{\partial q}
=
\mathbb{P}(u>0,v>0)
=
\frac{\pi-\arccos(q/q_d)}{2\pi},
\end{align}
and \(F(0)=\mathbb{E}[u_+]\mathbb{E}[v_+]=q_d/(2\pi)\). Integrating from \(0\) to \(q\) gives the displayed expression.

Substituting the ReLU-ReLU term into the leaky-ReLU expansion gives
\begin{align}
\mathbb{E}
\left[
\phi_\omega(u)\phi_\omega(v)
\right]
&=
\omega q
+
\frac{(1-\omega)^2}{2\pi}
\left[
\sqrt{q_d^2-q^2}
+
q(\pi-\theta_\rho)
\right]
\\
&=
\left[
\omega+\frac{(1-\omega)^2}{2}
\right]q
+
\frac{(1-\omega)^2}{2\pi}
\left[
\sqrt{q_d^2-q^2}
-
q\theta_\rho
\right]
\\
&=
\frac{1+\omega^2}{2}q
+
\frac{(1-\omega)^2}{2\pi}
\left[
\sqrt{q_d^2-q^2}
-
q\arccos\left(\frac{q}{q_d}\right)
\right]
\end{align}
Therefore the leaky-ReLU kernel map is
\begin{align}
\psi_\omega(q,q_d)
=
\frac{1+\omega^2}{2}q
+
\frac{(1-\omega)^2}{2\pi}
\left[
\sqrt{q_d^2-q^2}
-
q\arccos\left(\frac{q}{q_d}\right)
\right]
\end{align}
Thus, if \(\mathbf{Q}\) is 2FS,
\begin{align}
k_\mu
=
\psi_\omega(q_\mu,q_d)
=
\frac{1+\omega^2}{2}q_\mu
+
\frac{(1-\omega)^2}{2\pi}
\left[
\sqrt{q_d^2-q_\mu^2}
-
q_\mu\arccos\left(\frac{q_\mu}{q_d}\right)
\right],
\qquad
\mu\in\{d,2,1s,1c,0\}
\end{align}
Hence the leaky-ReLU feature kernel is 2FS whenever the preactivation kernel is 2FS.

\clearpage
\subsection{Nonlinear network preactivation dynamics} \label{appendix:nonlinear-riccati}

Under readout optimality, the reduced objective depends on the feature kernel only through
\begin{align}
L^*(\mathbf{K})
&=
\frac{1}{2}\operatorname{Tr}\!\big[\boldsymbol{\Sigma}_y (\mathbf{I}+\gamma \mathbf{K})^{-1}\big],
&
\nabla_{\mathbf{H}} L^*
&=
- \mathbf{H} \mathbf{M},
\end{align}
with
\begin{align}
\mathbf{M} = \gamma (\mathbf{I}+\gamma \mathbf{K})^{-1}\boldsymbol{\Sigma}_y(\mathbf{I}+\gamma \mathbf{K})^{-1}
\end{align}
For one neuron row \( \mathbf{z} \in \mathbb{R}^N \), let \( \mathbf{h} = \phi(\mathbf{z}) \). Since \( \phi \) acts coordinatewise,
\begin{align}
\dot{\mathbf{z}} = \boldsymbol{\Sigma}_x \big(\mathbf{M}\mathbf{h} \odot \phi'(\mathbf{z})\big)
\end{align}
Therefore, with \( \mathbf{Q} = \mathbb{E}[\mathbf{z}\mathbf{z}^\top] \),
\begin{align}
\dot{\mathbf{Q}}
=
\mathbb{E}[\mathbf{z}\dot{\mathbf{z}}^\top] + \mathbb{E}[\dot{\mathbf{z}} \mathbf{z}^\top]
\end{align}
In both nonlinearities considered here, Stein identities reduce the mixed expectation to
\begin{align}
\mathbb{E}[\mathbf{z}\dot{\mathbf{z}}^\top] = \boldsymbol{\Sigma}_x \mathbf{C},
\qquad
\mathbf{C} = \mathbf{R}\mathbf{Q},
\end{align}
which gives the nonlinear Riccati equation
\begin{align}
\dot{\mathbf{Q}} = \boldsymbol{\Sigma}_x \mathbf{R}\mathbf{Q} + \mathbf{Q}\mathbf{R}\boldsymbol{\Sigma}_x
\label{eq:nonlinear-riccati}
\end{align}
Since \( \mathbf{R} \) is 2FS, \autoref{eq:nonlinear-riccati} diagonalizes in the projector basis, yielding
\begin{align}
\dot{\lambda}_\bullet^Q &= 2\lambda_\bullet^Q \lambda_\bullet^X \lambda_\bullet^R,
\\
\dot{\alpha}_{S,\mathbf{Q}} &= (1-\alpha_Q^2)\lambda_S^X \lambda_S^R(1-\nu_X\nu_R)
\end{align}

\paragraph{Leaky ReLU case.}
For \( \phi_\omega(z)=\max\{z,\omega z\} \), we have
\begin{align}
\phi_\omega'(z)=\omega + (1-\omega)\mathbf{1}_{\{z>0\}}
\end{align}
If \( (u,v) \) are centered jointly Gaussian with correlation \( \rho \), then
\begin{align}
\mathbf{D}_\omega(\rho)
&\coloneqq
(1-\omega)\,\mathbb{E}\!\big[\delta(u)\phi_\omega(v)\big]
=
\frac{(1-\omega)^2}{2\pi}\sqrt{1-\rho^2},
\\
\mathbf{E}_\omega(\rho)
&\coloneqq
\mathbb{E}\!\big[\phi_\omega'(u)\phi_\omega'(v)\big]
=
\omega + \frac{(1-\omega)^2}{2\pi}\big(\pi-\arccos \rho\big)
\end{align}
Substituting these into the Stein expansion gives
\begin{align}
\mathbf{R}_\omega
=
\operatorname{Diag}\!\big((\mathbf{M}\odot \mathbf{D}_\omega)\mathbf{1}\big)
+
(\mathbf{M}\odot \mathbf{E}_\omega)
\end{align}
Moreover, since
\begin{align}
\mathbf{D}_\omega = (1-\omega)^2 \mathbf{D}_{\mathrm{ReLU}},
\qquad
\mathbf{E}_\omega = \omega \mathbf{1}\mathbf{1}^\top + (1-\omega)^2 \mathbf{E}_{\mathrm{ReLU}},
\end{align}
we obtain the exact decomposition
\begin{align}
\mathbf{R}_\omega = \omega \mathbf{M} + (1-\omega)^2 \mathbf{R}_{\mathrm{ReLU}}
\label{eq:leaky-gain-decomposition}
\end{align}

\paragraph{Erf case.}

For
\begin{align}
\phi_\beta(z)
=
\mathrm{erf}(\beta z),
\end{align}
the infinite-width kernel map is
\begin{align}
\mathbf{K}_{ij}
=
\frac{2}{\pi}
\arcsin\!\left(
\frac{2\beta^2 \mathbf{Q}_{ij}}{1+2\beta^2 q_d}
\right)
\end{align}
Define the erf derivative-gain kernel
\begin{align}
(\mathbf{G}_\beta)_{ij}
&\coloneqq
\mathbb{E}
\left[
\phi_\beta'(z_i)\phi_\beta'(z_j)
\right]
\\
&=
\frac{4\beta^2/\pi}
{
\sqrt{
(1+2\beta^2 q_d)^2
-
4\beta^4 \mathbf{Q}_{ij}^2
}
}
\label{eq:erf-gain}
\end{align}
The second derivative satisfies
\begin{align}
\phi_\beta''(z)
=
-2\beta^2 z\,\phi_\beta'(z)
\end{align}
For a centered Gaussian pair \((z_i,z_j)\) with common variance \(q_d\), set
\begin{align}
T_{ij}
\coloneqq
\mathbb{E}
\left[
\phi_\beta(z_j)\phi_\beta''(z_i)
\right]
\end{align}
By Stein's lemma,
\begin{align}
\mathbb{E}
\left[
z_i\phi_\beta'(z_i)\phi_\beta(z_j)
\right]
=
q_d T_{ij}
+
\mathbf{Q}_{ij}(\mathbf{G}_\beta)_{ij}
\end{align}
Using \(\phi_\beta''(z_i)=-2\beta^2 z_i\phi_\beta'(z_i)\), this gives
\begin{align}
T_{ij}
=
-
\frac{2\beta^2}{1+2\beta^2 q_d}
\mathbf{Q}_{ij}
(\mathbf{G}_\beta)_{ij}
\end{align}
Hence, with
\begin{align}
\zeta_\beta
\coloneqq
\frac{2\beta^2}{1+2\beta^2 q_d},
\end{align}
the Stein expansion of the nonlinear preactivation dynamics yields
\begin{align}
\mathbf{R}_\beta
=
\mathbf{M}\odot \mathbf{G}_\beta
-
\zeta_\beta
\operatorname{Diag}\!\big(
(\mathbf{M}\odot \mathbf{G}_\beta\odot \mathbf{Q})\mathbf{1}
\big)
\end{align}

\clearpage
\subsection{Erf terminal abstraction} \label{appendix:erf-fixed-point}

Throughout this subsection, write
\begin{align}
\nu_X
\coloneqq
\frac{\lambda_{SC}^{\mathbf{X}}}{\lambda_S^{\mathbf{X}}},
\qquad
\nu_Y
\coloneqq
\frac{\lambda_{SC}^{\mathbf{Y}}}{\lambda_S^{\mathbf{Y}}},
\qquad
\nu_Q
\coloneqq
\frac{\lambda_{SC}^{\mathbf{Q}}}{\lambda_S^{\mathbf{Q}}},
\qquad
\nu_K
\coloneqq
\frac{\lambda_{SC}^{\mathbf{K}}}{\lambda_S^{\mathbf{K}}}
\end{align}
For the erf dynamics, define
\begin{align}
\mathbf{A}
\coloneqq
\mathbf{M}\odot \mathbf{G}_\beta,
\qquad
\mathbf{B}
\coloneqq
\mathbf{M}\odot \mathbf{G}_\beta\odot \mathbf{Q}
\end{align}
Because \(\mathbf{B}\) is 2FS, every row sum is identical. Hence
\begin{align}
\operatorname{Diag}(\mathbf{B}\mathbf{1})
=
b\mathbf{I}
\end{align}
for a scalar \(b\). From the corrected erf Stein term,
\begin{align}
\mathbf{R}_\beta
=
\mathbf{A}-s_\beta \mathbf{I},
\qquad
s_\beta
\coloneqq
\zeta_\beta b,
\qquad
\zeta_\beta
=
\frac{2\beta^2}{1+2\beta^2 q_d}
\end{align}
At the level of 2FS eigenvalues,
\begin{align}
\lambda_\bullet^{\mathbf{R}_\beta}
=
\lambda_\bullet^{\mathbf{A}}
-
s_\beta,
\qquad
\bullet\in\{I,S,C,SC,G\}
\end{align}
Therefore the gain inverse-SNR is
\begin{align}
\nu_{R_\beta}
&=
\frac{
\lambda_{SC}^{\mathbf{A}}-s_\beta
}{
\lambda_S^{\mathbf{A}}-s_\beta
}
\\
&=
\nu_A
\frac{
1-s_\beta/\lambda_{SC}^{\mathbf{A}}
}{
1-s_\beta/\lambda_S^{\mathbf{A}}
}
\end{align}
Define
\begin{align}
E_{\mathrm{Stein}}
&\coloneqq
\frac{
1-s_\beta/\lambda_{SC}^{\mathbf{A}}
}{
1-s_\beta/\lambda_S^{\mathbf{A}}
},
&
E_J
&\coloneqq
\frac{\nu_A}{\nu_M}
\end{align}
Since
\begin{align}
\mathbf{M}
=
\gamma
(\mathbf{I}+\gamma \mathbf{K})^{-1}
\boldsymbol{\Sigma}^{\mathbf{Y}}
(\mathbf{I}+\gamma \mathbf{K})^{-1},
\end{align}
we have
\begin{align}
\nu_M
&=
\nu_Y
\left(
\frac{
1+\gamma\lambda_S^{\mathbf{K}}
}{
1+\gamma\lambda_{SC}^{\mathbf{K}}
}
\right)^2
\\
&=
\frac{\nu_Y}{\nu_K^2}
\left(
\frac{
1+(\gamma\lambda_S^{\mathbf{K}})^{-1}
}{
1+(\gamma\lambda_{SC}^{\mathbf{K}})^{-1}
}
\right)^2
\end{align}
Now define
\begin{align}
E_\gamma
&\coloneqq
\left(
\frac{
1+(\gamma\lambda_S^{\mathbf{K}})^{-1}
}{
1+(\gamma\lambda_{SC}^{\mathbf{K}})^{-1}
}
\right)^2,
&
E_{\mathbf{Q}\to \mathbf{K}}
&\coloneqq
\left(
\frac{\nu_Q}{\nu_K}
\right)^2
\end{align}
Combining the previous displays gives
\begin{align}
\nu_{R_\beta}
=
\frac{\nu_Y}{\nu_Q^2}
\underbrace{
E_{\mathbf{Q}\to \mathbf{K}}
E_J
E_{\mathrm{Stein}}
E_\gamma
}_{E_{\mathrm{erf}}}
\label{eq:erf-nuR}
\end{align}
The nontrivial fixed point of \(\dot{\alpha}_{S,\mathbf{Q}}\) is characterized by
\begin{align}
1-\nu_X\nu_{R_\beta}
=
0
\end{align}
Substituting \autoref{eq:erf-nuR} gives
\begin{align}
(\nu_Q^*)^2
=
\nu_X\nu_Y E_{\mathrm{erf}},
\qquad
\alpha_{\mathbf{Q},\infty}^{\mathrm{erf}}
=
\frac{
1-\sqrt{\nu_X\nu_Y E_{\mathrm{erf}}}
}{
1+\sqrt{\nu_X\nu_Y E_{\mathrm{erf}}}
}
\end{align}

To obtain the small-\(\beta\) expansion, define the corrected normalized erf correlation parameter
\begin{align}
c_\beta
\coloneqq
\frac{2\beta^2 q_d}{1+2\beta^2 q_d},
\qquad
u_\mu
\coloneqq
c_\beta\rho_\mu,
\qquad
u_*
\coloneqq
\max_\mu |u_\mu|
\end{align}
If \(q_d\) and the off-diagonal \(q_\mu\) remain \(O(1)\), then \(u_*=O(\beta^2)\). Since
\begin{align}
\arcsin u
=
u+O(u^3),
\end{align}
the corrected erf map gives
\begin{align}
\nu_K
=
\nu_Q
\bigl(1+O(u_*^2)\bigr),
\qquad
E_{\mathbf{Q}\to \mathbf{K}}
=
1+O(\beta^4)
\end{align}
Similarly, from the corrected \autoref{eq:erf-gain},
\begin{align}
\mathbf{G}_\beta
=
\frac{
4\beta^2
}{
\pi(1+2\beta^2 q_d)
}
\mathbf{1}\mathbf{1}^{\top}
+
O(\beta^6),
\end{align}
entrywise on the finite set of 2FS entry types. Therefore
\begin{align}
\nu_A
=
\nu_M
\bigl(1+O(\beta^4)\bigr),
\qquad
E_J
=
1+O(\beta^4)
\end{align}
Moreover, in the nondegenerate regime where the ratios
\(b/\lambda_S^{\mathbf{A}}\) and \(b/\lambda_{SC}^{\mathbf{A}}\) remain \(O(1)\),
\begin{align}
\frac{s_\beta}{\lambda_\bullet^{\mathbf{A}}}
=
O(\beta^2),
\qquad
\bullet\in\{S,SC\},
\end{align}
and hence
\begin{align}
E_{\mathrm{Stein}}
=
1+O(\beta^2)
\end{align}
Finally, since \(\mathbf{K}=O(\beta^2)\) in the small-\(\beta\) regime, the nonzero signal and interaction eigenvalues satisfy \(\lambda_\bullet^{\mathbf{K}}\asymp \beta^2\). Thus, when \(\gamma\beta^2\to\infty\),
\begin{align}
E_\gamma
=
1+
O\!\left(
\frac{1}{\gamma\beta^2}
\right)
\end{align}
Combining these estimates,
\begin{align}
E_{\mathrm{erf}}
=
1+
O\!\left(
\beta^2
+
\frac{1}{\gamma\beta^2}
\right)
\end{align}
Consequently,
\begin{align}
\alpha_{\mathbf{Q},\infty}^{\mathrm{erf}}
=
\alpha_{\infty}^{\mathrm{linear}}
+
O\!\left(
\beta^2
+
\frac{1}{\gamma\beta^2}
\right),
\end{align}
where
\begin{align}
\alpha_{\infty}^{\mathrm{linear}}
=
\frac{
1-\sqrt{\nu_X\nu_Y}
}{
1+\sqrt{\nu_X\nu_Y}
}
\end{align}

\clearpage
\subsection{Leaky ReLU terminal abstraction} \label{appendix:relu-fixedpoint}

We derive the scalar terminal equation for leaky-ReLU preactivations and its closed form along the noiseless-target edge. Throughout this appendix, we work in the class-level, centered, shape--color balanced 2FS reduction. Namely, we assume that the global target mode has been removed,
\begin{align}
\lambda_G^{\mathbf{Y}}=0,
\end{align}
that shape and color are signal-balanced,
\begin{align}
\lambda_S^{\mathbf{X}}=\lambda_C^{\mathbf{X}},
\qquad
\lambda_S^{\mathbf{Y}}=\lambda_C^{\mathbf{Y}},
\qquad
\lambda_S^{\mathbf{Q}(0)}=\lambda_C^{\mathbf{Q}(0)},
\end{align}
and that the initialized preactivation kernel is centered,
\begin{align}
\lambda_G^{\mathbf{Q}(0)}=0
\end{align}
These conditions are preserved by the dynamics. We write
\begin{align}
\nu_X
=
\frac{\lambda_{SC}^{\mathbf{X}}}{\lambda_S^{\mathbf{X}}},
\qquad
\nu_Y
=
\frac{\lambda_{SC}^{\mathbf{Y}}}{\lambda_S^{\mathbf{Y}}},
\qquad
\nu_Q
=
\frac{\lambda_{SC}^{\mathbf{Q}}}{\lambda_S^{\mathbf{Q}}}
\end{align}
The within-class \(I\)-mode is not relevant for the class-mean abstraction calculation below and is suppressed from the notation.

\paragraph{Balanced 2FS parameterization of the preactivation kernel.}

Let the two factors be \(s,c\in\{-1,+1\}\). In the centered balanced reduction, the class-level preactivation kernel can be written as
\begin{align}
\mathbf{Q}_{(s,c),(s',c')}
=
\frac{q}{4}
\left(
ss'
+
cc'
+
\nu_Q ss'cc'
\right),
\end{align}
where \(q=\lambda_S^{\mathbf{Q}}=\lambda_C^{\mathbf{Q}}>0\). The diagonal entry is
\begin{align}
q_d
=
\frac{q}{4}(2+\nu_Q)
\end{align}
There are two off-diagonal correlations. If the two classes differ in both factors, then
\begin{align}
\rho_0
=
\frac{\mathbf{Q}_{(s,c),(-s,-c)}}{q_d}
=
\frac{\nu_Q-2}{\nu_Q+2}
\end{align}
If they differ in exactly one factor, then
\begin{align}
\rho_1
=
\frac{\mathbf{Q}_{(s,c),(s,-c)}}{q_d}
=
-\frac{\nu_Q}{\nu_Q+2}
\end{align}

Define the angular coordinate
\begin{align}
\theta
=
\arccos
\left(
\frac{\nu_Q-2}{\nu_Q+2}
\right)
\in[0,\pi]
\end{align}
Then
\begin{align}
\rho_0=\cos\theta
\end{align}
Solving for \(\nu_Q\), we get
\begin{align}
\nu_Q
=
\frac{2(1+\cos\theta)}{1-\cos\theta}
\end{align}
Therefore
\begin{align}
\rho_1
=
-\frac{\nu_Q}{\nu_Q+2}
=
-\frac{1+\cos\theta}{2}
\end{align}
Finally, since
\begin{align}
\alpha_Q
=
\frac{1-\nu_Q}{1+\nu_Q},
\end{align}
substituting the expression for \(\nu_Q\) gives
\begin{align}
\alpha_Q(\theta)
=
-\frac{1+3\cos\theta}{3+\cos\theta}
\end{align}
Thus \(\theta=\pi\) corresponds to \(\nu_Q=0\) and \(\alpha_Q=1\), while \(\theta=0\) corresponds to \(\nu_Q=\infty\) and \(\alpha_Q=-1\).

\paragraph{Feature inverse-SNR induced by leaky ReLU.}

Let
\begin{align}
\phi_\omega(z)=\max\{z,\omega z\},
\qquad
\omega\in[0,1]
\end{align}
For a centered Gaussian pair \((U,V)\) with equal variance and correlation \(\rho\), the normalized leaky-ReLU kernel map is
\begin{align}
g_\omega(\rho)
=
\rho
+
\frac{(1-\omega)^2}{\pi(1+\omega^2)}
\left(
\sqrt{1-\rho^2}
-
\rho\arccos\rho
\right)
\end{align}
Equivalently,
\begin{align}
g_\omega(\rho)
=
\frac{2\omega}{1+\omega^2}\rho
+
\frac{(1-\omega)^2}{1+\omega^2}
\frac{
\sqrt{1-\rho^2}
+
(\pi-\arccos\rho)\rho
}{\pi}
\end{align}
The two forms are identical. The second form makes clear that leaky ReLU interpolates between the linear normalized kernel \(g_1(\rho)=\rho\) and the pure-ReLU normalized kernel.

The feature kernel \(\mathbf{K}\) has diagonal entry normalized to \(1\), one-factor-flip entry \(g_\omega(\rho_1)\), and two-factor-flip entry \(g_\omega(\rho_0)\), up to an irrelevant positive scale. For a \(2\times2\) balanced class kernel with diagonal entry \(a\), one-factor-flip entry \(b\), and two-factor-flip entry \(d\), the \(S\)-mode eigenvalue is proportional to \(a-d\), while the \(SC\)-mode eigenvalue is proportional to \(a-2b+d\). Hence the feature inverse-SNR induced by a preactivation angle \(\theta\) is
\begin{align}
\nu_K(\theta,\omega)
=
\frac{
1-2g_\omega(\rho_1)+g_\omega(\rho_0)
}{
1-g_\omega(\rho_0)
},
\end{align}
where
\begin{align}
\rho_0=\cos\theta,
\qquad
\rho_1=-\frac{1+\cos\theta}{2}
\end{align}

\paragraph{Effective target ratio in the terminal regime.}

The effective label kernel is
\begin{align}
\mathbf{M}(\mathbf{K})
=
\gamma
(\mathbf{I}+\gamma \mathbf{K})^{-1}
\boldsymbol{\Sigma}_y
(\mathbf{I}+\gamma \mathbf{K})^{-1}
\end{align}
Since \(\mathbf{K}\) and \(\boldsymbol{\Sigma}_y\) are simultaneously diagonal in the 2FS basis, its eigenvalues are
\begin{align}
\lambda_m^{\mathbf{M}}
=
\gamma
\frac{\lambda_m^{\mathbf{Y}}}{(1+\gamma\lambda_m^{\mathbf{K}})^2}
\end{align}
In the terminal large-feature-norm regime, \(\gamma\lambda_m^{\mathbf{K}}\gg1\) for the active modes, so
\begin{align}
\lambda_m^{\mathbf{M}}
\sim
\frac{\lambda_m^{\mathbf{Y}}}{\gamma(\lambda_m^{\mathbf{K}})^2}
\end{align}
Therefore the effective target interaction-to-signal ratio is
\begin{align}
\mu(\theta)
\coloneqq
\frac{\lambda_{SC}^{\mathbf{M}}}{\lambda_S^{\mathbf{M}}}
=
\frac{\nu_Y}{\nu_K(\theta,\omega)^2}
\end{align}

\paragraph{Gain inverse-SNR.}

Here we  compute the inverse-SNR of the leaky-ReLU gain kernel \(\mathbf{R}_\omega\). For pure ReLU, define
\begin{align}
D(\rho)
=
\frac{\sqrt{1-\rho^2}}{2\pi},
\qquad
E(\rho)
=
\frac{\pi-\arccos\rho}{2\pi}
\end{align}
Set
\begin{align}
D_0=D(\rho_0),
\qquad
E_0=E(\rho_0),
\qquad
D_1=D(\rho_1),
\qquad
E_1=E(\rho_1)
\end{align}

Let
\begin{align}
m=\lambda_S^{\mathbf{M}}=\lambda_C^{\mathbf{M}},
\qquad
\lambda_{SC}^{\mathbf{M}}=\mu m
\end{align}
Then the balanced class-level entries of \(\mathbf{M}\) are
\begin{align}
\mathbf{M}_d=\frac{m}{4}(2+\mu),
\qquad
\mathbf{M}_1=-\frac{m\mu}{4},
\qquad
\mathbf{M}_0=\frac{m}{4}(\mu-2),
\end{align}
where \(\mathbf{M}_d\) is the diagonal entry, \(\mathbf{M}_1\) is the one-factor-flip entry, and \(\mathbf{M}_0\) is the two-factor-flip entry.

For pure ReLU, the gain has entries
\begin{align}
\mathbf{R}_d^{+}
=
\mathbf{M}_dE(1)
+
2\mathbf{M}_1D_1
+
\mathbf{M}_0D_0,
\end{align}
\begin{align}
\mathbf{R}_1^{+}
=
\mathbf{M}_1E_1,
\qquad
\mathbf{R}_0^{+}
=
\mathbf{M}_0E_0
\end{align}
Here the superscript \(+\) denotes the pure-ReLU contribution. Since \(E(1)=1/2\), we get
\begin{align}
\frac{\mathbf{R}_d^{+}}{m}
=
\frac{2+\mu}{8}
-
\frac{\mu}{2}D_1
+
\frac{\mu-2}{4}D_0
\end{align}
The \(S\)-mode eigenvalue is proportional to \(\mathbf{R}_d^{+}-\mathbf{R}_0^{+}\), while the \(SC\)-mode eigenvalue is proportional to \(\mathbf{R}_d^{+}-2\mathbf{R}_1^{+}+\mathbf{R}_0^{+}\). Thus define
\begin{align}
A_S(\theta,\mu)
=
\frac{2+\mu}{8}
-
\frac{\mu}{2}D_1
+
\frac{\mu-2}{4}D_0
-
\frac{\mu-2}{4}E_0,
\end{align}
and
\begin{align}
A_{SC}(\theta,\mu)
=
\frac{2+\mu}{8}
-
\frac{\mu}{2}D_1
+
\frac{\mu-2}{4}D_0
+
\frac{\mu}{2}E_1
+
\frac{\mu-2}{4}E_0
\end{align}
These are the pure-ReLU gain eigenvalues normalized by \(m\):
\begin{align}
\lambda_S^{\mathbf{R}_{\mathrm{ReLU}}}=mA_S(\theta,\mu),
\qquad
\lambda_{SC}^{\mathbf{R}_{\mathrm{ReLU}}}=mA_{SC}(\theta,\mu)
\end{align}

For leaky ReLU,
\begin{align}
\mathbf{R}_\omega
=
\omega \mathbf{M}
+
(1-\omega)^2\mathbf{R}_{\mathrm{ReLU}}
\end{align}
Therefore
\begin{align}
\lambda_S^{\mathbf{R}_\omega}
=
\omega m
+
(1-\omega)^2mA_S(\theta,\mu),
\end{align}
and
\begin{align}
\lambda_{SC}^{\mathbf{R}_\omega}
=
\omega\mu m
+
(1-\omega)^2mA_{SC}(\theta,\mu)
\end{align}
Thus the gain inverse-SNR is
\begin{align} \label{eq:inverse-snr-relu}
\nu_R(\theta;\omega,\nu_Y)
=
\frac{
\omega\mu(\theta)
+
(1-\omega)^2A_{SC}(\theta,\mu(\theta))
}{
\omega
+
(1-\omega)^2A_S(\theta,\mu(\theta))
}
\end{align}

\paragraph{Scalar terminal condition.}

The preactivation abstraction dynamics have the form
\begin{align}
\dot{\alpha}_{\mathbf{Q}}
=
(1-\alpha_Q^2)
\lambda_S^{\mathbf{X}}
\lambda_S^{\mathbf{R}_\omega}
\left(
1-\nu_X\nu_R
\right)
\end{align}
Therefore any interior terminal point must satisfy
\begin{align}
\nu_X\nu_R(\theta;\omega,\nu_Y)=1
\end{align}
Once \(\theta\) is found, the corresponding terminal preactivation abstraction is
\begin{align}
\alpha_{\mathbf{Q},\infty}
=
-\frac{1+3\cos\theta}{3+\cos\theta}
\end{align}

When \(\omega=1\), the ReLU gate term disappears. Then \(\mathbf{K}=\mathbf{Q}\), so \(\nu_K=\nu_Q\), and
\begin{align}
\nu_R
=
\mu
=
\frac{\nu_Y}{\nu_Q^2}
\end{align}
The terminal equation becomes
\begin{align}
\nu_X\frac{\nu_Y}{\nu_Q^2}=1,
\end{align}
which gives the linear-network law
\begin{align}
\nu_{\mathbf{Q},\infty}
=
\sqrt{\nu_X\nu_Y}
\end{align}

\paragraph{Noiseless-target edge.}

Now set
\begin{align}
\nu_Y=0
\end{align}
Then
\begin{align}
\mu(\theta)=0
\end{align}
The dependence on the feature inverse-SNR \(\nu_K\) drops out entirely.

Since
\begin{align}
\rho_0=\cos\theta,
\end{align}
we have
\begin{align}
D_0
=
\frac{\sin\theta}{2\pi},
\qquad
E_0
=
\frac{\pi-\theta}{2\pi}
\end{align}
Substituting \(\mu=0\) into \(A_S\) and \(A_{SC}\) gives
\begin{align}
A_S(\theta,0)
=
\frac14
-
\frac12D_0
+
\frac12E_0
=
\frac{2\pi-\theta-\sin\theta}{4\pi},
\end{align}
and
\begin{align}
A_{SC}(\theta,0)
=
\frac14
-
\frac12D_0
-
\frac12E_0
=
\frac{\theta-\sin\theta}{4\pi}
\end{align}
Therefore
\begin{align}
\nu_R(\theta;\omega,0)
=
\frac{
(1-\omega)^2(\theta-\sin\theta)
}{
4\pi\omega
+
(1-\omega)^2(2\pi-\theta-\sin\theta)
}
\end{align}
The interior terminal equation \(\nu_X\nu_R=1\) is equivalently
\begin{align}
\nu_X
=
F_\omega(\theta)
\coloneqq
\frac{
4\pi\omega
+
(1-\omega)^2(2\pi-\theta-\sin\theta)
}{
(1-\omega)^2(\theta-\sin\theta)
}
\end{align}

We now analyze this curve. Let
\begin{align}
b=(1-\omega)^2
\end{align}
Then
\begin{align}
F_\omega(\theta)
=
\frac{
4\pi\omega
+
b(2\pi-\theta-\sin\theta)
}{
b(\theta-\sin\theta)
}
\end{align}
For \(\theta\in(0,\pi)\),
\begin{align}
\theta-\sin\theta>0
\end{align}
Also,
\begin{align}
\lim_{\theta\downarrow0}F_\omega(\theta)=+\infty,
\end{align}
because
\begin{align}
\theta-\sin\theta
=
\frac{\theta^3}{6}
+
O(\theta^5)
\end{align}
At the perfect-abstraction endpoint \(\theta=\pi\),
\begin{align}
F_\omega(\pi)
=
\frac{
4\pi\omega+b\pi
}{
b\pi
}
=
1+\frac{4\omega}{(1-\omega)^2}
\end{align}
Therefore define
\begin{align}
\nu_{\mathrm{crit}}(\omega)
=
1+\frac{4\omega}{(1-\omega)^2}
\end{align}

The function \(F_\omega\) is strictly decreasing. Indeed,
\begin{align}
F_\omega'(\theta)
=
\frac{
-b(1+\cos\theta)(\theta-\sin\theta)
-
\left[
4\pi\omega
+
b(2\pi-\theta-\sin\theta)
\right]
(1-\cos\theta)
}{
b(\theta-\sin\theta)^2
}
\end{align}
Every term in the numerator is nonpositive, and for \(\theta\in(0,\pi)\) the numerator is strictly negative. Hence \(F_\omega\) decreases from \(+\infty\) to \(\nu_{\mathrm{crit}}(\omega)\).

It follows that:

\begin{align}
\alpha_{\mathbf{Q},\infty}=1
\quad
\text{if}
\quad
\nu_X\le \nu_{\mathrm{crit}}(\omega),
\end{align}
and for
\begin{align}
\nu_X>\nu_{\mathrm{crit}}(\omega),
\end{align}
there is a unique interior terminal point determined by
\begin{align}
\nu_X
=
\frac{
4\pi\omega
+
(1-\omega)^2(2\pi-\theta-\sin\theta)
}{
(1-\omega)^2(\theta-\sin\theta)
}
\end{align}

In terms of
\begin{align}
\eta_X
=
\frac{\nu_X}{1+\nu_X},
\end{align}
the critical value is
\begin{align}
\eta_{\mathrm{crit}}(\omega)
=
\frac{\nu_{\mathrm{crit}}(\omega)}
{1+\nu_{\mathrm{crit}}(\omega)}
=
\frac{
4\omega+(1-\omega)^2
}{
4\omega+2(1-\omega)^2
}
\end{align}
For pure ReLU, \(\omega=0\), this gives
\begin{align}
\nu_{\mathrm{crit}}(0)=1,
\qquad
\eta_{\mathrm{crit}}(0)=\frac12
\end{align}

\paragraph{High-input-noise asymptotic on the noiseless edge.}

We now analyze the bottom-right corner of the phase diagram:
\begin{align}
\nu_Y=0,
\qquad
\nu_X\to\infty
\end{align}
This corresponds to \(\theta\downarrow0\). Using
\begin{align}
\sin\theta
=
\theta-\frac{\theta^3}{6}+O(\theta^5),
\end{align}
we get
\begin{align}
\theta-\sin\theta
=
\frac{\theta^3}{6}+O(\theta^5),
\end{align}
and
\begin{align}
2\pi-\theta-\sin\theta
=
2\pi-2\theta+\frac{\theta^3}{6}+O(\theta^5)
\end{align}
Therefore
\begin{align}
F_\omega(\theta)
=
\frac{
12\pi(1+\omega^2)
}{
(1-\omega)^2\theta^3
}
-
\frac{12}{\theta^2}
+
O(\theta^{-1})
\end{align}
Define
\begin{align}
A_\omega
=
\frac{
12\pi(1+\omega^2)
}{
(1-\omega)^2
}
\end{align}
Inverting the expansion gives
\begin{align}
\theta
=
A_\omega^{1/3}\nu_X^{-1/3}
-
4A_\omega^{-1/3}\nu_X^{-2/3}
+
O(\nu_X^{-1})
\end{align}
Next, near \(\theta=0\),
\begin{align}
\alpha_Q(\theta)
=
-\frac{1+3\cos\theta}{3+\cos\theta}
=
-1+\frac{\theta^2}{4}+O(\theta^4)
\end{align}
Substituting the expansion for \(\theta\), we obtain
\begin{align}
\alpha_{\mathbf{Q},\infty}
=
-1
+
\frac14
\left(
\frac{
12\pi(1+\omega^2)
}{
(1-\omega)^2\nu_X
}
\right)^{2/3}
-
\frac{2}{\nu_X}
+
O(\nu_X^{-4/3})
\end{align}
In particular,
\begin{align}
\alpha_{\mathbf{Q},\infty}
\to
-1
\qquad
\text{as}
\qquad
\nu_X\to\infty
\end{align}
For pure ReLU, \(\omega=0\), this becomes
\begin{align}
\alpha_{\mathbf{Q},\infty}
=
-1
+
\frac14
\left(
\frac{12\pi}{\nu_X}
\right)^{2/3}
-
\frac{2}{\nu_X}
+
O(\nu_X^{-4/3})
\end{align}

\clearpage
\subsection{Attenuation law} \label{appendix:attenuation}

\paragraph{Preliminaries.}
\autoref{section:attenuation} compares the abstraction measured before and after the coordinatewise nonlinearity. The preactivation kernel is
\begin{align}
\mathbf{Q} \coloneqq \frac{1}{D}\mathbf{Z}^\top \mathbf{Z},
\end{align}
and the feature kernel is
\begin{align}
\mathbf{K} \coloneqq \frac{1}{D}\mathbf{H}^\top \mathbf{H},
\qquad
\mathbf{H}=\phi(\mathbf{Z})
\end{align}
The aim of this subsection is to prove the formal version of \autoref{thm:nonlinear-attenuation}. The result is \emph{instantaneous}: we fix a time \(t\), and ask how abstraction is changed from \(\mathbf{Q}\) to \(\mathbf{K}\) by applying \(\phi\) to a fixed preactivation geometry. Note this is an independent question from how the nonlinearities affect the training dynamics, which are discussed in \autoref{section:nonlinear-dynamics} and \autoref{appendix:nonlinear-riccati}. Throughout, assume the fixed preactivation kernel \(\mathbf{Q}\) is 2FS and positive semidefinite with common diagonal entry \(q_d>0\). In the infinite-width limit, \autoref{appendix:2fs-preserved} shows that the feature kernel is also 2FS and its five entry types are deterministic functions of the five entry types of \(\mathbf{Q}\). We write the 2FS entries of a generic kernel \(A\) as follows: (also see \autoref{fig:2fs-ansatz}D):
\begin{align}
 a_d,
 \qquad
 a_2,
 \qquad
 a_{1s},
 \qquad
 a_{1c},
 \qquad
 a_0,
\end{align}
where the subscripts mean diagonal, same fine class, same shape but different color, same color but different shape, and different in both factors, respectively.

\paragraph{Entrywise abstraction formula.}
The derivation below translates the geometric cosine definition from \autoref{section:min-model} into a formula involving only 2FS kernel entries. This is the entrywise analogue of \autoref{proposition:2fs-abstraction-eigenvalues}.

For a 2FS kernel \(A\), define the within-class centroid self-inner-product
\begin{align}
\bar a
\coloneqq
\frac{a_d+(n-1)a_2}{n}
\end{align}
Let \(\boldsymbol{\mu}_{s,c}^{A}\) denote the class centroid in the representation whose Gram matrix is \(A\). The two context-specific shape directions are:
\begin{align}
\mathbf{u}_{+,A}
\coloneqq
\boldsymbol{\mu}_{+,+}^{A}-\boldsymbol{\mu}_{-,+}^{A}, \qquad 
\mathbf{u}_{-,A}
\coloneqq
\boldsymbol{\mu}_{+,-}^{A}-\boldsymbol{\mu}_{-,-}^{A}
\end{align}
By the five-entry 2FS structure,
\begin{align}
\|\mathbf{u}_{+,A}\|^2
=
\|\mathbf{u}_{-,A}\|^2
&=
2(\bar a-a_{1c}),
\\
\langle \mathbf{u}_{+,A},\mathbf{u}_{-,A}\rangle
&=
2(a_{1s}-a_0)
\end{align}
Therefore, whenever \(\bar a-a_{1c}>0\), the shape abstraction induced by \(A\) is
\begin{align}
\alpha_{S,A}
=
\frac{\langle \mathbf{u}_{+,A},\mathbf{u}_{-,A}\rangle}
{\|\mathbf{u}_{+,A}\|\,\|\mathbf{u}_{-,A}\|}
=
\frac{a_{1s}-a_0}{\bar a-a_{1c}}
\label{eq:appendix-attenuation-alpha-entry}
\end{align}
Equivalently,
\begin{align}
\alpha_{S,A}
=
\frac{\lambda_S^{(A)}-\lambda_{SC}^{(A)}}
{\lambda_S^{(A)}+\lambda_{SC}^{(A)}}
\end{align}
In the main text, \(\alpha_Q\) and \(\alpha_K\) refer to \(\alpha_Q\) and \(\alpha_K\), respectively.

\paragraph{Normalized NNGP maps.}
The nonlinearities considered in \autoref{section:nonlinear} act entrywise on the 2FS kernel in the infinite-width limit. It is useful to separate the overall variance scale from the correlation dependence. Define normalized preactivation correlations
\begin{align}
\rho_\mu
\coloneqq
\frac{q_\mu}{q_d},
\qquad
\mu\in\{d,2,1s,1c,0\},
\qquad
\rho_d=1,
\end{align}
and
\begin{align}
\bar\rho
\coloneqq
\frac{1+(n-1)\rho_2}{n}
\end{align}
If the infinite-width NNGP map has the form
\begin{align}
k_\mu=\psi(q_\mu,q_d),
\end{align}
then we define its normalized scalar map by:
\begin{align}
g(\rho)
\coloneqq
\frac{\psi(q_d\rho,q_d)}{\psi(q_d,q_d)}
\label{eq:appendix-attenuation-normalized-map}
\end{align}
Then \(g(1)=1\), \(k_\mu=k_d g(\rho_\mu)\) with \(k_d=\psi(q_d,q_d)>0\), and the common scale \(k_d\) cancels from the cosine. Applying \autoref{eq:appendix-attenuation-alpha-entry} to \(\mathbf{Q}\) and \(\mathbf{K}\) gives
\begin{align}
\alpha_Q
&=
\frac{\rho_{1s}-\rho_0}{\bar\rho-\rho_{1c}},
\label{eq:appendix-attenuation-alpha-q}
\\
\alpha_K
&=
\frac{g(\rho_{1s})-g(\rho_0)}{\bar g-g(\rho_{1c})},
\qquad
\bar g
\coloneqq
\frac{1+(n-1)g(\rho_2)}{n}
\label{eq:appendix-attenuation-alpha-k}
\end{align}

\paragraph{Exact expression for the attenuation factor.}
We first derive the exact multiplicative factor relating \(\alpha_K\) and \(\alpha_Q\). For a scalar function \(g\), define the secant slope
\begin{align}
s_g(x,y)
\coloneqq
\begin{cases}
\dfrac{g(x)-g(y)}{x-y}, & x\neq y,\\[1.25em]
g'(x), & x=y,
\end{cases}
\label{eq:appendix-attenuation-secant}
\end{align}
where the second line is used when \(g\) is differentiable at \(x\). Then
\begin{align}
g(\rho_{1s})-g(\rho_0)
=
s_g(\rho_{1s},\rho_0)(\rho_{1s}-\rho_0)
\end{align}
For the denominator,
\begin{align}
\bar g-g(\rho_{1c})
&=
\frac{1}{n}\bigl(g(1)-g(\rho_{1c})\bigr)
+
\frac{n-1}{n}\bigl(g(\rho_2)-g(\rho_{1c})\bigr)
\\
&=
\Bigl[
 w_d s_g(1,\rho_{1c})
 +
 w_2 s_g(\rho_2,\rho_{1c})
 \Bigr]
(\bar\rho-\rho_{1c}),
\end{align}
where
\begin{align}
w_d
&\coloneqq
\frac{1}{n}\frac{1-\rho_{1c}}{\bar\rho-\rho_{1c}},
&
w_2
&\coloneqq
\frac{n-1}{n}\frac{\rho_2-\rho_{1c}}{\bar\rho-\rho_{1c}}
\label{eq:appendix-attenuation-weights}
\end{align}
These weights satisfy \(w_d+w_2=1\). Substituting into \autoref{eq:appendix-attenuation-alpha-q} and \autoref{eq:appendix-attenuation-alpha-k} gives the exact identity
\begin{align}
\alpha_K
&=
\mathcal{A}_g(\mathbf{Q})\,\alpha_Q,
\label{eq:appendix-attenuation-exact-law}
\\
\mathcal{A}_g(\mathbf{Q})
&\coloneqq
\frac{s_g(\rho_{1s},\rho_0)}
{w_d s_g(1,\rho_{1c})+w_2 s_g(\rho_2,\rho_{1c})}
\label{eq:appendix-attenuation-factor}
\end{align}
Thus \autoref{eq:appendix-attenuation-factor} gives the exact expression for the attenuation factor. The next question is when \(\mathcal{A}_g(\mathbf{Q})\) lies in \([0,1]\). The reslt below proves this for the two nonlinearities used in the main text, erf and L-ReLU. 

\begin{theorem}[Attenuation law, formal version of \autoref{thm:nonlinear-attenuation}]
\label{theorem:attenuation-law-formal}
Fix a time \(t\). Suppose \autoref{assumption:2fs} holds and the preactivation kernel \(\mathbf{Q}\) is nonzero, PSD, and 2FS with diagonal entry \(q_d>0\). Assume \(\lambda_S^{(\mathbf{Q})}+\lambda_{SC}^{(\mathbf{Q})}>0\), so \(\alpha_Q\) is defined. Let \(\mathbf{K}_\phi(\mathbf{Q})\) be the feature kernel obtained by applying \(\phi\) coordinatewise to the preactivations in the infinite-width limit \(D \to \infty\).

For both nonlinearities
\begin{align}
\phi_\beta(z)
&=
\mathrm{erf}(\beta z),
\qquad
\beta>0,
\\
\phi_\omega(z)
&=
\max\{z,\omega z\},
\qquad
\omega\in[0,1],
\end{align}
the corresponding feature-space abstraction satisfies
\begin{align}
\alpha_{\mathbf{K}_\phi}
=
\mathcal{A}_\phi(\mathbf{Q})\,\alpha_Q,
\qquad
0\le \mathcal{A}_\phi(\mathbf{Q})\le 1,
\label{eq:appendix-attenuation-theorem-law}
\end{align}
where \(\mathcal{A}_\phi(\mathbf{Q})\) is the exact factor in \autoref{eq:appendix-attenuation-factor} with \(g=g_\phi\). Consequently,
\begin{align}
|\alpha_{\mathbf{K}_\phi}|
\le
|\alpha_Q|
\end{align}
In particular, if \(\alpha_Q>0\), then
\begin{align}
0\le \alpha_{\mathbf{K}_\phi}\le \alpha_Q,
\end{align}
so applying either nonlinearity cannot improve positive shape abstraction at that time.
\end{theorem}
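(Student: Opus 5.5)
The plan is to work directly with the exact identity $\alpha_K=\mathcal{A}_g(\mathbf{Q})\,\alpha_Q$ of \autoref{eq:appendix-attenuation-exact-law}. Everything then reduces to two scalar facts about the normalized maps $g_\beta$ and $g_\omega$ from \autoref{eq:appendix-attenuation-normalized-map}:
\begin{itemize}
\item[(a)] $g$ is nondecreasing on $[-1,1]$;
\item[(b)] $g$ is convex on $[-1,1]$.
\end{itemize}
Both follow from a single structural observation: each map is a dual activation with a power series $g(\rho)=\sum_{k\ge 0} b_k\rho^k$, where $b_k\ge 0$ and $\sum_k b_k=g(1)=1$.

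For erf, $g_\beta(\rho)=\arcsin(c_\beta\rho)/\arcsin(c_\beta)$ with $c_\beta=2\beta^2q_d/(1+2\beta^2q_d)\in(0,1)$. Since $\arcsin$ has a Taylor series with nonnegative odd coefficients, so does $g_\beta$. For leaky ReLU, $g_\omega$ is a convex combination of $\rho$ and the arc-cosine kernel $\pi^{-1}\bigl(\sqrt{1-\rho^2}+(\pi-\arccos\rho)\rho\bigr)$. Its nonnegative expansion is classical; it also follows directly from $g_\omega''(\rho)=\tfrac{(1-\omega)^2}{\pi(1+\omega^2)}(1-\rho^2)^{-1/2}\ge0$ together with $g_\omega'\ge0$ at $\rho=-1$. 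From (a) and (b), every secant slope $s_g(x,y)$ in \autoref{eq:appendix-attenuation-secant} is nonnegative, and it is nondecreasing in each argument.

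Nonnegativity $\mathcal{A}_g\ge 0$ is then immediate. The numerator $s_g(\rho_{1s},\rho_0)$ is $\ge0$, and the denominator is a combination of nonnegative slopes with weights $w_d,w_2$. I will check these weights are nonnegative using the PSD condition on $\mathbf{Q}$: positivity of $\lambda_I,\lambda_C,\lambda_{SC}$ in \autoref{proposition:2fs-projector-decomposition} gives $1\ge\rho_2$, $\bar\rho\ge\rho_{1c}$, and $1\ge\rho_{1c}$. In the degenerate case where the denominator vanishes, $\alpha_K$ is set by continuity.

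For the upper bound $\mathcal{A}_g\le1$, I will compare the numerator slope with the two denominator slopes separately. By monotonicity of convex secants, $s_g(\rho_{1s},\rho_0)\le s_g(1,\rho_{1c})$ holds as soon as $\rho_{1s}\le 1$ and $\rho_0\le\rho_{1c}$. The analogous inequality for $s_g(\rho_2,\rho_{1c})$ needs $\rho_{1s}\le\rho_2$ and $\rho_0\le\rho_{1c}$. Since $w_d+w_2=1$, the weighted average of the two denominator slopes then dominates the numerator slope.

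The main obstacle is that these orderings are not automatic from positive semidefiniteness. For example, $q_{1c}-q_0=(\lambda_G-\lambda_S+\lambda_C-\lambda_{SC})/(4n)\cdot 2$ can be negative when $\lambda_S$ is large. I will therefore first try to reorganize the comparison so that only PSD-guaranteed gaps appear, using the eigenvalue form $\alpha=(\lambda_S-\lambda_{SC})/(\lambda_S+\lambda_{SC})$. If that fails, I will fall back on the power-series route. Write $\alpha_K=\sum_k b_k N_k/\sum_k b_k D_k$, where $N_k$ and $D_k$ are the shape numerator and denominator of the normalized Hadamard power $\mathbf{Q}^{\circ k}/q_d^k$. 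Each such power is PSD by the Schur product theorem and 2FS, so $D_k\ge0$. It then suffices to prove, by induction on $k$, that $0\le N_k\le \alpha_Q D_k$ when $\alpha_Q\ge0$ (and the mirrored statement when $\alpha_Q<0$). For the induction step I would write $\mathbf{Q}^{\circ(k+1)}=\mathbf{Q}\circ\mathbf{Q}^{\circ k}$ and view the class centroids as tensor products. The shape difference $\mu_{+c}\otimes\nu_{+c}-\mu_{-c}\otimes\nu_{-c}$ then splits into two terms, each controlled by the factorwise abstraction. The $k=0$ term contributes nothing to either $N$ or $D$, and the $k=1$ term contributes exactly $\alpha_Q$.

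Finally, $|\alpha_K|\le|\alpha_Q|$ follows from $0\le\mathcal{A}\le1$. The corollary $0\le\alpha_K\le\alpha_Q$ for $\alpha_Q>0$ is a direct restatement of the same bound.
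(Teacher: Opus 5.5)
\textbf{There is a genuine gap.} Your convexity route cannot be completed, and the statement you propose to prove by induction is false.

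\textbf{The convexity route.} The problem is not only that the orderings are unproven: monotonicity plus convexity of $g$ do not imply $\mathcal{A}_g\le 1$ on PSD 2FS kernels. Write $\tau_m=\lambda_m^{(\mathbf{Q})}/(4nq_d)$ and take $\tau_G=0.85$, $\tau_S=0.1$, $\tau_{SC}=0.05$, $\tau_C=0$, $\lambda_I^{(\mathbf{Q})}=0$. Then $\rho_{1s}=0.9$, $\rho_0=0.8$, $\rho_{1c}=0.7$, $\rho_2=\bar\rho=1$, and $\alpha_Q=1/3$. Yet the convex nondecreasing map $g(\rho)=5\max\{0,\rho-0.8\}$, which satisfies $g(1)=1$, gives $\mathcal{A}_g=s_g(0.9,0.8)/s_g(1,0.7)=3/2$. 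Your nonnegativity argument also fails. We have $\rho_2-\rho_{1c}=(\lambda_S^{(\mathbf{Q})}+\lambda_{SC}^{(\mathbf{Q})}-2\lambda_I^{(\mathbf{Q})})/(2nq_d)$, which is negative when within-class variance dominates, so $w_2<0$ is possible. Instead use $\bar g-g(\rho_{1c})\ge 0$, which holds because $\mathbf{K}$ is PSD. Similarly, $g''\ge0$ and $g'(-1)\ge0$ do not by themselves give nonnegative Taylor coefficients. For leaky ReLU, use the explicit nonnegative series of $(1-\rho^2)^{-1/2}$ together with $g_\omega'(0)>0$. So the nonnegative power series is the essential ingredient, not a fallback, and it is exactly what the paper's proof rests on.

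\textbf{The induction.} Your fallback has the right architecture, but $0\le N_k$ fails for even $k$. We have $N_2=(\rho_{1s}-\rho_0)(\rho_{1s}+\rho_0)=4(\tau_S-\tau_{SC})(\tau_G-\tau_C)$. This is negative whenever $\alpha_Q>0$ and $\lambda_C^{(\mathbf{Q})}>\lambda_G^{(\mathbf{Q})}$. For example, with $\lambda_G^{(\mathbf{Q})}=\lambda_I^{(\mathbf{Q})}=0$ and $\lambda_S^{(\mathbf{Q})}=\lambda_C^{(\mathbf{Q})}$ (the paper's centered, signal-balanced case), one gets exactly $\alpha(\mathbf{Q}^{\circ 2})=-\alpha_Q$. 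A one-sided bound cannot be propagated either, because the tensor step mixes signs.

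\textbf{The fix.} The invariant to carry is $|N_k|\le|\alpha_Q|D_k$. For each factor set $U=\tau_G-\tau_C$, $X=\tau_G+\tau_C$, $V=\tau_S-\tau_{SC}$, $Y=\tau_S+\tau_{SC}$; PSD gives $|U|\le X$. The class-level shape numerator and denominator of $A\circ B$ are proportional to $U_AV_B+U_BV_A$ and $X_AY_B+X_BY_A$. Hence $|\alpha(A\circ B)|\le\max(|\alpha_A|,|\alpha_B|)$.

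You must also handle within-class variation. When $\lambda_I^{(\mathbf{Q})}>0$, the centroids of $A\circ B$ are not tensor products of centroids. The centroid self-overlap exceeds the product of self-overlaps by $\frac{n-1}{n^2}(1-a_2)(1-b_2)\ge0$. This raises all four class-level eigenvalues equally, so it leaves $U,V$ unchanged and only increases $X,Y$.

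The sign statement $\operatorname{sign}(\alpha_K)=\operatorname{sign}(\alpha_Q)$ must then come from global monotonicity of $g$ (your (a)), not termwise.

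\textbf{Comparison with the paper.} The paper proves the same termwise bound $|N_p|\le|\alpha_Q|\Delta_p$ directly. It uses an odd-binomial expansion of $(U+V)^p-(U-V)^p$ with $|U|\le X$ and $|V|\le Y$. It then uses a zero-mean two-point random variable to show $\overline{\rho^p}\ge(X+Y)^p$. Once corrected, your tensor induction is a valid alternative that also yields the bound for Hadamard products of any two PSD 2FS kernels.
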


\begin{proof}
The proof has three parts. First, we prove a general attenuation lemma for normalized scalar maps with nonnegative power-series coefficients and monotonicity. Second, we verify that the \(\mathrm{erf}\) NNGP map satisfies these conditions. Third, we verify the same conditions for leaky ReLU.

\paragraph{Part 1: Schur-power attenuation lemma.} Define \(\tilde{\mathbf{Q}} = \mathbf{Q} / q_d\) as the 2FS correlation kernel and denote \(\tilde{\mathbf{Q}}^{\circ p}\) as the \(p\)-th Schur power (i.e. elementwise power) of \(\tilde{\mathbf{Q}}\). At a high level, this lemma aims to show that every Schur power \(\tilde{\mathbf{Q}}^{\circ p}\), \(p \geq 1\), has abstraction no larger than \(\tilde{\mathbf{Q}}\), and that positive combinations of such powers also inherit this bound. 


For notational simplicity assume \(n\ge2\). The case \(n=1\) is obtained by deleting the within-class residual mode. Since \(\tilde{\mathbf{Q}} = \mathbf{Q}/q_d\) is PSD and 2FS, its normalized entries can be written as
\begin{align}
1
&=
\tau_G+\tau_S+\tau_C+\tau_{SC}+\tau_I,
\\
\rho_2
&=
\tau_G+\tau_S+\tau_C+\tau_{SC}
-
\frac{\tau_I}{n-1},
\\
\rho_{1s}
&=
\tau_G+\tau_S-\tau_C-\tau_{SC},
\\
\rho_{1c}
&=
\tau_G-\tau_S+\tau_C-\tau_{SC},
\\
\rho_0
&=
\tau_G-\tau_S-\tau_C+\tau_{SC},
\label{eq:appendix-attenuation-tau-decomp}
\end{align}
Where \(\tau_G,\tau_S,\tau_C,\tau_{SC}\) are the normalized class-level mode masses \(\tau_I\) is the normalized within-class residual mass. These are all proportional to the eigenvalues of \(\mathbf{Q}\) such that:
\begin{align}
\tau_m \coloneqq \frac{\lambda_m^{(\mathbf{Q})}}{4n q_d} \geq 0 \quad \text{for } m \in \{G,S,C,SC\},
\end{align}
Now define
\begin{align}
X
&\coloneqq
\tau_G+\tau_C,
&
U
&\coloneqq
\tau_G-\tau_C,
\\
Y
&\coloneqq
\tau_S+\tau_{SC},
&
V
&\coloneqq
\tau_S-\tau_{SC}
\end{align}
Note that:
\begin{align}
X\ge |U|,
\qquad
Y\ge |V|
\end{align}
Then we have the following simple relations:
\begin{align}
\bar\rho=X+Y,
\qquad
\rho_{1c}=X-Y,
\qquad
\rho_{1s}=U+V,
\qquad
\rho_0=U-V
\end{align}
Therefore
\begin{align}
\alpha_Q
=
\frac{\rho_{1s}-\rho_0}{\bar\rho-\rho_{1c}}
=
\frac{2V}{2Y}
=
\frac{V}{Y}
\label{eq:appendix-attenuation-alpha-vy}
\end{align}
The condition \(\lambda_S^{(\mathbf{Q})}+\lambda_{SC}^{(\mathbf{Q})}>0\) is exactly \(Y>0\). Now consider the monomial map \(g_p(\rho)=\rho^p\), with integer \(p\ge1\). Define
\begin{align}
N_p
&\coloneqq
\rho_{1s}^p-\rho_0^p
=
(U+V)^p-(U-V)^p,
\\
\Delta_p
&\coloneqq
\overline{\rho^p}-\rho_{1c}^p,
\qquad
\overline{\rho^p}
\coloneqq
\frac{1+(n-1)\rho_2^p}{n}
\end{align}
Now, we claim the following:
\begin{align}
|N_p|
\le
|\alpha_Q|\,\Delta_p 
\qquad
\text{for every integer }p \geq 1
\label{eq:appendix-attenuation-power-bound}
\end{align}
To prove this claim, our strategy is to (i) lower-bound the denominator; and (ii) upper-bound the numerator. For (i) we begin by introducing the scalar random variable
\begin{align}
\mathbf{W}
=
\begin{cases}
1, & \text{with probability }1/n,\\[0.25em]
-\dfrac{1}{n-1}, & \text{with probability }(n-1)/n
\end{cases}
\end{align}
Then
\begin{align}
\mathbb{E}[\mathbf{W}]=0,
\qquad
\mathbb{E}[\mathbf{W}^k]
=
\frac{1}{n}
\left(
1+\frac{(-1)^k}{(n-1)^{k-1}}
\right)
\ge0
\qquad
\text{for all }k\ge2,
\end{align}
and
\begin{align}
\overline{\rho^p}
=
\mathbb{E}\left[(X+Y+\tau_I \mathbf{W})^p\right]
\end{align}
Because \(X+Y\ge0\), expanding in powers of \(\mathbf{W}\) gives
\begin{align}
\overline{\rho^p}
\ge
(X+Y)^p
\end{align}
Thus we have obtained the lower bound (i):
\begin{align}
\Delta_p
\ge
(X+Y)^p-(X-Y)^p
\label{eq:appendix-attenuation-den-lower}
\end{align}
Now, we derive the upper bound (ii). The right-hand side is nonnegative because \(X,Y\ge0\) imply \(X+Y\ge |X-Y|\). If \(V=0\), then \(N_p=0\), and \autoref{eq:appendix-attenuation-power-bound} is immediately satisfied. Otherwise, using an odd-binomial expansion of \(|N_p|\) and then dividing by \(|V|\) gives:
\begin{align}
\frac{|N_p|}{|V|}
&=
\left|
2\sum_{\substack{1\le j\le p\\ j\ \mathrm{odd}}}
\binom{p}{j}U^{p-j}V^{j-1}
\right|
\\
&\le
2\sum_{\substack{1\le j\le p\\ j\ \mathrm{odd}}}
\binom{p}{j}|U|^{p-j}|V|^{j-1}
\\
&\le
2\sum_{\substack{1\le j\le p\\ j\ \mathrm{odd}}}
\binom{p}{j}X^{p-j}Y^{j-1}
\\
&=
\frac{(X+Y)^p-(X-Y)^p}{Y}
\\
&\le
\frac{\Delta_p}{Y}
\end{align}
Then, multiplying by \(|V|\) gives:
\begin{align}
|N_p| \leq \frac{|V|}{Y} \Delta_p = |\alpha_Q| \Delta_p
\end{align}

Which proves the claim in \autoref{eq:appendix-attenuation-power-bound}. Now suppose the normalized scalar map \(g\) has a power series on correlations \(\rho\) of the form:
\begin{align}
g(\rho)
=
a_0+
\sum_{p=1}^{\infty}a_p\rho^p,
\qquad
a_p\ge0
\quad
\text{for all }p\ge1,
\label{eq:appendix-attenuation-positive-series}
\end{align}
and suppose \(g\) is nondecreasing on the interval containing the five correlations. The constant term cancels in all abstraction differences, so
\begin{align}
g(\rho_{1s})-g(\rho_0)
&=
\sum_{p=1}^{\infty}a_p N_p,
\\
\bar g-g(\rho_{1c})
&=
\sum_{p=1}^{\infty}a_p \Delta_p
\end{align}
For the two maps verified below, \(a_1>0\). Since \(\Delta_1=\bar\rho-\rho_{1c}=2Y>0\), this implies
\begin{align}
\bar g-g(\rho_{1c})>0
\end{align}
Using \autoref{eq:appendix-attenuation-power-bound},
\begin{align}
\left|g(\rho_{1s})-g(\rho_0)\right|
&\le
\sum_{p=1}^{\infty}a_p |N_p|
\\
&\le
|\alpha_Q|
\sum_{p=1}^{\infty}a_p\Delta_p
\\
&=
|\alpha_Q|\,\bigl(\bar g-g(\rho_{1c})\bigr)
\end{align}
Because \(g\) is nondecreasing, the sign of \(g(\rho_{1s})-g(\rho_0)\) agrees with the sign of \(\rho_{1s}-\rho_0=2V\), hence agrees with the sign of \(\alpha_Q\). Dividing by the positive denominator \(\bar g-g(\rho_{1c})\) gives
\begin{align}
\operatorname{sign}(\alpha_K)
=
\operatorname{sign}(\alpha_Q),
\qquad
|\alpha_K|
\le
|\alpha_Q|
\label{eq:appendix-attenuation-general-bound}
\end{align}
Equivalently, when \(\alpha_Q\neq0\),
\begin{align}
0
\le
\frac{\alpha_K}{\alpha_Q}
\le
1
\end{align}
When \(\alpha_Q=0\), monotonicity gives \(\alpha_K=0\). For the differentiable maps below, the secant expression in \autoref{eq:appendix-attenuation-factor} is continuous as \(V\to0\), so the same bound \(0\le \mathcal{A}_g(\mathbf{Q})\le1\) follows by continuity from the nonzero-\(\alpha_Q\) case.

\paragraph{Part 2: explicit verification for erf.}
For
\begin{align}
\phi_\beta(z)=\mathrm{erf}(\beta z),
\qquad
\beta>0,
\end{align}
\autoref{appendix:2fs-preserved} derives the NNGP map
\begin{align}
\psi_\beta(q,q_d)
=
\frac{2}{\pi}
\arcsin\left(
\frac{2\beta^2 q}{1+2\beta^2 q_d}
\right)
\end{align}
Therefore the normalized scalar map is
\begin{align}
g_\beta(\rho)
=
\frac{\arcsin(c_\beta\rho)}{\arcsin(c_\beta)},
\qquad
c_\beta
\coloneqq
\frac{2\beta^2 q_d}{1+2\beta^2 q_d}
\in(0,1)
\label{eq:appendix-attenuation-erf-map}
\end{align}
It is nondecreasing, indeed strictly increasing, because
\begin{align}
g_\beta'(\rho)
=
\frac{c_\beta}
{\arcsin(c_\beta)\sqrt{1-c_\beta^2\rho^2}}
>
0
\end{align}
It also has a power series with nonnegative (nonconstant) coefficients:
\begin{align}
g_\beta(\rho)
=
\frac{1}{\arcsin(c_\beta)}
\sum_{j=0}^{\infty}
\frac{(2j)!}{4^j(j!)^2(2j+1)}
 c_\beta^{2j+1}\rho^{2j+1}
\end{align}
Thus \(g_\beta\) satisfies the conditions in Part 1, and hence
\begin{align}
\alpha_{\mathbf{K}_{\phi_\beta}}
=
\mathcal{A}_\beta(\mathbf{Q})\alpha_Q,
\qquad
0\le \mathcal{A}_\beta(\mathbf{Q})\le1
\end{align}
This proves the \(\mathrm{erf}\) case of the theorem. 


\paragraph{Part 3: explicit verification for leaky ReLU.}
For
\begin{align}
\phi_\omega(z)=\max\{z,\omega z\},
\qquad
\omega\in[0,1],
\end{align}
Recall in \autoref{appendix:2fs-preserved} we derived the unnormalized NNGP map:
\begin{align}
\psi_\omega(q,q_d)
=
\frac{1+\omega^2}{2}q
+
\frac{(1-\omega)^2}{2\pi}
\left[
\sqrt{q_d^2-q^2}
-
q\arccos\left(\frac{q}{q_d}\right)
\right]
\end{align}
Since \(\psi_\omega(q_d,q_d)=\frac{1+\omega^2}{2}q_d\), the normalized map is
\begin{align}
g_\omega(\rho)
&=
\rho+
\frac{(1-\omega)^2}{\pi(1+\omega^2)}
\left[
\sqrt{1-\rho^2}
-
\rho\arccos\rho
\right]
\\
&=
\frac{2\omega}{1+\omega^2}\rho
+
\frac{(1-\omega)^2}{1+\omega^2}g_{\mathrm{ReLU}}(\rho),
\label{eq:appendix-attenuation-lrelu-map}
\end{align}
where
\begin{align}
g_{\mathrm{ReLU}}(\rho)
=
\frac{1}{\pi}
\left[
\sqrt{1-\rho^2}
+
(\pi-\arccos\rho)\rho
\right]
\end{align}
The pure-ReLU normalized map satisfies
\begin{align}
g_{\mathrm{ReLU}}'(\rho)
=
1-\frac{\arccos\rho}{\pi}
=
\frac{1}{2}+\frac{1}{\pi}\arcsin\rho
\end{align}
Using
\begin{align}
\arcsin\rho
=
\sum_{j=0}^{\infty}
\frac{(2j)!}{4^j(j!)^2(2j+1)}\rho^{2j+1},
\end{align}
and integrating from \(0\), with \(g_{\mathrm{ReLU}}(0)=1/\pi\), gives
\begin{align}
g_{\mathrm{ReLU}}(\rho)
=
\frac{1}{\pi}
+
\frac{1}{2}\rho
+
\frac{1}{\pi}
\sum_{j=0}^{\infty}
\frac{(2j)!}{4^j(j!)^2(2j+1)(2j+2)}\rho^{2j+2}
\end{align}
Thus the nonconstant power series coefficients of \(g_{\mathrm{ReLU}}\) are also nonnegative. By \autoref{eq:appendix-attenuation-lrelu-map}, every power-series coefficient of \(g_\omega\) is also nonnegative for every \(\omega\in[0,1]\). Moreover,
\begin{align}
g_\omega'(\rho)
=
\frac{2\omega}{1+\omega^2}
+
\frac{(1-\omega)^2}{1+\omega^2}
\left(
1-\frac{\arccos\rho}{\pi}
\right)
\ge0
\end{align}
for \(\rho\in[-1,1]\). Therefore \(g_\omega\) satisfies the conditions in Part 1, and hence
\begin{align}
\alpha_{\mathbf{K}_{\phi_\omega}}
=
\mathcal{A}_\omega(\mathbf{Q})\alpha_Q,
\qquad
0\le \mathcal{A}_\omega(\mathbf{Q})\le1
\end{align}
This proves the leaky-ReLU case and completes the proof of \autoref{theorem:attenuation-law-formal}. Finally, leaky ReLU gives an additional intuition. Since
\begin{align}
\phi_\omega(z)=\omega z+(1-\omega)z_+,
\end{align}
and the mixed Gaussian terms satisfy \(\mathbb{E}[u v_+]=\mathbb{E}[u_+v]=q/2\), the unnormalized kernel obeys
\begin{align}
\mathbf{K}_\omega
=
\omega \mathbf{Q}+(1-\omega)^2 \mathbf{K}_{\mathrm{ReLU}},
\end{align}
where \(\mathbf{K}_{\mathrm{ReLU}}\) is the pure-ReLU feature kernel generated from the same \(\mathbf{Q}\). Since the numerator and denominator in \autoref{eq:appendix-attenuation-alpha-entry} are both linear in kernel entries,
\begin{align}
\alpha_{\mathbf{K}_\omega}
=
\frac{
\omega B_{\mathbf{Q}}\alpha_Q
+
(1-\omega)^2B_{\mathrm{ReLU}}\alpha_{\mathbf{K}_{\mathrm{ReLU}}}
}{
\omega B_{\mathbf{Q}}
+
(1-\omega)^2B_{\mathrm{ReLU}}
},
\end{align}
where
\begin{align}
B_{\mathbf{Q}}
\coloneqq
\bar q-q_{1c},
\qquad
B_{\mathrm{ReLU}}
\coloneqq
\bar k_{\mathrm{ReLU}}-k_{\mathrm{ReLU},1c}
\end{align}
Thus leaky ReLU is a denominator-weighted interpolation between the linear preactivation abstraction and pure-ReLU feature abstraction. In particular, as \(\omega\to1\),
\begin{align}
g_\omega(\rho)=\rho+O((1-\omega)^2),
\qquad
\alpha_{\mathbf{K}_{\phi_\omega}}
=
\alpha_Q+O((1-\omega)^2),
\qquad
\mathcal{A}_\omega(\mathbf{Q})=1+O((1-\omega)^2),
\end{align}
again assuming the abstraction denominator is bounded away from zero.
\end{proof}

\paragraph{When would the attenuation law hold for any general nonlinearity?} Let
\begin{align}
g_\phi^{(q_d)}(\rho)
\coloneqq
\frac{
\mathbb{E}_{(u,v)}[\phi(u)\phi(v)]
}{
\mathbb{E}_{u}[\phi(u)^2]
},
\qquad
\operatorname{Var}(u)=\operatorname{Var}(v)=q_d,
\qquad
\operatorname{Corr}(u,v)=\rho
\end{align}
For any nonlinearity whose infinite-width feature kernel is \(k_\mu=k_d g_\phi^{(q_d)}(\rho_\mu)\), the exact factor \autoref{eq:appendix-attenuation-factor} remains valid. Thus a locally sufficient condition is simply the secant-slope inequality
\begin{align}
0
\le
s_{g_\phi}(\rho_{1s},\rho_0)
\le
w_d s_{g_\phi}(1,\rho_{1c})+w_2 s_{g_\phi}(\rho_2,\rho_{1c}),
\label{eq:appendix-attenuation-local-secant-condition}
\end{align}
with positive feature denominator. A more interpretable sufficient condition is obtained when the correlations are naturally ordered, for example:
\begin{align}
\rho_0 \le \rho_{1s}, \rho_{1c} \le \rho_2 \le 1
\end{align}
and the weights in \autoref{eq:appendix-attenuation-weights} are nonnegative. In that case, if \(g_\phi\) is increasing and convex on the interval containing these correlations, then secant slopes increase as their intervals move to the right. Hence the numerator secant in \autoref{eq:appendix-attenuation-factor} is no larger than either denominator secant, which implies \(0\le\mathcal{A}_\phi(\mathbf{Q})\le1\). A stronger global sufficient condition, valid for all PSD 2FS kernels rather than just the naturally ordered ones, is the condition:
\begin{align}
g_\phi^{(q_d)}(\rho)
=
a_0+
\sum_{p=1}^{\infty}a_p\rho^p,
\qquad
a_p\ge0,
\qquad
\text{and}
\qquad
g_\phi^{(q_d)}\text{ is nondecreasing on }[-1,1]
\label{eq:appendix-attenuation-global-sufficient-condition}
\end{align}
\clearpage
\section{Experimental details}

\subsection{Experimental details for theory-simulation agreement figures} \label{appendix:theory-figures}

\subsubsection{\texorpdfstring{\autoref{fig:2fs-ansatz}D: synthetic ReLU network and probe generalization}{Figure 2D: synthetic ReLU network and probe generalization}} \label{appendix:synthetic-details}

For this figure we create a synthetic task reflecting the balanced \(2\times2\) factorial design, where the two factors are shape and color. This gives us four fine classes: \(\mathrm{RS},\mathrm{BS},\mathrm{RC},\mathrm{BC}\). Each example is generated from latent coordinates for shape, color, and interaction. The figure uses input dimension \(16\), one nuisance coordinate, latent strengths \(3.0\) for shape, \(2.0\) for color, and \(2.0\) for the interaction. To simulate realistic noisy data, the latent coordinates are mixed by a random matrix with normalized columns, corrupted with additive Gaussian noise of standard deviation \(2.0\), passed through a pointwise \(\tanh\) function, and then corrupted again with additive Gaussian noise of standard deviation \(1.0\). For each seed we generate 1000 training examples per class using the above procedure.

The model is a feedforward ReLU network with a bias-free linear readout. The network has \(2\) hidden layers with width \(32\), and ReLU after each hidden linear map. We train it to predict the two signed factor labels with mean-squared error using Adam, learning rate \(10^{-3}\), batch size \(512\), for 100 epochs. The plotted curves average over seeds \(0,1,2\), with shaded bands showing the SEM across seeds.

\subsubsection{\texorpdfstring{\autoref{fig:linear-theory}: deep linear networks}{Figure linear-theory: deep linear networks}}

The linear-network simulations use the simplified MSE objective described in \autoref{section:min-model} under \autoref{assumption:readout}. Each run builds a synthetic 2FS problem from the requested eigenvalues, initializes a deep linear network
Unless otherwise stated, the base configuration uses \(n=4\) examples per fine class, and inverse regularization parameter \(\gamma=1\).

Panel A uses input and label kernels are specified directly in the 2FS eigenbasis. The input has
\((\lambda_I,\lambda_S,\lambda_C,\lambda_G)=(0.10,0.16,0.12,0.03)\) and \(\lambda_{SC}=\nu_X\lambda_S\) with \(\nu_X=0.39\). The label kernel has \((\lambda_I,\lambda_S,\lambda_C,\lambda_G)=(0,0.16,0.12,0)\) and \(\lambda_{SC}=\nu_Y\lambda_S\) with \(\nu_Y=0.08\). We run hidden depth \(L=1\) with initialization scales \(10,1,10^{-1},10^{-10}\). The solid curves are the explicit weight-space gradient descent simulations; the dotted curves are the scalar theory predictions; the horizontal reference is the analytic fixed point.

Panel B uses a deep network with hidden depth \(L=8\) and \(\gamma=1\). The input and target kernels are specified in the 2FS eigenbasis. The input kernel has
\((\lambda_I,\lambda_S,\lambda_C,\lambda_{SC},\lambda_G)=(0.10,16.0,13.6,7.2,3.2)\), giving data-layer shape abstraction \(\alpha_S^{(0)}=(16.0-7.2)/(16.0+7.2)=0.3793\). The target kernel has
\((\lambda_I,\lambda_S,\lambda_C,\lambda_{SC},\lambda_G)=(0,16.8,12.0,2.4,0)\), giving target abstraction \(\alpha_S^{(\mathbf{Y})}=0.75\). The PE-aligned balanced initialization sets the final hidden kernel to
\((\lambda_I^{(\mathbf{Q}^{(L)}(0))},\lambda_S^{(\mathbf{Q}^{(L)}(0))},\lambda_C^{(\mathbf{Q}^{(L)}(0))},\lambda_{SC}^{(\mathbf{Q}^{(L)}(0))},\lambda_G^{(\mathbf{Q}^{(L)}(0))})=(0.05,0.32,0.22,0.58,0.08)\), with intermediate hidden-layer initial kernels chosen by the balanced depth factorization, so their mode eigenvalues geometrically interpolate between the input kernel and \(\mathbf{Q}^{(L)}(0)\). The solid layerwise curves are the empirical \(\alpha_S^{(\ell)}(t)\) values for the data layer and hidden layers. The dotted curves are the interpolation-law predictions computed from the data-layer abstraction and the contemporaneous final-layer abstraction. The red dotted line is the target abstraction.

Panel C uses a hidden depth sweep \(L=1,\ldots,40\). The input kernel has \(\lambda_I^{(\boldsymbol{\Sigma}_x)}=\lambda_G^{(\boldsymbol{\Sigma}_x)}=0\), \(\lambda_C^{(\boldsymbol{\Sigma}_x)}/\lambda_S^{(\boldsymbol{\Sigma}_x)}=0.90\), and \(\nu_X=\lambda_{SC}^{(\boldsymbol{\Sigma}_x)}/\lambda_S^{(\boldsymbol{\Sigma}_x)}=0.70\). The label kernel has \(\lambda_I^{(\boldsymbol{\Sigma}_y)}=\lambda_G^{(\boldsymbol{\Sigma}_y)}=0\), \(\lambda_C^{(\boldsymbol{\Sigma}_y)}/\lambda_S^{(\boldsymbol{\Sigma}_y)}=0.75\), and \(\nu_Y=\lambda_{SC}^{(\boldsymbol{\Sigma}_y)}/\lambda_S^{(\boldsymbol{\Sigma}_y)}=0.10\). The initialization uses eigenvalues \((\lambda_I^{(\mathbf{Q}(0))},\lambda_S^{(\mathbf{Q}(0))},\lambda_C^{(\mathbf{Q}(0))},\lambda_{SC}^{(\mathbf{Q}(0))},\lambda_G^{(\mathbf{Q}(0))})=(0.05,0.32,0.22,0.58,0.08)\). 

\subsubsection{\texorpdfstring{\autoref{fig:nonlinear}E--F: leaky-ReLU and erf nonlinear dynamics}{Figure nonlinear E-F: leaky-ReLU and erf nonlinear dynamics}}

For \autoref{fig:nonlinear}E (leaky-ReLU dynamics) we use the class-level \(N=4\) 2FS reduction with Walsh modes \(G,S,C,SC\) and leaky ReLU \(\phi(z)=\max(z,\omega z)\) with \(\omega=0.5\) and \(\gamma=100\). The input and label kernels are signal-balanced with \(\lambda_S=\lambda_C=1\), \(\nu_X=0.6\), \(\nu_Y=0.1\), and \(\lambda_G=0\). The initial kernel is also signal-balanced, with initial scale \(q(0)\coloneqq\lambda_S^{(\mathbf{Q}(0))}=\lambda_C^{(\mathbf{Q}(0))}=0.01\) and initial inverse-SNR \(\nu_{\mathbf{Q},0}\coloneqq\lambda_{SC}^{(\mathbf{Q}(0))}/\lambda_S^{(\mathbf{Q}(0))}=1\). The finite-width simulation uses hidden width \(512\) and empirical covariance \(\mathbf{Q}(0)\) for total time \(10^4\). The theory curve integrates the closed 2FS matrix ODE for \(\mathbf{Q}(t)\) over the same time interval; the red reference line is the analytic terminal fixed point \(\alpha_{\mathbf{Q},\infty}=0.6528\).

For \autoref{fig:nonlinear}F (erf dynamics) we use the row-dynamics simulator and the corresponding closed erf kernel ODE. The base problem has \(n=4\), \(\beta=0.1\), \(\gamma=1000\), \(\nu_X=1.0\), \(\nu_Y=0.1\), unit shape and color signal strengths, and zero \(I\)- and \(G\)-mode eigenvalues for the input and labels. The initial \(\mathbf{Q}(0)\) eigenvalues are \(G=3.0\), \(I=0.2\), \(S=0.5\), \(C=0.5\), and \(SC=0.3\). The finite-width simulation uses width \(2048\), and empirical covariance \(\mathbf{Q}(0)\) for total time \(10^4\). The theory curve integrates the closed 2FS matrix ODE with \texttt{DOP853}. The red reference line is the linear network terminal abstraction result from \autoref{thm:fixed-point}. 

The phase-diagram panels associated with the nonlinear comparison are computed separately. The leaky-ReLU phase diagrams evaluate a \(1000\times1000\) grid over \(\eta_X,\eta_Y\in[0,0.999999]\) for \(\omega=0.2\) using \(\omega=0\) and Brent root finder for the fixed point equation \autoref{eq:relufixedpointequation}. The erf phase diagram uses a \(10\times10\) \(\eta_X,\eta_Y\) grid with \(\beta=1\), \(\gamma=1000\), initial scale \(q(0)=0.1\), \(t_{\max}=10\). 

\subsection{Experimental details for 3dshapes experiments} \label{appendix:3dshapes-experiments}

\subsubsection{Description of task}

All 3dshapes experiments in this subsection use the red/blue cube/sphere subset of the 3dshapes dataset. It contains four fine classes, Red Sphere, Blue Sphere, Red Cube, and Blue Cube, with balanced class counts. The object hue is fixed to the nearest grid value matching red \(0.0\) or blue \(0.7\), and the object shape is matched exactly. Floor hue, wall hue, object scale, and object orientation are left free to vary. Unless otherwise stated, the processed dataset contains 1000 examples per fine class, for 4000 images total, with an 80/20 train/test split. Images use \(64\times64\) resolution and normalized channelwise by \((x-0.5)/0.5\) for the trained convolutional models.

The supervised targets are deterministic 2FS label features built from the balanced class manifest. We use normalized factor vectors for the constant, shape, color, and shape-color interaction modes. In all 3dshapes experiments below, \(\lambda_G=0\), \(\lambda_S=1\), \(\lambda_C=1\), and \(\lambda_I=0\); only the interaction parameter \(\lambda_{SC}=\nu_Y\lambda_S\) changes across experiments.

\subsubsection{\texorpdfstring{\autoref{fig:2fs-ansatz}C: feedforward convolutional network}{Figure 2C: feedforward convolutional network}} \label{appendix:feedforward-convnet}

The panel uses a feedforward convolutional network with 2 convolutional layers and 2 fully connected layers. It applies a \(3\times3\) convolution from 3 channels to 32 channels, ReLU, a second \(3\times3\) convolution from 32 to 32 channels, ReLU, adaptive average pooling, a width-256 fully connected ReLU layer, a width-64 fully connected ReLU layer, and a linear head to the label dimension. The final analyzed hidden representation is the width-64 \texttt{fc2} activation.

The labels use \(\nu_Y=0.125\). The model is initialized with \(\sigma_w^2=0.15\) and trained with AdamW, learning rate \(10^{-3}\), weight decay \(10^{-5}\), batch size \(128\), and a cosine learning-rate schedule with minimum learning rate \(10^{-5}\). Training runs for 100 epochs, and abstraction is measured every epoch. The plotted curve reports the mean and SEM across 5 seeds.

\subsubsection{\texorpdfstring{\autoref{fig:empirical-validation}: Small ResNets}{Figure empirical-validation: SmallResNet sweeps}}

\autoref{fig:empirical-validation} uses small ResNets with a \(3\times3\) convolutional stem with ReLU, followed by \(L\) residual blocks. Each block contains two \(3\times3\) convolutions with a ReLU between them, adds the residual connection, and applies a final ReLU. The representation at layer \(\ell\) is the adaptive-average-pooled post-block activation. A linear readout maps the final pooled representation to the output. 

The initialization-scale sweep in panel B uses \(n=1000\), \(\nu_Y=0.01\), depth \(L=2\), width \(32\), and initialization scales \(\sigma_w^2\in\{0.03,0.3,3.0,30.0\}\). For each scale we train seeds \(0,\ldots,19\) for 50 epochs with AdamW, learning rate \(10^{-3}\), weight decay \(10^{-5}\), batch size \(128\), and cosine decay to \(10^{-5}\). Panel B plots maximum shape abstraction attained during training for each scale averaged over seeds and with SEM error bars. 

The depth sweep in panel C uses \(n=100\), \(\nu_Y=0.25\), width \(32\), initialization scale \(\sigma_w^2=0.1\), and depths \(L\in\{2,4,6,8\}\). For each depth we train seeds \(0,\ldots,19\) for 50 epochs with AdamW, learning rate \(10^{-3}\), weight decay \(10^{-8}\), batch size \(128\), and cosine decay to \(10^{-5}\). Panel C plots the final layer's shape abstraction after the last epoch averaged across seeds with SEM error bars.

Panel D uses the same training run data as panel C, but uses just the depth \(L=8\) runs to plot the abstraction layerwise across \(\ell \in \{1, \ldots,8\}\). The plotted curves average over seeds \(0,\ldots,19\) with SEM error bars.

\subsubsection{\texorpdfstring{\autoref{fig:language-ablation}A: DINOv3 ViT-L/16 local GELU ablation}{Figure language-ablation A: DINOv3 ViT-L/16 local GELU ablation}}

This panel uses a pretrained DINOv3 ViT-L/16, loaded from the official \texttt{facebookresearch/dinov3} repository as \texttt{dinov3\_vitl16}. The run uses all 4000 examples from \texttt{rsbc\_3dshapes\_64}, batch size \(32\), and model input size \(256\). Representations are extracted from the post-block residual stream by forward hooks on each of the 24 transformer blocks. To obtain one embedding vector per sample we pool by averaging the captured token representations over the token dimension.

For the baseline, we run the unmodified DINOv3 model and compute shape abstraction from the post-block representation at each residual layer. For the ablation condition at layer \(\ell\), we temporarily replace the first activation site in \texttt{blocks.<layer>.mlp} with the identity map, run a forward pass, and compute abstraction from the post-block output of the same layer \(\ell\). The patch is local to that forward pass and is restored before the next layer is evaluated. In this model the patched object paths are \texttt{blocks.<layer>.mlp.act} for layers \(0,\ldots,23\). 

\subsection{Experimental details for Gemma 4 ablation experiments} \label{appendix:transformer-app-details}

This section gives details for the Gemma 4 ablation experiments in \autoref{fig:language-ablation}B-C. 

\paragraph{Gemma model and concept set.} \autoref{fig:language-ablation}B--C uses \texttt{google/gemma-4-E2B} from huggingface, loaded as a causal language model with 35 decoder blocks. User-facing layer index \(\ell = 0\) denotes the output of decoder block 0, and \(\ell = 34\) denotes the final residual stream. Following \cite{jiangOriginsLinear24}, We study bilingual concepts in the form of translation directions from one language to another. We use the same four language pairs French-Spanish, French-German, English-French and German-Spanish considered in \cite{jiangOriginsLinear24} as well as an additional 14 English--X translation directions (where X is Arabic, Chinese, Dutch, German, Indonesian, Italian, Japanese, Korean, Persian, Polish, Portuguese, Russian, Spanish, and Turkish). Each concept is instantiated by 80 ordered word pairs comprising common words. See \autoref{tab:gemma-english-spanish-pairs} for the English--Spanish word pairs.

\begin{table}[H]
\centering
\scriptsize
\setlength{\tabcolsep}{3pt}
\begin{tabular}{@{}ll@{\hspace{1.1em}}ll@{\hspace{1.1em}}ll@{\hspace{1.1em}}ll@{}}
\toprule
English & Spanish & English & Spanish & English & Spanish & English & Spanish \\
\midrule
water & agua & rain & lluvia & sister & hermana & one & uno \\
food & comida & snow & nieve & son & hijo & two & dos \\
bread & pan & dog & perro & daughter & hija & three & tres \\
fish & pez & cat & gato & red & rojo & four & cuatro \\
fruit & fruta & horse & caballo & green & verde & five & cinco \\
apple & manzana & bird & p\'ajaro & blue & azul & six & seis \\
house & casa & chicken & pollo & white & blanco & seven & siete \\
car & coche & day & d\'ia & black & negro & eight & ocho \\
money & dinero & night & noche & month & mes & nine & nueve \\
book & libro & big & grande & year & a\~no & ten & diez \\
pen & bol\'igrafo & small & peque\~no & gold & oro & music & m\'usica \\
road & carretera & good & bueno & silver & plata & mathematics & matem\'aticas \\
sun & sol & bad & malo & wood & madera & physics & f\'isica \\
moon & luna & new & nuevo & metal & metal & chemistry & qu\'imica \\
fire & fuego & old & viejo & stone & piedra & biology & biolog\'ia \\
hand & mano & hot & caliente & grass & hierba & engineering & ingenier\'ia \\
eye & ojo & cold & fr\'io & leaf & hoja & history & historia \\
tree & \'{a}rbol & man & hombre & sand & arena & geography & geograf\'ia \\
mountain & monta\~na & woman & mujer & river & r\'io & law & derecho \\
cloud & nube & brother & hermano & beach & playa & english & ingl\'es \\
\bottomrule
\end{tabular}
\caption{The 80 ordered English--Spanish word pairs used for the English--Spanish concept in the Gemma ablation experiments.}
\label{tab:gemma-english-spanish-pairs}
\end{table}

\paragraph{Tokenization and representation extraction.} 
Each word is represented by the hidden state of its final non-padding token. This lets the experiment use both single-token and multi-token lexical items while still attaching each representation to a single isolated word. The final run uses a fixed split seed of 1234 and exactly 80 valid pairs for every concept.

\paragraph{Abstraction metric.} For a concept with ordered pairs \(\{(u_i, v_i)\}_{i=1}^{m}\), let \(\mathbf{h}_i^{(\ell)}\) denote the residual-stream representation of word \(u_i\) at layer \(\ell\), and let \(\mathbf{g}_i^{(\ell)}\) denote the representation of word \(v_i\). We form ordered difference vectors
\begin{align}
 \mathbf{d}_i^{(\ell)} = \mathbf{h}_i^{(\ell)} - \mathbf{g}_i^{(\ell)}
\end{align}
The abstraction score is the mean pairwise cosine across all concept vectors:
\begin{align}
\alpha^{(\ell)} = \frac{2}{m(m-1)} \sum_{1 \le i < j \le m}
\frac{\mathbf{d}_i^{(\ell)} \cdot \mathbf{d}_j^{(\ell)}}{\|\mathbf{d}_i^{(\ell)}\| \, \|\mathbf{d}_j^{(\ell)}\|}
\end{align}
In the final Gemma run, \(m = 80\), so each concept contributes \(\binom{80}{2} = 3160\) pairwise cosine terms.

\paragraph{Local activation ablation.} In the ablated condition, the selected block's feed-forward activation is replaced only for that forward pass. For ordinary pointwise activations this replacement is the identity map. For fused gated activations, we preserve tensor shape and remove only the nonlinear part of the gate, which amounts to replacing the original gated activation by its linearized counterpart. The rest of the network, including attention and all other blocks, is unchanged. For the main-text result in \autoref{fig:language-ablation}B--C we ablate the final block and read out the final residual stream.

\paragraph{Probe generalization error.} To test the practical consequence predicted by \autoref{proposition:ccgp}, we evaluate whether a probe trained on one subset of lexical contexts transfers to unseen pairs from the same concept. Each concept's 80 ordered pairs are split into 40 train pairs and 40 test pairs. Because every concept has the same number of pairs, the same index split is reused across concepts. Each ordered pair contributes two labeled examples, \(u_i \mapsto 0\) and \(v_i \mapsto 1\). Train and test features are centered by the training mean, and we fit a logistic-regression probe with LBFGS, learning rate 1.0, 200 iterations, and tolerance \(10^{-7}\). Reported accuracies are test accuracies on the held-out 40 pairs.

\paragraph{Numerical summary.} In the final-layer diagonal comparison, abstraction increases for all 18 concepts, with concept-wise means moving from 0.207 in baseline to 0.242 after ablation. Held-out probe accuracy increases on average from 0.944 to 0.953, with 17 non-decreases out of 18 concepts. The only decrease occurs for French--Spanish, where accuracy changes slightly from 0.850 to 0.838. These results are the basis for \autoref{fig:language-ablation}B--C.

\subsection{Experimental details for macaque ventral-stream analysis} \label{appendix:majaj-app-details}

This section describes the analysis pipeline used to produce the results in \autoref{fig:neuro} and the associated robustness checks. 

\paragraph{Dataset and public-release caveat.} We use the public Brain-Score release \texttt{MajajHong2015.public}. The public release is close to, but not identical with, the original internal dataset used in the Majaj/Hong studies. In particular, it exposes only variation labels 0 and 3 rather than the explicit low/medium/high variation bins described in the original paper. All results reported here are computed directly from the public release.

\paragraph{Preprocessing.} Raw repeated presentations are first averaged to a single response vector per stimulus. Each neuroid is then z-scored across all 3200 averaged public stimuli before subsetting to the four focal categories. This yields a normalized response matrix that does not rely on blank-screen baselines, which are not exposed in the public release in the same form as in the original analyses.

\paragraph{Conceptual design.} We restrict attention to four categories: Boats, Tables, Fruits, and Animals. These define a \(2 \times 2\) factorial structure in which limbedness and naturalness act as the two abstract variables. Writing \(\boldsymbol{\mu}_{\mathrm{Boat}}, \boldsymbol{\mu}_{\mathrm{Table}}, \boldsymbol{\mu}_{\mathrm{Fruit}}, \boldsymbol{\mu}_{\mathrm{Animal}}\) for category centroids in a given neural population, we define
\begin{align}
 \mathbf{u}_{\mathrm{limbed},1} = \boldsymbol{\mu}_{\mathrm{Table}} - \boldsymbol{\mu}_{\mathrm{Boat}},
 \qquad
 \mathbf{u}_{\mathrm{limbed},2} = \boldsymbol{\mu}_{\mathrm{Animal}} - \boldsymbol{\mu}_{\mathrm{Fruit}},
\end{align}
\begin{align}
 \mathbf{u}_{\mathrm{natural},1} = \boldsymbol{\mu}_{\mathrm{Fruit}} - \boldsymbol{\mu}_{\mathrm{Boat}},
 \qquad
 \mathbf{u}_{\mathrm{natural},2} = \boldsymbol{\mu}_{\mathrm{Animal}} - \boldsymbol{\mu}_{\mathrm{Table}}
\end{align}
The abstraction score for each concept is the cosine similarity between its two context-specific vectors. The code variable \texttt{is\_organic} corresponds to what we call \emph{naturalness} in the paper.

\paragraph{Balanced centroid construction.} The public release contains 8 objects per focal category. In each bootstrap resample, we draw 8 objects per category with replacement. Within each sampled object and variation bin, images are resampled with replacement, averaged within the cell, averaged equally across the available variation bins for that object, and then averaged equally across sampled objects to form the category centroid. This hierarchical averaging ensures that the abstraction score is not dominated by categories, objects, or variation bins with more images.

\paragraph{Site matching and bootstrap design.} The public release contains 168 IT sites (58 from Chabo and 110 from Tito) and 88 V4 sites (70 from Chabo and 18 from Tito). Our primary analysis is the pooled site-matched comparison used in \autoref{fig:neuro}. On each bootstrap iteration, we retain all 88 V4 sites and sample 88 IT sites without replacement to remove the trivial dimensionality advantage that would arise from comparing 168 IT dimensions to only 88 V4 dimensions. The main figure uses 500 bootstrap resamples with random seed 0. In each resample, we jointly redraw objects, images, and the matched IT site subset.

\paragraph{Additional metrics and controls.} Besides the raw cosine score, the pipeline also records vector magnitude, a split-half cross-validated cosine, a permutation null obtained by randomly permuting the category labels, and a context-transfer decoder sanity check. The decoder control mirrors \autoref{proposition:ccgp}: for each concept, we train a linear direction in one context and test it on the opposite context, then average the two transfer directions.

\paragraph{Robustness summary.} The pooled site-matched IT--V4 gap reported in the main text is \(0.676\) for limbedness and \(0.682\) for naturalness, with both 95\% percentile bootstrap intervals excluding zero. The descriptive all-sites comparison gives essentially the same means and intervals. The Chabo-only matched analysis is even stronger in the same direction, while the Tito-only analysis is directionally similar but noisier because the public release contains only 18 Tito V4 sites. The permutation null is centered near zero in both areas, and the transfer-decoder control shows the same ordering as the cosine metric: in the pooled site-matched analysis, transfer accuracy is 0.599 versus 0.371 for limbedness and 0.655 versus 0.389 for naturalness in IT and V4 respectively.

\paragraph{Abstraction vs factorization.} Our findings are consistent with prior work showing greater factorization in IT than V4 \cite{lindseyFactorizedVisual24}. However, factorization and abstraction measure different things: factorization measures orthogonality between different concept's subspaces (sometimes called disentanglement), while abstraction measures the alignment of counterfactual concept vectors. As discussed in \cite{lindseyFactorizedVisual24}, the two measures are related in the sense that a fully factorized representation should be highly abstract. 

\section{Other}

\subsection{Compute resources} \label{appendix:compute-resources}

Neural network training experiments were run on a cluster with 8 NVIDIA H100 80GB HBM3 GPUs, 192 logical CPU cores from two Intel Xeon Platinum 8468 sockets, and approximately 1 TB of system RAM. Individual training runs used a single H100 GPU. All other analysis was run locally on a MacBook Air M2 with 8 GB of RAM. All code was implemented in Python 3.10-3.12. All neural network training experiments used stable 2.x versions of PyTorch.

\subsection{Assets and licenses} \label{appendix:assets-licenses}

We use the 3dshapes dataset under Apache-2.0, DINOv3 under the DINOv3 License, Gemma 4 E2B under Google’s Gemma usage terms, and the public Brain-Score MajajHong2015.public assembly under the public Brain-Score/DiCarlo Lab access terms. 


\end{document}